\documentclass[preprint,12pt]{elsarticle}
\biboptions{sort&compress}

\usepackage[margin=1in]{geometry}
\usepackage[x11names]{xcolor}
\usepackage{array}
\usepackage{tabularx}
\usepackage{booktabs}
\usepackage{tcolorbox}
\usepackage{hyperref}
\usepackage{amsfonts} 
\usepackage{multirow}
\usepackage{enumitem}
\setlistdepth{9}
\renewlist{itemize}{itemize}{9}
\setlist[itemize]{label=$\bullet$}
\usepackage{pdflscape}
\usepackage{float}
\usepackage{caption}
\usepackage[dvipsnames]{xcolor}
\usepackage{bm}
\definecolor{darkred}{rgb}{0.9,0.1,0.1}
\usepackage{subcaption}
\usepackage{amsthm}
\newtheorem{theorem}{Theorem}
\newtheorem{proposition}[theorem]{Proposition}
\theoremstyle{remark}
\newtheorem{remark}[theorem]{Remark}
\theoremstyle{definition}
\newtheorem{definition}[theorem]{Definition}
\newtheorem{assumption}[theorem]{Assumption}
\theoremstyle{plain}
\usepackage{mathtools}
\usepackage{algorithmic}

\usepackage[ruled,vlined]{algorithm2e}
\hypersetup{
 colorlinks,
 citecolor=black,
 filecolor=black,
 linkcolor=black,
 urlcolor=black
}
\usepackage{amssymb}
\usepackage{mathrsfs}
\usepackage{textcomp}
\usepackage{listings}
\usepackage{multicol}

\journal{arXiv}

\begin{document}

\begin{frontmatter}

\title{GRAFT-ATHENA: Self-Improving Agentic Teams for Autonomous Discovery and Evolutionary Numerical Algorithms}

\author[inst1]{Juan Diego Toscano}
\author[inst1]{Zhaojie Chai}
\author[inst1,label2]{George Em Karniadakis}
\affiliation[inst1]{organization={Division of Applied Mathematics, Brown University},%
   city={Providence},
   postcode={02912}, 
   state={RI},
   country={USA}}

\fntext[label2]{Corresponding author: george\_karniadakis@brown.edu}
\begin{abstract}
Scientific methods are developed for classes of problems, so knowledge transfers across structurally related cases. Language-model agents can execute scientific workflows, but their problem--method relationships remain implicit, so each new problem restarts the search and little of what worked transfers. We introduce GRAFT--ATHENA, which makes this problem-to-method map explicit as an expandable probabilistic structure of admissible problems, methods, and their dependencies. Graph factorization keeps the substrate tractable, and semantic fingerprints measure similarity, so experience guides related problems. As a result, the framework matched or exceeded expert baselines, attaining near-machine-precision losses in physics-informed learning, reproducing clinically consistent blood-rheology trends, and developing a high-order hypersonic-flow solver for the Apollo Command Module that matched experimental measurements within $1.8\%$. It also proposed a certified regularization for ill-posed in vivo brain-flow reconstruction, developed a spectrally convergent physics-informed architecture, and established machine-checked universal-approximation theorems for two widely used architectures. Scientific structure enables cumulative and verifiable agentic discovery.

\end{abstract}

\begin{keyword}
Agentic AI \sep Scientific Computing \sep Scientific Machine Learning \sep Knowledge Graphs \sep Factored Decision Trees \sep Continual Learning \sep Autonomous Scientific Discovery
\end{keyword}

\end{frontmatter}

\section{Introduction}

Scientific methods are rarely developed for isolated problems. Scientists instead organize problems into families and design methods around the assumptions shared within each class. This makes scientific experience transferable: methods can be reused when new problems preserve the relevant structure, while common principles expose connections across separate branches. For instance, Lagemann et al.~\cite{lagemann2026hydrogym} trained a control policy in a canonical turbulent channel flow and deployed it without retraining on a three-dimensional wing section that shares the same near-wall physics. LLM-based agentic systems increasingly automate this process by formulating plans, implementing and executing methods, inspecting outputs, and refining solutions across scientific domains~\cite{ghafarollahi2025sciagents,toscano2025athena,jiang2026agenticsciml,gottweis2025towards,aygun2025ai,lu2024ai,li2025codepde,liu2025pde,du2026autonumerics,loo2026agentic}. Their method selection therefore defines a probabilistic map from problems to candidate methods.

\begin{figure}[H]
\centering
\includegraphics[width=0.95\textwidth]{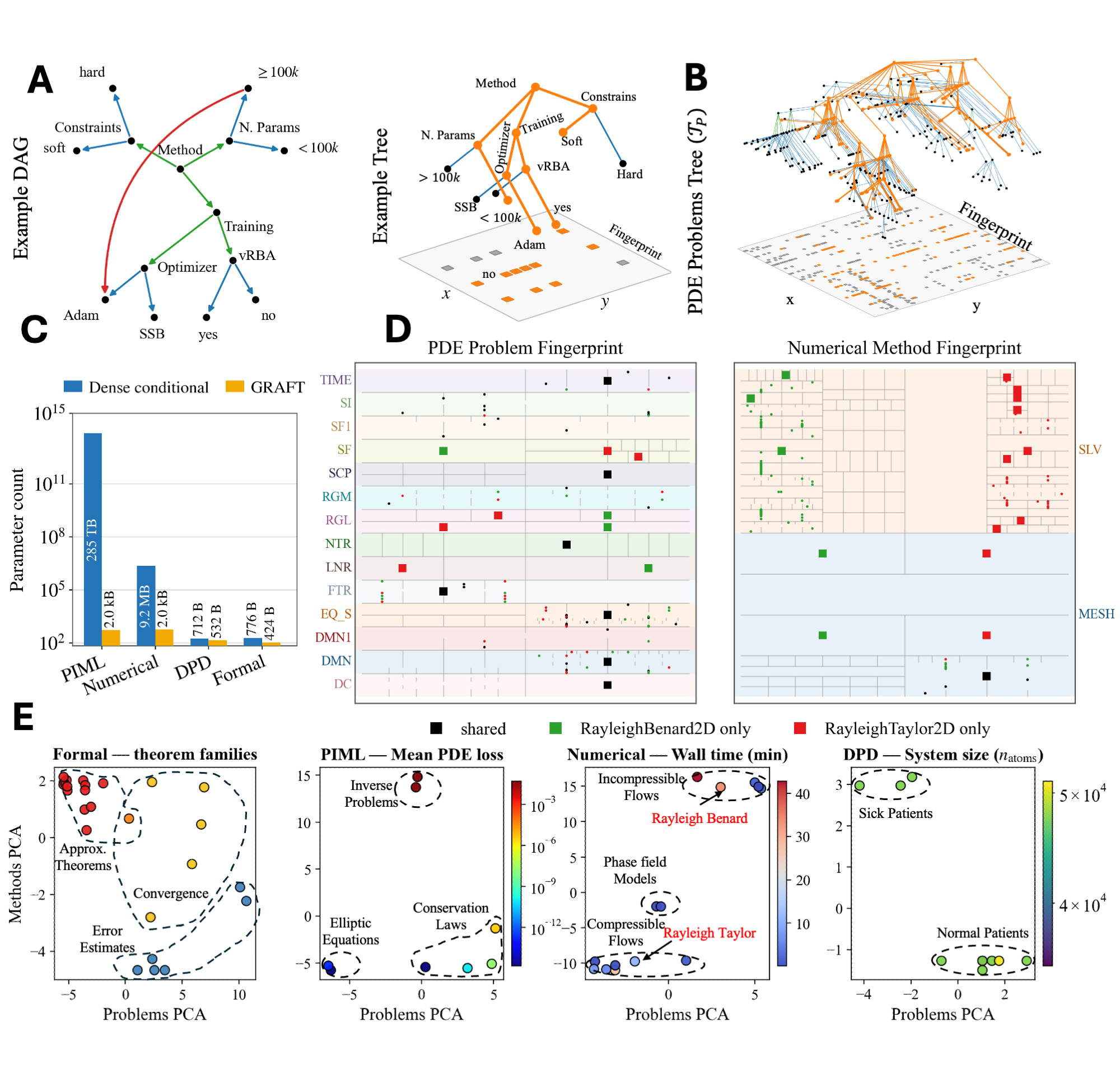}
\caption{
\textbf{GRAFT converts coupled scientific graphs into compact, navigable semantic spaces.}
(\textbf{A}) A didactic method DAG is reduced to an adaptive factored tree and then projected injectively into two dimensions. The tree separates locally resolvable choices, while explicit cross-chain rules retain the encoded dependencies.
(\textbf{B}) The production PDE problem tree and its fingerprint projection show the same construction at the scale of the complete PDE problem vocabulary.
(\textbf{C}) Parameter footprints of dense conditional policies and GRAFT across the PIML, Numerical, DPD, and Formal method spaces; labels give the float32 memory of the learnable categorical entries. Dense conditioning grows rapidly as choices become coupled, whereas the factored GRAFT policies remain compact across all four families.
(\textbf{D}) Problem and numerical-method fingerprints for Rayleigh--B\'enard 2D and Rayleigh--Taylor 2D. Black cells are shared, and green and red cells are case-specific. Despite their similar names, the problems differ in equation structure, forcing, and regularity, leading to largely distinct solver selections.
(\textbf{E}) Leading-PCA projections of all evaluated problem--method pairs, colored by theorem family, mean PDE loss, wall time, or system size. The resulting neighborhoods organize cases by scientific structure and attach evaluated outcomes to nearby problems; retrieval uses similarity in the full fingerprint space rather than distance in the displayed projection.}
\label{fig:graft_semantic_geometry}
\end{figure}

However, this map remains implicit across model weights, prompts, and retrieved or generated text. Without an explicit action space, method selection relies largely on repeated generation and ranking, a strategy most effective when candidates are easy to evaluate~\cite{romera2024mathematical,novikov2025alphaevolve,georgiev2025mathematical,aygun2025ai,jiang2025aide,real2023autonumerics}. Successful runs also omit why a choice worked, limiting transfer. Prompting, retrieval, and learned graphs can preserve parts of this information or expose relationships, but do not generally integrate scientific similarity, dependencies, and action probabilities into a measurable space~\cite{buehler2025agentic,stewart2026higher,stewart2026graphagents}. Thus, even with these additions, the map remains primarily associative: it links problem descriptions to likely methods without explicitly representing the hierarchies or constraints. Pearl's hierarchy explains why: association alone cannot reason about novel actions or causal explanations, leaving transfer difficult and discovery constrained by context~\cite{pearl2019seven,pearl2018book}. These limitations suggest that what is missing is a tractable scientific substrate in which problem--method relationships are explicit, dependencies are encoded, and validated experience accumulates.

Scientific semantics provides this substrate: problems and methods are described through hierarchical characteristics and actions, with dependencies among them constraining subsequent choices. We introduce Graph Reduction to Adaptive Factored Trees (GRAFT), which makes this organization explicit as problem and method graphs. Because the number of joint configurations grows exponentially with the number of coupled choice families, GRAFT decomposes the policy into local decision chains while retaining cross-chain dependencies as explicit rules.

The resulting dependency graph is an $I$-map of the policy, making the action distribution tractable and learnable without discarding the encoded couplings~\cite{pearl1988probabilistic,vermapearl1988causal}. To compare the problems and methods represented by these factored graphs, recursive projection turns the nodes activated by each problem or method into a unique semantic fingerprint, making scientific distance measurable. A solved-case repository then links these fingerprints to evaluated methods, outcomes, and rewards, creating an explicit problem-to-method map in which related cases inform future priors.

With this map, evaluated methods and outcomes guide later problems, allowing GRAFT--ATHENA to build on accumulated scientific practice rather than search from scratch. This supports performance matching or exceeding published expert baselines while exposing relationships across apparently separate problem and method families. When mathematical analysis produces new actions or formulations, the expanding graphs incorporate them and retain their outcomes, making each discovery available to later problems. Since GRAFT follows scientific semantics, the same loop spans physics-informed learning, numerical solvers, particle simulations, and formal mathematics, carrying workflows from formulation and selection through execution and evaluation and, for mathematical claims, to machine-checked verification.

\section{Results}

\subsection{GRAFT: From scientific graphs to compact, navigable semantic spaces}

An agentic system aims to solve a scientific problem $p_i\in\mathcal{P}$ by proposing a suitable method $m_i\in\mathcal{M}$. Because several methods may be admissible, we model selection as a stochastic policy $\pi:\mathcal{P}\to\Delta(\mathcal{M})$, a distribution over methods; see Methods. Scientists distinguish problems through defining characteristics and methods through their constituent actions. We therefore organize $\mathcal{P}$ and $\mathcal{M}$ into these families, with each problem or method specified by its activated selections.

These selections are coupled rather than independent: several feature families may apply jointly, and one choice can restrict alternatives. For example, in Fig.~\ref{fig:graft_semantic_geometry}A, selecting at least $10^5$ parameters excludes SSBroyden because of its computational cost. We encode these relations in problem and action graphs $\mathcal{G}_P$ and $\mathcal{G}_A$. A dense conditional policy must represent every configuration on which each decision depends, so its parameter footprint grows exponentially with interacting families (Methods).

\begin{figure}[H]
\centering
\includegraphics[width=1.0\textwidth]{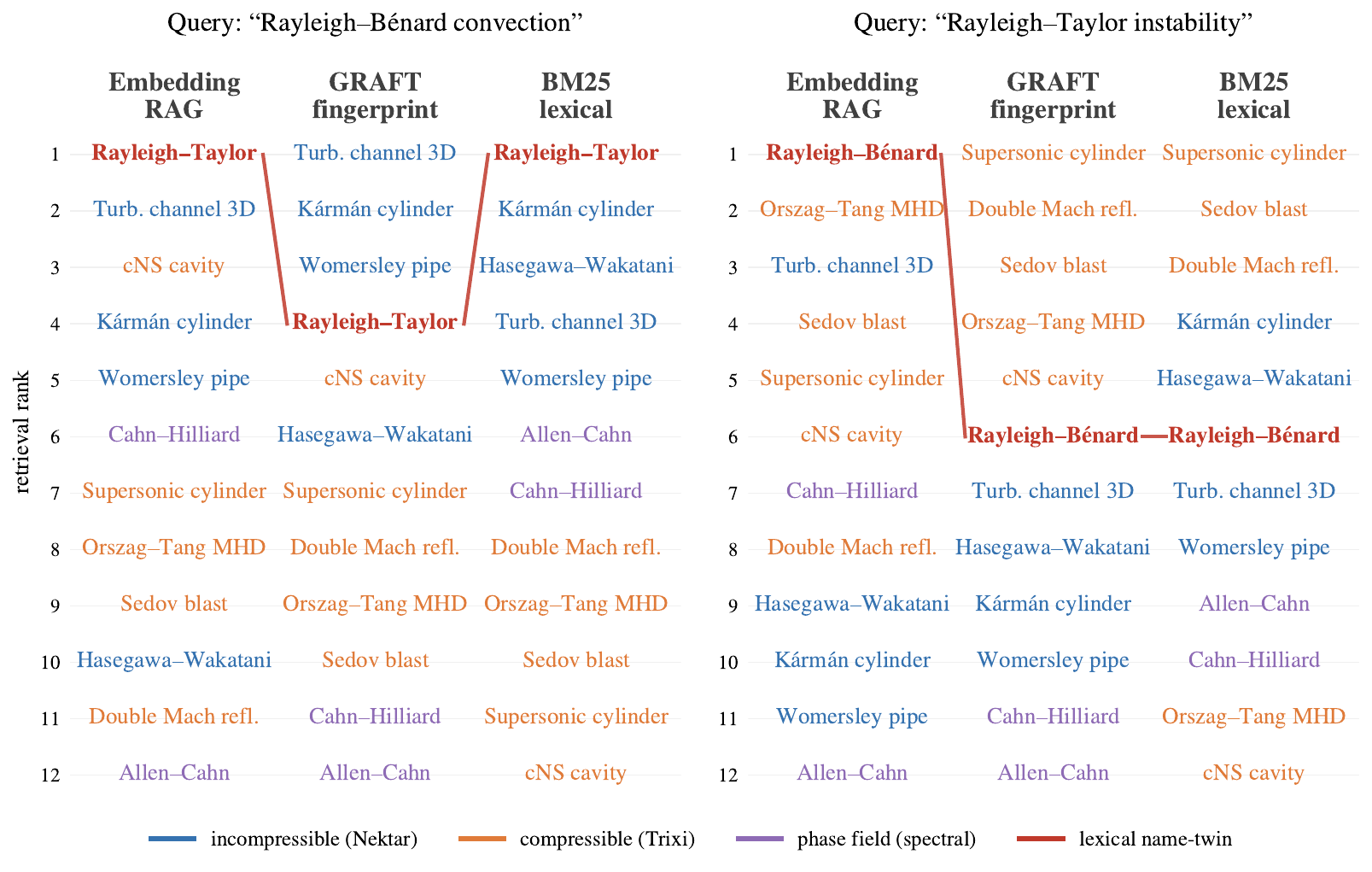}
\caption{%
\textbf{Scientific fingerprints recover solver-compatible neighbors when lexical similarity is misleading.}
Complete rankings of the 12 solved Numerical cases for the queries ``Rayleigh--B\'enard convection'' (left) and ``Rayleigh--Taylor instability'' (right), with the query problem's own source code held out. Each query is evaluated using embedding-based retrieval-augmented generation (RAG), GRAFT fingerprint Jaccard similarity, and BM25 lexical search. Colors identify the solver family of each retrieved case, and the red line traces the lexical name-twin across the three rankings. Embedding RAG places the name-twin first for both queries, and BM25 does so for Rayleigh--B\'enard; GRAFT instead ranks a solver-compatible case first in both directions. The comparison shows why textual resemblance alone is insufficient for transfer: the similarly named problems differ in the physical structure that determines which numerical methods are appropriate.}
\label{fig:retrieval_baselines}
\end{figure}

GRAFT makes this policy tractable by factorizing the problem and action graphs $\mathcal{G}_P$ and $\mathcal{G}_A$ into adaptive trees $\mathcal{T}_P$ and $\mathcal{T}_A$, while retaining cross-family dependencies as explicit rules $\mathcal{R}$. Independently resolvable feature families form separate decision chains; dependencies order those chains so that constraining choices are resolved first, while the rules restrict subsequent choices to actions that remain admissible (Fig.~\ref{fig:graft_semantic_geometry}A). The resulting method policy has a learnable parameter footprint that grows linearly with the available actions and encoded dependencies rather than exponentially with coupled configurations (Methods). We prove that the factorized dependency graph is an $I$-map of the policy, so no coupling encoded in the original graphs is silently discarded (Proposition~\ref{prop:imap}; proof in the Supplementary Information (SI)).

We instantiated GRAFT across method spaces for PIML, numerical PDEs, DPD simulations, and formal reasoning. Fig.~\ref{fig:graft_semantic_geometry}B shows the production PDE problem tree and the complete vocabulary used to encode PDE problem characteristics. Across all four spaces, GRAFT retained compact policies where dense conditioning grew rapidly (Fig.~\ref{fig:graft_semantic_geometry}C). In PIML, a dense conditional table requires $7.12\times10^{13}$ entries, whereas GRAFT stores 512 learnable categorical entries plus 17 symbolic rules (Supplementary Table~\ref{tab:policy_footprint_anatomy}).

\subsection{Semantic fingerprints make scientific problem and method spaces navigable}

\begin{table}[!htbp]
\centering
\caption{
\textbf{Scientific fingerprint retrieval approaches the methodological coverage of an oracle ranking.}
The benchmark starts with 13 solved Numerical problems for which the accepted method is already known. In each test, one solved problem is removed from the repository, while its accepted method is kept separately as the answer used for evaluation. Its original user request is then submitted as a new query. The retrieval system ranks the other 12 solved cases, and their methods are compared with the known method of the held-out problem. This procedure is repeated once for each of the 13 problems, and the table reports the average results. Top-1 overlap asks how closely the first retrieved method matches the held-out method. Coverage@3 asks what fraction of the held-out method's choices appears among the first three retrieved cases. NDCG@3 asks whether the cases with the strongest method overlap are placed first. The oracle ranks cases using the known held-out method and gives the repository-dependent ceiling, whereas random ranking gives the expected floor. GRAFT exceeds embedding RAG and BM25 on all three measures and reaches 0.940 coverage against an oracle ceiling of 0.955, showing that the remaining solved cases contain nearly all of the method information needed for the held-out problems and that the fingerprints retrieve it.}
\label{tab:topk_benchmark}
\begin{tabular}{@{}lccc@{}}
\toprule
Arm & Top-1 overlap & Coverage@3 & NDCG@3 \\
\midrule
Oracle (ceiling)      & 0.852 & 0.955 & 1.000 \\
GRAFT fingerprints    & 0.742 & 0.940 & 0.916 \\
Embedding RAG         & 0.638 & 0.821 & 0.744 \\
BM25 lexical search   & 0.381 & 0.593 & 0.500 \\
Random (floor)        & 0.298 & 0.696 & 0.381 \\
\bottomrule
\end{tabular}
\end{table}

To compare encoded problems and methods geometrically, a recursive projection assigns every node of each factored tree a position in the unit square (Supplementary Video 1; Algorithm~\ref{alg:partition_layout} in the SI). The resulting layout is injective, so distinct tree nodes retain distinct positions (Proposition~\ref{prop:phi_injective}). The nodes activated by a particular problem or method then form its semantic fingerprint; the complete projected spaces are shown in Supplementary Figs.~\ref{fig:supp_full_space_pde_piml} and \ref{fig:supp_full_space_formal_dpd}. These fingerprints localize both shared and distinct scientific characteristics. Rayleigh--B\'enard and Rayleigh--Taylor, for example, have lexically similar names, but their problem fingerprints separate differences in equation structure, regularity, and forcing (Fig.~\ref{fig:graft_semantic_geometry}D). These differences propagate to their method fingerprints, which overlap only in parts of the mesh specification and otherwise contain nearly disjoint solver selections. This solver-level distinction is not preserved by commonly used retrieval systems: over the solved-case source codes, embedding-based RAG ranks a lexical name-twin first in both directions, while BM25 does so for Rayleigh--B\'enard; fingerprint retrieval instead ranks solver-compatible cases on top in both directions (Fig.~\ref{fig:retrieval_baselines}; quantified over the solved Numerical cases in Table~\ref{tab:topk_benchmark}).

The same comparison extends across all evaluated problems and methods, giving the encoded spaces a global geometry. We quantify fingerprint overlap using the hierarchically weighted Jaccard similarity $J_K$; its complement, $1-J_K$, is a true metric on nonempty fingerprints (SI Proposition~\ref{prop:fingerprint_metric}). Fig.~\ref{fig:graft_semantic_geometry}E displays the evaluated problem--method pairs in a two-dimensional PCA landscape, while neighbor retrieval uses $J_K$ in the full fingerprint space. The landscape retains an interpretable scientific organization: formal results separate by theorem class, PIML problems by equation class and attainable loss, numerical cases by physical regime, and DPD cases by system size. Because the fingerprints are constructed from scientific characteristics, their neighborhoods expose structural relationships that names alone do not convey. For a new problem, the nearest solved cases provide the methods, outcomes, and performance standards used to construct the experience-informed prior described next (Table~\ref{tab:topk_benchmark}).

\begin{figure}[H]
\centering
\includegraphics[width=1\textwidth]{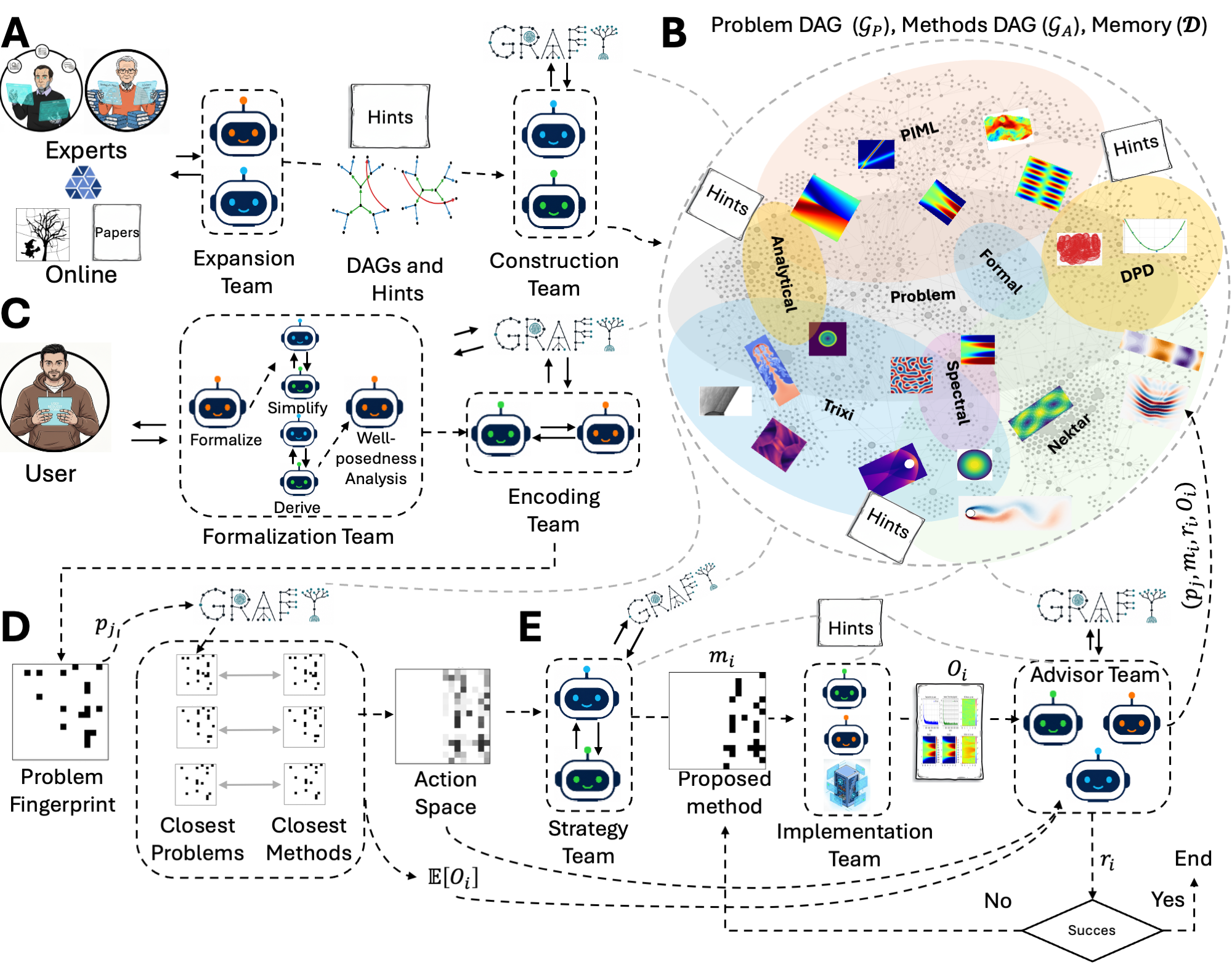}
\par\medskip
\caption{
\textbf{GRAFT--ATHENA turns evaluated scientific experience into a reusable prior.}
(\textbf{A}) The Expansion team extracts problem characteristics, method actions, dependencies, and hints from solver documentation, methods papers, and domain expertise. The Construction team admits only additions that preserve a valid GRAFT factorization, allowing the scientific vocabulary to grow without breaking its encoded structure.
(\textbf{B}) The problem graph $\mathcal{G}_P$, action graph $\mathcal{G}_A$, and solved-case memory $\mathcal{D}=\{(p_i,m_i,O_i,r_i)\}$ form the persistent substrate. New nodes extend the available vocabulary, while previously recorded problems, methods, outcomes, and rewards remain valid.
(\textbf{C}) A new query first passes through Formalization, which tests simplifications and well-posedness, and then through Encoding, which maps the selected formulation onto $\mathcal{T}_P$ to obtain its problem fingerprint.
(\textbf{D}) The closest solved cases contribute their methods to a prior weighted by fingerprint similarity and reward and blended with a uniform component. Their recorded observables also define realistic performance targets for the new problem.
(\textbf{E}) The resulting action prior drives task-specific proposal, implementation, execution, and evaluation. Every attempted method and outcome returns to $\mathcal{D}$, closing the loop through which experience gained on one problem guides later structurally related problems.}
\label{fig:r1_system_overview}
\end{figure}

\subsection{From semantic spaces to a self-improving scientific loop}
\label{sec:R1}

\begin{sloppypar}
The semantic spaces described above form the persistent substrate of GRAFT--ATHENA (Fig.~\ref{fig:r1_system_overview}A,B). They are constructed and extended from solver documentation, methods papers, and domain expertise. An Expansion team distills new problem characteristics, method actions, and dependencies from these sources; a Construction team incorporates only fragments that preserve a valid factored representation, and a domain expert verifies their scientific meaning (Fig.~\ref{fig:r1_system_overview}A). The accompanying solved-case repository is initialized with expert-curated examples from scientific software, published studies, and established mathematics, and expanded with each evaluated problem--method pair together with its observables and reward (Fig.~\ref{fig:r1_system_overview}B). The complete seed inventory and its provenance are reported in Supplementary Tables~\ref{tab:memory_seeds_piml}--\ref{tab:memory_seeds_formal_b}. Because new nodes and rules do not invalidate earlier encodings or outcomes, both the semantic spaces and accumulated experience can grow non-destructively (Supplementary Fig.~\ref{fig:jaccard_invariance}).
\end{sloppypar}

A new scientific query enters this substrate through a per-problem setup (Fig.~\ref{fig:r1_system_overview}C). The Formalization team converts the free-form statement into a structured request, tests candidate simplifications, and audits well-posedness before the Encoding team maps the selected formulation onto the problem tree to obtain its fingerprint (Methods; see SI). The retrieved neighborhood, comprising the closest cases, then contributes its methods to a prior weighted by fingerprint similarity and reward and blended with a uniform prior. This produces a problem-specific distribution over actions, while neighboring observables supply realistic performance targets (Fig.~\ref{fig:r1_system_overview}D). A support gate routes weakly supported or unrepresentable cases to human review, whereas sufficiently supported cases proceed through the propose, implement, execute, and evaluate loop (Fig.~\ref{fig:r1_system_overview}E; Methods). Each run is assessed using a family-specific informed reward that combines these performance targets with deterministic execution or verification signals (Methods). Evaluated outcomes are then retained, allowing experience gained on one problem to guide later problems. Human interaction is therefore most important in sparsely supported regions, whereas accumulated nearby experience progressively supports more autonomous execution.

\subsection{Structured scientific experience guides new problems across domains}
\label{sec:semantic_memory_results}

With expert knowledge encoded in its problem and method spaces, GRAFT--ATHENA produced competitive or benchmark-leading results across selected benchmark and case-study tasks in four scientific families, under family-specific validation criteria. Across four canonical PIML benchmarks (viscous Burgers, KdV, Helmholtz, and Poisson), residual losses approached machine precision and relative errors were lower than those from human-designed methods or ATHENA without GRAFT; Burgers also outperformed two commercial agentic systems in audited runs (Fig.~\ref{fig:semantic_memory_learning}B; Fig.~\ref{fig:priors_and_results}A, Table~\ref{tab:sota_comparison_graft}). For Burgers, both ATHENA and GRAFT--ATHENA selected SSBroyden, but GRAFT additionally recovered a Reynolds-number continuation strategy from the displayed closest neighbor (Fig.~\ref{fig:semantic_memory_learning}A). GRAFT--ATHENA incorporated this inherited choice into a three-stage schedule, whose convergence history reflects the continuation process: error against the target-viscosity reference remained high while the intermediate equations were solved, but dropped sharply to a lower plateau when the schedule reached the requested viscosity, demonstrating that continuation remained effective even in this low-viscosity regime (Fig.~\ref{fig:semantic_memory_learning}B).

\begin{figure}[H]
\centering
\includegraphics[width=1\textwidth]{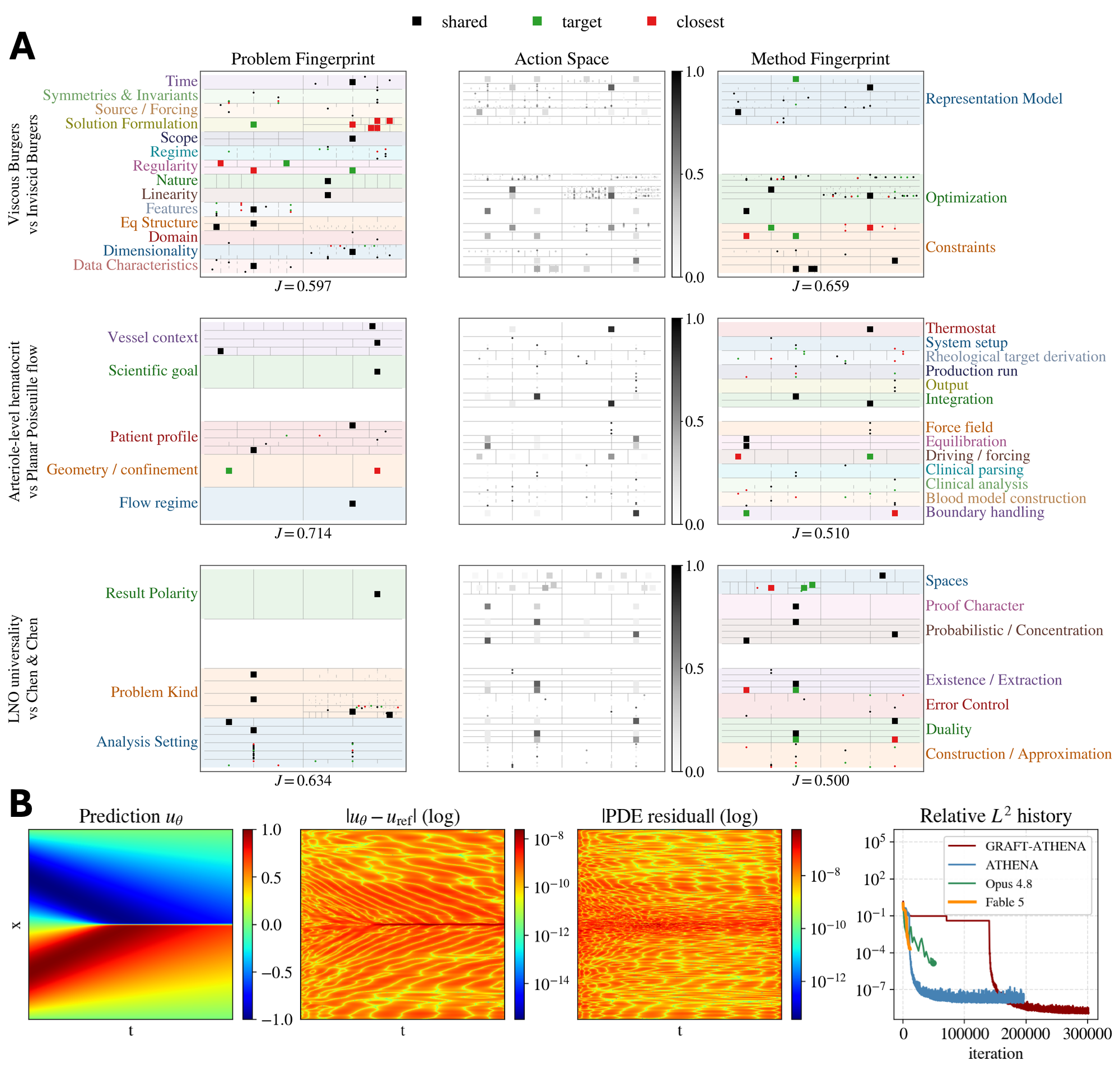}
\par\medskip
\caption{
\textbf{Scientific neighborhoods transfer successful choices across domains.}
(\textbf{A}) Three examples follow a target problem from its retrieved neighborhood to the resulting action prior and selected method. From left to right, each row shows the target and displayed nearest-neighbor problem fingerprints, the similarity- and reward-weighted prior assembled from the full neighborhood, and the target and neighbor method fingerprints. Black entries are shared, green are target-specific, and red are neighbor-specific; cell darkness in the action space denotes probability, and $J$ is fingerprint similarity. For viscous Burgers versus inviscid Burgers (top), the transferred choices include the Reynolds-number continuation used in the final method. For the arteriole-level hematocrit case versus planar Poiseuille flow (middle), the broader DPD neighborhood concentrates validated particle-simulation choices that are adapted to the RBC suspension. For LNO universality versus the Chen--Chen theorem (bottom), related formal results contribute the proof ingredients used to construct the new theorem. Only the nearest neighbor is displayed; all retrieved cases contribute to each prior.
(\textbf{B}) GRAFT--ATHENA prediction, absolute error, PDE residual, and relative $L^2$ history for viscous Burgers, compared with ATHENA and the Opus 4.8 and Fable 5 agentic baselines. The error remains high while continuation advances through the intermediate viscosities, then falls sharply when training reaches the requested viscosity. The inherited strategy therefore remains effective in the target regime and produces the lowest final error among the compared systems.}
\label{fig:semantic_memory_learning}
\end{figure}

\begin{figure}[H]
\centering
\includegraphics[width=1\textwidth]{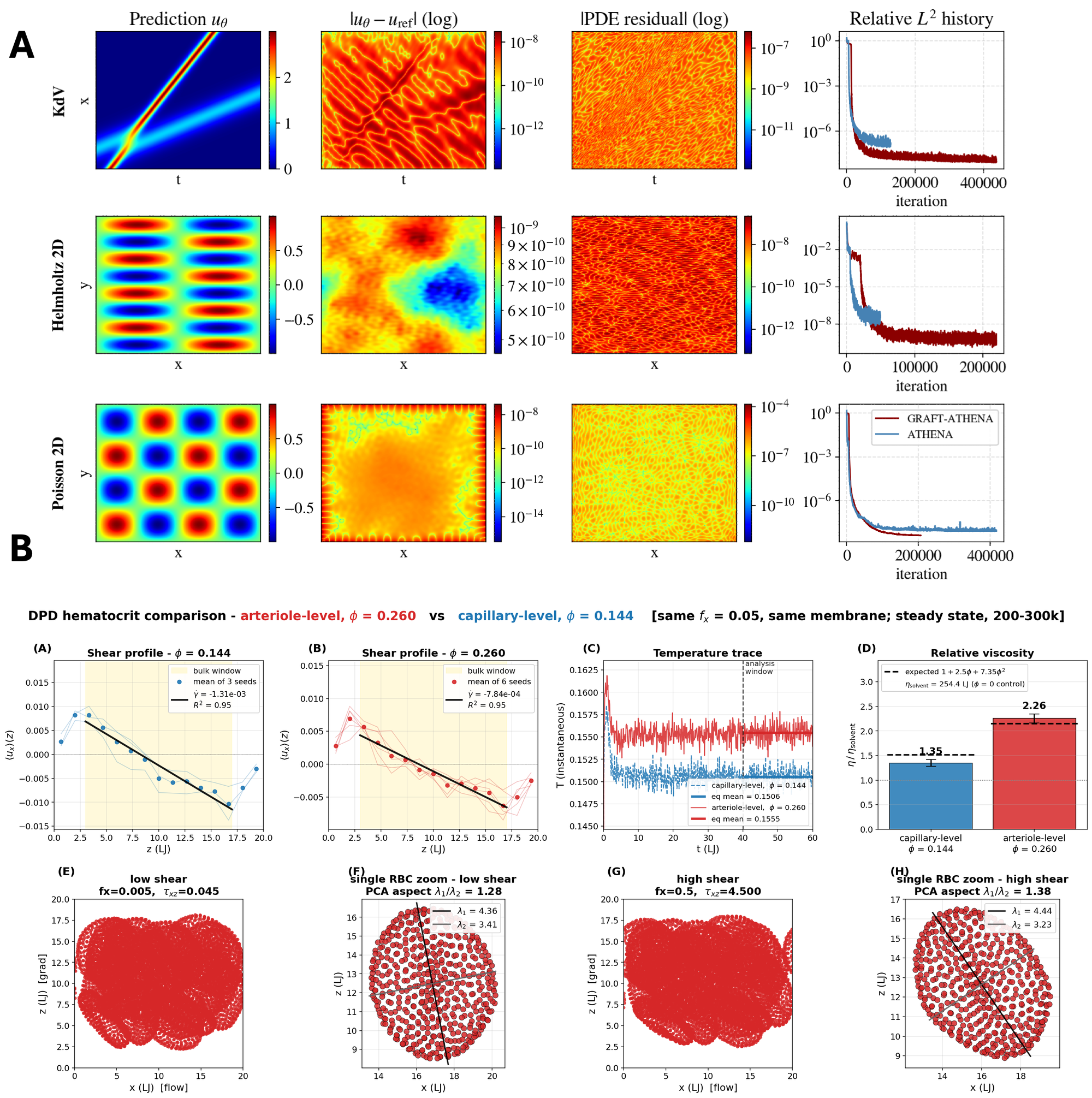}
\caption{
\textbf{GRAFT-informed methods reduce PIML errors and recover the expected hematocrit--viscosity relation.}
(\textbf{A}) Converged GRAFT--ATHENA solutions for KdV, Helmholtz, and Poisson. Each row shows the predicted field, absolute field error, PDE residual, and relative $L^2$ history, with GRAFT--ATHENA in red and ATHENA without GRAFT in blue. GRAFT--ATHENA reaches a lower relative-error floor on all three equations while maintaining residuals close to machine precision, showing that the improvement extends beyond the Burgers case in Fig.~\ref{fig:semantic_memory_learning}B.
(\textbf{B}; subpanels A--D) Controlled DPD comparison of two RBC suspensions with the same membrane parameters and wall stress ($\tau_{xz}=0.45$), differing only in hematocrit: $\phi=0.144$ (blue) and $\phi=0.260$ (red). A and B show the bulk velocity profiles used to estimate shear rate, averaged over the final $100{,}000$ steps and across independent seeds; faint lines show individual seeds and the shaded region marks the fitting window. C confirms stable temperatures over the analysis interval. D reports viscosity relative to a cell-free control, with standard errors over seeds and the expected concentration dependence shown by dashed lines. At matched wall stress, the denser suspension is more viscous, demonstrating that the assembled particle workflow recovers the expected hematocrit dependence without tuning parameters to that ordering. (Subpanels E--H) RBC morphology at low and high forcing from a separate sweep, including periodic-unwrapped cells and their principal axes; these panels illustrate the deformation mode admitted by the selected membrane model.}
\label{fig:priors_and_results}
\end{figure}

\begin{table}[!htbp]
\centering
\caption{
\textbf{GRAFT--ATHENA reaches the lowest solution error across four canonical PIML benchmarks.}
Comparison with recent expert-designed methods (Human) and agentic frameworks on viscous Burgers, KdV, Helmholtz, and Poisson, four standard benchmarks in physics-informed machine learning. For each problem, the table places the best result reported by each cited method alongside the result from the final accepted GRAFT--ATHENA solution. For GRAFT--ATHENA, residual mean-squared error (MSE) is the PDE-residual loss minimized during training. Values near machine precision therefore show that this training objective has been reduced to the numerical limit of the computation. Relative $L^2$ error (Eq.~\eqref{eq:RL2}) measures agreement between the predicted field and the benchmark solution. The cited studies generally report this error on dense evaluation grids, making it the standard field-accuracy measure for these benchmarks in the PIML community. For GRAFT--ATHENA, relative $L^2$ error is computed over the full domain on a fixed grid of approximately $2\times10^5$ points, using the released reference solution for viscous Burgers and analytical solutions for KdV, Helmholtz, and Poisson. Dashes denote quantities not reported by the cited source.}
\label{tab:sota_comparison_graft}
\begin{tabular}{@{}lllcc@{}}
\toprule
\textbf{Problem} & \textbf{Reference} & \textbf{Benchmark Type} & \textbf{Rel.\ $L^2$ Error} & \textbf{MSE}\\
\midrule
\multirow{10}{*}{Burgers ($\nu=\frac{1}{100\pi}$)}
 & \cite{wang2025gradient} & Human & $4.03 \times 10^{-5}$ & -- \\
 & \cite{wuwu2025pinnsagent} & Agents & -- & $6.51 \times 10^{-5}$ \\
 & \cite{he2025lang} & Agents & -- & $6.48 \times 10^{-5}$ \\
 & \cite{urban2024unveiling} & Human & $2.90 \times 10^{-6}$ & -- \\
 & \cite{kiyani2025optimizing} & Human & $1.62 \times 10^{-8}$ & -- \\
 & \cite{toscano2026variational} & Human & $8.25 \times 10^{-9}$ & -- \\
 & Opus 4.8 & Agents & $1.50 \times 10^{-5}$ & -- \\
 & Fable 5 & Agents & $2.03 \times 10^{-4}$ & -- \\
 & \cite{toscano2025athena}  & Agents & $3.79 \times 10^{-8}$ & $4.33 \times 10^{-14}$ \\
 & Ours & Agents & $\mathbf{1.78 \times 10^{-9}}$ & $\mathbf{5.21 \times 10^{-16}}$ \\
\midrule
\multirow{3}{*}{KdV}
 & \cite{urban2024unveiling} & Human & $6.00 \times 10^{-6}$ & -- \\
 & \cite{toscano2025athena}  & Agents & $1.54 \times 10^{-7}$ & $1.22 \times 10^{-13}$ \\
 & Ours & Agents & $\mathbf{1.04 \times 10^{-8}}$ & $\mathbf{1.24 \times 10^{-15}}$ \\
\midrule
\multirow{5}{*}{Helmholtz}
 & \cite{urban2024unveiling} & Human & $3.60 \times 10^{-7}$ & -- \\
 & \cite{chen2025self} & Human & $4.86 \times 10^{-5}$ & -- \\
 & \cite{jnini2026curvature} & Human & $2.0 \times 10^{-9}$ & -- \\
 & \cite{toscano2025athena}  & Agents & $6.23 \times 10^{-8}$ & $7.25 \times 10^{-13}$ \\
 & Ours & Agents & $\mathbf{1.60 \times 10^{-9}}$ & $\mathbf{3.13 \times 10^{-15}}$ \\
\midrule
\multirow{4}{*}{Poisson}
 & \cite{wang2025gradient} & Human & $2.99 \times 10^{-7}$ & -- \\
 & \cite{muller2023achieving} & Human & $1.0 \times 10^{-7}$ & -- \\
 & \cite{toscano2025athena}  & Agents & $8.46 \times 10^{-9}$ & $\mathbf{3.95 \times 10^{-13}}$ \\
 & Ours & Agents & $\mathbf{3.13 \times 10^{-9}}$ & $7.41 \times 10^{-13}$ \\
\bottomrule
\end{tabular}
\end{table}

In particle biophysics, the arteriole-level-hematocrit target differed from its displayed nearest neighbor, a cell-free Poiseuille flow, in cell content and geometry, yet methods rewarded across the broader neighborhood (including a sheared RBC suspension tied at the same similarity) biased the action space toward validated simulation choices (Fig.~\ref{fig:semantic_memory_learning}A)~\cite{pivkin2008accurate,fedosov2010multiscale}. GRAFT--ATHENA adapted these choices to the suspension mechanics; the resulting rheology, measured on a controlled hematocrit pair at matched wall stress, recovers the expected concentration dependence of the relative viscosity, with replicate-seed intervals reported in Fig.~\ref{fig:priors_and_results}B~\cite{chai2025silico,chai2026multiscale}.

In formal mathematics, GRAFT--ATHENA proved the approximation capabilities of two widely used architectures, cKANs~\cite{ss2024chebyshev} and LNOs~\cite{cao2024laplace}, neither of which, to our knowledge, had a universal-approximation theorem. For cKANs, the neighborhood made the stored MLP theorem~\cite{Leshno1993} relevant and the corresponding analysis exposed a standard $\tanh$ MLP within the Chebyshev-edge formulation, allowing universality to be inherited directly (Theorem~\ref{thm:ckan_universality}). LNOs admitted no such reduction; instead, the neighborhood contributed relevant ingredients (Fig.~\ref{fig:semantic_memory_learning}A) that the framework used to construct the required result, which was machine-checked in Lean 4~\cite{moura2021lean} (Theorem~\ref{thm:lno_universality}). The user requests, theorem statements, and proof sketches are reproduced in the SI.

\subsection{From schematic to shockwave: autonomous solution of the Apollo capsule at Mach 10}
\label{sec:apollo_main}

We applied GRAFT--ATHENA to two matched axisymmetric Mach-10 flow tasks over the Apollo Command Module using the 1968 Arnold Engineering Development Center (AEDC) forebody geometry~\cite{griffith1968postflight} (Fig.~\ref{fig:apollo_mach10}A; Supplementary Table~\ref{tab:apollo_setup}; exact requests in the SI). The tasks differed only in whether the downstream wake was represented; in both cases, the blunt forebody generates a detached bow shock and a subsonic stagnation pocket. Because shock treatment, mesh, the axisymmetric source term, and geometry are coupled, GRAFT identifies their dependencies (see SI). The retrieved neighborhood favors a high-order discontinuous Galerkin spectral-element discretization~\cite{kopriva2009implementing,hesthaven2008nodal} with entropy-conservative fluxes~\cite{ranocha2018generalised} (Fig.~\ref{fig:apollo_mach10}B).

The wake requirement changes the binding choice. Without it, largely stationary features permit a once-graded body-fitted mesh with a provably positive subcell limiter. With the wake, moving features require runtime refinement incompatible with the static branch's subcell representation; GRAFT--ATHENA therefore pairs element-level Hennemann--Gassner shock capturing~\cite{hennemann2021provably} with Zhang--Shu positivity limiting on an adaptive mesh.

\begin{figure}[H]
\centering
\includegraphics[width=1\textwidth]{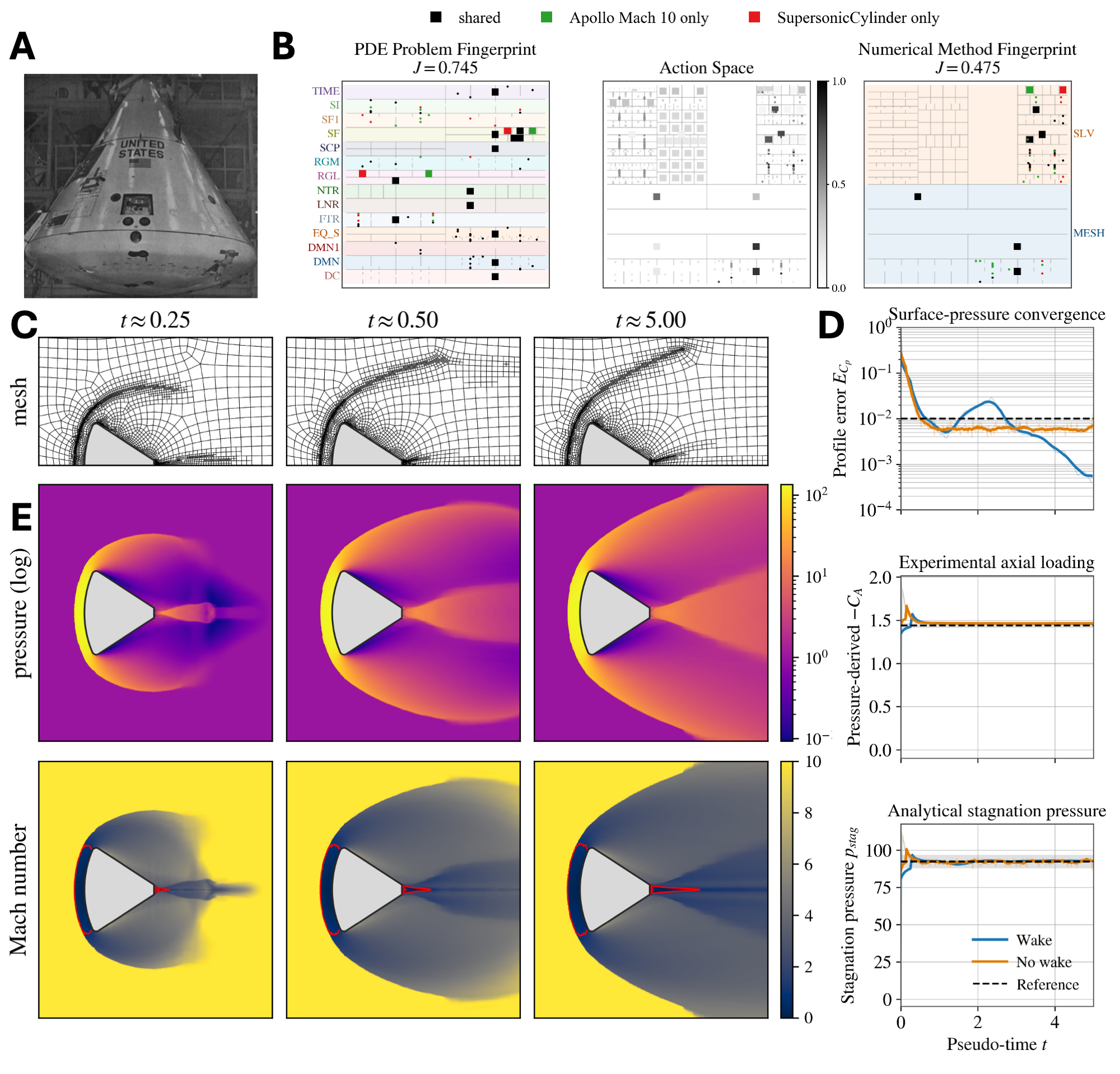}
\par\medskip
\caption{
\textbf{Autonomous reconstruction of Mach-10 flow over the Apollo Command Module.}
(\textbf{A}) Photograph of the Apollo Command Module from the 1968 AEDC report. The same Mach-10 flow problem is solved in two versions: one explicitly requires the downstream wake (\emph{wake}), while the other omits that requirement (\emph{no wake}).
(\textbf{B}) Problem fingerprint, action prior assembled from the retrieved cases, and selected numerical method for the wake case. The retrieved experience favors a high-order discontinuous Galerkin spectral-element method, while the encoded dependencies determine a compatible shock treatment and mesh strategy.
(\textbf{C}) Mesh for the wake case at three times. GRAFT--ATHENA generates the initial mesh and uses adaptive mesh refinement during the simulation.
(\textbf{D}) Surface-pressure convergence (top), pressure-derived axial loading against the AEDC value (middle), and stagnation pressure against the Rayleigh--Pitot relation (bottom) for the wake (blue) and no-wake (orange) solutions. Both solutions stabilize below a $1\%$ surface-pressure change and agree with the loading and compression references within $1.8\%$.
(\textbf{E}) Pressure and Mach-number fields at three times for the wake case, showing how the solution develops during the simulation. Complete configurations for both cases are reported in Supplementary Table~\ref{tab:apollo_setup}.}
\label{fig:apollo_mach10}
\end{figure}

For the wake branch, the domain and mesh follow Billig shock containment, published production-code margins, and mesh-quality gates~\cite{anderson1989hypersonic,mazaheri2010laura,dyakonov2012hypersonic,mitcheltree1995wake} (see SI), while adaptive refinement~\cite{berger1989local} tracks the shock and wake, recovering the detached shock and $M=1$ subsonic pocket (Fig.~\ref{fig:apollo_mach10}C,E). Both autonomously assembled configurations stabilized their surface-pressure profiles below $1\%$ and matched the AEDC axial-force and Rayleigh--Pitot stagnation-pressure references within $1.8\%$, recovering the bow shock's surface loading and compression (Fig.~\ref{fig:apollo_mach10}D).

\begin{figure}[H]
\centering
\includegraphics[width=0.95\textwidth]{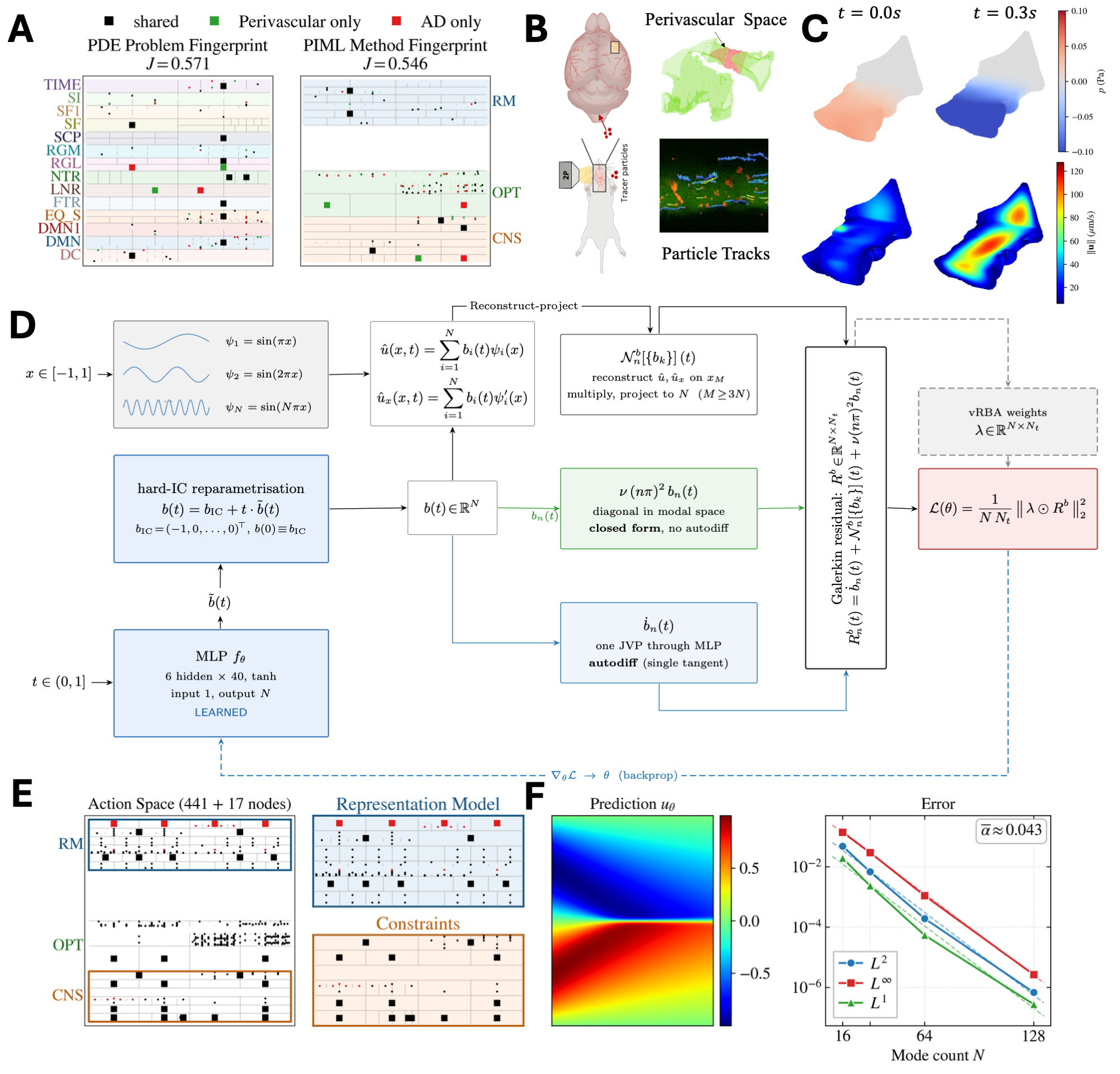}
\par\medskip
\caption{
\textbf{Mathematical diagnosis repairs an ill-posed inverse problem and produces a spectrally convergent PINN.}
The top row follows repair of an in vivo perivascular-flow reconstruction; the bottom row follows discovery of a spectral PINN for periodic viscous Burgers.
(\textbf{A}) Perivascular problem and selected-method fingerprints against the nearest inverse advection--diffusion case, with the intervening action prior assembled from the full retrieved neighborhood. Black entries are shared, green are perivascular-specific, and red are neighbor-specific. Although the problems share limited overall structure, the neighborhood elevates a successful noise treatment that is retained in the final reconstruction method.
(\textbf{B}) Sparse two-photon tracer tracks in a mouse-brain perivascular space. The inverse problem is to recover the complete velocity and pressure fields from these single-plane observations.
(\textbf{C}) Reconstructed pressure (top) and velocity magnitude (bottom) at two times. The Formalization team proposed a cross-slab regularizer as one of three closure options; in the final formulation, it constrains the data-invisible modes, while a vector-potential representation enforces mass conservation exactly. Together, these constraints allow the sparse observations in B to support a full-field reconstruction.
(\textbf{D}) Mathematical simplification of periodic viscous Burgers into a spectral PINN: a time-only modal network satisfies the initial condition exactly, diagonalizes diffusion, evaluates the nonlinearity pseudospectrally, and weights the mode--time residual through vRBA.
(\textbf{E}) GRAFT expands the PIML action space with the representation and constraint operations required by this formulation and then selects the complete method. The new actions remain available to later problems.
(\textbf{F}) Dense-grid prediction and $L^2$, $L^\infty$, and $L^1$ errors across mode counts. All three errors decay exponentially with $N$, showing that the discovered architecture inherits the convergence of the spectral discretization; the corresponding convergence theorem is machine-checked in Lean 4.}
\label{fig:verified_discovery}
\end{figure}

Finally, we compared these configurations with two commercial agentic baselines under matched requests and unrestricted tool access (Fig.~\ref{fig:apollo_ablation}). Both independently selected at-most-second-order finite-volume schemes~\cite{leveque2002finite,toro2013riemann} with minmod limiting on uniform Cartesian immersed-boundary meshes. Although the runs converged, matched common scalar references within $4.4\%$, and produced plausible density fields, both no-wake baselines failed to solve the requested physical problem. One added a sting that altered the prescribed geometry; the other sized and validated its domain against an expected shock position that the realized shock reached. Geometry and entropy audits exposed the altered geometry and unphysical flow. Thus, convergence and scalar agreement certified stationary fields, not fidelity to the requested physics.

\subsection{From mathematical diagnosis to verified discovery}

Beyond established benchmarks, we applied GRAFT--ATHENA to reconstruct fluid flow inside the brain from in vivo data. In mouse perivascular spaces, two-photon microscopy provides only sparse particle tracks, from which the complete velocity and pressure fields must be inferred~\cite{boster2023artificial,toscano2024inferring_AIV} (Fig.~\ref{fig:verified_discovery}B). The Formalization team cast the task as a physics-informed inverse problem and identified that the planar observations left it ill-posed. It proposed three closure options, from which the user selected a cross-slab smoothness regularizer to constrain data-invisible modes. GRAFT then supplied method-level transfer: although the displayed rank-one neighbor, an MRI inverse advection--diffusion problem, shares limited overall structure with the target, the full neighborhood-induced prior elevated its successful noise treatment, contributing the denoising network used in the final method (Fig.~\ref{fig:verified_discovery}A).

After the user selected the cross-slab regularizer to constrain the data-invisible modes, the system combined hard mass conservation with a learned denoising network to reconstruct both the observed particle motion and the full velocity and pressure fields (Fig.~\ref{fig:verified_discovery}C; Fig.~\ref{fig:verified_discovery_supp}C). Using a smaller network and fewer training iterations, the reconstruction showed good agreement with the published results~\cite{toscano2024inferring_AIV}. The Formal branch then carried the regularizer beyond numerical stabilization into a machine-checked a posteriori estimate: under the stated data-coercivity condition, the achieved regularized loss, observation noise, and regularization bias bound the reconstruction error (Fig.~\ref{fig:verified_discovery_supp}A and Theorem~\ref{thm:perivascular_aposteriori}).

Beyond repairing ill-posed problems, the Formalization team also seeds method discovery. For a separate periodic viscous Burgers problem with $\nu=1/100$, it combined a Fourier representation with pseudospectral evaluation of the nonlinearity, producing a time-only spectral PINN that links neural PDE solvers with classical spectral discretization (Fig.~\ref{fig:verified_discovery}D). When the hybrid required operations absent from the existing vocabulary, GRAFT added new representation and constraint nodes to the action space and retained them for later problems (Fig.~\ref{fig:verified_discovery}E). Across mode counts, the resulting method displayed the exponential error decay characteristic of spectral schemes~\cite{basdevant1986spectral} (Fig.~\ref{fig:verified_discovery}F; Fig.~\ref{fig:verified_discovery_supp}D), while the Formal branch machine-checked the corresponding convergence theorem in Lean 4~\cite{moura2021lean} (Fig.~\ref{fig:verified_discovery_supp}B and Theorem~\ref{thm:spectral_apriori}).

\begin{figure}[H]
\centering
\includegraphics[width=1\textwidth]{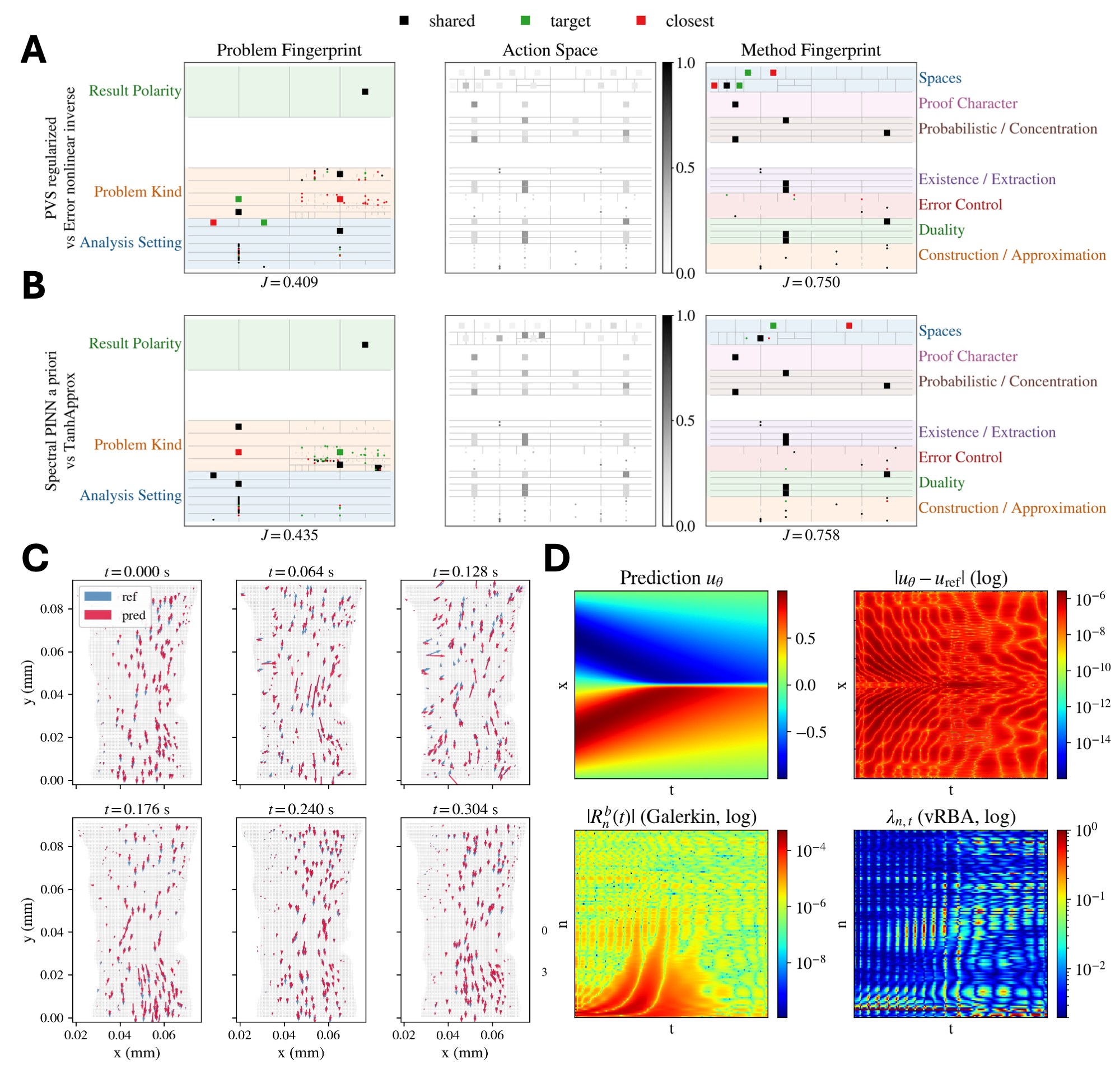}
\caption{
\textbf{Formal proof structures transfer across distinct problems, and the resulting methods pass targeted validation.}
(\textbf{A}) The regularized perivascular a posteriori theorem compared with its nearest stored nonlinear inverse-error theorem. From left to right, the problem fingerprints have similarity $J=0.409$, the full retrieved neighborhood forms the action prior, and the proof-method fingerprints have similarity $J=0.750$. The stronger agreement between methods than between problem statements shows that an established inverse-error proof pattern transfers to the distinct perivascular guarantee.
(\textbf{B}) The spectral-PINN a priori theorem and its nearest quantitative $\tanh$-approximation theorem show the same relation: problem similarity is $J=0.435$, whereas proof-method similarity is $J=0.758$. The retrieved approximation structure is adapted to establish convergence of the newly formulated spectral method. In A and B, black entries are shared, green are target-specific, and red are neighbor-specific.
(\textbf{C}) Predicted perivascular velocities (red) overlaid on held-out particle measurements (blue) at six times. Their agreement validates unseen particle motion within the measured plane.
(\textbf{D}) Dense-grid prediction, pointwise field error, per-mode Galerkin residual, and converged vector residual-based-attention (vRBA) weights for the best-resolved spectral-PINN run. Together with the mode-count sweep in Fig.~\ref{fig:verified_discovery}F, these diagnostics show that the error continues to decrease with modal resolution rather than flattening at an optimization-imposed floor.}
\label{fig:verified_discovery_supp}
\end{figure}

\section{Discussion}

\begin{figure}[H]
\centering
\includegraphics[width=1\textwidth]{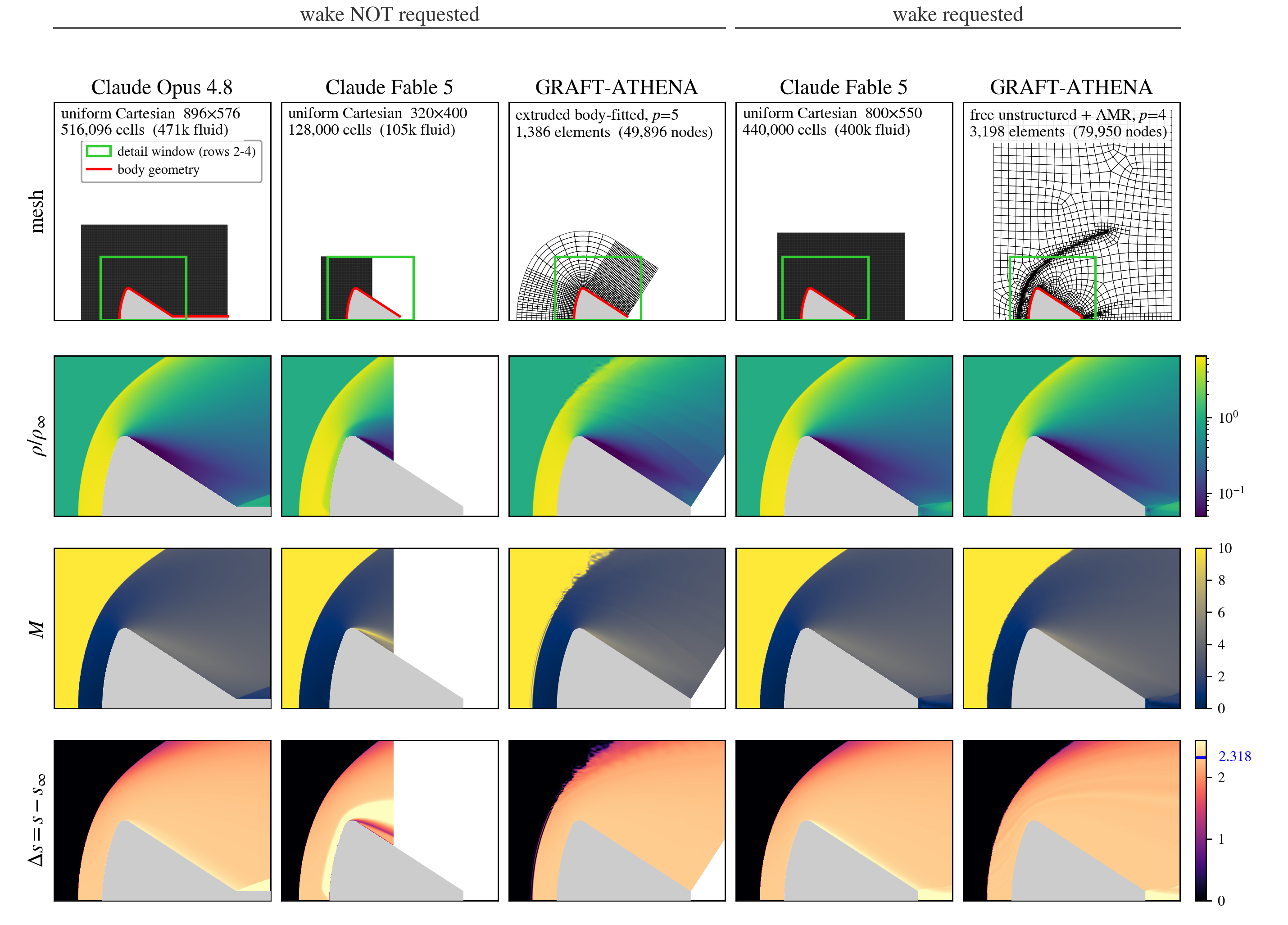}
\caption{\textbf{Agentic systems can validate converged Apollo solutions that do not solve the requested physical problem.}
The comparison evaluates GRAFT--ATHENA and commercial agentic systems run through Claude Code using Opus 4.8 and Fable 5 on two matched Apollo requests that differ only in whether the wake is explicitly required. The Fable no-wake request was run five times; one of those runs is displayed here.
(\textbf{First row: mesh}) The commercial systems use very fine uniform Cartesian meshes with hundreds of thousands of cells. Individual cells cannot be distinguished at the scale of the figure, which makes these meshes appear almost black. In the no-wake cases, Opus adds a sting that changes the prescribed body, while Fable ends the computational domain partway along the afterbody and allows the shock to reach the outer boundary. GRAFT--ATHENA preserves the geometry and instead uses high-order body-fitted or adaptively refined meshes with far fewer elements. The green boxes mark the regions shown in the rows below, and red marks the prescribed body.
(\textbf{Second row: density}) The density field in every displayed column appears plausible. Density alone therefore does not reveal that the Opus and Fable no-wake runs changed the requested problem.
(\textbf{Third row: Mach number}) The failure becomes visible in the Fable no-wake solution as a near-freestream streak beside the body, where the flow behind the shock should remain slow.
(\textbf{Fourth row: entropy}) The same Fable solution contains a low-entropy layer below the normal-shock floor, $\Delta s=2.318$, confirming that the apparently plausible field is unphysical. The thin dark traces in the GRAFT--ATHENA fields are bounded Gibbs undershoots at the captured shock. The commercial systems' own audits nevertheless accepted the altered Opus geometry and the unphysical Fable solution because both runs converged and remained close to common scalar references.}
\label{fig:apollo_ablation}
\end{figure}

LLM-based agentic systems increasingly enter scientific workflows, but reliability requires more than executable or plausible answers. Consistent with previous studies~\cite{chan2025mle,huang2023mlagentbench,jiang2025aide}, audited commercial systems solved familiar, tightly specified tasks with suboptimal methods (Fig.~\ref{fig:semantic_memory_learning}B and Fig.~\ref{fig:apollo_ablation}). The Apollo case exposed a more consequential failure: when the wake was not requested, one system changed the prescribed geometry, while another truncated the computational domain. Both passed their validation, yet neither solved the requested physical problem, and the latter produced an unphysical flow (Fig.~\ref{fig:apollo_ablation}). This behavior is consistent with the associative formulation of LLMs and their lack of explicit causal modeling: they can generate locally plausible choices and explanations without enforcing the causal dependencies required by the complete problem.

Common remedies are repeated sampling~\cite{romera2024mathematical,novikov2025alphaevolve,jiang2025aide}, human oversight, and expert knowledge encoded in system prompts. Sampling can recover good solutions but becomes impractical for complex, coupled workflows: in Apollo, 100 samples per procedure produced essentially no admissible complete solutions (Fig.~\ref{fig:sampling_sensitivity}). Expert prompts can narrow the search and support within-problem learning~\cite{toscano2025athena} (Fig.~\ref{fig:loop_efficiency}). Nevertheless, these strategies restart on every new problem because validated experience is not retained. Retrieval-augmented generation or search over previous solutions offers one response, but textual or embedding proximity is not guaranteed to reflect solver-relevant scientific similarity (Fig.~\ref{fig:graft_semantic_geometry}D; Fig.~\ref{fig:retrieval_baselines} and Table~\ref{tab:topk_benchmark}), and even a retriever trained to rank well would return precedents, not the dependency structure, admissibility rules, and action probabilities that convert a precedent into a search prior. What is missing is a persistent substrate organized by scientific similarity, admissibility, and observed outcomes.

\begin{figure}[H]
\centering
\includegraphics[width=1\textwidth]{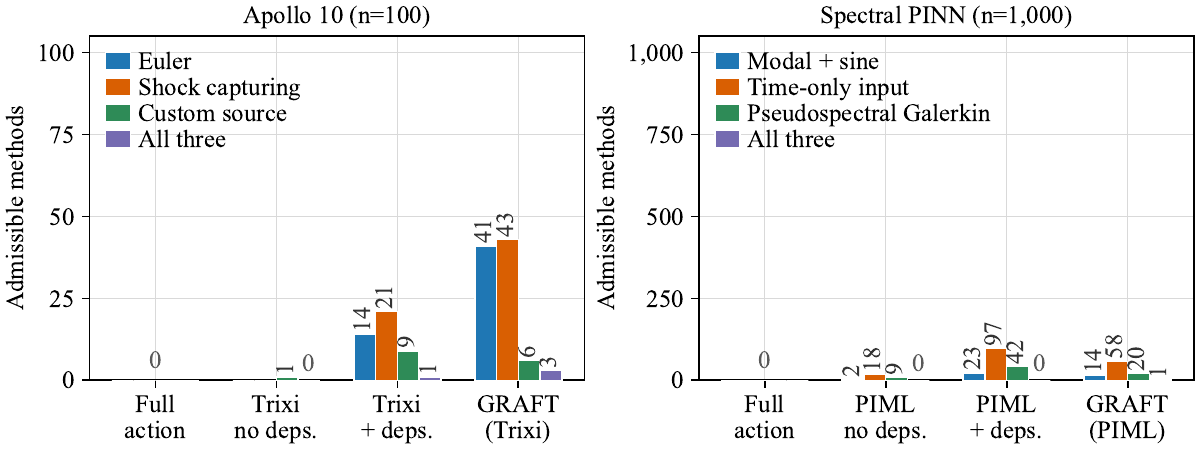}
\caption{
\textbf{Sampling alone rarely assembles the coupled choices required for scientific method discovery.}
Each panel asks whether sampling can recover three choices that must appear together in the final method.
(\textbf{Left}) The Apollo Mach-10 method requires the Euler equations, shock capturing, and a custom axisymmetric source term. After 100 samples, neither flattened uniform sampling nor tree-structured sampling without dependencies recovers the complete method. Adding the encoded dependencies prevents incompatible choices and makes valid methods much easier to find, although only one sample contains all three required components. The GRAFT prior further favors the Euler equations and shock capturing because nearby solved problems used those choices. It provides little guidance toward the custom source term because that problem-specific choice is absent from the retrieved cases.
(\textbf{Right}) The spectral PINN requires a modal sine representation, a time-only network, and a pseudospectral Galerkin residual. None of the uniform procedures recovers all three choices in 1,000 samples. The GRAFT prior produces one sample containing all three, but the resulting method uses four networks and omits the hard initial and boundary constraints of the validated architecture. In both examples, previous solutions guide sampling toward familiar parts of the method but provide little guidance toward the unfamiliar choices that distinguish a new method. Mathematical formalization supplies that missing direction; GRAFT then encodes the resulting method so that it can guide later problems.}
\label{fig:sampling_sensitivity}
\end{figure}

\begin{figure}[H]
\centering
\includegraphics[width=1\textwidth]{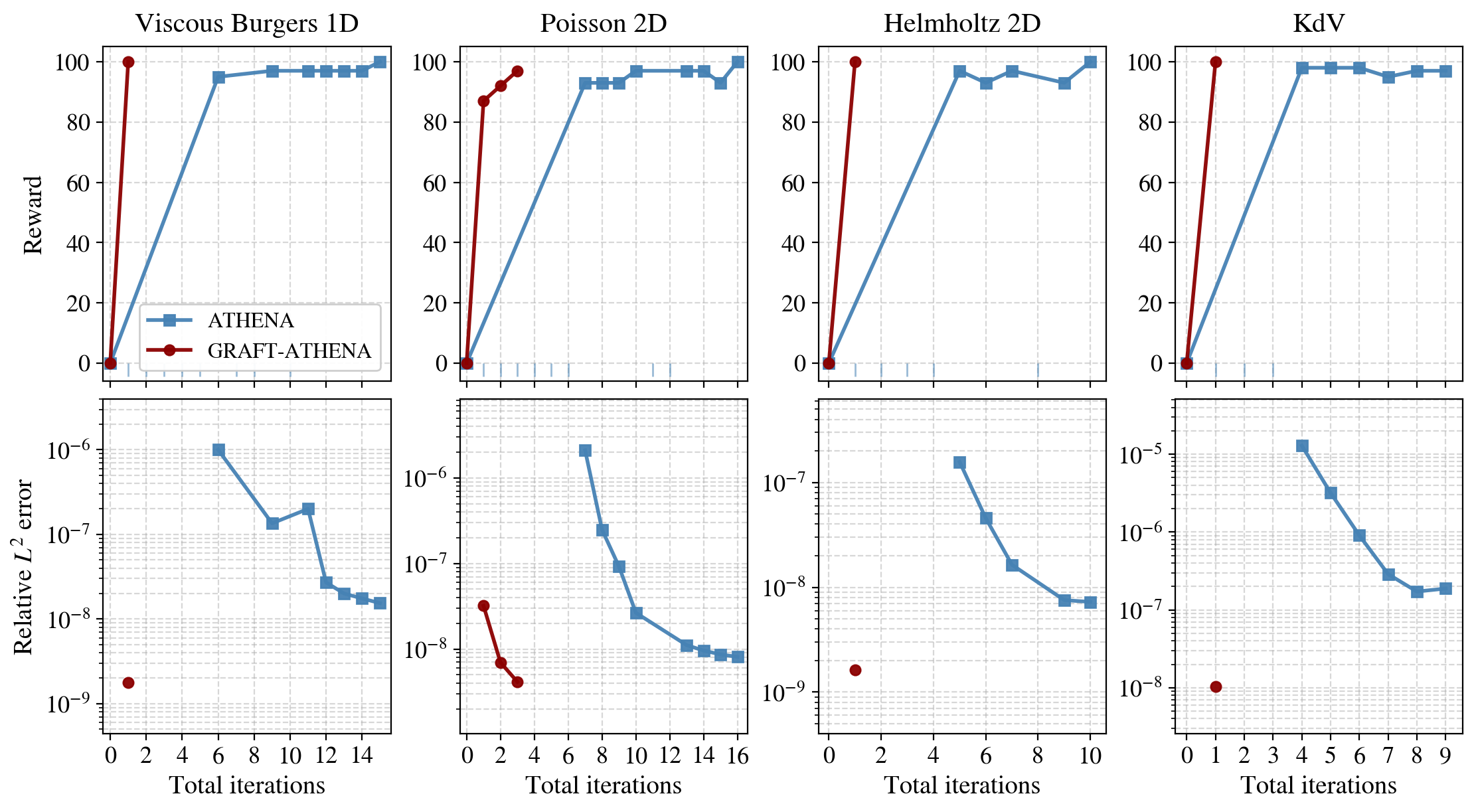}
\caption{
\textbf{Persistent solved-case memory accelerates refinement across related PIML problems.}
Reward (top) and relative $L^2$ error (bottom) across outer-loop attempts for viscous Burgers, Poisson, Helmholtz, and KdV. GRAFT--ATHENA (red) uses the methods and outcomes of fingerprint-similar solved cases to initialize the action prior; ATHENA without GRAFT (blue) retains the same within-problem refinement loop but resets before each new problem. Short ticks mark attempts that did not return a scored run. Reward is each system's internal acceptance signal and is therefore interpreted within a system, whereas relative $L^2$ error is evaluated against the same external reference for both. GRAFT--ATHENA reaches acceptance within one to three attempts and attains a lower final error on all four problems, while ATHENA requires a longer sequence of execution and repair. The Poisson--Helmholtz sequence shows how this advantage accumulates: Poisson enters without a closely matched solved case and requires three GRAFT--ATHENA attempts, but once stored it becomes the highest-ranked neighbor for Helmholtz, which is accepted on the first attempt. ATHENA's memory resets between problems, so its experience on Poisson does not improve the starting point for Helmholtz; it requires 16 attempts for Poisson and another 10 for Helmholtz. Persistent memory therefore allows a solved problem to reduce the search cost of the next structurally related problem.}
\label{fig:loop_efficiency}
\end{figure}

This substrate follows naturally from scientific semantics. GRAFT organizes expert-defined characteristics, actions, dependencies, and admissibility rules into expandable problem and method graphs. Factorization and recursive projection then yield unique fingerprints that make scientific distance measurable and link structural similarity to evaluated experience. Built from expert sources, this representation lets GRAFT--ATHENA retain scientific structure, build on accumulated practice, and match or exceed published expert baselines. Structurally similar problems can therefore accelerate one another's search and expose hidden relationships (Fig.~\ref{fig:loop_efficiency}). In the cKAN case~\cite{ss2024chebyshev}, for example, shared MLP structure allowed the system to prove universality directly despite different architectural terminology (Theorem~\ref{thm:ckan_universality}).

Reusing accumulated experience and exposing hidden connections can solve new problems, but scientific research also requires new formulations. Sampling can be inefficient for discovery in large action spaces~\cite{romera2024mathematical,georgiev2025mathematical}: even with the required actions available, no uniform procedure recovered the validated spectral-PINN architecture within 1,000 candidates (Fig.~\ref{fig:sampling_sensitivity}). Thus, discovery in our study instead originated in mathematical analysis by the Formalization team. For the perivascular inverse problem, ill-posedness analysis produced a new regularizer; for periodic viscous Burgers, mathematical simplification produced the spectral-PINN architecture (Fig.~\ref{fig:verified_discovery}A--F). GRAFT's expandable structure then incorporated the new actions and dependencies without invalidating existing encodings or outcomes, making each discovery available to future problems. The proposals were subsequently implemented and evaluated, and their mathematical guarantees were machine-checked (Theorems~\ref{thm:perivascular_aposteriori} and \ref{thm:spectral_apriori}). Together, these stages enable GRAFT--ATHENA to formulate hypotheses, test them computationally, and verify their mathematical consequences.

Once the semantic spaces and seeds were established, GRAFT--ATHENA completed most evaluations autonomously, although human intervention remained crucial during conceptualization and final evaluation (Supplementary Tables~\ref{tab:human_intervention} and \ref{tab:human_intervention_b}). During conceptualization, experts must verify that automatically constructed graphs preserve the intended scientific structure. Because the exposed connections depend on this organization, future work should develop better semantic families and relationships across the currently separate problem and method spaces. Interaction also remains necessary when a formulation lacks prior support or requires actions absent from the existing representation~\cite{feng2026semi}. During final evaluation, most rewards are deterministic (Methods), but some rely on LLM assessment and can inherit the same associative limitations. For instance, in the Formal branch, Lean correctly verified the encoded LNO theorem, but that guarantee applied only to the encoded statement: the final group of LLM agents accepted it as satisfying the original request even though the statement was weaker than requested, requiring human review to strengthen the result (Theorem~\ref{thm:lno_universality}). More broadly, GRAFT provides an adaptive scientific context, but not causal reasoning. Its explicit and learnable structure nevertheless provides a foundation for developing counterfactual reasoning grounded in Pearl's framework~\cite{pearl2019seven,pearl2018book}.

\section{Methods}

\subsection{GRAFT: Graph Reduction to Adaptive Factored Trees}

\subsubsection{Setup}\label{sec:setup}

We model GRAFT--ATHENA as a stochastic policy
\begin{equation*}
    \pi : \mathcal{P} \to \Delta(\mathcal{M}),
\end{equation*}
where $\mathcal{P}$ is the space of admissible problems (PDE specifications, inverse-data problems, particle-dynamics cases, and formal statements in our setting), $\mathcal{M}$ is the space of admissible methods, and $\Delta(\mathcal{M})$ is the probability simplex over $\mathcal{M}$. Concretely, $\pi(m \mid p)$ is the probability of selecting method $m$ on problem $p$. When deployment requires a single answer, we take the deterministic readout
\begin{equation*}
    \hat m(p) = \arg\max_{m \in \mathcal{M}} \pi(m \mid p),
\end{equation*}
i.e., the mode of $\pi(\cdot \mid p)$.

The method space $\mathcal{M}$ is built from an action vocabulary $\mathcal{A}$, structured as a directed knowledge graph $\mathcal{G}_A = (V, E_A)$ with vertex set $V = \mathcal{A}$. Each vertex is an individual action, for instance ``use Adam,'' ``use KAN,'' or ``use Fourier features.'' Notice that, in this sense, the action tree decomposes naturally into $k$ chains, each a subtree rooted at an $s$-decision (a ``pick one'' choice) with leaf alphabet $\mathcal{A}_j \subseteq \mathcal{A}$, the set of action choices available at chain $j$. A method is then a tuple
\begin{equation*}
    m = (a_1, \dots, a_k), \qquad a_j \in \mathcal{A}_j \cup \{\varnothing_j\},
\end{equation*}
where the null action $\varnothing_j$ marks chains left inactive because an upstream choice did not open them. The full method space
\begin{equation*}
    \mathcal{M} \;\subseteq\; \prod_{j=1}^{k} \bigl(\mathcal{A}_j \cup \{\varnothing_j\}\bigr)
\end{equation*}
is the subset of tuples consistent with the dependency and structural-nesting rules introduced below.

The policy $\pi$ is informed by past runs, recorded as the observation set $\mathcal{D} = \{(p_i, m_i, O_i, r_i)\}_{i=1}^{N}$, where $p_i \in \mathcal{P}$ is the problem, $m_i = (a_{i,1}, \dots, a_{i,k}) \in \mathcal{M}$ the method used to solve it, and $O_i$ a tuple of method-family-specific observables recorded during the run. For instance for PIML a potential choice is $O_i = (t_i, e_i)$ with $t_i$ the wall-clock time and $e_i$ the relative $L^2$ error; for numerical solvers, $O_i$ may bundle conditioning, entropy, iteration count, and residual; for data-driven methods, throughput and prediction error. The observables carry the information needed to qualify $m_i$ through a quantitative metric, the reward $r_i \in [0, r_{\max}]$, with $r_{\max} = 100$ fixed by construction (see Eq.~\eqref{eq:computational_reward}). Notice that $r_i$ ranks methods under the chosen reward metric: high-reward entries in $\mathcal{D}$ reinforce successful $(p, m)$ pairs, while low-reward entries register weak performance and contribute little positive reinforcement to nearby aggregations through the non-negative weighting of Eq.~\eqref{eq:nbr_weight}.

The reward rubric inherits ATHENA's four axes~\cite{toscano2025athena} (accuracy, efficiency, details, and optimality) and adds integrity as a fifth component in this work. It adapts the scoring to an expected-value setting: rather than ranking the run against the empirical maximum observed across past runs, each axis is scored against the expected value calibrated from the nearest neighbors of $p$ in $\mathcal{D}$ (see Eq.~\eqref{eq:nbr_weight}), so the reward responds to performance relative to what comparable problems in the corpus have achieved. The accuracy axis and the flatness component of the details axis are computed analytically from $O_i$ (the relative $L^2$ error against the reference, the residual flatness on the converged trace); the remaining components are scored by the Advisor team (see SI) under per-family rubrics, with no live human-in-the-loop step at runtime. The exact per-family weights and analytic forms vary by problem class and are tracked in the run logs alongside $\mathcal{D}$.

\subsubsection{Graph reduction: from DAG to tree}

Although the formulation above is consistent, working with $\mathcal{G}_A$ directly is impractical. A graph is convenient as a picture but cumbersome to operate on: in a DAG, a node reachable through several parents inherits a different role on each incoming path, so the information attached to it depends on context rather than on the node itself. To recover an independent meaning for each node, we project $\mathcal{G}_A$ onto a tree $\mathcal{T}$, on which every node has a single parent, every method corresponds to one root-to-leaf path per chain, and each node lands at a unique location under the embedding of Eq.~\eqref{eq:phi_embedding}. The cross-edge information sacrificed in the projection is preserved as a separate set of rules, introduced below.

The same construction applies to the problem graph $\mathcal{G}_P$, so we describe it generically for an arbitrary knowledge graph $\mathcal{G}$ and instantiate it independently on $\mathcal{G}_P$ and $\mathcal{G}_A$ to obtain $\mathcal{T}_P$ and $\mathcal{T}_A$. Two edge types in $\mathcal{G}$ are read directly from the underlying decisions: ``characterized by'', denoted $c$ and meaning ``all of these apply'', is the multi-attribute family-membership backbone, and ``subdivides in'', denoted $s$ and meaning ``pick one'', is the mutually exclusive discriminative decision that constitutes a chain alphabet $\mathcal{A}_j$. The reduction selects a spanning tree of $\mathcal{G}$ rooted at its root, so that every non-root vertex retains exactly one parent and multi-parent links are broken; for instance, a Fourier-features node reachable from both ``periodic domain'' and ``spectral architecture'' keeps a single canonical parent. The resulting tree
\begin{equation*}
  \mathcal{T} = (V_{\mathcal{T}},\ E_{\mathcal{T}})
\end{equation*}
has its internal vertices given by the $c$- and $s$-decision nodes, i.e., the chain alphabets $\{\mathcal{A}_j\}_{j=1}^{k}$ together with the $c$-categories that aggregate them, and its leaf vertices given by the actions $a \in \mathcal{A}_j$. Consequently, a method $m = (a_1, \dots, a_k)$ with $a_j \in \mathcal{A}_j$ is precisely a leaf-per-chain selection on $\mathcal{T}$.

\subsubsection{Encoding dependencies}\label{sec:encoding_dependencies}

The tree $\mathcal{T}$ exposes a natural unit of decision below the chain alphabets themselves. Cutting $\mathcal{T}$ at every $c$-node other than its own root partitions the tree into a forest whose pieces are rooted at $c$-nodes, contain only $s$-decision interior vertices, and terminate at leaves or at the next $c$-node down. We call each such piece a subchain and write $\mathcal{C} = \{C_1, \dots, C_{|\mathcal{C}|}\}$ for the resulting set. A subchain is a small Markov decision process: at each $s$-node along it the policy picks one child, the picked child becomes the next $s$-node, and the process terminates when a leaf or another $c$-node is reached. Subchains are the natural unit on which $\pi$ acts, and the chain alphabets $\{\mathcal{A}_j\}_{j=1}^{k}$ are recovered as the leaf alphabets of the subchains that terminate in leaves. Each tree node lies in exactly one subchain, recorded by the enclosing-chain map $\nu : V(\mathcal{T}) \to \mathcal{C} \cup \{\bot\}$, where $\bot$ is a sentinel marking the absence of an enclosing chain ($\nu(u) = \bot$ only at the global root). The nesting of subchains within $\mathcal{T}$ is recorded by the nearest-enclosing $c$-parent map $\rho : \mathcal{C} \to \mathcal{C} \cup \{\bot\}$: for each subchain $C$ rooted below a $c$-node, $\rho(C)$ is the first chain re-entered by walking up along $c$-edges from $C$'s root, or $\bot$ if the global root is reached without crossing into another chain. The pair $(\nu, \rho)$ records the structural nesting of $\mathcal{T}$.

The cross-edges discarded by the spanning-tree projection of the previous section are still needed: they carry the couplings between distinct subchains that $\mathcal{T}$ alone cannot represent. We collect them as a rule set $\mathcal{R}$ and defer the formal structure of an individual rule to the operator definitions below; for now $\mathcal{R}$ is the explicit list of cross-subchain couplings between the subchains in $\mathcal{C}$.

A direct way to use this structure would be to flatten $\mathcal{T}$ into a single decision step over the full leaf tuple $m = (a_1, \dots, a_k) \in \mathcal{A}_1 \times \cdots \times \mathcal{A}_k$, modeling $\pi$ as a distribution on the Cartesian product. The formulation is correct but treats every leaf tuple as its own atomic action, so the action space grows as $\prod_j |\mathcal{A}_j|$, exponentially in the number of subchains, and the rules in $\mathcal{R}$ are enforced only by carving forbidden tuples out of the product after the fact rather than by structuring the decision.

Observe instead that a $c$-node carries the meaning ``all of these apply'', so its descendant subchains are co-instantiated as parallel attributes rather than alternatives. We model them as conditionally independent in the absence of rules, and the rules in $\mathcal{R}$ collect exactly the cross-chain couplings the model chooses not to drop. Most pairs of subchains are therefore taken to be independent, and the few that are not are linked by an explicit cross-edge whose endpoints we know. The flat product wastes this structure by treating every pair as if it were coupled.

The easiest way to encode this observation would be to declare the subchains outright independent and write the policy as $\pi(m) = \prod_{j=1}^{k} \pi_j(a_j)$, with each $\pi_j \in \Delta(\mathcal{A}_j)$ taken from $\mathcal{T}$. Notice that under this construction the action space collapses from $|\mathcal{M}_0|$ to a sum of $k$ small distributions, but the cross-chain couplings collected in $\mathcal{R}$ disappear with it, and those couplings are precisely what records that a choice on one chain can change what is admissible on another, which we cannot afford to lose. However, most of the structural information needed to assemble a working method lives in $\mathcal{R}$, so the goal is a factorization that keeps those couplings while still reducing the action space below the naive Cartesian product.

\subsubsection{The action space and its factorization}\label{sec:decision_operators}

Notice that a useful factorization must reduce the joint while simultaneously preserving the constraints in $\mathcal{R}$ at the chains where they fire. We achieve this by lifting each rule into a local edit on the affected chain's distribution. In this setting, a rule $R_\ell \in \mathcal{R}$ is a four-tuple
\begin{equation}\label{eq:rule_tuple}
R_\ell \;=\; \bigl(h_\ell,\ T^{\mathrm{trig}}_\ell,\ T^{\mathrm{tgt}}_\ell,\ e_\ell\bigr),
\end{equation}
where $h_\ell$ is the natural-language sentence from which the rule is lifted, $T^{\mathrm{trig}}_\ell$ is a tuple of values on chains other than the affected one whose presence activates the rule, $T^{\mathrm{tgt}}_\ell \subseteq \mathcal{A}_{j(\ell)}$ is the target slice on the affected chain $j(\ell)$, and $e_\ell \in \{\mathcal{F}, \mathcal{Z}\}$ is the effect, namely force or zero out. For instance, in Trixi.jl the documented constraint that the Galerkin formulation \texttt{MHD-GLM} requires the surface flux to be either \texttt{LLF} or \texttt{HLLE} lifts to a rule with $T^{\mathrm{trig}} = \{\texttt{MHD-GLM}\}$ on the equation chain, $T^{\mathrm{tgt}} = \{\texttt{LLF}, \texttt{HLLE}\}$ on the surface-flux chain, and effect $\mathcal{F}$. Each rule contributes, for every $t \in T^{\mathrm{trig}}_\ell$, a directed edge $\nu(t) \to j(\ell)$ between distinct chains, and we collect these into the rule-induced edge set $E$ on $\mathcal{C}$ (dropping any $\nu(t) = j(\ell)$ self-loop, which contributes nothing).

Notice that the two effects $\mathcal{Z}$ and $\mathcal{F}$ correspond to the way a domain expert is most likely to phrase each constraint: $\mathcal{Z}$ for ``this combination is not allowed'' and $\mathcal{F}$ for ``this combination requires that''. Formally, given a current distribution $\pi_j \in \Delta(\mathcal{A}_j)$ on the leaves of chain $j$ and a target slice $T \subseteq \mathcal{A}_j$,
\begin{equation}\label{eq:zero_op}
\mathcal{Z}(\pi_j;\, T)(v) \;=\; \begin{cases} 0 & v \in T,\\[2pt] \dfrac{\pi_j(v)}{1 - \sum_{w \in T} \pi_j(w)} & v \notin T, \end{cases}
\end{equation}
\begin{equation}\label{eq:force_op}
\mathcal{F}(\pi_j;\, T)(v) \;=\; \begin{cases} \dfrac{\pi_j(v)}{\sum_{w \in T} \pi_j(w)} & v \in T,\\[2pt] 0 & v \notin T, \end{cases}
\end{equation}
where $\mathcal{Z}$ zeroes the target mass and renormalizes over the survivors and $\mathcal{F}$ does the dual, restricting support to the target and renormalizing there. The two are duals under set complement, $\mathcal{F}(\pi_j;\, T) = \mathcal{Z}(\pi_j;\, \mathcal{A}_j \setminus T)$. Both operators are well defined whenever the surviving target carries positive pre-mass: $\mathcal{Z}(\pi_j;\, T)$ requires $\pi_j(\mathcal{A}_j \setminus T) > 0$, and $\mathcal{F}(\pi_j;\, T)$ requires $\pi_j(T) > 0$. Sequential composition is order-independent on the open set where every intermediate denominator is positive; if cumulative effects exhaust support, the rule set is internally inconsistent at that trigger and the build aborts.

Notice that the effect of these operators can be combined. By construction, three properties of $\mathcal{F}$ and $\mathcal{Z}$ make them safe to apply in sequence on the same chain. (1) A leaf that survives keeps post-mass proportional to its pre-mass, so the policy's preference ordering inside the allowed set is untouched. (2) Both operators act on $\mathcal{A}_j$ alone (the trigger encodes where the constraint comes from, the operator only enforces it locally), so the per-chain factorization is preserved. (3) Multiple rules firing on the same chain commute under composition: two zero-outs reduce to support intersection, two forces likewise, and a force followed by a zero-out lands on $T^{\mathrm{tgt}}_F \setminus T^{\mathrm{tgt}}_Z$ with the original mass ratios irrespective of order. The composite distribution at sample time is therefore well defined regardless of application order, on the open set where the per-step support condition above holds. If the cumulative effect leaves empty support, the build aborts with the offending pair as a witness.

With operators in hand, the chains can be organized so that every rule's triggers are resolved before the chain it acts on. Collecting the rule-induced edges $E$ together with the structural-nesting edges $E_\rho = \{\rho(C) \to C : C \in \mathcal{C},\ \rho(C) \neq \bot\}$ defined above gives the chain dependency graph $H = (\mathcal{C},\, E \cup E_\rho)$, and the level of a chain is its longest-path length in $H$, well-defined whenever $H$ is acyclic (checked at build time, below). Equivalently, a level is a maximal set of chains whose triggers all reference values on chains assigned to strictly lower levels.

Under this construction, sampling proceeds level by level: at each chain $j$ active at the current level, the prior $\pi_j$ is taken from $\mathcal{T}$, every rule $R_\ell$ with $j(\ell) = j$ whose trigger has been resolved by previously sampled values is applied through its effect operator (in any order, by commutativity), and $a_j$ is sampled from the resulting distribution
\begin{equation}\label{eq:edited_chain}
\widetilde\pi_j\bigl(\,\cdot\,\bigm|\, a_{<\mathrm{level}(j)}\bigr) \;=\; e_{\ell_r}\!\Bigl(\,\cdots\, e_{\ell_1}\bigl(\pi_j;\, T^{\mathrm{tgt}}_{\ell_1}\bigr) \cdots\, ;\, T^{\mathrm{tgt}}_{\ell_r}\Bigr),
\end{equation}
where $\{\ell_1, \dots, \ell_r\}$ are the rules whose triggers are met by the lower-level values $a_{<\mathrm{level}(j)}$. To assign mass to the inactive case, write $\mathrm{act}_j(a_{<\mathrm{level}(j)}) \in \{0, 1\}$ for the indicator that chain $j$ is opened by the lower-level picks (determined by the structural nesting of $\mathcal{T}$ and any $\mathcal{F}/\mathcal{Z}$ effects in $\mathcal{R}$ that close $j$ outright), and extend the per-chain kernel to the augmented alphabet $\mathcal{A}_j \cup \{\varnothing_j\}$ by
\begin{equation}\label{eq:bar_pi_kernel}
\bar\pi_j\bigl(a_j \bigm| a_{<\mathrm{level}(j)}\bigr) \;=\;
\begin{cases}
\widetilde\pi_j\bigl(a_j \bigm| a_{<\mathrm{level}(j)}\bigr), & \mathrm{act}_j = 1 \text{ and } a_j \in \mathcal{A}_j,\\[2pt]
1, & \mathrm{act}_j = 0 \text{ and } a_j = \varnothing_j,\\[2pt]
0, & \text{otherwise.}
\end{cases}
\end{equation}
The full policy factorizes, for fixed problem $p$ and fixed repository $\mathcal{D}$, as
\begin{equation}\label{eq:level_factorisation}
\pi(m \mid p, \mathcal{D}) \;=\; \prod_{j=1}^{k} \bar\pi_j\bigl(a_j \bigm| a_{<\mathrm{level}(j)},\, p,\, \mathcal{D}\bigr),
\end{equation}
where each factor is the extended per-chain kernel of Eq.~\eqref{eq:bar_pi_kernel}, with $\widetilde\pi_j$ from Eq.~\eqref{eq:edited_chain} on its active branch. The construction reduces the action space from $|\mathcal{M}_0|$ to a sum of $k$ small distributions while preserving the cross-chain constraints in $\mathcal{R}$ where they fire, which was the goal stated above.

The procedure that derives $\mathrm{level}(\cdot)$ from $\mathcal{T}$ and $\mathcal{R}$ is the build-time pipeline given in Algorithm~\ref{alg:graft_build} in the SI: it reduces $\mathcal{G}_A$ to $\mathcal{T}$, identifies the chain set $\mathcal{C}$ with its structural maps $(\nu, \rho)$, expands each rule into the directed edges it induces on $\mathcal{C}$, and assigns levels by a monotone fix-point. Acyclicity of $H$ is checked via Tarjan's strongly-connected-components algorithm~\cite{tarjan1972}: a non-singleton component is a structural cycle witness and aborts the build. Each fix-point pass then extends levels by one edge along the longest path of $H$, so the loop converges in $\mathrm{longest\text{-}path}(H)$ passes and the total build cost is $O\bigl((|\mathcal{C}| + |E \cup E_\rho|) \cdot \mathrm{longest\text{-}path}(H)\bigr)$, dominated by the level assignment.

What the decomposition gives us is more than a rewrite of $\pi$ in fewer parameters. The cross-chain couplings collected in $\mathcal{R}$ are exactly the dependencies we cannot afford to lose, and the construction has them all: $\widetilde\pi_j$ depends on the lower-level values $a_{<\mathrm{level}(j)}$ only through the chains its rule triggers reference, so every coupling has an explicit graphical witness in the dependency structure. The conditional independencies asserted by the factorization are read off the absent edges of $H$, which by construction are the chain pairs that $\mathcal{R}$ does not couple. Writing $\mathrm{pa}_H(j) = \{\, i \,:\, i \to j \in E \cup E_\rho \,\}$ for the parents of chain $j$ in $H$, this is the tail-boundary condition of Verma \& Pearl's $I$-map theorem~\cite{vermapearl1988causal}, which yields the following.

\begin{proposition}[$I$-map of the decision-level factorization]\label{prop:imap}
For fixed problem $p$ and fixed repository $\mathcal{D}$, $H = (\mathcal{C},\, E \cup E_\rho)$ with parent function $\mathrm{pa}_H$ is an $I$-map of the policy $\pi(\,\cdot\,\mid p, \mathcal{D})$ of Eq.~\eqref{eq:level_factorisation}, and Eq.~\eqref{eq:level_factorisation} is equivalently the canonical Bayesian-network factorization on $H$~\cite[Ch.~3 \S3.3.2]{pearl1988probabilistic},
\begin{equation}\label{eq:bn_factorisation}
\pi(m \mid p, \mathcal{D}) \;=\; \prod_{j=1}^{k} \bar\pi_j\bigl(a_j \bigm| a_{\mathrm{pa}_H(j)},\, p,\, \mathcal{D}\bigr).
\end{equation}
\end{proposition}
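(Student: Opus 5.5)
The argument has three steps. First, show that each extended kernel $\bar\pi_j$ of Eq.~\eqref{eq:bar_pi_kernel} depends on the lower-level values $a_{<\mathrm{level}(j)}$ only through $a_{\mathrm{pa}_H(j)}$; this immediately turns Eq.~\eqref{eq:level_factorisation} into Eq.~\eqref{eq:bn_factorisation}. Second, check that the level ordering is a topological order of $H$. Third, invoke the standard result that a product of locally normalized conditionals in topological order is the Bayesian-network factorization, and that such a factorization makes the DAG an $I$-map. This last implication is the Verma--Pearl direction: factorization implies the local Markov property, which by soundness of $d$-separation implies the global one~\cite{vermapearl1988causal,pearl1988probabilistic}.

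The locality step is the real content. For fixed $p$ and $\mathcal{D}$, the base prior $\pi_j$ is a fixed element of $\Delta(\mathcal{A}_j)$, so any dependence on other chains enters only through two routes.
\begin{enumerate}
\item The set of rules whose triggers fire. A rule $R_\ell$ with $j(\ell)=j$ reads only values $t\in T^{\mathrm{trig}}_\ell$. Each such $t$ lies on the chain $\nu(t)$, and by construction $\nu(t)\to j\in E$. Hence the firing set, and therefore $\widetilde\pi_j$, is a function of $a_{\mathrm{pa}_H(j)}$ alone. Commutativity of the operators $\mathcal{F}$ and $\mathcal{Z}$ makes the composite in Eq.~\eqref{eq:edited_chain} independent of the order of application.
\item The activation indicator $\mathrm{act}_j$. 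It depends on the structural nesting, which is captured by the edge $\rho(C_j)\to C_j\in E_\rho$, and on any rule that closes $j$ outright. Such a rule also has target chain $j$, so it likewise contributes edges into $j$ from its trigger chains.
\end{enumerate}
So $\bar\pi_j(a_j\mid a_{<\mathrm{level}(j)})=\bar\pi_j(a_j\mid a_{\mathrm{pa}_H(j)})$.

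For the ordering, every edge $i\to j$ of $H$ satisfies $\mathrm{level}(i)<\mathrm{level}(j)$, because levels are longest-path lengths in the acyclic $H$; acyclicity is certified at build time by the Tarjan check. The parents of $j$ are therefore among the lower-level chains, and the product in Eq.~\eqref{eq:bn_factorisation} is a genuine chain-rule factorization along a topological order. Within a level, chains share no edges and are sampled independently. One must also confirm that each $\bar\pi_j(\cdot\mid a_{\mathrm{pa}_H(j)})$ is a probability distribution on $\mathcal{A}_j\cup\{\varnothing_j\}$. This follows from the well-definedness conditions on $\mathcal{F}$ and $\mathcal{Z}$: the build aborts on empty support. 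Then summing out variables in reverse topological order shows that the product is normalized and that each factor equals the true conditional $\pi(a_j\mid a_{\mathrm{pa}_H(j)})$.

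I expect the main obstacle to be the bookkeeping around inactive chains and triggers on inactive or deeper-nested chains. A trigger value $t$ may sit on a chain that is itself closed, in which case its value is $\varnothing$ and the rule should be read as not fired. We also need $\mathrm{act}_j$ to depend only on direct parents, not on ancestors further up the $\rho$-nesting. I would handle the latter by induction along $E_\rho$: if $\rho(C_j)$ is inactive then $\varnothing$ appears on that parent, and that value alone determines $\mathrm{act}_j=0$. This keeps the dependence within $\mathrm{pa}_H(j)$. Once that is settled, the $I$-map conclusion is a direct citation.
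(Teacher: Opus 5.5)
Your proposal is correct and follows essentially the same route as the paper. The paper's proof also shows that each $\bar\pi_j$ depends on $a_{<\mathrm{level}(j)}$ only through $a_{\mathrm{pa}_H(j)}$, with rule triggers entering via $E$ and activation via $E_\rho$. It uses Tarjan-certified acyclicity to obtain a topological order extending the levels, lifts the kernel equality through the chain rule to the tail-boundary independence $a_j \perp \mathrm{pred}_\theta(j)\setminus\mathrm{pa}_H(j) \mid \mathrm{pa}_H(j)$, and then cites Verma--Pearl. Your factorization-implies-Markov phrasing, and your explicit handling of normalization and of $\varnothing$-valued triggers and parents, are just more detailed bookkeeping of that same argument.
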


The proof, which verifies the hypotheses of Verma \& Pearl's theorem for the construction of Eq.~\eqref{eq:bn_factorisation}, is given in the SI. The upshot is that every conditional independence read off $H$ via d-separation holds in $\pi(\,\cdot\,\mid p, \mathcal{D})$, so no coupling in $\mathcal{R}$ or in the nesting structure of $\mathcal{T}$ is silently dropped by the per-chain factorization.

Notice that the Bayesian-network form of Eq.~\eqref{eq:bn_factorisation} also yields a sharp storage bound. A dense conditional table for chain $j$ on its parent context would scale as $|\mathcal{A}_j| \prod_{i \in \mathrm{pa}_H(j)} |\mathcal{A}_i|$, and the full joint over the naive method space $\mathcal{M}_0 = \mathcal{A}_1 \times \cdots \times \mathcal{A}_k$ as $\prod_{j=1}^{k} |\mathcal{A}_j|$, exponential in $k$. GRAFT stores neither: for each chain $j$ and each internal $s$-node $u$, it keeps a single row over the children of $u$, shared across all parent contexts, and the rule set $\mathcal{R}$ then acts as row masks and overrides whenever triggers from $a_{\mathrm{pa}_H(j)}$ fire. Total storage therefore scales as $\sum_{j=1}^{k} |\mathcal{A}_j| + |\mathcal{R}|$ rather than $\prod_{j=1}^{k} |\mathcal{A}_j|$, reducing the parameter footprint from exponential to linear in $k$ (the row-level construction is taken up in Eqs.~\eqref{eq:policy_data}--\eqref{eq:policy_blend}). GRAFT compactly represents the same declared admissibility constraints and shared base preferences as their fully expanded conditional form: the dense table stores one categorical row per parent context, while GRAFT stores one shared row per decision plus the declared force/zero-out rules. The count therefore measures the cost of storing the declared expert policy, not equivalence to an arbitrary context-dependent conditional table. Finer context dependence is carried by the node hints and the proposer--critic agent (see below) rather than by stored probabilities.

\subsubsection{Knowledge graphs and embedding}\label{sec:embedding}

Notice that the factorization of Eq.~\eqref{eq:level_factorisation} fixes $\pi$'s structure but does not yet say how the per-chain priors $\pi_j$ should be populated for a new problem. We want the prior to reflect past experience: when a new problem $p$ arrives, $\pi_j$ should be biased by what worked on problems already solved that resemble it, with closer neighbors weighted more heavily than distant ones. The categorical paths in $\mathcal{T}_P$ do not give us a numerical distance directly, so we need to define an embedding of the descriptors of $\mathcal{T}_P$ to identify which problems are close to each other and to the new problem $p$.

Observe that the problem tree $\mathcal{T}_P$ has nodes that are categorical descriptors (equation type, dimensionality, boundary conditions, regularity, and so on), and the action tree $\mathcal{T}_A$ has nodes that are method decisions (architecture, optimizer, loss, weighting). Each problem $p$ picks out one root-to-leaf path $\sigma_P(p) = (v_1, \dots, v_d)$ through $\mathcal{T}_P$, and each method $m = (a_1, \dots, a_k)$ picks out one root-to-leaf path through $\mathcal{T}_A$ per active chain $j$ (inactive chains carry $a_j = \varnothing_j$ and contribute none); we write $\sigma_A(m)$ for the union of those chain paths. In short, $\sigma_P$ records which descriptors a problem instantiates, $\sigma_A$ records which decisions a method makes, and together they are the vertex sets that every downstream construction in this section acts on.

To turn these categorical paths into vectors that can be compared geometrically, each vertex $v$ of $\mathcal{T}$ receives a deterministic position in the unit cube via the recursive map
\begin{equation}\label{eq:phi_embedding}
    \Phi : V \to [0,1]^3, \qquad v \mapsto \bigl(x(v),\ y(v),\ z(v)\bigr), \qquad z(v) = d(v)/D,
\end{equation}
where $D = \max_{v \in V} d(v)$ is the tree depth, so $z(v) \in [0,1]$ and the leaf layer lands on $z = 1$. The $(x, y)$ coordinates are computed by Algorithm~\ref{alg:partition_layout} in the SI: the root owns the unit square, every interior vertex sits at the center of its $(x, y)$-rectangle, and its children are slotted into off-center sub-rectangles by splitting along $x$ for $s$-children (the mutually-exclusive picks) and along $y$ for $c$-children (the joint attributes). For odd-$n$ groups the algorithm opens $n + 1$ slots and drops the middle one, so no child centroid coincides with the parent's midpoint and the parent's projection is never shadowed by any descendant.

We make explicit the one structural property of $\mathcal{T}$ that the construction relies on: every internal node has children of a single edge type, i.e., a node is either subdivided ($s$-children) or characterized ($c$-children), never both. The ontology enforces this throughout, so the recursion at every $v$ performs exactly one partition along a single axis, and the children's sub-rectangles tile $v$'s rectangle without overlap. Under that assumption, $\Phi$ is injective:

\begin{proposition}[Injectivity of $\Phi$]\label{prop:phi_injective}
Let $\mathcal{T}$ be a finite rooted tree with edge-type labels $\tau$ satisfying the uniform-children structural assumption above. Let $\Phi : V(\mathcal{T}) \to [0,1]^3$ be the embedding produced by Algorithm~\ref{alg:partition_layout}. Then $\Phi$ is injective on $V(\mathcal{T})$, and so is its planar projection $\pi_{xy} \circ \Phi : V(\mathcal{T}) \to [0,1]^2$.
\end{proposition}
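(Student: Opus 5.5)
It suffices to prove injectivity of the planar projection $\pi_{xy}\circ\Phi$, since injectivity of $\Phi$ follows immediately: if $\Phi(u)=\Phi(v)$ then in particular $\pi_{xy}\Phi(u)=\pi_{xy}\Phi(v)$. The natural tool is the rectangle $R(v)\subseteq[0,1]^2$ that Algorithm~\ref{alg:partition_layout} assigns to each vertex. Each vertex is placed at the center $c(v)$ of its own rectangle. We then establish two structural invariants by induction on depth. First (\emph{nesting}), $R(w)\subseteq R(v)$ whenever $w$ is a descendant of $v$. Second (\emph{disjointness}), the rectangles of distinct children of a common parent have disjoint interiors. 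The disjointness step is exactly where the uniform-children assumption enters: every internal $v$ is split along a single axis ($x$ for $s$-children, $y$ for $c$-children) into consecutive slabs, so the children's rectangles tile a subset of $R(v)$ without overlap.

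Given these invariants, take distinct vertices $u\neq v$ and split into two cases.

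\emph{Case 1: incomparable vertices.} Let $w$ be the lowest common ancestor, and let $u',v'$ be the distinct children of $w$ on the paths to $u$ and $v$. By nesting, $c(u)\in R(u')$ and $c(v)\in R(v')$. Since $c(u)$ is the center of a nondegenerate rectangle, it lies in the interior of $R(u)\subseteq R(u')$, and likewise $c(v)$ lies in the interior of $R(v')$. The interiors of $R(u')$ and $R(v')$ are disjoint, so $c(u)\neq c(v)$. For this we need every rectangle to have positive width and height, which follows by induction since the slot widths are positive fractions of the parent's extent.

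\emph{Case 2: one vertex is an ancestor of the other.} Say $u$ is a proper ancestor of $v$, and let $u'$ be the child of $u$ on the path to $v$. Then $c(v)\in R(u')$. It therefore suffices to show that $c(u)\notin R(u')$; more precisely, $c(u)$ lies outside the closed slab occupied by every child rectangle. This is the role of the odd-$n$ rule. For even $n$, the $n$ equal slots along the split axis have the midpoint of $R(u)$ as a shared boundary between slots $n/2$ and $n/2+1$, and not in any slot's interior. For odd $n$, opening $n+1$ slots and dropping the middle one leaves the midpoint inside the removed slot.

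The main obstacle is the even case, where the midpoint lies on a closed boundary. Closed rectangles $R(u')$ do contain it, so the bare containment argument fails. I would fix this by strengthening the invariant to \emph{strict interiority}: every descendant center $c(v)$ lies in the open interior of $R(u')$. This holds because $c(v)$ is the center of a nondegenerate rectangle $R(v)\subseteq R(u')$, and the center of a nondegenerate subrectangle cannot lie on the boundary of an enclosing rectangle. Since the parent midpoint lies on the boundary of (or outside) every child slab along the split axis, we get $c(v)\neq c(u)$. I would then check the algorithm's exact slot arithmetic, whatever off-center offsets it uses, to confirm that the parent's coordinate on the split axis is never interior to any child slot. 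If the offsets differ from equal slots, the same argument goes through provided that property holds. Finally, the $z$-coordinate is not needed for the argument, but it provides a consistency check that Case 2 pairs already differ in depth in 3D.
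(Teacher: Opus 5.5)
Your proposal is correct and follows essentially the same route as the paper: nesting with non-degeneracy, disjoint sibling interiors via the uniform-children assumption, the parent centroid lying in no child's open rectangle, and a lowest-common-ancestor case split. One small precision: in the odd case, with $q=n+1$ and skipped index $\lceil q/2\rceil=q/2$, the parent midpoint is the shared endpoint of the populated slot $q/2-1$ and the skipped slot rather than a point strictly inside the skipped slot, but your strict-interiority argument already covers exactly this boundary situation.
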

The proof is given in the SI.

To compare problems through this embedding, the unit cube can be discretized into a $K \times K \times (D+1)$ grid, each vertex being assigned to its bin via the boundary-clamped bin map
\begin{equation}\label{eq:bin_K_methods}
    \beta_K(v) = \bigl(\,\min(K - 1,\ \lfloor K \cdot x(v) \rfloor),\ \min(K - 1,\ \lfloor K \cdot y(v) \rfloor),\ d(v)\,\bigr),
\end{equation}
so that the planar indices stay in $\{0, \dots, K - 1\}$ even at the boundary $x(v) = 1$ or $y(v) = 1$. In all results and figures reported in this work, the systems operate at the injective resolution $K^\star$ introduced below. The fingerprint of a solved problem $P_i \subset V$ at resolution $K \in \mathbb{N}$ is then
\begin{equation}\label{eq:fingerprint}
    F(P_i, K) = \bigl\{ \beta_K(v)\ :\ v \in P_i \cap V_{\text{keep}} \bigr\},
\end{equation}
a finite cell set in the discretized unit cube, equivalently a sparse binary tensor of shape $K \times K \times (D+1)$. To compare fingerprints, each depth layer carries a level weight $\omega(d) > 0$, non-increasing in $d$, so agreement on early, high-level decisions counts at least as much as agreement on deep refinements; because the third bin coordinate is the exact depth $d(v)$, every cell contains nodes of a single depth and inherits its weight unambiguously. Pairwise similarity between two fingerprints is the hierarchically weighted Jaccard
\begin{equation}\label{eq:fingerprint_jaccard}
    J_K(P_i, P_j) = \frac{\sum_{c \in F(P_i, K) \cap F(P_j, K)} \omega\bigl(d(c)\bigr)}
                         {\sum_{c \in F(P_i, K) \cup F(P_j, K)} \omega\bigl(d(c)\bigr)}
    \in [0, 1],
\end{equation}
where $d(c)$ denotes the depth index of cell $c$.
The active node set $V_{\text{keep}}$ controls which edge types contribute: restricting to $s$-edges (the default) yields a discriminative measure, while including the $c$-backbone instead measures cross-family resemblance.

The grid resolution $K$ controls how finely the embedding is resolved. Let $K^\star$ denote the smallest $K$ at which $\beta_K$ is injective on the current $V(\mathcal{T})$. At $K = K^\star$, distinct nodes never share a cell and $J_K$ separates problems down to exact set membership in $\mathcal{T}_P$. Note that injectivity is asserted at $K^\star$ on the present tree, not for every $K \geq K^\star$: floor binning shifts grid lines as $K$ varies, so two nodes separated at $K^\star$ may collide at a coarser or finer resolution. Neighbor ranking and policy aggregation operate at $K^\star$, where cells and nodes are in bijection and $J_{K^\star}$ is evaluated directly on the node sets $P_i \cap V_{\text{keep}}$, and the rendered spaces of Fig.~\ref{fig:graft_semantic_geometry} and Supplementary Figs.~\ref{fig:supp_full_space_pde_piml} and \ref{fig:supp_full_space_formal_dpd} draw the continuous coordinates of Eq.~\eqref{eq:phi_embedding}, so every fingerprint reported in this work is the full representation (see Eq.~\eqref{eq:fingerprint_jaccard} with $\omega \equiv 1$; sensitivity to this choice is reported in Supplementary Table~\ref{tab:weight_sensitivity}).

The two vector spaces obtained from $\mathcal{T}_P$ and $\mathcal{T}_A$ support four downstream operations used throughout the rest of the paper: a distance between problems (how similar two PDEs are), a distance between methods (how similar two strategies are), a reward calibration that takes the nearest neighbor's reward as a baseline, and a landscape visualization in which problem vectors are projected onto one axis (via PCA), method vectors onto another, and points are colored by loss, wall time, or system size. Each solved problem becomes a point, and the landscape fills in as GRAFT--ATHENA solves more problems.

\subsubsection{Policy and prior update}\label{sec:policy_update}

Each chain $j$ carries a transition matrix $M^{(j)}$, with row $M^{(j)}(u, \cdot)$ the categorical distribution over the children of node $u$ and $M^{(j)}(u, v)$ the probability of selecting child $v$ given parent $u$. The chain-level prior $\pi_j \in \Delta(\mathcal{A}_j)$ defined above is the path product of these rows along $\mathcal{T}_A$, so populating $\pi_j$ for a new problem reduces to populating the rows $M^{(j)}(u, \cdot)$. We write $\mu^{(j)}$ for the uniform prior on chain $j$, with $\mu^{(j)}(u, \cdot)$ assigning equal probability to every child of $u$. Rule firings in $\mathcal{R}$ act on these rows via the operators $\mathcal{F}, \mathcal{Z}$ defined in Eqs.~\eqref{eq:zero_op} and \eqref{eq:force_op}, applied after the data-driven blend below so that documentation rules cannot be undone by neighbor evidence; the build-time validator rejects any zero-out operation that leaves empty support.

When a new problem $p_{\text{new}}$ arrives, the rows of $M^{(j)}$ are re-estimated from neighbor evidence. Notice that closeness to $p_{\text{new}}$ alone is not enough: a similar problem that was solved badly tells us little about which children of $u$ to favor, and a high-reward problem that is unrelated to $p_{\text{new}}$ should not bias the choice either. We therefore want neighbors that are simultaneously close to $p_{\text{new}}$ and were solved well, and the prior should be biased towards their actions in proportion to that joint score. Concretely, the top-$N_{\text{nbr}}$ nearest solved problems are picked by $J_{K^\star}$, and each neighbor $i$ contributes a sigmoid-gated reward weight
\begin{equation}\label{eq:nbr_weight}
    w_i \;=\; \sigma\bigl(\mathrm{sim}(p_{\text{new}}, p_i)\bigr) \cdot \frac{r_i}{r_{\max}}, \qquad \sigma(s) = \frac{1}{1 + e^{-\kappa(s - s_0)}}, \qquad (\kappa,\, s_0) = (7,\, 0.55),
\end{equation}
with $\mathrm{sim} := J_{K^\star}$ on $s$-leaves and $r_i / r_{\max} \in [0, 1]$ (see above). Each neighbor $i$ then contributes a per-node distribution $S_i$ over the children of every internal $u \in \mathcal{T}_A$, set to a one-hot vote on rows along $\sigma_A(m_i)$ and to the prior on rows it did not visit,
\begin{equation}\label{eq:partial_spec}
    S_i(u, \cdot) =
    \begin{cases}
        \delta_{m_i(u)}, & u \in \sigma_A(m_i),\\[2pt]
        \mu(u, \cdot), & u \notin \sigma_A(m_i),
    \end{cases}
\end{equation}
where $\delta_v$ is the one-hot row placing all mass on child $v$ and $m_i(u)$ is the child of $u$ chosen by neighbor $i$'s method along its root-to-leaf path. The data-driven estimate is the $w_i$-weighted average of these contributions across the full top-$N_{\text{nbr}}$ set, written here with the path-on / path-off split made explicit,
\begin{equation}\label{eq:policy_data}
    M_{\text{data}}(u, \cdot) = \frac{1}{W_{\text{tot}}} \Biggl[\sum_{i\,:\,u \in \sigma_A(m_i)} w_i\, \delta_{m_i(u)} \;+\; \sum_{i\,:\,u \notin \sigma_A(m_i)} w_i\, \mu(u, \cdot)\Biggr], \qquad W_{\text{tot}} = \sum_{i=1}^{N_{\text{nbr}}} w_i,
\end{equation}
with the trivial fallback $M_{\text{data}} = \mu$ when $W_{\text{tot}} = 0$. On rows every neighbor visits, the second sum is empty and the average reduces to a $w_i$-weighted vote among their picks; on rows no neighbor visits, the first sum is empty and the average collapses to $\mu$; intermediate rows interpolate, with shrinkage toward $\mu$ growing as the visiting weight share $\sum_{i:\,u \in \sigma_A(m_i)} w_i\, /\, W_{\text{tot}}$ decreases. The final policy is a row-wise convex combination of $M_{\text{data}}$ with $\mu$,
\begin{equation}\label{eq:policy_blend}
    M = \bar{W} \cdot M_{\text{data}} + (1 - \bar{W}) \cdot \mu, \qquad \bar W = \mathrm{clip}\!\left(\frac{W_{\text{tot}}}{N_{\text{eff}}},\, 0,\, 1\right), \qquad N_{\text{eff}} = \bigl|\{i : w_i > 0\}\bigr|,
\end{equation}
where the mixing weight $\bar W$ is the average confidence among neighbors that contribute non-trivial evidence (with the convention $\bar W = 0$ when $N_{\text{eff}} = 0$)~\cite{hoeting1999bma}. High $\bar W$ trusts the data-driven estimate, low $\bar W$ falls back to the prior, and $\bar W = 0$ degenerates $M$ cleanly to $\mu$. Rule operators are then applied on top of $M$, so any forced or forbidden support encoded in $\mathcal{R}$ overrides the blended estimate. Algorithm~\ref{alg:policy_update} in the SI bundles the full procedure.

After solving $p_{\text{new}}$ and observing $(m, r, O)$, the tuple $(p_{\text{new}}, m, O, r)$ is stored in $\mathcal{D}$, so future queries near $p_{\text{new}}$ may include it in their top-$N_{\text{nbr}}$ aggregation and the policy is updated without explicit retraining. When the action graph itself is expanded, new nodes inherit priors from their siblings and removed nodes redistribute mass back to siblings; the repository $\mathcal{D}$ therefore grows monotonically in stored observations. The local prior used for any given query may still shift if the top-$N_{\text{nbr}}$ neighbor set changes, so monotone growth of $\mathcal{D}$ does not imply monotone improvement of the policy at every individual problem.

\subsubsection{Initialization of the solved-case memory}
\label{sec:memory_initialization}

Before any evaluation target in the Results was posed, the repository $\mathcal{D}$ was initialized with 40 solved records: three PIML, thirteen Numerical, four DPD, and twenty Formal cases. Each record stores a problem fingerprint, the method that solved it, its observed outputs, and its reward. This inventory is the fixed initial support from which the target-dependent neighborhoods defined in Eq.~\eqref{eq:nbr_weight} are retrieved; it is not itself a list of the neighbors selected for any one target. Supplementary Tables~\ref{tab:memory_seeds_piml}--\ref{tab:memory_seeds_formal_b} report the complete initialization set and the provenance of each implementation or proof.

\subsubsection{From prior to agent}\label{sec:agent_sampler}

Notice that $\pi(m \mid p)$ in Eq.~\eqref{eq:bn_factorisation}, with rows $M^{(j)}(u, \cdot)$ populated as in Eq.~\eqref{eq:policy_blend}, is built from $\mathcal{G}_A$ alone: those rows inherit problem context only through that policy update, which leans on neighbor problems already solved. Two gaps remain at sampling time. First, the present problem $p$ has its own descriptors in $\mathcal{T}_P$ (dimensionality, boundary type, regularity, and so on) that the prior cannot read directly; a single draw from $\pi(\cdot \mid p)$ ignores any structure of $p$ that the neighbor aggregation has not already absorbed. Second, the leaves of $\mathcal{T}_A$ are categorical labels, but the actual specification of an action is not. A choice such as ``MLP'' carries continuous knobs (learning rate, width, depth) and structural sub-decisions (variable ordering for KAN-style models, layer-wise loss weighting, implementation prescriptions) that are virtually non-finite and that no categorical sampler over $\mathcal{T}_A$ can produce.

Observe that closing both gaps with $\pi$ alone is not realistic. Enumerating every continuous knob inside the tree is impossible, and waiting for the row prior to sharpen enough that any sample from it is automatically calibrated to the present problem is expensive in practice: a flat prior at small $|\mathcal{D}|$ has to rediscover by trial what the literature already records. GRAFT--ATHENA's inviscid-Burgers runs illustrate this cost: several attempts were needed before the row mass concentrated on the conservative formulation that is, in retrospect, a textbook choice for that family of equations. The same cost would recur on every new problem class until the corpus is large enough to cover it.

We close both gaps with two agents working in tandem on the prior $\pi(\cdot \mid p)$: a proposer that samples from $\pi$ at each level and a critic that audits the sample. The pair traverses $H$ level by level. At each level, the proposer is shown the active chains' children together with their current row probabilities $M^{(j)}(u, \cdot)$, the hints attached to those children, and the descriptors of $p$ read from $\mathcal{T}_P$, and it emits one action $a_j$ per active chain, realizing any sub-leaf knobs (learning rate, width, depth, variable orderings, loss weightings) inline. The proposer's default is to follow the probabilities; it deviates only when a hint is more informative than the prior, for instance steering ``problem is periodic'' toward a periodic embedding even if row mass favors a generic choice. Once a level's picks are fixed, the rule operators defined in Eqs.~\eqref{eq:zero_op} and \eqref{eq:force_op} act deterministically on the next level's rows: $\mathcal{F}$ forces support where a triggered rule says ``must'', while $\mathcal{Z}$ removes it where the rule says ``forbidden''. Thus $a_{\mathrm{pa}_H(j)}$ is fully resolved before chain $j$ is sampled, matching the conditioning structure of Eq.~\eqref{eq:bn_factorisation}. The critic then verifies that the proposer's emission respected both the probabilities and the hints; rejections trigger a re-proposal at the same level, and the next level is taken up only after the current level passes~\cite{jha2023counterexample}.

With the agent in the loop, the realized method distribution is not $\pi$ but an agent-induced distribution $P_{\text{agent}}$ over $\mathcal{M}$, sampled chain by chain in the topological order of $H$. To certify that $H$ remains an $I$-map of $P_{\text{agent}}$, we make precise the role of the hints.
\begin{assumption}[Locality of the proposer--critic pair]\label{ass:hint_locality}
For every chain $j$, the hints presented to the proposer are a deterministic function
\begin{equation*}
\mathrm{hints}_j \;=\; h_j\!\bigl(j,\, a_{\mathrm{pa}_H(j)}\bigr)
\end{equation*}
of the chain identity and the values selected on its parents in $H$, and the proposer's policy at chain $j$ depends on the predecessor history only through $a_{\mathrm{pa}_H(j)}$ and $\mathrm{hints}_j$. The critic's acceptance event at chain $j$ is likewise a function only of the candidate $a_j$, the parent picks $a_{\mathrm{pa}_H(j)}$, the local row $M^{(j)}(u, \cdot)$, and $\mathrm{hints}_j$, and does not depend on non-parent predecessor history. Finally, when the strategist consults the short-term history $\mathsf{History}_n$ to avoid revisiting failed configurations, the filter applied at chain $j$ depends only on the per-chain projection $\mathsf{History}_n^{(j)} = \{(a_{i,j}, a_{i,\mathrm{pa}_H(j)}, R_i) : i < n\}$, that is, on prior values at chain $j$ and its $H$-parents together with their realized rewards.
\end{assumption}
\begin{proposition}[Per-step $I$-map under the agent]\label{prop:imap_agent}
For fixed problem $p$, fixed repository $\mathcal{D}$, and fixed within-trial history $\mathsf{History}_n$ at iteration $n$, under Assumption~\ref{ass:hint_locality}, $H$ with parent function $\mathrm{pa}_H$ is an $I$-map of the per-step distribution $P_{\text{agent}}(\,\cdot\,\mid p, \mathcal{D}, \mathsf{History}_n)$, and
\begin{equation}\label{eq:bn_factorisation_agent}
P_{\text{agent}}(m \mid p, \mathcal{D}, \mathsf{History}_n) \;=\; \prod_{j=1}^{k} P_{\text{agent}}\!\bigl(a_j \bigm| a_{\mathrm{pa}_H(j)},\, p,\, \mathcal{D},\, \mathsf{History}_n^{(j)}\bigr).
\end{equation}
\end{proposition}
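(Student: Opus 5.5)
The plan is to reduce the statement to the same Verma--Pearl argument used for Proposition~\ref{prop:imap}: produce a chain-rule factorization of $P_{\text{agent}}$ along a topological order of $H$, then show that each factor depends on the predecessors only through the $H$-parents. Acyclicity of $H$ is checked at build time, so a topological order $j_1,\dots,j_k$ exists. The proposer--critic pair samples level by level, hence in an order compatible with it.

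The key steps are as follows.

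\begin{enumerate}
\item \textbf{Sequential generative process.} Fix $p$, $\mathcal{D}$ and $\mathsf{History}_n$. At chain $j$, the proposer emits a candidate. The critic then accepts it or triggers a re-proposal, and the loop repeats until acceptance. After that, the rule operators $\mathcal{F},\mathcal{Z}$ act deterministically on the next rows.

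\item \textbf{Exact chain-rule identity.} The chain rule gives
\[
P_{\text{agent}}(m\mid p,\mathcal{D},\mathsf{History}_n)=\prod_{t=1}^{k}P_{\text{agent}}\bigl(a_{j_t}\bigm| a_{j_1},\dots,a_{j_{t-1}},p,\mathcal{D},\mathsf{History}_n\bigr).
\]
This holds because, with the history fixed, the only randomness at step $t$ comes from the proposer and critic at chain $j_t$ (with any internal randomness drawn fresh at that step).

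\item \textbf{Locality of every input to chain $j$.} Each input to the conditional at chain $j$ is a function of $(a_{\mathrm{pa}_H(j)},p,\mathcal{D},\mathsf{History}^{(j)}_n)$:
\begin{itemize}
\item The row $M^{(j)}(u,\cdot)$ comes from the blend in Eq.~\eqref{eq:policy_blend}, which depends only on $p$ and $\mathcal{D}$.
\item The rule edits $e_\ell$ fire only on triggers in chains $\nu(t)$ with $\nu(t)\to j\in E$, hence on parent values.
\item The activation indicator $\mathrm{act}_j$ is determined by structural nesting ($E_\rho$ parents) and by rules, again parent values.
\item The hints are $h_j(j,a_{\mathrm{pa}_H(j)})$ by Assumption~\ref{ass:hint_locality}.
\item The proposer's kernel and the critic's acceptance event depend only on these quantities, by the same assumption.
\item The history filter depends only on $\mathsf{History}_n^{(j)}$.
\end{itemize}

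\item \textbf{Rejection loop.} The accepted-candidate law is the proposer kernel restricted to the acceptance set and renormalized. Both ingredients are functions of the parent-local data, so the accepted law is too.

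\item \textbf{Collapse of conditioning.} Each chain-rule factor therefore equals $P_{\text{agent}}(a_j\mid a_{\mathrm{pa}_H(j)},p,\mathcal{D},\mathsf{History}^{(j)}_n)$, which is Eq.~\eqref{eq:bn_factorisation_agent}.

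\item \textbf{$I$-map conclusion.} A distribution factorizing as a Bayesian network over a DAG has that DAG as an $I$-map: the local Markov property holds, and d-separation soundness (Verma \& Pearl) then gives every conditional independence read off $H$. This is invoked exactly as in the proof of Proposition~\ref{prop:imap}.
\end{enumerate}

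The main obstacle is step~4, together with the requirement that the re-proposal loop terminates almost surely. If the acceptance probability can be zero for some parent context, the conditional is undefined. I would handle this by restricting to parent contexts with positive probability under $P_{\text{agent}}$; conditionals outside that support are immaterial to the factorization. I would also assume, or note as implicit, that the proposer's stochasticity is independent across chains given its inputs. Without that, hidden shared randomness, such as LLM context carried across levels, would break the locality, and Assumption~\ref{ass:hint_locality} would need to be read as excluding it.
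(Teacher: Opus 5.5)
Your proposal is correct and follows essentially the paper's route. The paper fixes a topological order of $H$ and uses Assumption~\ref{ass:hint_locality} (hints as a deterministic function of $a_{\mathrm{pa}_H(j)}$, $\mathsf{History}_n^{(j)}$ absorbed as a fixed parameter, and the critic clause for the rejection step) to verify the tail-boundary condition $a_j \perp \mathrm{pred}_\theta(j)\setminus\mathrm{pa}_H(j) \mid \mathrm{pa}_H(j)$; it then invokes Theorem~\ref{thm:verma_pearl} and reads off the factorization afterwards rather than before, while your caveats on positive acceptance probability and on no shared proposer randomness across chains are left implicit in the paper's reading of the assumption.
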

The proof, which checks the tail-boundary condition of Theorem~\ref{thm:verma_pearl} for $P_{\text{agent}}$, is given in the SI.

Beyond sampling, $\mathcal{T}_A$ itself is flexible: once a problem is solved, the user or an agent may grow $\mathcal{G}_A$ (and thereby $\mathcal{T}_A$) by attaching new actions or by adding rules to $\mathcal{R}$. New nodes inherit priors from siblings and $\mathcal{D}$ persists across the expansion. The extension re-enters Algorithm~\ref{alg:graft_build} with the new pair $(\mathcal{T}', \mathcal{R}')$, and the recompiled $H'$ inherits the $I$-map certification of the SI as long as $H'$ is acyclic, which the build-time check enforces.

This places the architecture in the lineage of expert systems and earlier symbolic-AI search~\cite{quillian1966semantic, newell1956logic, newell1963guide, hart1968formal, wooldridge2002introduction} but defends against their classical limitations on four counts. The rule set $\mathcal{R}$ comes from documentation prose rather than handcrafted code, so authoring scales with the literature and updates with it. The prior $\pi$ is data-driven and sharpens with $|\mathcal{D}|$, so the system learns rather than freezing at its initial encoding. The system remains operational without node hints because the learned prior and the admissibility rules remain available. And sub-leaf realization, which classical expert systems could not represent at all, lives in the agent rather than in the symbolic layer. The empirical contribution of the hints, their robustness to incorrect guidance, and their scaling with repository size are left to future work.

\subsubsection{The closed loop: GRAFT--ATHENA cycle}\label{sec:closed_loop}

The system carries two memories. The repository $\mathcal{D}$ is the long-term store, persistent across problems and the only object the prior is compiled from. The trial history $\mathsf{History}_n$ is the short-term store, scoped to the current problem $p_{\text{new}}$ and used by the strategist to avoid re-sampling configurations that have already failed within the trial; its entries are also recorded in $\mathcal{D}$, and $\mathsf{History}_n$ is reset for the next problem. In Pearl's vocabulary, $(\mathcal{D}, \mathsf{History}_n)$ is the beginning of an abduction substrate for retrospective queries: each recorded tuple $(p_i, m_i, O_i, r_i)$ provides the evidence such a query would condition on, but full abduction would require richer trace logging of implementation edits, solver seeds, proposal state, and other exogenous variables.

A trial proceeds as a sequence of steps $n = 0, 1, \dots$. At step $n$, the system maintains a context $\mathsf{Context}_n$ (the problem statement $p_{\text{new}}$, the current code state $S_n$, and the short-term history $\mathsf{History}_n$); the strategist agent, guided by the closed-loop policy $\Pi(A_n \mid \mathsf{Context}_n)$, selects an action $A_n = m_n = (a_{n,1}, \dots, a_{n,k}) \in \mathcal{M}$. Writing $f$ for the active scientific family, the system steps according to
\begin{equation}\label{eq:bandit_step}
    S_{n+1} = \mathcal{I}(A_n, S_n), \qquad O_n = \mathsf{Exec}(S_{n+1}), \qquad R_n = R_f\!\left(O_n \mid p_{\text{new}}, m_n, \mathcal{D}, \mathcal{R}\right),
\end{equation}
where $\mathcal{I}$ is the implementation operator (realizes $A_n$ as code edits applied to $S_n$), $\mathsf{Exec}$ is the execution operator (runs or verifies the resulting state and returns observations), and $R_f$ is the family-specific informed reward defined below. In Pearl's structural-causal vocabulary, conditional on the problem instance $p_{\text{new}}$, the code state, solver settings, seeds, and runtime environment, $\mathsf{Exec}$ plays the role of the structural map carrying configured code states to their observable consequences. The agent's pick $A_n$ is therefore an intervention on the executable solver state, not a passive observation. The chain dependency graph $H$ should be read more narrowly: it is a policy-dependency $I$-map over action choices (Proposition~\ref{prop:imap_agent}), while the causal intervention occurs when the selected method is realized and executed through $\mathsf{Exec}$. The short-term history grows as $\mathsf{History}_{n+1} = \mathsf{History}_n \cup \{(A_n, O_n, R_n)\}$. The closed-loop policy $\Pi$ draws through the rows of the compiled prior $\pi(\cdot \mid p_{\text{new}}, \mathcal{D})$, which is held fixed for the duration of the trial; the proposer's and critic's selection within each row is filtered by $\mathsf{History}_n$ so that configurations already tried on $p_{\text{new}}$ are avoided. Sampling itself proceeds via the explicit factored distribution of Eq.~\eqref{eq:bn_factorisation_agent}: the agent traverses the chain DAG in topological order and, at each chain $j$, is guided by the row $M^{(j)}(u, \cdot)$ together with the within-trial filter from $\mathsf{History}_n$. Graph-encoded hints baked into $\mathcal{T}_A$ at construction (the dependency rules $\mathcal{R}$ defined above and the grading prose attached to each node) enter $\mathsf{Context}_n$ alongside the runtime state, giving the agent direct read access to the structural constraints at every step rather than re-deriving them from the policy distribution.

Given a new problem $p_{\text{new}}$, the cycle proceeds as in Algorithm~\ref{alg:graft_athena_cycle}: GRAFT compiles the prior, ATHENA runs the inner trial loop on the sampled method, and the resulting trace appends to $\mathcal{D}$ at every iteration so the repository grows monotonically. All agent roles run on pinned models, namely Claude Opus 4.8, GPT 5.6 Sol, GPT 5.4, Gemini 3.1 Pro Preview, and Gemini 2.5 Pro; the role-to-model assignment (Supplementary Table~\ref{tab:model_identifiers}), the family-specific realizations of the loop (Supplementary Table~\ref{tab:graft_athena_family_stages}), and Algorithm~\ref{alg:graft_athena_cycle} are given in the SI. Because $w_i = \sigma(\mathrm{sim}) \cdot r_i / r_{\max}$ (Eq.~\eqref{eq:nbr_weight}) is non-negative throughout, low-reward attempts receive low reinforcement rather than explicit negative evidence, and successful strategies dominate the weighted aggregation while failed ones merely enlarge the support over which it operates.

The policy evolves across three timescales. Within a problem, the compiled prior $\pi(\cdot \mid p_{\text{new}}, \mathcal{D})$ is held fixed, and adaptation occurs through the short-term history filter, advisor corrections, and code-state updates: observed low-reward configurations enter $\mathsf{History}_n$ and are avoided on later samples from the same trial, while the advisor proposes node-level edits to $m_{n-1}$ on $\mathcal{T}_A$ from $(O_{n-1}, R_{n-1})$ at iterations $n \geq 1$. Across problems, every iteration, successful or failed, persists into $\mathcal{D}$, so the next problem's prior is built from a repository containing weakly more observations than before through the sigmoid-gated, reward-weighted blend of Eq.~\eqref{eq:policy_blend}. At the schema-growth timescale, when an agent proposes a method outside $\mathcal{T}_A$'s current support, $\mathcal{G}_A$ is grown, the spanning-tree projection regrows $\mathcal{T}_A$, and $\pi$'s domain literally expands; new nodes inherit priors from siblings, and $\mathsf{History}$ and $\mathcal{D}$ persist across the expansion.

The three Pearl ingredients are therefore scaffolded, though not yet fully exercised. The chain dependency graph $H$ on $\mathcal{G}_A$ is a policy-dependency $I$-map of the agent-induced action distribution (Proposition~\ref{prop:imap_agent}), the operator $\mathsf{Exec}$ is the structural map carrying configured code states to observations, and $(\mathcal{D}, \mathsf{History}_n)$ together with the per-trial trace $(A_n, O_n, R_n)$ provides the evidence on which future retrospective queries would condition. A query of the form ``had the agent descended the alternative branch at decision node $v \in \mathcal{T}_A$, what would the outcome have been?'' is therefore formulable in principle on this scaffold via the standard three-step recipe (abduction, action, prediction)~\cite{pearl2018book,pearl2019seven}: it would require abducting the relevant exogenous state from a sufficiently rich recorded trace, intervening on $\mathcal{T}_A$ by forcing the alternative branch at $v$, and predicting by re-evaluating $\mathsf{Exec}$ under that abducted state. Notice that the Pearl machinery rests on $\mathcal{G}_A$ alone: the problem graph $\mathcal{G}_P$ enters the construction as a structured descriptor space whose fingerprint $F$ (Eq.~\eqref{eq:fingerprint}) identifies the conditioning event $p_{\text{new}}$, but its taxonomic edges encode vocabulary relationships rather than mechanistic ones, and the system never intervenes on a problem descriptor. In this work we exercise the scaffold at Levels~1 and~2, namely associational priors (Eqs.~\eqref{eq:policy_data}--\eqref{eq:policy_blend}) and the interventional closed loop above; full counterfactual identification would require richer trace logging of proposal state, solver seeds, implementation edits, and other exogenous variables, and is left to follow-up.

\subsection{GRAFT enables informed rewards}
\label{sec:informed_rewards}

GRAFT supplies not only an action space but also problem-conditioned standards for evaluating an action. Retrieved records provide expected errors, elapsed times, reference methods, and hardware baselines; the action graph and rule set provide admissibility conditions; and $\mathsf{Exec}$ supplies measured outputs. The reward in Eq.~\eqref{eq:bandit_step} is therefore evidence-conditioned rather than a free-form language-model grade. For the three computational families ($f \in \{\mathrm{PIML},\mathrm{Numerical},\mathrm{DPD}\}$), it has the common $100$-point envelope
\begin{equation}\label{eq:computational_reward}
R_f = r_f^{\mathrm{acc}} + r_f^{\mathrm{int}} + r_f^{\mathrm{det}} + r_f^{\mathrm{eff}} + r_f^{\mathrm{opt}},
\end{equation}
corresponding to accuracy, integrity, diagnostic detail, efficiency, and optimality, with respective maxima $25$, $25$, $15$, $20$, and $15$. Accuracy is computed from family-specific numerical observables. Integrity and diagnostic detail combine thresholded failure tests with an agent's rubric-constrained synthesis of logs, fields, and figures. Efficiency begins from the measured-to-expected wall-time ratio and the recorded baseline/current methods and hardware; the ratio is deterministic, while the Advisor team attributes explained cost and penalizes unjustified over-resolution or over-engineering. Optimality audits the selected path against the admissible alternatives in $\mathcal{T}_A$. Thus the language model interprets and attributes evidence, but does not invent the measured quantities or override a failed gate.

For PIML, the primary validation observable is the relative $L^2$ error
\begin{equation}\label{eq:RL2}
\epsilon_{RL_2}
=\frac{\|\hat{u}_{\theta}-u\|_2}{\|u\|_2}
=\frac{\sqrt{\sum_{i=1}^{n}\left(\hat{u}_{\theta}(\mathbf{x}_i)-u(\mathbf{x}_i)\right)^2}}{\sqrt{\sum_{i=1}^{n}u(\mathbf{x}_i)^2}},
\end{equation}
where $u$ is the reference field and $\hat{u}_{\theta}$ is the prediction. We evaluate it on a fixed dense grid of approximately $2\times10^5$ points over the full domain, using released reference data for Allen--Cahn and viscous Burgers~\cite{mcclenny2023self} and analytical solutions for Helmholtz, Poisson, KdV, and inviscid Burgers. Observed validation errors and training losses are mapped to $r_f^{\mathrm{acc}}\in[0,25]$ relative to expected and problem-dependent worst-case anchors, with validation errors carrying $0.85$ of the weight and losses $0.15$. For Numerical problems, the current deterministic grader assigns $r_f^{\mathrm{acc}}=25$ when the measured entropy change is non-positive and $0$ otherwise. For DPD, it assigns full accuracy when all required physical and rheological quantities are present and finite with zero detected NaNs, zero for a non-finite required value or nonzero NaN count, and proportional credit when clean required quantities are missing.

Formal verification combines two independent audits with a Lean kernel check. Before implementation, a proof critic checks the complete mathematical sketch against the requested theorem, including its cited results, assumptions, lemma obligations, and final assembly. The accepted plan is then implemented in Lean 4~\cite{moura2021lean}; elaboration and kernel checking certify that every supporting lemma and the main theorem close under their declared types, while source scans for \texttt{sorry}, \texttt{admit}, \texttt{unsafe}, and local \texttt{axiom} declarations, together with the reported axiom frontier, provide deterministic evidence about unresolved proof obligations and assumptions. A fresh final auditor compares the completed Lean artifact with the original user request, checking that the proved statement is faithful and that no assumption or axiom trivializes or silently strengthens the result. This final audit uses a separate $100$-point rubric: faithfulness ($30$), reasonable assumptions ($20$), a \texttt{sorry}-free proof ($40$), and proof efficiency ($10$). Lean checking is therefore necessary but not sufficient. In every family, acceptance requires both the score threshold and the family gate,
\begin{equation}\label{eq:reward_gate}
\mathrm{Accept}_f \quad\Longleftrightarrow\quad R_f \geq \tau_f \ \land\ G_f(O_n)=1,
\end{equation}
where $G_f$ checks conditions such as successful execution, finite outputs, entropy stability, physical tolerances, or a faithful and \texttt{sorry}-free Lean proof. Ordinarily $\tau_f=90$ for PIML, $85$ for Numerical and DPD, and $91$ for Formal; PIML admits the documented $85$-point exception only after a sustained performance plateau. The complete family-specific gates and re-entry paths are given in the SI. Each observation, component score, gate outcome, and final reward is retained with $(p,m)$ in $\mathcal{D}$, so subsequent priors learn from measured accuracy, validity, and efficiency rather than from method identity alone.

\subsection{GRAFT--ATHENA teams}\label{sec:graft-athena_teams}
\subsubsection{Formalization team}
\label{sec:formalization_team}

GRAFT--ATHENA's performance depends heavily on the quality of the problem statement, so problems that are defined properly are strongly preferred~\cite{roggeveen2025hardmath2}. To address this gap, the first team of the pipeline is the Formalization team (Fig.~\ref{fig:r1_system_overview}C), which interacts with the user and is composed of four agents acting in sequence. The first is a formalization agent that extracts the information needed to specify the problem (e.g., the specific PDE, the boundary and initial conditions, whether the problem is inverse, whether data are available and whether they carry noise, and the acceptable assumptions), and defines it in a form suitable for the downstream spine.

Based on this information, a subsequent analysis takes place. A second agent reads the formalized user request and looks for two kinds of features, namely exact solutions and simplifications, and proposes several candidate alternatives in each category. A third agent then ranks these proposals against a set of constraints, namely dimension reduction, nonlinear-term reduction, regularity constraints, the cost of the boundary conditions, the implementation cost, and composability, while interacting with the user to surface a modified user request whenever a change is needed. The two agents iterate as a proposer--critic pair until they converge on a winning candidate. Notice that not every problem can or should be simplified, so the ranker may also conclude that none of the alternatives is suitable, in which case the original statement is forwarded unchanged. Once a winner has been selected, the ranker rederives it step by step and engages a fourth agent that audits each step; the two go back and forth in a second proposer--critic loop until the derivation is clean and consistent with the original statement.

After the user request has been formalized and, where applicable, simplified and cleanly rederived, the same fourth agent performs a well-posedness analysis on the resulting problem. If existence, uniqueness, or stability cannot be established, the agent interacts with the user to gather the missing information and proposes additional constraints that render the problem conditionally well-posed before it enters the encode-select-solve spine. The stage spine of each scientific family (Supplementary Table~\ref{tab:graft_athena_family_stages}) and the assignment of models to agent roles (Supplementary Table~\ref{tab:model_identifiers}) are given in the SI.

\subsubsection{Support and interaction gates}
\label{sec:support_interaction_gate}

GRAFT--ATHENA calibrates its autonomy according to the experience available for each target. When the closest solved problem has fingerprint similarity $J_{K^\star}<0.50$ (Eq.~\eqref{eq:fingerprint_jaccard}), the resulting prior is classified as weakly supported. The system then asks the user either to solve a structurally closer problem first or to continue interactively, with proposed methods and advisor recommendations available for review. Interaction is also triggered when the target problem or a proposed method cannot be represented in the current problem or action space. In that case, the corresponding graph is expanded, the new nodes and dependencies are validated, and the problem or method is re-encoded before execution resumes. This gate regulates when accumulated experience can support autonomous execution; it does not replace the subsequent scientific evaluation of the proposed solution.

\subsection{Problem and action-tree construction}\label{sec:scaffolding}

GRAFT operates over eight coded spaces: three problem spaces covering PDEs, dissipative particle dynamics, and formal mathematics, and five action spaces covering analytical, numerical, physics-informed machine-learning, DPD, and formal-reasoning methods. Together they contain roughly three thousand nodes, most of them leaves, nested to depths of up to thirteen. Their provenance is not uniform, and this subsection records it space by space. Supplementary Table~\ref{tab:graft_space_inventory} reports the size of each space; the complete registries, with every node and its stable codename, will be released together with the full proof developments upon publication.

\paragraph{Problem-space scaffold.} The problem spaces encode what a problem is rather than how it is solved, and they are organized as jointly applicable attribute axes rather than as a single classification. The PDE problem space carries fourteen such axes, spanning dimensionality, equation structure, forward or inverse nature, domain, time structure, linearity, regularity, scope, solution formulation, dimensionless regime, flow features, forcing, symmetries and invariants, and data characteristics. Its leaves record the distinctions that later govern method admissibility: hyperbolic against parabolic against elliptic character, classical against distributional solution formulation together with the entropy conditions that select admissible weak solutions, and binned Reynolds, Mach, Peclet, Strouhal, Knudsen, and Damkohler ranges. Seventeen hints fix the binning arithmetic so that a given statement lands in the same cell on repeated encodings, and fourteen cross-rules keep the dimensionless axes consistent with the physics above them, for example forcing the Mach axis to its inapplicable leaf whenever an incompressible flow model is selected. The DPD problem space adds a clinical axis over vessel class, patient profile, pathophysiology, and clinical question, and the formal problem space adds result polarity alongside analysis setting and problem kind.

Unlike the Trixi.jl and Nektar++ branches of $\mathcal{T}_A$, which are constructed by ingesting solver documentation, the PIML branch was seeded manually from the PINN/PIML and numerical-analysis literature. This subsection records that provenance: it specifies which action families were admitted, which dependencies were encoded as cross-rules, and which literature-backed hints were attached to each node before GRAFT projected the seed graph into $\mathcal{T}_A$. The goal is not a catalog of promptable tricks, but an action vocabulary in which representation choices, residual objectives, optimizers, PDE reformulations, and numerical discretizations enter as inspectable leaves and rules, so that the proposer--critic loop can sample, audit, and revise them rather than emit them as free text.

\paragraph{Representation scaffold.} The representation branch follows the standard approximation-theoretic view that a neural solver chooses a function class in which the PDE solution is represented~\cite{cybenko1989approximation,hornik1989multilayer,raissi2019physics,karniadakis2021physics}. This scaffold seeds leaves for MLP backbones, input embeddings, Fourier features, and hard output ansatzes that absorb boundary or initial conditions structurally, with the spectral-bias view of Fourier feature networks supplying the diagnostic that flags multi-scale targets where plain MLPs underfit high-frequency content~\cite{wang2021eigenvector}. Additional approximation families enter as optional backbone leaves rather than as load-bearing assumptions, covering Kolmogorov-Arnold networks and their smooth-superposition refinement~\cite{Kolmogorov1957,liu2024kan}, the KKAN variant designed for PIML representation~\cite{toscano2025kkans}, and expressivity and spectral-bias bounds that delimit when KAN bases recover or improve on MLP behavior~\cite{wang2024expressiveness}. Compatibility hints link periodicity, smoothness, dimensionality, and prescribed boundary data to the representations most likely to be admissible for them, with the FAIR cross-architecture comparison and a recent KAN review acting as the umbrella references against which these leaves are admitted~\cite{shukla2024comprehensive,faroughi2026kolmogorov}.

\paragraph{Residual-objective scaffold.} The loss branch treats a PIML objective as residual minimization under a chosen weighting and sampling measure on the domain. Uniform residual mean-squared error, self-adaptive weighting, NTK and gradient-balancing diagnostics, residual-based attention (RBA, in which high-residual evaluations receive larger weights or sampling probability), and vector RBA (vRBA, its multi-index extension for residuals indexed by field component, equation, mode, or time slab) live as alternatives within a single action family rather than as independent tricks~\cite{wang2022and,wang2021understanding,mcclenny2023self,wu2023comprehensive,anagnostopoulos2024residual,anagnostopoulos2025learning}. Noise-aware likelihoods enter as a parallel leaf in the same family, with a negative-log-likelihood objective that learns per-point noise variance jointly with the field and downweights observations whose residual is consistent with their noise level~\cite{toscano2024inferring_AIV}. Across these alternatives, the variational residual-adaptivity framework supplies the organizing principle: changing the sampling, weighting, or likelihood rule changes the effective norm being optimized, so each leaf carries compatibility hints that match it to stiff, localized, noisy, or multi-component residuals~\cite{toscano2026variational}.

\paragraph{Optimizer scaffold.} The optimizer branch records the update rule, curvature model, line-search or step-size policy, and training schedule as separate actions on $\mathcal{T}_A$. First-order methods such as Adam, quasi-Newton methods such as L-BFGS, and SSBroyden-style curvature-aware updates therefore occupy comparable positions in the tree rather than competing inside a single decision~\cite{kingma2014adam,liu1989limited,nocedal1999numerical,urban2024unveiling}. Hints connect stiff or ill-conditioned PINN regimes to second-order or line-search choices, and tie schedule edits and warm-restart strategies to the curvature regime in which they tend to be productive~\cite{kiyani2025optimizing,jnini2026curvature}.

\paragraph{PDE-reformulation scaffold.} The PDE branch exposes transformations of the mathematical problem itself: exact or approximate reductions, hard boundary or initial-condition ansatzes, conservation-law reparametrisations such as stream-function or vector-potential forms, continuation schedules in physical or numerical parameters, and regularizers introduced to close identifiability gaps~\cite{raissi2019physics,karniadakis2021physics,evans2022partial}. The inverse-problem lineage enters this branch as a distinct family of moves, since hidden fields are encoded directly as network outputs and inferred through the residual itself~\cite{raissi2020hidden}, with downstream extensions covering in vivo brain flow with simultaneous pressure and permeability inference~\cite{toscano2025mr} and vorticity reformulations that eliminate the pressure unknown and use theoretical profiles as sequential-training scaffolds~\cite{toscano2025aivt}. This scaffold is what allows the Formalization team (see above) to treat well-posedness, constraint enforcement, and PDE simplification as selectable, auditable actions before implementation rather than as silent prompt-level reformulations. The perivascular inverse problem (see the perivascular run record in the SI) exercises this branch when mass conservation is encoded structurally through a vector-potential formulation and an unobservable slab mode is regularized; the spectral Burgers case (see the spectral-PINN run record in the SI) exercises it when the Fourier-Galerkin reduction, hard initial-condition ansatz, parity fold, and dealiased nonlinear projection are promoted into action-tree leaves rather than left implicit in code~\cite{toscano2026variational}.

\paragraph{Numerical-method scaffold.} The numerical branch organizes solver choices by PDE character and expected solution regularity. Smooth periodic problems activate spectral and Fourier-Galerkin hints~\cite{shen2011spectral,karniadakis2005spectral,basdevant1986spectral,canuto2006spectral,boyd2001chebyshev}; conservation laws with shocks activate finite-volume and discontinuous-Galerkin branches together with flux-function, limiter, positivity-preservation, and time-integration constraints~\cite{cockburn1998runge,cockburn2012discontinuous,leveque2002finite,ranocha2021adaptive,kopriva2009implementing,hesthaven2008nodal,toro2013riemann,zhang2003high,wanner1996solving}; nonlinear parameterized solves activate continuation or homotopy leaves that schedule a sequence of related problems rather than a single hard one. These categories supply the compatibility rules GRAFT uses both when ingesting the Trixi.jl and Nektar++ documentation and when validating agent-proposed methods against the foundational numerical-analysis literature underlying that documentation and against the PDE class observed in the formalized problem~\cite{cantwell2015nektar,canuto2006spectral,hesthaven2008nodal,toro2013riemann,leveque2002finite}.

\paragraph{Analytical-method scaffold.} The analytical branch collects transformations that change the mathematical statement of a problem before any discretization is chosen, organized as seven jointly applicable families: coordinate and variable transformations, integral and spectral transforms, symmetry and invariance reductions, dimensional and ansatz reductions, structure-preserving reformulations, asymptotic reductions, and variational and weak reformulations. Its leaves are named classical moves rather than tunable settings, including the method of characteristics, Cole-Hopf and Miura-type substitutions, streamfunction and potential variables, Green's function and Sturm-Liouville representations, self-similar and traveling-wave ansatzes, Riemann invariants and characteristic diagonalization, conservative and entropy-variable forms, matched asymptotic expansions and homogenization, and Galerkin or least-squares weak forms~\cite{evans2022partial,karniadakis2005spectral}. This branch is deliberately flat and carries no cross-rules: the reductions it offers are admissible on the merits of the problem rather than through interaction with one another, so their admissibility is adjudicated against the problem fingerprint by the Formalization team rather than by compiled rules.

\paragraph{Numerical solver ingestion.} The numerical action space is the only branch built by ingesting package documentation rather than seeded from the literature directly. It separates a shared mesh subtree from three solver families, Nektar++~\cite{cantwell2015nektar}, Trixi.jl~\cite{ranocha2021adaptive}, and an in-house Fourier-spectral Julia code, and it is the largest and most tightly coupled space in the system (Supplementary Table~\ref{tab:graft_space_inventory}), reaching depth eight, with on the order of $10^2$ admissibility rules compiled into more than $300$ conditional tables. Ingestion preserves provenance: each harvested hint carries its source document and a verbatim excerpt, node pages reproduce upstream constructor signatures and docstrings, and incompatibilities found at runtime are recorded with the version tested and the failure they produced. The rules encode the couplings that make a solver configuration succeed or fail as a unit, including the requirement that a compressible Navier-Stokes run select the matching diffusion operator, the exclusion of discontinuous-Galerkin advection under a continuous projection, and the incompatibility between subcell limiting and runtime adaptive refinement that decides the wake branch reported in the Apollo results. Ten further rules run from a solver choice into the mesh subtree, so that selecting a discretization partly determines the element shapes, conformity, and generator available downstream.

\paragraph{DPD scaffold.} The DPD branch encodes a particle simulation rather than a discretized field, so its families are the simulation's physical and procedural commitments: system setup, force field, integration, thermostat, boundary handling, driving, equilibration, production, output, and a clinical group covering parsing, rheological target derivation, blood-model construction, and analysis. Resolution enters as a modeling choice rather than a mesh parameter, through a coarse-graining ladder running from a single-bead plasma to a spring-network red-blood-cell membrane~\cite{pivkin2008accurate,fedosov2010multiscale}, and a dedicated subtree converts clinical quantities into reduced units, carrying vessel-level hematocrit under the Fahraeus effect, physiological shear ranges, and membrane shear moduli in SI units. Measurement is likewise an action rather than a readout, with apparent viscosity extracted through an Irving-Kirkwood stress tensor, a velocity-profile fit, or a Green-Kubo integral. Five cross-rules couple the branches, three of them carrying the clinical branch into the numerics, so that a sickled or spherocytic membrane target forbids the packaged dissipative parameters, while a Lees-Edwards driving choice excludes wall-bounded boundaries. The branch stops at the granularity of the validated package: bond, angle, and dihedral styles follow from the membrane model rather than appearing as separate leaves, and the run records document that boundary explicitly as a vocabulary gap.

\paragraph{Formal-reasoning scaffold.} The formal branch encodes proof technique rather than proof assistant: it contains no tactic vocabulary, no library-lemma index, and no proof-state representation, since those belong to the Lean stages of the formal loop. Its seven families are the spaces in which a claim is stated, the construction or approximation route, duality, error control, existence and extraction, probabilistic and concentration arguments, and proof character. Most leaves are themselves citable theorems used as proof moves, including Stone-Weierstrass density and mollification, Fourier truncation, convex-hull and relaxed greedy constructions~\cite{pisier1980remarques,Barron1993universal,Siegel2023112084}, Kolmogorov-Arnold superposition~\cite{Kolmogorov1957,Kourkova1991,schmidt2021kolmogorov}, polynomial emulation carrying Sobolev-rate error~\cite{yarotsky2017error,de2021approximation,guhring2020error}, Hahn-Banach separation and Riesz representation, Lax-Milgram and the direct method, Arzela-Ascoli and weak compactness, Banach and Schauder fixed points, and McDiarmid, Dudley, and Borel-Cantelli concentration bounds. Two features distinguish this space from the others. Proof character is a decision in its own right, separating constructive arguments that exhibit an approximant from non-constructive ones that establish existence through duality or compactness, and result polarity sits beside it, so that negative and characterization results occupy the space on the same footing as positive ones, with a dedicated obstruction family for under- and over-expressivity~\cite{petersen2021topological}. The branch carries no compiled cross-rules; its couplings to the problem space are written as advisory hints that key a technique to a problem attribute, for example selecting coercivity-based a priori control when the problem operator is uniformly coercive.

These scaffolds are not claimed as new theory; their role is to initialize a scientifically meaningful action vocabulary and rule set, and the contribution of GRAFT--ATHENA is that this vocabulary becomes a factored, inspectable, expandable, and reward-calibrated substrate on which agents can search, revise, and accumulate experience. The complete vocabulary of all eight spaces, printed node by node, will be released upon publication. Constructing this substrate is a one-time investment that is not trivial. Generating the tree structures is mechanized through graph builders, wiki generation, and agent-assisted ingestion of solver documentation; the dominant cost is expert curation, namely choosing which decisions enter each space, declaring the cross-branch dependency rules, and validating the compiled admissibility tables against the intended couplings. We do not reduce this effort to a single figure because it depends strongly on the curator's familiarity with the solver and with the encoding conventions: an expert can seed a coarse space in days, learning either side can extend the same task to weeks, and seeding the solved-case memory adds encoding effort that grows with the number of curated cases.

\section*{Acknowledgements}
We thank Daniel T. Chen from Brown University for engaging in insightful discussions on tree-structured policies and Markov decision processes.

\section*{Declarations}

\paragraph*{Funding}
\begin{sloppypar}
We acknowledge the support of the NIH grants R01AT012312 and R01HL154150, MURI/AFOSR FA9550-20-1-0358 project, the DOE-MMICS SEA-CROGS DE-SC0023191 award, and the ONR Vannevar Bush Faculty Fellowship (N00014-22-1-2795).
\end{sloppypar}

\paragraph*{Author contributions}
\begin{enumerate}
    \item \textbf{Conceptualization:} J.D.T., G.E.K.
    \item \textbf{Methodology:} J.D.T.
    \item \textbf{Software:} J.D.T., Z.C.
    \item \textbf{Formal analysis:} J.D.T.
    \item \textbf{Investigation:} J.D.T., Z.C.
    \item \textbf{Resources:} G.E.K.
    \item \textbf{Writing -- original draft:} J.D.T., Z.C., G.E.K.
    \item \textbf{Writing -- review \& editing:} J.D.T., Z.C., G.E.K.
    \item \textbf{Visualization:} J.D.T., Z.C.
    \item \textbf{Supervision:} G.E.K.
    \item \textbf{Project administration:} G.E.K.
    \item \textbf{Funding acquisition:} G.E.K.
\end{enumerate}

\paragraph*{Competing interests}
The authors declare no competing interests.

\paragraph*{Data availability}
All data needed to evaluate and reproduce the results in the paper are present in the paper and the Supplementary Information. The perivascular inputs derive from the published two-photon dataset~\cite{boster2023artificial,toscano2024inferring_AIV}, and the Apollo geometry and aerodynamic reference values derive from AEDC-TR-67-238~\cite{griffith1968postflight}.

\paragraph*{Code availability}
All code needed to evaluate and reproduce the results in the paper will be available upon publication.

\paragraph*{Materials availability}
This study did not generate new materials.

\bibliographystyle{elsarticle-num}
\bibliography{sn-bibliography}
\newpage
\appendix
\counterwithout{equation}{section}

\renewcommand{\figurename}{Supplementary Fig.}
\renewcommand{\thefigure}{\arabic{figure}}
\renewcommand{\tablename}{Supplementary Table}
\renewcommand{\thetable}{\arabic{table}}
\renewcommand{\theequation}{\arabic{equation}}
\renewcommand{\thetheorem}{S\arabic{theorem}}
\setcounter{figure}{0}
\setcounter{table}{0}
\setcounter{equation}{0}
\setcounter{theorem}{0}
\renewcommand{\theHequation}{S\arabic{equation}}
\renewcommand{\theHtheorem}{S\arabic{theorem}}
\renewcommand{\theHfigure}{S\arabic{figure}}
\renewcommand{\theHtable}{S\arabic{table}}

\section*{Supplementary Information}

\section{Mathematical foundations of GRAFT}
\label{sec:formalization_details}

\subsection{Policy parameter-footprint computation}
\label{sec:policy_parameter_footprint}

For each method family, we counted the number of decision chains $k=|\mathcal{C}|$ and the size of each chain alphabet $|\mathcal{A}_j|$ directly from $\mathcal{T}_A$. The total and mean numbers of choices are $\sum_j|\mathcal{A}_j|$ and $\sum_j|\mathcal{A}_j|/k$, respectively. We then recovered each chain's incoming dependencies $\mathrm{pa}_H(j)$ from $H=(\mathcal{C},E\cup E_\rho)$ and computed the complete dense conditional footprint as $\sum_{j=1}^{k}|\mathcal{A}_j|\prod_{i\in\mathrm{pa}_H(j)}|\mathcal{A}_i|$. Supplementary Table~\ref{tab:policy_footprint_anatomy} reports the complete method-space counts and, for each family, the chain producing the largest dense conditional table. The learnable categorical entries count stored probabilities only; the rules and hints are symbolic objects whose authoring and storage are not measured in float32 entries, so the comparison quantifies learnable probability storage, not the total knowledge-engineering footprint of the graphs.

\begin{table}[!htbp]
\centering
\caption{
\textbf{GRAFT factorization avoids the largest dense conditional expansions across four scientific method spaces.}
Part A reports the complete local decision vocabulary of the PIML, Numerical, DPD, and Formal spaces together with the learnable categorical entries and symbolic rules stored by GRAFT. Part B follows the most strongly conditioned chain in each family from its incoming dependencies $\mathrm{pa}_H(j)$, through their joint configurations, to the entries required by a dense conditional table. PIML gives the largest expansion: dense conditioning of a single chain requires $7.12\times10^{13}$ entries, whereas the complete factored PIML policy contains 512 learnable categorical entries and 17 explicit rules. Numerical shows the same combinatorial effect at smaller scale; DPD and Formal have shallower dependency frontiers and correspondingly modest dense tables. The comparison shows that the cost of dense conditioning rises with the number and cardinality of coupled choices, while GRAFT continues to store local decisions and explicit rules.}
\label{tab:policy_footprint_anatomy}
{\small\linespread{1}\selectfont
\setlength{\tabcolsep}{3.5pt}
\textbf{A. Complete method-space composition}\par\smallskip
\begin{tabularx}{\textwidth}{@{}>{\raggedright\arraybackslash}p{2.1cm}*{4}{>{\centering\arraybackslash}X}@{}}
\toprule
Method space & Chains $k$ & Learnable categorical entries $\sum_j|\mathcal{A}_j|$ & Mean choices per chain & Symbolic rules $|\mathcal{R}|$ \\
\midrule
PIML      & 117 & 512 & 4.38 & 17 \\
Numerical & 156 & 499 & 3.20 & 78 \\
DPD       & 37  & 133 & 3.59 & 6  \\
Formal    & 30  & 106 & 3.53 & 0  \\
\bottomrule
\end{tabularx}

\vspace{0.8em}
\textbf{B. Largest incoming-dependency expansion}\par\smallskip
\setlength{\tabcolsep}{3pt}
\begin{tabularx}{\textwidth}{@{}>{\raggedright\arraybackslash}p{1.8cm}>{\raggedright\arraybackslash}X>{\centering\arraybackslash}p{2.2cm}>{\raggedright\arraybackslash}X@{}}
\toprule
Method space & Target chain & \shortstack{Incoming\\dependencies\\$|\mathrm{pa}_H(j)|$} & Choices carried by incoming dependencies \\
\midrule
PIML & Number of networks (\texttt{NON}) & 16 & One 3-way chain and fifteen 7-way chains \\
Numerical & Nektar++ driver type (\texttt{ND4}) & 16 & Fourteen 2-way chains, one 3-way chain, and one 6-way chain \\
DPD & Driving/forcing (\texttt{DF}) & 1 & One 4-way chain \\
Formal & Operator domain space (\texttt{ODS}) & 1 & One 12-way chain \\
\bottomrule
\end{tabularx}

\vspace{0.6em}
\begin{tabular}{@{}>{\raggedright\arraybackslash}p{2.0cm}>{\centering\arraybackslash}p{4.6cm}>{\centering\arraybackslash}p{2.7cm}>{\centering\arraybackslash}p{3.1cm}@{}}
\toprule
Method space & \shortstack{Joint dependency configurations\\$\prod_{i\in\mathrm{pa}_H(j)}|\mathcal{A}_i|$} & Target choices $|\mathcal{A}_j|$ & Dense table entries \\
\midrule
PIML & $3\times7^{15}=1.42\times10^{13}$ & 5 & $7.12\times10^{13}$ \\
Numerical & $2^{14}\times3\times6=294{,}912$ & 7 & $2{,}064{,}384$ \\
DPD & $4$ & 6 & $24$ \\
Formal & $12$ & 6 & $72$ \\
\bottomrule
\end{tabular}
}
\end{table}

\subsection{$I$-map certification of the decision-level factorization}
\label{sec:imap_certification}

This supplementary section recasts the decision-level factorization of Eq.~\eqref{eq:bn_factorisation} in Bayesian-network vocabulary and certifies it as an $I$-map of the joint distribution over methods, formalizing the claim made informally in Methods after Eq.~\eqref{eq:bn_factorisation} that the decomposition preserves the encoded conditional-independence assertions used by the factorization. Throughout, the problem $p$ and the repository $\mathcal{D}$ are held fixed: the rows $M^{(j)}(u, \cdot)$ defined in Eq.~\eqref{eq:policy_blend} are compiled, and Proposition~\ref{prop:imap} is a statement about the resulting conditional distribution $\pi(\cdot \mid p, \mathcal{D})$, not the marginal over a varying $p$. Proposition~\ref{prop:imap_agent} is in addition a per-step statement at iteration $n$, with the within-trial history $\mathsf{History}_n$ held fixed alongside $p$ and $\mathcal{D}$, so the I-map applies to $P_{\text{agent}}(\cdot \mid p, \mathcal{D}, \mathsf{History}_n)$ for each $n$.

The variable set is the chain set $\mathcal{C} = \{C_1, \dots, C_{|\mathcal{C}|}\}$ defined in Methods, with chain $j$ realized as $a_j \in \mathcal{A}_j \cup \{\varnothing_j\}$. The chain dependency graph $H = (\mathcal{C},\, E \cup E_\rho)$ defined in Methods carries two edge families: the rule-induced edges $E$, where each rule $R_\ell \in \mathcal{R}$ contributes for every value $t \in T^{\mathrm{trig}}_\ell$ a directed edge from $\nu(t)$ to the affected chain $j(\ell)$, and the structural-nesting edges $E_\rho = \{\rho(C) \to C : C \in \mathcal{C},\ \rho(C) \neq \bot\}$, which encode that a subchain rooted below a $c$-node is unreachable until that $c$-parent's $s$-decision is made. Its combined parent set is
\begin{equation}\label{eq:pa_H_app}
\mathrm{pa}_H(j) \;=\; \{\, i \,:\, i \to j \in E \cup E_\rho \,\},
\end{equation}
and we write $a_{\mathrm{pa}_H(j)} = (a_i : i \in \mathrm{pa}_H(j))$. The level assignment defined in Methods is the longest-path length to $j$ in $H$, with level~$0$ for chains with no parent in $H$. Acyclicity of $H$ is a precondition: it is decided in $O(|\mathcal{C}| + |E \cup E_\rho|)$ time at build time by Tarjan's strongly-connected-components algorithm~\cite{tarjan1972} (Algorithm~\ref{alg:graft_build}); a non-singleton component returns a structural cycle witness for the build-time abort message.

\begin{algorithm}[H]
\DontPrintSemicolon
\KwIn{action knowledge graph $\mathcal{G}_A$, dependency rule set $\mathcal{R}$.}
\KwOut{chain set $\mathcal{C}$, level assignment $\mathrm{level} : \mathcal{C} \to \mathbb{N}$.}
\BlankLine
$\mathcal{T} \gets \textsc{Reduce}(\mathcal{G}_A)$\;
$(\mathcal{C}, \nu, \rho) \gets \textsc{Chains}(\mathcal{T})$\;
$E \gets \bigl\{\, \nu(t) \to \nu(g) \,:\, R_\ell = (h_\ell, T^{\mathrm{trig}}_\ell, T^{\mathrm{tgt}}_\ell, e_\ell) \in \mathcal{R},\ (t, g) \in T^{\mathrm{trig}}_\ell \times T^{\mathrm{tgt}}_\ell,\ \nu(t) \neq \nu(g) \,\bigr\}$\;
$E_\rho \gets \{\, \rho(C) \to C \,:\, C \in \mathcal{C},\ \rho(C) \neq \bot \,\}$\;
$H \gets (\mathcal{C},\ E \cup E_\rho)$\;
\BlankLine
$\mathcal{S} \gets \textsc{Tarjan-SCC}(H)$\;
\lIf{$\exists\, S \in \mathcal{S} :\ |S| > 1$}{\textbf{abort}(``cycle witness $S$ in $H$'')}
\BlankLine
$\mathrm{level}(C) \gets 0 \quad \forall\, C \in \mathcal{C}$\;
\Repeat{no change}{
  \ForEach{$C \in \mathcal{C}$ with $\rho(C) \neq \bot$}{
    $\mathrm{level}(C) \gets \max\bigl(\mathrm{level}(C),\ \mathrm{level}(\rho(C)) + 1\bigr)$\;
  }
  \ForEach{$a \to b \in E$}{
    $\mathrm{level}(b) \gets \max\bigl(\mathrm{level}(b),\ \mathrm{level}(a) + 1\bigr)$\;
  }
}
\Return $(\mathcal{C}, \mathrm{level})$\;
\caption{GRAFT build-time pipeline: action knowledge graph to factored decision levels.}
\label{alg:graft_build}
\end{algorithm}

\begin{definition}[$I$-map]\label{def:imap}
A DAG $H$ over a finite variable set $V$ is an $I$-map of a probability distribution $P$ over $V$ if every conditional independence relation read off $H$ via d-separation holds in $P$.
\end{definition}

\begin{theorem}[Verma \& Pearl, 1988~\cite{vermapearl1988causal}]\label{thm:verma_pearl}
Let $M$ be a semi-graphoid over a finite variable set $V$, $\theta$ a total order on $V$, and $L_\theta = (\theta, B)$ a stratified protocol of $M$, that is, for each $x \in V$, $B(x) \subseteq \mathrm{pred}_\theta(x)$ and $I\!\bigl(x,\,B(x),\,\mathrm{pred}_\theta(x) \setminus B(x)\bigr)$ holds in $M$. Then the DAG with parent function $B$ is an $I$-map of $M$.
\end{theorem}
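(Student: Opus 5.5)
The statement immediately above is the classical Verma--Pearl theorem. I would prove it by induction on $|V|$, peeling off the $\theta$-last variable; this is the route of Verma--Pearl and Geiger--Verma--Pearl. Write the semi-graphoid axioms as symmetry, decomposition, weak union and contraction. Let $D$ be the DAG with parent function $B$, let $n$ be the last element of $\theta$, and set $V'=V\setminus\{n\}$.

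The restriction of $(\theta,B)$ to $V'$ is a stratified protocol of the restriction $M'$ of $M$ to triples inside $V'$, and $M'$ is still a semi-graphoid. Its DAG is $D'=D-n$. The induction hypothesis therefore says that $D'$ is an $I$-map of $M'$. Base case $|V|=1$ is vacuous. We must show that every d-separation $\langle X\mid Z\mid Y\rangle_D$ with $X,Y,Z$ disjoint yields $I(X,Z,Y)$. By symmetry there are three cases.

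\emph{Case $n\notin X\cup Y\cup Z$.} In $D$, $n$ is a sink. Any path through $n$ therefore has $n$ as a collider, and neither $n$ nor any of its descendants lies in $Z$, so the path is blocked. Hence the d-separation holds in $D'$, and induction gives $I(X,Z,Y)$.

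\emph{Case $n\in X$.} Write $X=X'\cup\{n\}$ and $B_r=B(n)\setminus Z$. First, $B(n)\cap Y=\emptyset$, since a parent of $n$ in $Y$ would be adjacent to $n$. Second, I claim $\langle X'\cup B_r\mid Z\mid Y\rangle_{D'}$. Suppose a path from some $b\in B_r$ to $Y$ were active given $Z$ in $D'$. Prefixing the edge $n\leftarrow b$ gives an active path in $D$, because $b\notin Z$ is a non-collider there; this contradicts the hypothesis. Paths from $X'$ are handled the same way, without prefixing. Induction then gives $I(X'\cup B_r,Z,Y)$. The protocol gives $I(n,B(n),V'\setminus B(n))$, and weak union plus decomposition turn this into $I(n,\,X'\cup B_r\cup Z,\,Y)$. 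Contraction with $I(X'\cup B_r,Z,Y)$ yields $I(X'\cup B_r\cup\{n\},Z,Y)$, and decomposition yields $I(X,Z,Y)$.

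\emph{Case $n\in Z$.} I expect this case to be the main obstacle, because conditioning on $n$ activates collider paths through $n$. Write $Z=Z'\cup\{n\}$ and $B_r=B(n)\setminus Z'$. The plan has two combinatorial steps in $D'$.

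First, $\langle X\mid Z'\mid Y\rangle_{D'}$ holds. An active path in $D'$ given $Z'$ stays active in $D$ given $Z$, since adding $n$ to the conditioning set only opens colliders.

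Second, I will show that $\langle X\cup B_r\mid Z'\mid Y\rangle_{D'}$ or $\langle X\mid Z'\mid Y\cup B_r\rangle_{D'}$ holds. Suppose both fail. Then there are active paths $X\rightsquigarrow b_1$ and $b_2\rightsquigarrow Y$ with $b_1,b_2\in B_r$. Splice them through the collider $b_1\to n\leftarrow b_2$, which is open because $n\in Z$. The delicate part is making the spliced walk into an active trail when the two paths share vertices or $b_1=b_2$. For this I would use the standard fact that an active walk contains an active trail, or rechoose the shortest such paths.

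Assume without loss of generality the first alternative, so induction gives $I(X\cup B_r,Z',Y)$. The protocol, via weak union and decomposition, gives $I(n,\,Z'\cup X\cup B_r,\,Y)$; here we use $B(n)\subseteq Z'\cup B_r$. Contraction applied to $I(Y,Z',X\cup B_r)$ and $I(Y,Z'\cup X\cup B_r,n)$ gives $I(Y,Z',X\cup B_r\cup\{n\})$. Weak union then gives $I(Y,Z'\cup\{n\},X\cup B_r)$, and decomposition plus symmetry give $I(X,Z,Y)$. The second alternative is symmetric.
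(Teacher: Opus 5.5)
Your proof is correct. The paper gives no proof of this statement: it imports the Verma--Pearl theorem from \cite{vermapearl1988causal} as a black box and only checks its hypotheses (acyclicity and the tail-boundary condition) in the proofs of Propositions~\ref{prop:imap} and~\ref{prop:imap_agent}. Your induction on the $\theta$-last variable, a sink of $D$, therefore proves from the four semi-graphoid axioms what the paper takes on citation. I checked the three-way case split, the transfer of active paths from $D'$ to $D$, and both contraction chains, and they are sound.

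Three details should be written out explicitly.

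\emph{Overlaps in the case $n\in Z$.} Here $B_r$ may meet $X$ or $Y$, and then one of your two alternatives is not a disjoint triple. Count $b\in B_r\cap X$ (resp.\ $B_r\cap Y$) as a zero-length active path. Your splice through $b_1\to n\leftarrow b_2$ then shows that $B_r$ cannot meet both sides. Whichever alternative survives therefore has $B_r\cap Y=\emptyset$ (resp.\ $B_r\cap X=\emptyset$). That is exactly the condition $B(n)\cap Y=\emptyset$ you need when decomposing the protocol statement.

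\emph{The splice.} Rely on the walk lemma, not on re-choosing shortest paths, since minimality does not stop $\pi_1$ and $\pi_2$ from sharing vertices. The form you need is this: a walk on which every collider lies in $\mathrm{An}(Z)$ (is in $Z$ or has a descendant in $Z$) and no non-collider lies in $Z$ contains a d-connecting path. To prove it, shortcut a loop at a repeated vertex $v$.
\begin{itemize}
\item If $v$ becomes a non-collider, then $v\notin Z$, because a vertex of $Z$ is a collider at every occurrence.
\item If $v$ becomes a collider and was already a collider at its first occurrence, then $v\in\mathrm{An}(Z)$.
\item Otherwise $v$ was a non-collider entered along an arrowhead at its first occurrence, so the walk leaves $v$ along an out-edge. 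Each later non-collider is then also left along an out-edge. By acyclicity the walk must meet a collider before returning to $v$, so again $v\in\mathrm{An}(Z)$.
\end{itemize}

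\emph{Empty set in the case $n\in X$.} If $X'\cup B_r=\emptyset$, skip the contraction. The statement $I(n,Z,Y)$ is already what weak union and decomposition extract from the protocol, so you need no triviality axiom.
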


\begin{proof}[Proof of Proposition~\ref{prop:imap}]
We verify the hypotheses of Theorem~\ref{thm:verma_pearl} for the variable set $V = \mathcal{C}$, total order $\theta$ given by any topological extension of the level partial order defined in Methods, and parent function $B(j) = \mathrm{pa}_H(j)$.

\textit{Acyclicity.} Tarjan's algorithm applied to $H$ certifies, in time $O(|\mathcal{C}| + |E \cup E_\rho|)$, that every strongly connected component is a singleton, hence $H$ is a DAG and the level partial order extends to a total topological order $\theta$. The hypotheses of Tarjan's theorem are met by construction: $\mathcal{C}$, $E$, and $E_\rho$ are finite; $H$ is directed; the rule-expansion filter that drops triggers and targets resolving to the same chain rules out $E$-self-loops; and $\rho(C) \neq C$ for every $C$ rules out $E_\rho$-self-loops.

\textit{Tail boundary.} By the construction of $\widetilde\pi_j$ in Eq.~\eqref{eq:edited_chain}, $\widetilde\pi_j$ is obtained from $\pi_j$ by composing operators $e_{\ell_1}, \dots, e_{\ell_r}$ for those rules with $j(\ell) = j$ whose triggers $T^{\mathrm{trig}}_\ell$ are matched by the lower-level values. Each trigger $T^{\mathrm{trig}}_\ell$ references chains in $\mathrm{pa}_H(j)$ by definition of $E$, and the structural-nesting parent (when present) is in $\mathrm{pa}_H(j)$ by definition of $E_\rho$, so $\widetilde\pi_j$ depends on $a_{<\mathrm{level}(j)}$ only through $a_{\mathrm{pa}_H(j)}$. The activity indicator $\mathrm{act}_j$ used in Eq.~\eqref{eq:bar_pi_kernel} is a function of the structural-nesting parent's pick and any $\mathcal{F}/\mathcal{Z}$ effects fired by triggers in $\mathrm{pa}_H(j)$, so it too depends on $a_{<\mathrm{level}(j)}$ only through $a_{\mathrm{pa}_H(j)}$. Both branches of $\bar\pi_j$ therefore depend on the lower-level history only through $a_{\mathrm{pa}_H(j)}$:
\begin{equation*}
\bar\pi_j\bigl(\,\cdot\, \bigm|\, a_{<\mathrm{level}(j)}\bigr) \;=\; \bar\pi_j\bigl(\,\cdot\, \bigm|\, a_{\mathrm{pa}_H(j)}\bigr).
\end{equation*}
By the chain rule along $\theta$, the joint factorizes into the per-chain kernels $\bar\pi_j(\,\cdot\,\mid a_{<\mathrm{level}(j)})$, so the kernel equality lifts to the conditional independence $a_j \perp \mathrm{pred}_\theta(j) \setminus \mathrm{pa}_H(j) \mid \mathrm{pa}_H(j)$ in the policy-induced distribution, which is the tail-boundary condition for $B(j) = \mathrm{pa}_H(j)$.

\textit{Conclusion.} The semi-graphoid axioms hold for any probability distribution; the topological order is supplied by acyclicity; the tail boundary holds by the previous step. Theorem~\ref{thm:verma_pearl} then yields that $H$ with parent function $\mathrm{pa}_H$ is an $I$-map of $\pi(\,\cdot\,\mid p, \mathcal{D})$, and substituting the parent-set conditioning into Eq.~\eqref{eq:level_factorisation} gives Eq.~\eqref{eq:bn_factorisation}.
\end{proof}

The $I$-map property is what formalizes the informal claim that the decomposition is structurally conservative: every conditional independence read off $H$ via d-separation holds in $\pi(\,\cdot\,\mid p, \mathcal{D})$, so no coupling encoded in $\mathcal{R}$ or in the nesting structure of $\mathcal{T}$ is silently dropped by the per-chain factorization. The converse direction (faithfulness, every independence in $\pi(\,\cdot\,\mid p, \mathcal{D})$ has a d-separation witness in $H$) is not claimed: $H$ may be a non-minimal $I$-map, which is the standard situation for hand-specified Bayesian networks.

\begin{proof}[Proof of Proposition~\ref{prop:imap_agent}]
We verify the hypotheses of Theorem~\ref{thm:verma_pearl} for $V = \mathcal{C}$, the same total order $\theta$ used in the proof of Proposition~\ref{prop:imap}, and parent function $B(j) = \mathrm{pa}_H(j)$. Acyclicity and the topological order are inherited from that proof; only the tail boundary changes, since the distribution is now $P_{\text{agent}}$.

\textit{Tail boundary.} The I-map is a per-step statement at iteration $n$, with $\mathsf{History}_n$ treated as fixed parameters of the policy at that step (analogous to the rule operators defined in Eqs.~\eqref{eq:zero_op} and \eqref{eq:force_op}). By Assumption~\ref{ass:hint_locality}, the proposer's policy at chain $j$ depends on $\mathrm{pred}_\theta(j)$ only through $a_{\mathrm{pa}_H(j)}$, $\mathrm{hints}_j$, and the per-chain history projection $\mathsf{History}_n^{(j)}$, so
\begin{equation}\label{eq:agent_tail_boundary}
P_{\text{agent}}\!\bigl(a_j \,\big|\, \mathrm{pred}_\theta(j),\, \mathrm{hints}_j,\, \mathsf{History}_n^{(j)}\bigr) \;=\; P_{\text{agent}}\!\bigl(a_j \,\big|\, \mathrm{pa}_H(j),\, \mathrm{hints}_j,\, \mathsf{History}_n^{(j)}\bigr).
\end{equation}
Since $\mathrm{hints}_j = h_j(j, a_{\mathrm{pa}_H(j)})$ is a deterministic function of $a_{\mathrm{pa}_H(j)}$, conditioning on $\mathrm{hints}_j$ is redundant given $a_{\mathrm{pa}_H(j)}$; and since $\mathsf{History}_n^{(j)}$ is fixed at step $n$ (a parameter of the per-step policy, not a random variable in the joint over $\mathcal{C}$), it can be absorbed into the policy specification without affecting the conditional-independence statement, so
\begin{equation}\label{eq:agent_ci}
a_j \;\perp\; \mathrm{pred}_\theta(j) \setminus \mathrm{pa}_H(j) \;\bigm|\; \mathrm{pa}_H(j)
\end{equation}
in $P_{\text{agent}}$, which is the tail-boundary condition for $B(j) = \mathrm{pa}_H(j)$. The critic clause of Assumption~\ref{ass:hint_locality} ensures that acceptance is a function only of $(a_j, a_{\mathrm{pa}_H(j)}, M^{(j)}(u, \cdot), \mathrm{hints}_j)$, so the rejection-resampling step at chain $j$ reweights $a_j$ within the parent context without introducing dependence on non-parent predecessors, preserving the conditional structure of Eq.~\eqref{eq:agent_tail_boundary}.

\textit{Conclusion.} Theorem~\ref{thm:verma_pearl} yields that $H$ with parent function $\mathrm{pa}_H$ is an $I$-map of $P_{\text{agent}}(\,\cdot\,\mid p, \mathcal{D}, \mathsf{History}_n)$ for each iteration $n$, and the chain rule under $\theta$ collapses to Eq.~\eqref{eq:bn_factorisation_agent}.
\end{proof}

\subsection{Injective partition-of-unity projection}
\label{sec:partition_of_unity}

This supplementary section supplies the recursive layout procedure (Algorithm~\ref{alg:partition_layout}) for the embedding $\Phi$ defined in Eq.~\eqref{eq:phi_embedding}, together with the proof of Proposition~\ref{prop:phi_injective}. The uniform-children structural assumption the proof relies on is stated in the body.

\subsection{Supplementary Video 1. Recursive partition-of-unity projection of the PDE problem tree.}
The animation constructs the PDE problem space while mapping its hierarchy onto the partition plane. Mutually exclusive ``subdivides in'' choices recursively partition the $x$ direction, whereas jointly applicable ``characterized by'' attributes partition the $y$ direction; tree depth is retained as the third coordinate. The highlighted path follows one encoded problem through successive partitions. The animation illustrates how the recursive, nonoverlapping construction assigns distinct tree nodes distinct planar positions; formal injectivity is established in Proposition~\ref{prop:phi_injective}.

\begin{figure}[H]
\centering
\includegraphics[width=1\textwidth]{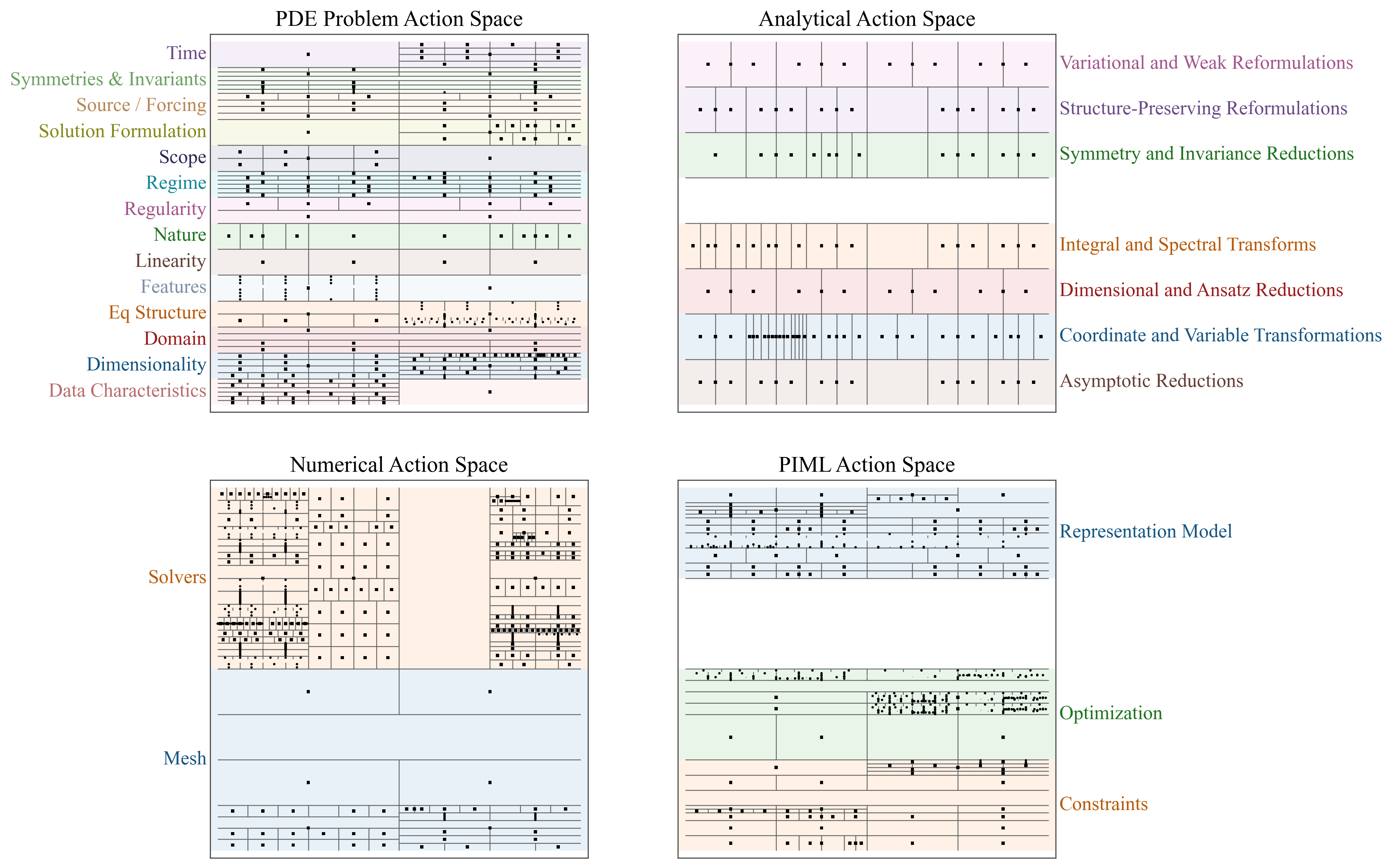}
\caption{
\textbf{The same semantic construction produces interpretable fingerprints across scientific-computing families.}
The four panels show the complete GRAFT spaces for PDE problem characteristics (upper left), analytical actions (upper right), numerical actions (lower left), and PIML actions (lower right). Each space follows the scientific vocabulary used to describe its own family, while GRAFT applies the same mathematical construction to all four. The partition-of-unity layout assigns each major semantic family a distinct region of the unit square and recursively partitions that region among increasingly specific characteristics or actions. Colored regions identify the major families, nested rectangles preserve their hierarchy, and black squares mark the available nodes. The position of an activated cell therefore identifies both the kind of scientific choice and its place in the hierarchy, making an individual fingerprint readable rather than merely unique. Each problem or method activates only the sparse cells along its selected paths, so fingerprints from different cases can be compared through the same geometric principle even though their domain-specific vocabularies differ.}
\label{fig:supp_full_space_pde_piml}
\end{figure}

\begin{figure}[H]
\centering
\includegraphics[width=1\textwidth]{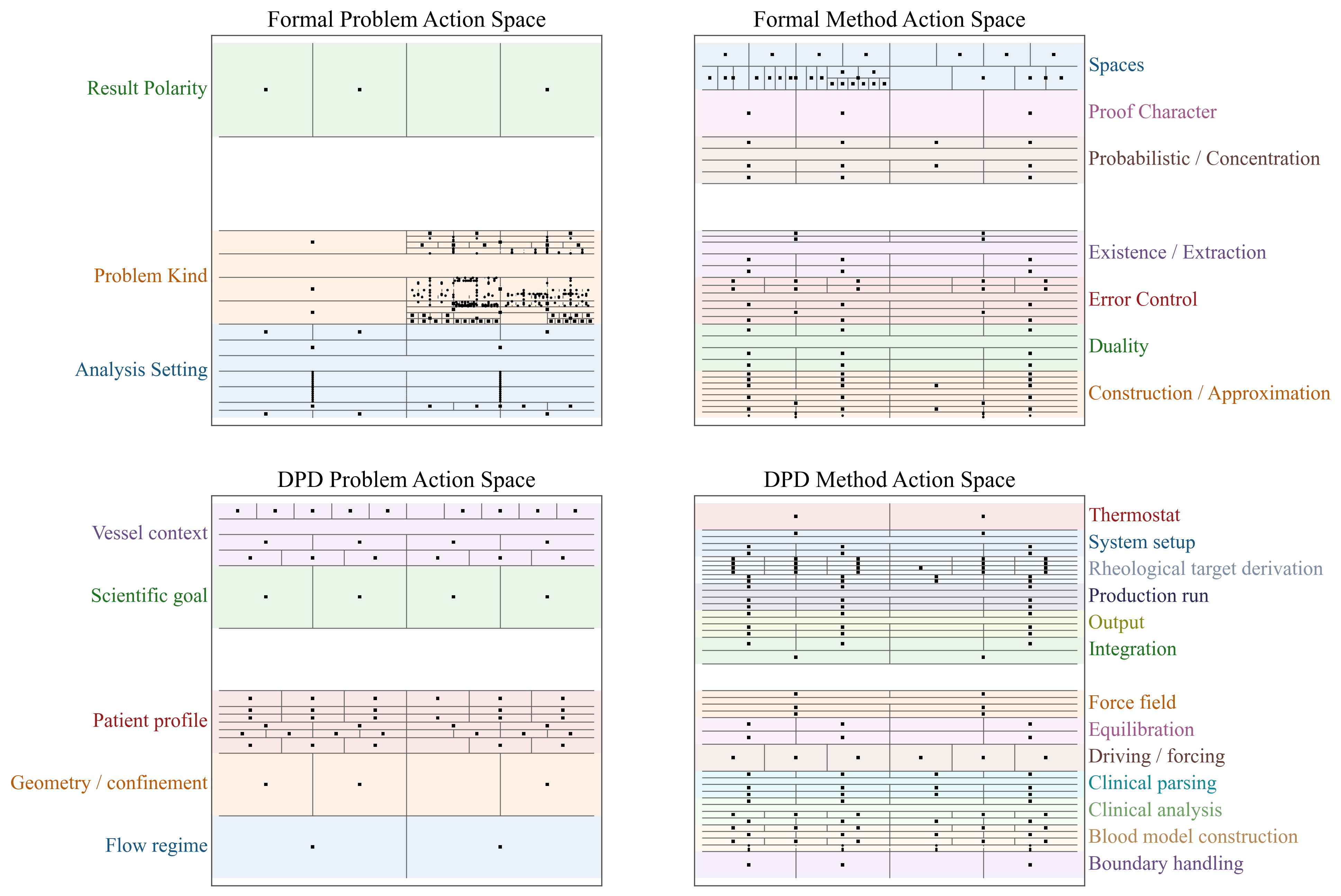}
\caption{
\textbf{The same semantic construction extends from formal reasoning to particle biophysics.}
The top row contains the complete Formal problem and method spaces. Formal problems are organized by result polarity, problem kind, and analysis setting, while their methods encode the mathematical spaces, proof character, construction route, and supporting arguments used to establish a result. The bottom row contains the corresponding DPD spaces, whose problem vocabulary describes vessel context, scientific goal, patient profile, geometry, and flow regime, and whose method vocabulary describes the choices needed to build, drive, run, and analyze a particle simulation. GRAFT preserves these domain-specific scientific vocabularies but projects both through the same partition-of-unity construction. Each major semantic family occupies a defined region of the unit square, and nested partitions locate increasingly specific characteristics or actions within that region; colored areas show the families, nested rectangles show their hierarchy, and black squares mark the available nodes. Consequently, the location of an activated cell remains scientifically interpretable in both domains, while the shared construction allows sparse fingerprints to be compared without forcing formal reasoning and DPD into the same vocabulary.}
\label{fig:supp_full_space_formal_dpd}
\end{figure}

\begin{algorithm}[H]
\DontPrintSemicolon
\KwIn{rooted tree $\mathcal{T}$ with root $r$ and edge-type labels $\tau \in \{s, c\}$.}
\KwOut{$\Phi : V(\mathcal{T}) \to [0,1]^3$.}
\BlankLine
\textbf{procedure} $\textsc{Layout}(v,\, [x_0, x_1],\, [y_0, y_1],\, d)$\;
\Indp
$\Phi(v) \gets \bigl((x_0 + x_1)/2,\ (y_0 + y_1)/2,\ d\bigr)$\;
\lIf{$v$ has no children}{\Return}
partition children of $v$ by edge type into groups $\{C_\tau\}_\tau$\;
\ForEach{group $C_\tau$, with children sorted by name}{
  $n \gets |C_\tau|$\;
  \uIf{$n$ even}{$q \gets n$;\quad $k \gets -1$ \tcp*[h]{no skip}}
  \Else{$q \gets n + 1$;\quad $k \gets \lceil q/2 \rceil$ \tcp*[h]{skip middle}}
  $U \gets (0, 1, \dots, q - 1)$ with index $k$ removed\;
  \uIf{$\tau = c$}{
    $h \gets (y_1 - y_0)/q$\;
    \ForEach{$(i, u) \in \mathrm{zip}(U, C_\tau)$}{
      $\textsc{Layout}\bigl(u,\ [x_0, x_1],\ [y_0 + i\,h,\ y_0 + (i + 1)\,h],\ d + 1\bigr)$\;
    }
  }
  \Else(\tcp*[h]{$\tau = s$}){
    $w \gets (x_1 - x_0)/q$\;
    \ForEach{$(i, u) \in \mathrm{zip}(U, C_\tau)$}{
      $\textsc{Layout}\bigl(u,\ [x_0 + i\,w,\ x_0 + (i + 1)\,w],\ [y_0, y_1],\ d + 1\bigr)$\;
    }
  }
}
\Indm
\textbf{end procedure}\;
\BlankLine
$\textsc{Layout}(r,\ [0, 1],\ [0, 1],\ 0)$\;
$D \gets \max_{v \in V(\mathcal{T})} d(v)$\;
\lForEach{$v \in V(\mathcal{T})$}{$z(v) \gets d(v) / D$}
\Return $\Phi$\;
\caption{\textsc{PartitionOfUnityLayout}: recursive subdivision with parent-midpoint exclusion.}
\label{alg:partition_layout}
\end{algorithm}

For each node $v \in V(\mathcal{T})$, let $R(v) \subset [0,1]^2$ denote the closed rectangle owned by $v$ during the recursion of Algorithm~\ref{alg:partition_layout}: $R(\textsc{root}) = [0,1]^2$, and for each child $c$ of $v$ the rectangle $R(c)$ is the slot rectangle constructed by the recursive call to $\textsc{Layout}$. By construction $\pi_{xy}(\Phi(v)) = \mathrm{centroid}(R(v))$.

\begin{proof}[Proof of Proposition~\ref{prop:phi_injective}]
We establish three sub-claims and then dispatch by lowest common ancestor.

\textbf{(C1) Containment.} For every descendant $u$ of $v$, $R(u) \subseteq R(v)$, and $R(u)$ is a non-degenerate (positive-measure) closed rectangle. By induction on path length: each recursive step constructs $R(c)$ as a slot of $R(v)$ along the active axis with width $(b - a)/q > 0$ for $q \geq 1$, so containment and non-degeneracy are preserved.

\textbf{(C2) Sibling rectangles have disjoint interiors.} Under the uniform-children assumption, all children of $v$ are processed in a single group with a single active axis. Their slot indices $i, i' \in U$ are distinct integers, giving disjoint open slot intervals along the active axis. The inactive axis is shared, so the rectangles tile $v$'s active-axis interval without overlap; their interiors are pairwise disjoint.

\textbf{(C3) Centroid interior to own rectangle, but not to any child's rectangle.} The centroid of a non-degenerate axis-aligned rectangle lies in its topological interior; combined with (C1), $\mathrm{centroid}(R(v)) \in \mathrm{int}(R(v))$. We show further that $\mathrm{centroid}(R(v)) \notin \mathrm{int}(R(c))$ for any child $c$ of $v$. Normalize $R(v)$'s active-axis interval to $[0, 1]$, so $\mathrm{centroid}(R(v))$ has active-axis coordinate $1/2$.

\emph{Even $n$:} $q = n$ is even, slot $i$ has active-axis interval $[i/q, (i+1)/q]$, and $1/2 = (q/2)/q$ is the boundary between slots $q/2 - 1$ and $q/2$. Both slots are populated, so $1/2$ is the shared boundary of $R(c_{q/2 - 1})$ and $R(c_{q/2})$, in the interior of neither.

\emph{Odd $n$:} $q = n + 1$ is even and the skipped slot is $k = q/2$, with active-axis interval $[1/2, 1/2 + 1/q]$. The point $1/2$ is the right boundary of slot $q/2 - 1$ (populated) and the left boundary of slot $q/2$ (empty). Hence $1/2$ lies on the boundary of $R(c_{q/2 - 1})$ and outside every other child's rectangle, in the interior of none.

In both cases $\mathrm{centroid}(R(v))$ is on the boundary of at most one child's rectangle and not in any child's interior. Since for any descendant $u$ of any child $c$, $\mathrm{centroid}(R(u)) \in \mathrm{int}(R(u)) \subseteq \mathrm{int}(R(c))$ (the latter inclusion is standard for axis-aligned sub-rectangles), no descendant centroid can equal $\mathrm{centroid}(R(v))$.

\textbf{Case analysis.} Suppose, for contradiction, that $u \neq v$ with $\pi_{xy}(\Phi(u)) = \pi_{xy}(\Phi(v))$. Let $w$ be the lowest common ancestor of $u$ and $v$.

\emph{Case A: $w \in \{u, v\}$.} Without loss of generality $w = v$, so $u$ is a strict descendant of $v$. By (C3), $\mathrm{centroid}(R(u)) \in \mathrm{int}(R(c))$ for some child $c$ of $v$, while $\mathrm{centroid}(R(v))$ is not in $\mathrm{int}(R(c))$. Contradiction.

\emph{Case B: $u, v$ are in disjoint subtrees rooted at distinct children $c_u, c_v$ of $w$.} By (C2), $R(c_u)$ and $R(c_v)$ have disjoint interiors. By (C1) and the standard inclusion for axis-aligned sub-rectangles, $\mathrm{centroid}(R(u)) \in \mathrm{int}(R(c_u))$ and $\mathrm{centroid}(R(v)) \in \mathrm{int}(R(c_v))$. Disjoint open sets cannot share a point, so the two centroids differ. Contradiction.

Both cases yield a contradiction, hence $\pi_{xy} \circ \Phi$ is injective. The full embedding $\Phi$ inherits injectivity since it carries strictly more information than $\pi_{xy} \circ \Phi$. \qedhere
\end{proof}

\paragraph{Discretization for fingerprinting} The fingerprint of Eq.~\eqref{eq:fingerprint} discretizes the continuous embedding by binning $(x, y)$ at resolution $K$ and pairing with the integer depth $d$, using the boundary-clamped bin map $\beta_K$ defined in Eq.~\eqref{eq:bin_K_methods}. By Proposition~\ref{prop:phi_injective}, the continuous coordinates are pairwise distinct on the finite set $V(\mathcal{T})$, so there exists at least one finite $K$ at which $\beta_K$ is injective on $V(\mathcal{T})$. We do not claim injectivity for every $K \geq K^\star$: floor binning shifts grid lines with $K$, so injectivity at one resolution does not imply injectivity at every larger one. The role of Algorithm~\ref{alg:min_k} is therefore not to secure identity for an interval of resolutions but to choose a single operating resolution $K^\star$ at which $\beta_K$ is injective on the current $V(\mathcal{T})$ and at which the per-cell footprint is small enough to produce a usable visualization on a finite grid. We pick $K^\star$ as the smallest such $K$, with a hard cap $K_{\max} = 4096$ as a numerical safety stop. All downstream uses of fingerprints (neighbor ranking, policy aggregation) are evaluated at $K = K^\star$, where the binned fingerprint coincides with the node set $P_i \cap V_{\text{keep}}$.

\begin{algorithm}[H]
\DontPrintSemicolon
\KwIn{embedding $\Phi$, depth map $d$, cap $K_{\max}$ (implementation: $K_{\max} = 4096$).}
\KwOut{$K^\star$, the minimum resolution at which $\beta_K$ is injective on $V(\mathcal{T})$.}
\BlankLine
\For{$K \gets 1$ \KwTo $K_{\max}$}{
  $\textit{cells} \gets \emptyset$;\quad $\textit{ok} \gets \mathbf{true}$\;
  \ForEach{$v \in V(\mathcal{T})$}{
    \uIf{$\beta_K(v) \in \textit{cells}$}{$\textit{ok} \gets \mathbf{false}$;\quad \textbf{break}\;}
    \Else{$\textit{cells} \gets \textit{cells} \cup \{\beta_K(v)\}$\;}
  }
  \lIf{\textit{ok}}{\Return $K$}
}
\textbf{abort}(``no $K \leq K_{\max}$ separates all nodes'')\;
\caption{\textsc{MinKForUniqueCells}: smallest grid resolution at which the binned cells of $V(\mathcal{T})$ remain pairwise distinct.}
\label{alg:min_k}
\end{algorithm}

\noindent The cap $K_{\max} = 4096$ is a numerical safety stop, not a correctness condition: by Proposition~\ref{prop:phi_injective} the continuous coordinates of $V(\mathcal{T})$ are pairwise distinct, so some finite $K$ at which $\beta_K$ is injective exists. The cap bounds the implementation's search for the smallest such $K$; if it is hit before injectivity is achieved on the current $V(\mathcal{T})$, the tree's active-axis gaps have shrunk past the chosen grid budget. This is a visualization concern (the chosen $K_{\max}$ is too small for identity-preserving rendering on this tree), not a failure of the continuous embedding. Once $K^\star$ is fixed, the fingerprint of a problem $P_i$ in Eq.~\eqref{eq:fingerprint} reduces to $F(P_i, K^\star) = \{\beta_{K^\star}(v) : v \in P_i \cap V_{\mathrm{keep}}\}$, and this fingerprint set is the problem's identifier under the current tree version and the chosen $K^\star$ (no second-stage hashing is needed).

\subsection{Fingerprint distance is a metric}
\label{sec:fingerprint_metric}

The neighbor ranking and policy aggregation defined in Eqs.~\eqref{eq:nbr_weight}--\eqref{eq:policy_blend} use the fingerprint Jaccard $J_K$ of Eq.~\eqref{eq:fingerprint_jaccard} as a similarity. The complementary form $1 - J_K$ is in fact a true metric on the space of non-empty fingerprint sets, which is what makes distance-based notions on the substrate (nearest neighbors, balls, triangle inequality) well-defined.

\begin{proposition}[Fingerprint distance is a metric]\label{prop:fingerprint_metric}
Let $w : \mathbb{Z}^3 \to (0, \infty)$ be a positive weight function and write $w(S) = \sum_{c \in S} w(c)$ for finite $S$. For non-empty finite sets $A, B \subseteq \mathbb{Z}^3$, define
\begin{equation}\label{eq:jaccard_distance}
    d_J(A, B) = 1 - \frac{w(A \cap B)}{w(A \cup B)}.
\end{equation}
Then $d_J$ is a metric on the family $\mathcal{F}$ of non-empty finite subsets of $\mathbb{Z}^3$; with $w(c) = \omega(d(c))$ this is $1 - J_K$ of Eq.~\eqref{eq:fingerprint_jaccard}, and the unweighted case is $w \equiv 1$. Applied separately on the problem tree $\mathcal{T}_P$ and the method tree $\mathcal{T}_A$, and assuming for every realized object the kept-node encoding $P_i \mapsto P_i \cap V_{\text{keep}}$ is non-empty and injective, $d_J$ is a metric on the realized problem fingerprints in $\mathcal{D}$, on the realized method fingerprints in $\mathcal{D}$, and pulls back to a metric on the realized problems and on the realized methods. The cell-level injectivity needed for the pullback comes from $\beta_{K^\star}$ being injective on $V(\mathcal{T})$ (Algorithm~\ref{alg:min_k}); the object-level injectivity of the kept-node encoding is an additional condition, and it holds under the default $V_{\text{keep}}$ of $s$-leaves since two problems making identical $s$-decisions on every chain are operationally identical.
\end{proposition}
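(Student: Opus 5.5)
The plan is to reduce the weighted Jaccard distance to a known metric on a measure space. The weight $w$ defines a finite positive measure $\mu_w$ on $\mathbb{Z}^3$ restricted to finite sets, $\mu_w(S)=w(S)$. For nonempty finite $A,B$, we have $w(A\cup B)\ge w(A)>0$, so $d_J$ is well defined and takes values in $[0,1]$. Symmetry is immediate. For identity of indiscernibles, note $d_J(A,B)=0$ iff $w(A\cap B)=w(A\cup B)$, i.e.\ $w(A\triangle B)=0$. Since every $w(c)>0$, this forces $A\triangle B=\emptyset$, so $A=B$; here the strict positivity of $w$ is exactly what is used.

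The main step is the triangle inequality. I would write $d_J(A,B)=w(A\triangle B)/w(A\cup B)$ and prove, for nonempty finite $A,B,C$,
\begin{equation*}
\frac{w(A\triangle C)}{w(A\cup C)}\le \frac{w(A\triangle B)}{w(A\cup B)}+\frac{w(B\triangle C)}{w(B\cup C)}.
\end{equation*}
The cleanest route is the standard partition argument. Decompose $A\cup B\cup C$ into the seven Venn atoms and let $x_1,\dots,x_7\ge 0$ be their $w$-masses, so each side becomes a rational function of these masses. Then reduce in two stages:
\begin{itemize}
\item First, drop the atoms that appear only in the denominators on the right. Decreasing those masses only increases the right side, so it suffices to prove the inequality with $A\cup B\cup C$ as the common denominator wherever that helps.
\item Then use subadditivity $w(A\triangle C)\le w(A\triangle B)+w(B\triangle C)$ together with a careful handling of the atom $B\setminus(A\cup C)$, which enlarges the right-hand denominators but not the left.
\end{itemize}
This bookkeeping is the main obstacle. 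Rather than redo it, I would cite the known result that the weighted Jaccard (Steinhaus/Marczewski--Steinhaus) distance $\mu(A\triangle B)/\mu(A\cup B)$ is a metric for any measure $\mu$. Alternatively, I would prove it via the Steinhaus transform: if $\delta$ is a metric with a point $o$, then $2\delta(x,y)/(\delta(x,o)+\delta(y,o)+\delta(x,y))$ is a metric. Apply this with $\delta(A,B)=w(A\triangle B)$, a metric since it is an $L^1$ distance of indicator vectors, and $o=\emptyset$. This works because
\begin{equation*}
w(A)+w(B)+w(A\triangle B)=2\,w(A\cup B).
\end{equation*}
The Steinhaus transform lemma itself is a short algebraic check using that $t\mapsto t/(t+s)$ is increasing. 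The only subtle point is that $\emptyset$ serves as base point even though it lies outside $\mathcal{F}$. That is harmless, because the lemma is applied on $\mathcal{F}\cup\{\emptyset\}$ and then restricted.

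The remaining clauses follow directly. Setting $w(c)=\omega(d(c))$, which is well defined because every cell has a single depth, with $\omega>0$, recovers $1-J_K$, and $w\equiv1$ gives the unweighted case. For the realized fingerprints, restriction of a metric to a subset is a metric, provided each fingerprint is nonempty, which is assumed. For the pullback to problems and methods, compose the injective maps $P_i\mapsto P_i\cap V_{\text{keep}}\mapsto F(P_i,K^\star)$. The second map is injective on node sets because $\beta_{K^\star}$ is injective on $V(\mathcal{T})$ (Algorithm~\ref{alg:min_k}, justified by Proposition~\ref{prop:phi_injective}). A pullback of a metric along an injection is a metric, since injectivity preserves identity of indiscernibles. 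The final remark about $s$-leaves is the stated operational identification, which I would present as the definition of equality of realized objects.
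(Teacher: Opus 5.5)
Your proof is correct and takes essentially the paper's route: rewrite $d_J(A,B)=w(A\triangle B)/w(A\cup B)$ and obtain the triangle inequality from the $L^1$ metric $w(A\triangle B)$ by Steinhaus normalization. The paper only cites that step, whereas you make it explicit through the Steinhaus transform with base point $\emptyset$ and the identity $w(A)+w(B)+w(A\triangle B)=2\,w(A\cup B)$; all denominators stay positive because $A,B,C$ are nonempty. Your restriction and pullback arguments also spell out clauses the paper's proof leaves implicit. You should drop the abandoned Venn-atom sketch, though: its first bullet has the monotonicity backwards, since making the right-hand side larger weakens the target inequality rather than giving a sufficient reduction.
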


\begin{proof}
Non-negativity follows from $0 \leq w(A \cap B) \leq w(A \cup B)$, and $d_J(A, A) = 0$ by inspection. Identity holds since $d_J(A, B) = 0$ iff $w(A \cap B) = w(A \cup B)$ iff $A = B$, the last step by positivity of $w$. Symmetry follows from $A \cap B = B \cap A$ and $A \cup B = B \cup A$. The triangle inequality is the non-trivial axiom: rewriting $d_J(A, B) = w(A \triangle B) / w(A \cup B)$ and applying the symmetric-difference triangle inequality $w(A \triangle C) \leq w(A \triangle B) + w(B \triangle C)$ (valid for any additive set weight) together with the Steinhaus normalization argument for the measure $w$ yields $d_J(A, C) \leq d_J(A, B) + d_J(B, C)$. The detailed chain is given in~\cite{lipkus1999tanimoto}; an earlier set-based statement appears in~\cite{levandowsky1971distance}.
\end{proof}

Two consequences for the substrate, on each tree separately: balls $$\{P : d_J(F(P, K^\star), F(p_{\text{new}}, K^\star)) \leq r\}$$ around an incoming problem $p_{\text{new}}$ are well-defined neighborhoods on the problem-fingerprint subspace of $\mathcal{D}$, with an analogous construction on the method-fingerprint subspace, and the triangle inequality bounds two-hop closeness in each. The neighbor selection by $J_{K^\star}$ in Eq.~\eqref{eq:nbr_weight} therefore corresponds to ranking by a true distance, not merely by an unstructured similarity.

\begin{proposition}[Similarity invariance under additive extension]\label{prop:additive_invariance}
Let $\mathcal{T}'$ be obtained from $\mathcal{T}$ by attaching new nodes, with no existing node renamed, removed, or re-parented, and let $P_i, P_j \subseteq V(\mathcal{T})$ be fingerprints encoded on $\mathcal{T}$. Then, for any level weights $\omega$, the similarity of Eq.~\eqref{eq:fingerprint_jaccard} is unchanged by the extension: $J_{K^\star}(P_i, P_j)$ computed on $\mathcal{T}'$ at its operating resolution equals its value computed on $\mathcal{T}$.
\end{proposition}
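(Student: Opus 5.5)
The plan is to exploit the fact that, at the operating resolution, the fingerprint similarity is really a similarity of node sets and never depends on geometry. At $K=K^\star$ the bin map $\beta_{K^\star}$ is injective on $V(\mathcal{T})$ (Algorithm~\ref{alg:min_k}). So Eq.~\eqref{eq:fingerprint_jaccard} can be rewritten as
\[
J_{K^\star}(P_i,P_j)=\frac{\sum_{v\in A_i\cap A_j}\omega(d(v))}{\sum_{v\in A_i\cup A_j}\omega(d(v))},\qquad A_i=P_i\cap V_{\text{keep}},
\]
because cells and nodes are in bijection and each cell inherits the depth of its unique node. The same rewriting holds on $\mathcal{T}'$ at its own operating resolution $K'^\star$, which exists by Proposition~\ref{prop:phi_injective} applied to $\mathcal{T}'$ together with the discretization remark. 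The first step is therefore to state and justify this node-level form carefully, noting in particular that the binned depth coordinate is the exact depth $d(v)$, so no depth collisions arise. Once this is in place, $K^\star$ and $K'^\star$ become irrelevant, even though they may differ and the coordinates $\Phi$ of old nodes may move.

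Next I check that every ingredient of the node-level formula is unchanged by the extension. First, $P_i, P_j\subseteq V(\mathcal{T})\subseteq V(\mathcal{T}')$ are the same vertex sets, since no existing node is renamed or removed. Second, the depth $d(v)$ of each old node is preserved, because no node is re-parented: the root-to-$v$ path in $\mathcal{T}'$ equals the one in $\mathcal{T}$. The weight $\omega(d(v))$ is therefore unchanged. A possible subtlety is the normalization $z=d/D$, where $D$ can grow; however, $J$ uses the integer depth $d$, not $z$, and $\omega$ is a fixed function of $d$, so this does not matter.

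The step I expect to be the main obstacle is showing that membership in $V_{\text{keep}}$ is preserved for old nodes. Two cases need care. Under the default ($s$-leaves), an old $s$-leaf could acquire children and stop being a leaf. Under the $c$-backbone choice, keep-status depends only on the incoming edge type, which is preserved. My plan is to interpret $V_{\text{keep}}$ as determined by incoming edge type ($s$-edges) and argue this is the reading Methods uses. If leaf status matters, I would add that attaching children to a leaf that already appears in a stored fingerprint counts as a refinement rather than a purely additive extension, or else observe that stored fingerprints fix $A_i$ at encoding time, so $A_i$ is a recorded set and not recomputed. Under that convention, $A_i$ and $A_j$ are literally identical on both trees. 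The numerator and denominator sums then coincide term by term (both denominators are positive if the sets are nonempty, and the degenerate empty case is handled by convention), and the equality follows.
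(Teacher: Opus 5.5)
Your proposal is correct and follows the paper's own argument. Injectivity of $\beta_{K^\star}$ at each tree's operating resolution turns both sums in Eq.~\eqref{eq:fingerprint_jaccard} into weighted sums over node sets, and an additive extension changes neither the stored node sets nor the depths of pre-existing nodes. Your extra care about whether old nodes keep their $V_{\text{keep}}$-membership covers a point the paper's proof skips by summing directly over $P_i\cap P_j$ and $P_i\cup P_j$. Your resolution, treating stored fingerprints as recorded node sets, matches the convention stated in Remark~\ref{rem:two_metrics}.
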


\begin{proof}
On either tree the operating resolution makes the binning $\beta_{K^\star}$ injective on the ambient node set (Algorithm~\ref{alg:min_k}), so cells are in bijection with nodes and each cell carries the depth of its unique node. The numerator and denominator of Eq.~\eqref{eq:fingerprint_jaccard} therefore reduce to $\sum_{v \in P_i \cap P_j} \omega(d(v))$ and $\sum_{v \in P_i \cup P_j} \omega(d(v))$, sums over the node sets themselves in which the layout no longer appears. An additive extension changes neither the sets $P_i$ and $P_j$, since the added nodes belong to neither, nor the depth $d(v)$ of any pre-existing node, since no node is re-parented; both sums, and hence the similarity, are unchanged.
\end{proof}

\begin{remark}[Two compatible metrics on the substrate]
\label{rem:two_metrics}
Combining Proposition~\ref{prop:phi_injective} with Proposition~\ref{prop:fingerprint_metric} equips the substrate with two compatible metrics. The first is a point metric on individual nodes,
\begin{equation*}
    d_E(v_1, v_2) = \|\Phi(v_1) - \Phi(v_2)\|_2,
\end{equation*}
on $V(\mathcal{T})$, separating points by injectivity of $\Phi$. The second is the set metric $d_J$ of Proposition~\ref{prop:fingerprint_metric} on configurations $P \subseteq V(\mathcal{T}_P)$ or $P \subseteq V(\mathcal{T}_A)$, treated separately on each tree, with cell-level separation supplied by injectivity of $\beta_{K^\star}$ and object-level separation by the kept-node-encoding condition of Proposition~\ref{prop:fingerprint_metric}. Both inherit from the same partition-of-unity construction: $d_J$ aggregates $\beta_{K^\star}$-binned positions, which themselves come from $\Phi$. Two qualifications. First, both metrics live on the projected tree $\mathcal{T}$, not on the original DAG $\mathcal{G}$, on which the spanning-tree projection induces only a pseudo-metric (graph nodes identified by the projection have $d = 0$). Second, both are tied to the current tree version, since a substrate update relayouts $\Phi$ and reshuffles cells; the metrics are recomputed on tree edits. Stored records persist across such edits: node identities are carried by stable codenames independent of the layout (printed in full in the materials to be released upon publication), so a schema edit relayouts coordinates and cells but does not rename recorded selections, and stored fingerprints carry over as node sets. Moreover, the extension protocol is additive (nodes and rules are added, never renamed or removed), and pairwise similarities between previously encoded problems are invariant under such growth (Proposition~\ref{prop:additive_invariance}).
\end{remark}

\subsection{Policy update procedure}
\label{sec:policy_update_pseudocode}

Algorithm~\ref{alg:policy_update} is the pseudocode bundling of the policy and prior update described in Eqs.~\eqref{eq:nbr_weight}--\eqref{eq:policy_blend}: it ranks $\mathcal{D}$ by the fingerprint Jaccard $J_{K^\star}$, builds per-neighbor partial specifications $S_i$, blends them row-wise into $M_{\text{data}}$ and then with the uniform prior $\mu$ to obtain $M$, and finally applies the rule operators defined in Eqs.~\eqref{eq:zero_op} and \eqref{eq:force_op}. Every step is one of Eqs.~\eqref{eq:nbr_weight}--\eqref{eq:policy_blend}; the algorithm fixes the order in which they are evaluated.

\begin{algorithm}[H]
\DontPrintSemicolon
\KwIn{new problem $p_{\text{new}}$; dataset $\mathcal{D}$; neighbor count $N_{\text{nbr}}$; uniform prior $\mu$; rule set $\mathcal{R}$.}
\KwOut{updated rows $M^{(j)}(u, \cdot)$ for every chain $j$ and every internal $u \in \mathcal{T}_A$.}
\BlankLine
rank $\mathcal{D}$ by $\mathrm{sim}(p_{\text{new}}, \cdot) := J_{K^\star}(p_{\text{new}}, \cdot)$ on $s$-leaves; keep top $N_{\text{nbr}}$\;
\ForEach{neighbor $i$ in the top $N_{\text{nbr}}$}{
    compute the weight $w_i$ from Eq.~\eqref{eq:nbr_weight}\;
    build $S_i$ by Eq.~\eqref{eq:partial_spec} (one-hot on path, $\mu$ off path)\;
}
\ForEach{internal node $u \in \mathcal{T}_A$}{
    set $M_{\text{data}}(u, \cdot)$ by Eq.~\eqref{eq:policy_data}\;
    set $M(u, \cdot) \gets \bar W \cdot M_{\text{data}}(u, \cdot) + (1 - \bar W) \cdot \mu(u, \cdot)$ via Eq.~\eqref{eq:policy_blend}\;
    \ForEach{rule $R_\ell \in \mathcal{R}$ firing on $u$}{
        update $M(u, \cdot)$ via the operator $e_\ell \in \{\mathcal{F}, \mathcal{Z}\}$ defined in Eqs.~\eqref{eq:zero_op} and \eqref{eq:force_op}\;
    }
}
\Return $M$\;
\caption{GRAFT policy and prior update at problem arrival.}
\label{alg:policy_update}
\end{algorithm}

\section{GRAFT--ATHENA teams}
\label{sec:supp_graft_athena_teams}

\begin{table}[!htbp]
\centering
\caption{\small\linespread{1}\selectfont
\textbf{Inventory of the complete coded GRAFT problem and action spaces.}
A branching node is a decision family with at least one child; depth is measured from the root at depth zero.}
\label{tab:graft_space_inventory}
\small
\setlength{\tabcolsep}{4.5pt}
\renewcommand{\arraystretch}{0.96}
\begin{tabular}{@{}p{0.35\textwidth}llrrrr@{}}
\toprule
Space & Role & Root & Nodes & Families & Leaves & Depth \\
\midrule
PDE problem space & Problem & \textnormal{(\texttt{PRB})} & 428 & 119 & 309 & 6 \\
DPD problem space & Problem & \textnormal{(\texttt{DP})} & 78 & 17 & 61 & 4 \\
Formal mathematical problem space & Problem & \textnormal{(\texttt{MP})} & 459 & 139 & 320 & 13 \\
Analytical-method action space & Action & \textnormal{(\texttt{AM})} & 125 & 45 & 80 & 4 \\
Numerical-method action space & Action & \textnormal{(\texttt{NMR})} & 693 & 204 & 489 & 8 \\
Physics-informed machine-learning action space & Action & \textnormal{(\texttt{PIML})} & 661 & 173 & 488 & 10 \\
DPD action space & Action & \textnormal{(\texttt{DPD})} & 182 & 49 & 133 & 4 \\
Formal-reasoning action space & Action & \textnormal{(\texttt{FRM})} & 148 & 43 & 105 & 5 \\
\midrule
\textbf{Total} & & & \textbf{2774} & \textbf{789} & \textbf{1985} & \\
\bottomrule
\end{tabular}
\end{table}

\begin{algorithm}[H]
\DontPrintSemicolon
\KwIn{new problem $p_{\text{new}}$; repository $\mathcal{D}$; rule set $\mathcal{R}$; budget $T$.}
\KwOut{converged method $m^\star$; updated repository $\mathcal{D}$.}
\BlankLine
build $\pi(\cdot \mid p_{\text{new}}, \mathcal{D})$ via Algorithm~\ref{alg:policy_update} (rows $M^{(j)}$, then rule operators)\;
$n \gets 0$;\quad initialize $S_0$, $\mathsf{History}_0$\;
\Repeat{convergence \textbf{or} $n = T$}{
    sample method $m_n = (a_{n,1}, \dots, a_{n,k})$ from the closed-loop policy $\Pi(\,\cdot\,\mid p_{\text{new}}, \mathcal{D}, \mathsf{History}_n, S_n)$, drawing through the rows of $\pi(\cdot \mid p_{\text{new}}, \mathcal{D})$ in topological order on $H$ with the proposer/critic described in Methods\;
    $A_n \gets m_n$\;
    $S_{n+1} \gets \mathcal{I}(A_n, S_n)$;\quad $O_n \gets \mathsf{Exec}(S_{n+1})$;\quad $R_n \gets R_f(O_n \mid p_{\text{new}},m_n,\mathcal{D},\mathcal{R})$ \tcp*[r]{Eq.~\eqref{eq:bandit_step}}
    $\mathsf{History}_{n+1} \gets \mathsf{History}_n \cup \{(A_n, O_n, R_n)\}$\;
    $\mathcal{D} \gets \mathcal{D} \cup \{(p_{\text{new}}, m_n, O_n, R_n)\}$\;
    advisor proposes node-level edits to $m_n$ from $(O_n, R_n)$\;
    $n \gets n + 1$\;
}
\Return $m^\star \gets \arg\max_{t < n} R_t$, updated $\mathcal{D}$\;
\caption{GRAFT--ATHENA closed loop on a new problem $p_{\text{new}}$.}
\label{alg:graft_athena_cycle}
\end{algorithm}

\begin{table}[!htbp]
\centering
\caption{
\textbf{Family-specific realization of the GRAFT--ATHENA stage spine.}
An omitted stage is a deliberate no-op: its function is either unnecessary for that family or absorbed by another stage.}
\label{tab:graft_athena_family_stages}
{\small\linespread{1}\selectfont
\setlength{\tabcolsep}{3pt}
\begin{tabularx}{\textwidth}{@{}>{\raggedright\arraybackslash}p{1.1cm}*{4}{>{\raggedright\arraybackslash}X}@{}}
\toprule
Stages & PIML & Numerical & DPD & Formal \\
\midrule
1--4 & Formalize and encode a PDE problem; retrieve three neighbors; refine a PIML method with a proposer--critic pair. & Formalize and encode a PDE problem; retrieve three neighbors; refine a numerical method with a proposer--critic pair. & Formalize and encode a particle-flow problem; retrieve three neighbors; refine a DPD method with a proposer--critic pair. & Formalize and encode a theorem; retrieve five proved neighbors; rank reduction and proof-transfer routes. \\
5--6 & Omitted. & Plan, construct, and validate the computational mesh. & Omitted; particle geometry is part of the simulation state. & Omitted. \\
7--10 & Plan the trainer and postprocessor; implement the Python/JAX state; train it; compute predictions, residuals, errors, and flatness. & Plan the solver and postprocessor; implement the numerical state; execute it under monitoring; compute fields, errors, and conservation diagnostics. & Plan the LAMMPS simulation and analysis; construct and test the input state; execute the particle system; extract thermo, rheological, and clinical observations. & Plan and section the proof; implement it in Lean; compile and audit it; render the accepted machine-checked proof. \\
11--13 & Score accuracy, diagnose training and physical consistency, and issue method- or code-level advice. & Score entropy stability, diagnose numerical behavior, and issue mesh- or code-level advice. & Score simulation health, diagnose physical and rheological behavior, and issue geometry- or code-level advice. & Stages 11--12 are omitted; Stage 13 extracts the audited proof method and reward. \\
14 & Store an accepted run or create a revised model iteration. & Store an accepted run or return to the mesh or solver. & Store an accepted run or return to the geometry or simulation state. & Store and fingerprint an accepted proof or return to proof planning or Lean implementation. \\
\bottomrule
\end{tabularx}
}
\end{table}

\subsection{Family-specific realizations of the GRAFT--ATHENA loop}
\label{sec:graft_athena_family_loops}

Algorithm~\ref{alg:graft_athena_cycle} gives the family-independent GRAFT--ATHENA cycle: a method is proposed from the graph-informed prior, realized in an executable state, evaluated, revised from the resulting observations, and committed to the repository. GRAFT--ATHENA implements these operations through a numbered stage spine, but a stage is active only when its operation is meaningful for the scientific family. The four realizations therefore share the same closed-loop logic while differing in their executable artifacts, evaluation criteria, and re-entry points. Supplementary Table~\ref{tab:graft_athena_family_stages} summarizes these differences; the text below records only the family-specific operations rather than repeating the common notation and policy construction of Methods.

\subparagraph*{PIML loop}

For PIML, the executable state $S_n$ is the model, its physical and data constraints, and its training program. Stages 5--6 are bypassed because no external mesh is constructed. Stage 7 specifies the trainer and postprocessing requirements, Stage 8 realizes them in Python/JAX, Stage 9 performs the training run, and Stage 10 extracts predictions, residuals, validation errors, and the late-training flatness trace. Stages 11--13 combine these observations into the shared $100$-point computational envelope and diagnose optimization, boundary, spectral, and convergence failures. A run is ordinarily accepted at a score of at least $90$ with no outstanding edit; after a sustained performance plateau, a score of at least $85$ can also terminate the loop. Otherwise, the advisor changes the selected method or its implementation and Stage 14 opens the next iteration.

\subparagraph*{Numerical loop}

For Numerical problems, $S_n$ contains both the discretization and the solver, so Stages 5--6 are active: the mesh is planned, constructed, and validated before code realization. Stages 7--10 then specify the solver and postprocessor, implement and execute the numerical state, and recover the solution fields together with error, entropy, and conservation observations. The Stage-11 accuracy component is $25$ when the measured entropy change is non-positive and $0$ otherwise; Stage 12 separately diagnoses instability, non-finite states, oscillation, asymmetry, loss of positivity, and inadequate resolution. Stage 13 accepts a run at a total score of at least $85$ only when the diagnostic verdict is successful and the entropy condition is satisfied. Its advice distinguishes discretization changes, which re-enter the mesh stages, from solver changes, which re-enter planning or implementation.

\subparagraph*{DPD loop}

For DPD, $S_n$ is a particle realization consisting of the geometry, force field, integrator, thermostat, boundary treatment, forcing, equilibration schedule, and production settings. The mesh stages are omitted because the particle geometry plays the corresponding structural role. Stage 7 plans the LAMMPS simulation and its rheological and clinical analysis, Stage 8 constructs and smoke-tests the input state, and Stage 10 converts the LAMMPS thermo and Tecplot outputs into simulation-health, bulk-flow, hematocrit, and viscosity observations. In the present implementation the smoke execution is performed within Stage 8 rather than through the generic Stage-9 production monitor. Stages 11--13 score the availability and finiteness of the required observables and diagnose non-finite particles, thermostat or wall drift, inadequate equilibration, degenerate bulk fits, and hematocrit mismatch. Acceptance requires a total score of at least $85$, no detected non-finite values, and temperature within the prescribed tolerance; otherwise, Stage 14 routes the next attempt to the geometry or LAMMPS state rather than to a mesh stage.

\begin{table}[!htbp]
\centering
\caption{\small\linespread{1}\selectfont
\textbf{Models for every GRAFT--ATHENA agent role.}
Stage numbers refer to Supplementary Table~\ref{tab:graft_athena_family_stages}.}
\label{tab:model_identifiers}
\small
\setlength{\tabcolsep}{4pt}
\renewcommand{\arraystretch}{0.96}
\begin{tabularx}{\textwidth}{@{}>{\raggedright\arraybackslash}p{2.7cm}>{\raggedright\arraybackslash}p{2.3cm}>{\raggedright\arraybackslash}X@{}}
\toprule
Model & Access & Agent roles (stages) \\
\midrule
Claude Opus 4.8 & Claude Code (headless) & Formalization dialog, proposal ranker, and derivation auditor (1); mesh planning, construction, and gate (5--6); code planning and implementation (7--8); postprocessing (10); advisor (13); Formal plan critic, section-verification gate, and final auditor (7--9) \\
GPT 5.6 Sol & Codex (headless) & Analytical simplification proposer and derivation reviewer (1); postprocess planning (7); diagnosis (12); Formal reduction ranking, proof planner, proof sectioning, Lean verifier, render, and method extraction (4--13) \\
GPT 5.4 & OpenAI API & Encoding selector (2); method proposer (4) \\
Gemini 3.1 Pro Preview & Google API & Encoding critic (2); method critic (4); advisor-report parser (14) \\
Gemini 2.5 Pro & Google API & PIML training-log score parser (11) \\
\bottomrule
\end{tabularx}
\end{table}

\subparagraph*{Formal loop}

For Formal problems, $S_n$ is a Lean proof state and the family-specific path diverges at Stage 4. The five closest proved neighbors are examined as possible reductions, instances, corollaries, or sources of transferable proof technique; they are not combined into an authoritative node-wise method choice. Stages 5--6 are omitted. Stage 7 runs the first audit: a planner constructs a dependency-ordered proof sketch, and an independent critic checks its match to the requested theorem, cited results, assumptions, lemma completeness, and final assembly. After approval, Stage 7c partitions the proof into verifiable sections. Stage 8 implements those sections in Lean, where every supporting lemma must type-check before it can be used downstream and the completed development must pass the kernel check. Stage 9 then performs a fresh audit of the actual Lean source, checker outputs, and axiom report against the original user request rather than relying on prior agent reports. Stage 10 renders an accepted proof without altering it. The final audit assigns $30$ points to faithfulness, $20$ to reasonable assumptions, $40$ to the absence of \texttt{sorry}, and $10$ to efficiency. Approval requires a score of at least $91$ together with the hard conditions that the requested theorem is proved, the proof is \texttt{sorry}-free, and no trivializing assumption or unapproved axiom has been introduced. Generic accuracy and diagnostic Stages 11--12 are therefore omitted. Stage 13 extracts the proved method and audit reward, and Stage 14 either updates the Formal repository or routes a rejected attempt back to Stage 7 for a plan revision or Stage 8 for a proof repair.

\subsection{Model identifiers}
\label{sec:model_identifiers}

Each agent role was assigned a pinned model and access route (Supplementary Table~\ref{tab:model_identifiers}). Roles that plan, implement, or audit executable artifacts ran headlessly through Claude Code or Codex, whose native terminal, file-editing, code-execution, and internet-search tools supplied the execution environment. Roles requiring only a single exchange (encoding, proposer--critic sampling, score parsing, or report parsing) used direct APIs. In both cases, the external system supplied inference and generic tooling only; the system prompts, role definitions, and stage instructions belong to GRAFT--ATHENA and will be released upon publication.

A headless invocation is an autonomous run launched programmatically by the orchestrator: no user operates the harness, and only the completed artifacts return to the pipeline. User input occurs only at the designated formalization and interaction gates between invocations, with each response incorporated into the next scripted instruction. Supplementary Table~\ref{tab:model_identifiers} reports the assignments and access routes for all four scientific families; its stage numbers correspond to Supplementary Table~\ref{tab:graft_athena_family_stages}. All models were accessed between May and August 2026. Operations not listed as model-backed (including Stage 3, generic Stage-9 execution, the Numerical Stage-11 grader, and the Stage-14 repository update) were deterministic. Finally, the Claude Code roles used the same pinned \texttt{claude-opus-4-8} model as the Opus 4.8 commercial baseline, holding the underlying model fixed while isolating the contribution of the surrounding GRAFT--ATHENA framework.

\begin{table}[!htbp]
\centering
\caption{\textbf{PIML seeds used to initialize the solved-case memory.} In-house denotes an implementation developed within the project rather than inherited from an external solver example.}
\label{tab:memory_seeds_piml}
{\footnotesize\linespread{1}\selectfont
\setlength{\tabcolsep}{3pt}
\begin{tabularx}{\textwidth}{@{}>{\raggedright\arraybackslash}p{2.15cm}>{\raggedright\arraybackslash}X>{\raggedright\arraybackslash}X>{\raggedright\arraybackslash}p{2.0cm}@{}}
\toprule
Seed & Problem & Successful method & Provenance \\
\midrule
Allen--Cahn & Periodic one-dimensional reaction--diffusion with weak diffusion and $x^2\cos(\pi x)$ initial data & Periodic-Fourier PINN with vRBA, Lion warm-up, backtracking SSBroyden, and diffusion continuation & vRBA study~\cite{toscano2026variational} \\
Inviscid Burgers & Periodic entropy solution including the shock formed from $u_0=-\sin(\pi x)$ & Periodic-wavelet PINN with a hard initial condition, entropy penalty, vRBA, and Adam-to-SSBroyden viscosity continuation & ATHENA study~\cite{toscano2025athena} \\
MRI advection--diffusion inverse problem & Recover divergence-free three-dimensional velocity and denoised tracer concentration from sparse, noisy MRI-like measurements & Six-subdomain cPINN with curl-constrained velocity, probabilistic concentration output, vRBA, SOAP, and staged physics--data training & In-house code \\
\bottomrule
\end{tabularx}
}
\end{table}

\section{Seed inventory of the solved-case memory}
\label{sec:memory_seed_inventory}

The following tables report the fixed initialization set and the provenance of each implementation or proof.

\begin{table}[!htbp]
\centering
\caption{\textbf{Numerical-method seeds used to initialize the solved-case memory (part I).} Fourier-spectral implementations are in-house; Nektar++- and Trixi.jl-based cases inherit their solver families from the corresponding example libraries.}
\label{tab:memory_seeds_numerical_a}
{\footnotesize\linespread{1}\selectfont
\setlength{\tabcolsep}{3pt}
\begin{tabularx}{\textwidth}{@{}>{\raggedright\arraybackslash}p{2.15cm}>{\raggedright\arraybackslash}X>{\raggedright\arraybackslash}X>{\raggedright\arraybackslash}p{2.0cm}@{}}
\toprule
Seed & Problem & Successful method & Provenance \\
\midrule
Allen--Cahn 1D & Periodic bistable reaction--diffusion with weak diffusion and interface formation from smooth initial data & Fourier pseudo-spectral SplitODE with fixed-step ETDRK4, separating stiff diffusion from the explicit cubic reaction & In-house Fourier-spectral code \\
Cahn--Hilliard 2D & Periodic spinodal decomposition from low-amplitude noise, monitored through mass conservation and free-energy decay & Dealiased Fourier pseudo-spectral SplitODE with ETDRK4, separating the stiff linear operator from the cubic term & ATHENA study~\cite{toscano2025athena} \\
K\'arm\'an cylinder wake & Incompressible flow past a cylinder at $Re_D=60$, resolving the periodic vortex street & Nektar++ velocity-correction solver on a curved continuous spectral/$hp$ mesh with IMEX stepping & Nektar++~\cite{cantwell2015nektar} \\
Double Mach reflection & Mach-10 inclined-shock reflection over a ramp, resolving Mach stems, slip lines, and the near-wall jet & Trixi DGSEM with hybrid DG--finite-volume shock capture, L\"ohner AMR, positivity preservation, and SSPRK integration & Trixi.jl~\cite{ranocha2021adaptive} \\
Hasegawa--Wakatani turbulence & Three-field periodic drift-wave turbulence at $\alpha=2$ and $\kappa=1$ from a Gaussian perturbation & Nektar++ discontinuous spectral/$hp$ solver with RK4 and internal Helmholtz--Poisson recovery of the potential & Nektar++-based plugin~\cite{cantwell2015nektar} \\
Orszag--Tang MHD vortex & Periodic ideal-MHD vortex developing interacting shocks and current sheets & Trixi DGSEM with hybrid shock capture, three-level AMR, low-storage RK4, and GLM divergence cleaning & Trixi.jl~\cite{ranocha2021adaptive} \\
Rayleigh--B\'enard convection & Two-dimensional incompressible Boussinesq convection at $Ra=10^4$ and $Pr=0.71$, heated from below & Nektar++ velocity-correction solver using continuous spectral/$hp$ quadrilaterals, IMEX2, and Boussinesq forcing & Nektar++~\cite{cantwell2015nektar} \\
\bottomrule
\end{tabularx}
}
\end{table}

\begin{table}[!htbp]
\centering
\caption{\textbf{Numerical-method seeds used to initialize the solved-case memory (part II).}}
\label{tab:memory_seeds_numerical_b}
{\footnotesize\linespread{1}\selectfont
\setlength{\tabcolsep}{3pt}
\begin{tabularx}{\textwidth}{@{}>{\raggedright\arraybackslash}p{2.15cm}>{\raggedright\arraybackslash}X>{\raggedright\arraybackslash}X>{\raggedright\arraybackslash}p{2.0cm}@{}}
\toprule
Seed & Problem & Successful method & Provenance \\
\midrule
Rayleigh--Taylor instability & Gravity-driven compressible Euler instability between heavy and light fluids, producing mushroom-shaped mixing structures & Trixi DGSEM with hybrid shock capture, L\"ohner AMR, gravitational forcing, and optimized explicit Runge--Kutta integration & ATHENA study~\cite{toscano2025athena} \\
Sedov blast & Two-dimensional inviscid cylindrical blast generated by a localized radial pressure pulse & Trixi DGSEM on a radially graded mesh with hybrid stabilization, AMR, and low-storage Runge--Kutta integration & Trixi.jl~\cite{ranocha2021adaptive} \\
Supersonic cylinder & Mach-3 Euler flow around a cylinder, including the detached bow shock and downstream wake & Trixi DGSEM on a body-fitted mesh with invariant-domain-preserving subcell limiting and SSPRK33 integration & Trixi.jl~\cite{ranocha2021adaptive} \\
Channel startup 3D-H1D & Low-Reynolds-number body-forced channel startup approaching laminar Poiseuille flow & Nektar++ velocity-correction solver with a spectral/$hp$ cross-section, dealiased homogeneous Fourier direction, and IMEX2 & Nektar++~\cite{cantwell2015nektar} \\
Womersley pipe flow & Three-dimensional pulsatile pipe flow driven by a ten-mode inflow at approximately $Re=10$ and $\alpha=4$ & Nektar++ velocity-correction solver on a curved-prism continuous spectral/$hp$ mesh with IMEX1 & Nektar++~\cite{cantwell2015nektar} \\
Compressible cavity & Low-Mach compressible Navier--Stokes lid-driven cavity at approximately $Re=1000$ & Trixi DGSEM with hybrid stabilization, L\"ohner AMR, and CFL-controlled low-storage Runge--Kutta integration & Trixi.jl~\cite{ranocha2021adaptive} \\
\bottomrule
\end{tabularx}
}
\end{table}

\begin{table}[!htbp]
\centering
\caption{\textbf{DPD seeds used to initialize the solved-case memory.} All four workflows were implemented in-house using LAMMPS~\cite{LAMMPS}.}
\label{tab:memory_seeds_dpd}
{\footnotesize\linespread{1}\selectfont
\setlength{\tabcolsep}{3pt}
\begin{tabularx}{\textwidth}{@{}>{\raggedright\arraybackslash}p{2.15cm}>{\raggedright\arraybackslash}X>{\raggedright\arraybackslash}X>{\raggedright\arraybackslash}p{2.0cm}@{}}
\toprule
Seed & Problem & Successful method & Provenance \\
\midrule
Planar Poiseuille flow & Steady Newtonian channel flow under uniform body force, with viscosity inferred from the parabolic profile & DPD with frozen-particle walls, mVV integration, native thermostatting, and parabolic-profile postprocessing & In-house LAMMPS \\
RBC shear smoke test & Eight healthy RBCs in a wall-bounded slot under opposing wall forces, testing short-run thermal stability & DPD with spring-network RBC membranes, wall forcing, mVV integration, native thermostatting, and thermo-trace postprocessing & In-house LAMMPS \\
Reverse Poiseuille flow & Fully periodic Newtonian flow with opposite forcing in the two halves of the domain & Single-species DPD without walls or RBCs, using mVV integration and separate parabolic fits for both flow arcs & In-house LAMMPS \\
RBC suspension viscosity & Twenty deformable healthy RBCs under opposing-wall shear, with effective viscosity estimated from the bulk slope & DPD with spring-network membranes, mVV integration, native thermostatting, and a linear bulk-velocity fit & In-house LAMMPS \\
\bottomrule
\end{tabularx}
}
\end{table}

\begin{table}[!htbp]
\centering
\caption{\textbf{Formal seeds used to initialize the solved-case memory (part I).} Each record encodes the cited theorem and its machine-checked Lean realization.}
\label{tab:memory_seeds_formal_a}
{\footnotesize\linespread{1}\selectfont
\setlength{\tabcolsep}{3pt}
\begin{tabularx}{\textwidth}{@{}>{\raggedright\arraybackslash}p{2.15cm}>{\raggedright\arraybackslash}X>{\raggedright\arraybackslash}X>{\raggedright\arraybackslash}p{2.0cm}@{}}
\toprule
Seed & Theorem problem & Encoded proof strategy & Source \\
\midrule
CNO universality & Quantitative $L^2/L^\infty$ universality for translation-equivariant CNOs on band-limited Sobolev inputs & Fourier truncation and sinc sampling reduce the operator to a finite stencil realized by a CNN & \cite{raonic2023convolutional} \\
Chen--Chen operator theorem & Uniform approximation of continuous nonlinear operators on compact function families by finite-sensor branch--trunk expansions & Bochner--Riesz reduction followed by partition-of-unity sensor interpolation and Arzel\`a--Ascoli compactness & \cite{chen1995universal} \\
DeepONet universality & Extend Chen--Chen from shallow networks to universal branch and trunk architecture families & Approximate every factor in the Chen--Chen decomposition and control the product-substitution error & \cite{lu2021learning} \\
DeepONet sensor approximation & Bound finite-sensor approximation of a Lipschitz operator value over a compact input family & Approximate the decoder on the compact sensor image and combine architecture and Lipschitz reconstruction errors & \cite{lu2021learning} \\
Ridge-PINN consistency & Almost-sure risk consistency and asymptotic unbiasedness of regularized PINNs as sampling and width increase & Ridge confinement, concentration with Borel--Cantelli, and $C^K$ density connect sampling and approximation limits & \cite{doumeche2023convergence} \\
Unregularized PINN overfitting & Construct PINNs whose sampled loss vanishes while population residual risk diverges & Nested tanh saturation functions with shrinking transition windows controlled by Cauchy--Schwarz & \cite{doumeche2023convergence} \\
Regularized PINN well-posedness & Existence, uniqueness, and strong $H^m$ convergence for the regularized variational problem & Lax--Milgram and the direct method, with Rellich compactness and weak lower semicontinuity & \cite{doumeche2023convergence} \\
Nonlinear forward error & Forward PINN error from training residual, quadrature error, and nonlinear stability & A $p$-root quadrature bridge followed by Lipschitz stability & \cite{mishra2023estimates} \\
Nonlinear inverse error & Inverse PINN error from PDE and data losses, quadrature, conditional stability, and noise & Raise quadrature estimates to the stability exponents and substitute them into the inverse inequality & \cite{mishra2022estimates} \\
FNO universality & Universal approximation of continuous operators between periodic Sobolev spaces by Fourier neural operators & Truncate coefficient space and Fourier modes, apply finite-dimensional universality, and lift back to sequence space & \cite{kovachki2021universal} \\
\bottomrule
\end{tabularx}
}
\end{table}

\begin{table}[!htbp]
\centering
\caption{\textbf{Formal seeds used to initialize the solved-case memory (part II).}}
\label{tab:memory_seeds_formal_b}
{\footnotesize\linespread{1}\selectfont
\setlength{\tabcolsep}{3pt}
\begin{tabularx}{\textwidth}{@{}>{\raggedright\arraybackslash}p{2.15cm}>{\raggedright\arraybackslash}X>{\raggedright\arraybackslash}X>{\raggedright\arraybackslash}p{2.0cm}@{}}
\toprule
Seed & Theorem problem & Encoded proof strategy & Source \\
\midrule
KKAN universality & Basis-agnostic universality when the inner and outer univariate approximation classes are dense & Apply exact Kolmogorov superposition, approximate each block, and sum the componentwise errors & \cite{toscano2025kkans} \\
Kolmogorov superposition & Exact five-term representation of two-variable continuous functions using universal monotone inner functions & Nested coverings and separated labels, followed by uniform limits and an outer contraction & \cite{Kolmogorov1957} \\
K\r{u}rkov\'a density theorem & Uniform density of two-hidden-layer sigmoidal networks for continuous multivariate functions & Combine the general-dimensional Kolmogorov representation with sigmoidal staircase approximation & \cite{Kuurkova1992kolmogorov} \\
A posteriori PINN error & Computable least-squares error bound from achieved loss and quadrature under coercivity & Hilbert-space energy identity and coercivity convert discrete loss into solution error & \cite{zeinhofer2024unified} \\
A priori PINN error & C\'ea-type bound combining best ansatz approximation with optimization and quadrature gaps & Coercivity, boundedness, energy identities, and a discrete-to-continuous objective-gap estimate & \cite{zeinhofer2024unified} \\
Fixed-width realization obstruction & Convexity of a fixed-width network realization set forces a polynomial activation & Parameter-capacity bounds, translation--dilation invariance, and finite-dimensional exponential-polynomial structure & \cite{petersen2021topological} \\
Greedy convergence & $O(1/n)$ convergence of the relaxed greedy algorithm over a variation-norm dictionary ball & Derive one-step descent and solve the resulting quadratic recurrence & \cite{Siegel2023112084} \\
Greedy risk decomposition & Population excess risk as approximation, uniform generalization, and empirical optimization terms & Telescope the risks and bound them using empirical optimality and the uniform population--empirical gap & \cite{Siegel2023112084} \\
Quantitative tanh approximation & Explicit two-hidden-layer approximation in $W^{k,\infty}$, including width and weight bounds & Tanh Taylor emulation, a tanh partition of unity, and Bramble--Hilbert estimates & \cite{de2021approximation} \\
Sharp shallow-network UAT & Single-hidden-layer universality holds exactly for nonpolynomial activations & Dual annihilators and ridge superposition prove sufficiency; polynomial finite-dimensionality proves necessity & \cite{Leshno1993} \\
\bottomrule
\end{tabularx}
}
\end{table}

\section{Contents of the frozen reproducibility archive}
\label{sec:archive_contents}

The frozen archive, to be released upon publication, is the complete working repository of the system, packaged to be self-contained. It contains, for every scientific family, the problem and method ontologies with their stable node identifiers, the declared dependency rules and node hints together with the compiled admissibility reports, and the fingerprint encodings and retrieved-neighbor lists behind the retrieval experiments. It also contains the prompts, skills, and agent role definitions for both supported harnesses, the model identifiers of Supplementary Table~\ref{tab:model_identifiers} and the stated inference settings, the software and solver versions, hardware and compute budgets, the complete run records of the audited trajectories including failed iterations, the Lean sources and axiom reports together with the supporting Lean brick library, and the scripts that regenerate the supplementary footprint, scaling, retrieval, and invariance analyses. Beyond these required materials, the archive includes the curated solver libraries and per-family implementation templates, the curated worked examples that seed each repository, and tutorial documentation covering how to add new cases and examples, how to extend the ontologies with new methods or software frameworks, and how to drive the orchestration from a different agentic harness; the released skills run under both Claude Code and Codex, and the same role definitions port to any comparable coding agent.

\section{Additional scientific results}
\label{sec:supp_additional_results}

\label{sec:supplementary_images}

\clearpage

\subsection{PIML benchmark results}
\label{sec:R3}

Fig.~\ref{fig:priors_and_results}A reports the relative $L^2$ error against iteration count for KdV, Helmholtz, and Poisson, while Table~\ref{tab:sota_comparison_graft} adds viscous Burgers and extends the comparison to expert-authored and agentic baselines. GRAFT--ATHENA reaches a lower relative-error floor than ATHENA~\cite{toscano2025athena} on all four benchmarks, while the residual MSE remains close to machine precision. Burgers provides the clearest signature of transferred experience (Fig.~\ref{fig:semantic_memory_learning}B): the error against the target-viscosity reference holds near $10^{-1}$ while the inherited Reynolds-number continuation works through its intermediate stages, then falls to $\sim10^{-9}$ once the final viscosity switch is reached at $\approx140{,}000$ iterations, whereas ATHENA stabilizes near $10^{-8}$. Comparable gains on KdV, Helmholtz, and Poisson show that the improvement is not confined to a single equation class.

\subsection{Hematocrit dependence of red-blood-cell suspension rheology}
\label{sec:R5}
 
We posed a clinically grounded particle-based test in which the apparent viscosities of two red-blood-cell (RBC) suspensions were recovered at two vessel-level hematocrits~\cite{chai2025silico,chai2026multiscale,chai2026reticulocyte,chai2026stomatocyte}. The two arms are driven at the same wall stress and carry the same membrane parameters, so hematocrit is the only variable and the contrast is controlled by construction. It requires a non-PDE solver, a multiscale spectrin-network membrane, and a rheological measurement extracted from particle trajectories rather than read off a field. The viscosity ordering must therefore emerge from the assembled multiscale model rather than from parameter adjustment to an expected range.
 
The configuration is recovered from the action graph in a single cascade. From the LAMMPS-backed DPD subtree of $\mathcal{G}_A$~\cite{LAMMPS}, GRAFT--ATHENA selects the multiscale RBC formulation of Pivkin and Karniadakis~\cite{pivkin2008accurate} and Fedosov and co-workers~\cite{fedosov2010multiscale}, which fixes the membrane substrate as a coarse-grained spectrin network triangulated on each cell surface~\cite{chai2022periodic,chai2023dynamics}. This selection determines the downstream membrane interactions. The substrate forces a worm-like-chain bond on every triangulation edge, an area--volume-conserving angle potential on every triangle, and a bending dihedral on every adjacent triangle pair, realized as the \texttt{break/ligand} bond style and the \texttt{area/volume} angle style, which the implementation adapter derives from the spring-network-membrane leaf of $\mathcal{T}_A$ rather than from separate leaves. A standard DPD pairwise interaction kernel closes the cell--cell and cell--solvent forces, and the assembled simulation cell is a $20\!\times\!20\!\times\!20$ LJ box with two driven wall layers (atom types 3 and 4, integrated with the fluid and carrying the wall force $\pm f_x$), a frozen outer buffer film (type 5) preventing wall-image mixing, and a core slab populated by deformable RBCs (type 1) immersed in a DPD solvent (type 2). This is the same chip-flow geometry validated previously for blood biophysics in pathological states~\cite{chai2025silico,chai2026gaucherrbc}. Both arms use the validated common baseline throughout: the same membrane coefficients, the same wall force $f_x=0.05$ (giving $\tau_{xz} = N_{\mathrm{wall}} f_x / A_{\mathrm{wall}} = 0.45$), and the same thermostat target. They differ only in how densely that slab is packed, $\phi = 0.144$ against $\phi = 0.260$. The cascade from one biophysical commitment (multiscale spectrin membrane) to the downstream LAMMPS choices (two selections forced by the tree's dependency rules and the three membrane style blocks the adapter derives from them) is traced through the action tree rather than retrieved from a configuration template.
 
The result is the rheological fingerprint required of any blood-flow study. Because the bulk shear rate of the suspension relaxes over tens of thousands of steps, each condition was continued to $300{,}000$ steps and only the final $100{,}000$ were analyzed; over the first $100{,}000$ the window-averaged shear rate is still drifting, and a viscosity extracted there depends on which windows are averaged. Each condition was also repeated across independent RNG seeds, three at the lower hematocrit and six at the higher. Within the analysis window the shear profiles of Fig.~\ref{fig:priors_and_results}B, subpanels A and B, are linear across the bulk region $z \in [3, 17]$, so the bulk shear rate $\dot\gamma = \partial_z u_x|_{\mathrm{bulk}}$ is read off as a slope: $-1.31\times 10^{-3}$ at $\phi = 0.144$ and $-7.84\times 10^{-4}$ at $\phi = 0.260$, both at $R^2 = 0.95$ once averaged over ten non-overlapping windows per seed. The thermostat trace of subpanel C confirms that both runs settle to a constant temperature within the first few time units of the $60$-unit run and hold it thereafter, so neither viscosity reading is contaminated by thermal drift; their equilibrium means, $0.1506$ and $0.1555$, differ by the viscous heating expected of the denser suspension. Subpanel D reports the relative viscosity $\eta/\eta_{\mathrm{solvent}}$ against a cell-free control run in the identical cell with the same walls, forcing, parameters and slope estimator (a single $100{,}000$-step run averaged over all nine of its $10{,}000$-step windows; with no cells present the window slopes show no trend in time, so no burn-in is applied): $1.35 \pm 0.07$ at $\phi = 0.144$ and $2.26 \pm 0.09$ at $\phi = 0.260$ (mean $\pm$ SEM over seeds), against the $1 + 2.5\phi + 7.35\phi^2$ expectation of $1.51$ and $2.15$ for a suspension with a blood-like second-order coefficient. Raising the hematocrit by a factor of $1.8$ raises the suspension viscosity by $1.67$ ($95\%$ CI $1.51$--$1.88$, bootstrap over seeds), and the ordering is recovered without any parameter tuned to it. We report the ratio rather than an absolute viscosity because the wall-driven cell's nominal stress $\tau_{xz} = N_{\mathrm{wall}} f_x / A_{\mathrm{wall}}$ charges the fluid for momentum that the frozen buffer film (type 5) also absorbs; that bias is a single multiplicative constant of the cell, identical across these runs, and cancels in the ratio. A second, smaller bias does not cancel exactly: the velocity grid is sparsely occupied (about $0.55$ fluid particles per bin) and unoccupied bins enter the average as zeros, and because the cells displace fluid the bulk-window occupancy is a few percent lower in the suspension arms than in the control; this inflates both relative viscosities by roughly $4\%$, within the quoted SEM, and leaves the hematocrit factor unchanged ($1.68$ when corrected). Subpanels E to H are taken from a separate forcing sweep on the same geometry. E and G show the suspension at the weakest and strongest forcing ($\tau_{xz}=0.045$ and $4.5$), and F and H illustrate the cell-scale measurement: a two-dimensional principal-axis analysis of a single periodic-unwrapped cell gives an in-plane aspect ratio $\lambda_1/\lambda_2$ independent of tank-treading orientation ($1.28$ at the weakest and $1.38$ at the strongest forcing for the cell drawn). Over all twenty cells in that box the ensemble mean moves only from $1.47 \pm 0.48$ to $1.51 \pm 0.49$ (mean $\pm$ cell-to-cell s.d., 100 cell-frames each) between $\tau_{xz} = 0.045$ and $4.5$, a difference within the cell-to-cell spread, so these panels show the deformation mode the assembled membrane admits rather than a resolved dependence on shear rate.
 
The target--neighbor comparison and merged action prior for the arteriole-level arm are shown in Fig.~\ref{fig:semantic_memory_learning}A; the capillary-level arm retrieves the same neighborhood.

The DPD branch therefore shows that the same nearest-neighbor retrieval, policy blending, and long-term-memory update extend to a particle-based, non-PDE solver, while the resulting trajectories reproduce the rheological behavior that the model must respect~\cite{fedosov2014computational,zhang2020deep,zhang2021deep}.

\subsection{From ill-posed to in vivo: reformulation and autonomous solution of a perivascular flow inverse problem}
\label{sec:R6}

The Formalization team considered three simplifications of the perivascular inverse problem. The selected reformulation enforces conservation of mass as an exact architectural constraint through a vector-potential representation. Continuity then follows from $\nabla\cdot(\nabla\times\mathbf A)\equiv0$, and the four primitive unknowns $(u,v,w,p)$ are replaced by three network outputs $(P,R,p)$. A pressure-Poisson/Leray-projection alternative was rejected because it left the primitive-variable residuals unchanged while requiring an auxiliary elliptic solve at every collocation point.

The subsequent well-posedness audit exposed two remaining deficits. First, the additive pressure gauge $p\mapsto p+c(t)$ was removed by imposing the outlet anchor $p(t,x,1.556,z)=0$. Second, observations confined to a single slab leave initial-condition modes with a node on that slab invisible to the data residual. The user therefore selected the cross-slab smoothness penalty
\[
\mathcal{L}_{\rm reg}
=
\lambda_{\rm reg}\,
\mathbb{E}\!\left[
|\partial_z^2u|^2+
|\partial_z^2v|^2+
|\partial_z^2w|^2
\right].
\]
The audit then classified the formulation as conditionally well posed. The pressure anchor removes the additive gauge, whereas the regularizer controls the non-affine cross-slab modes; the observations and Stokes residual must still constrain the affine-in-$z$ modes in its kernel. The complete interaction trace is reported in the perivascular run record below.

GRAFT retrieved the MRI advection--diffusion inverse problem, viscous Burgers, and inviscid Burgers as the three closest solved cases. Their similarity- and reward-weighted methods were combined with the uniform prior to form the problem-conditioned action distribution. The selected preprocessing nondimensionalized the observations and used the minimum, maximum, and mean of the observed velocity time series to identify the dominant cycle. The resulting method used a six-layer, width-$40$ field network mapping $(t,x,y,z)$ to $(P,R,p)$ and a six-layer, width-$20$ denoising network for the observation variances, giving $11{,}465$ trainable parameters. The remaining method combined negative-log-likelihood noise treatment~\cite{toscano2024inferring_AIV}, vRBA residual weighting and sampling~\cite{toscano2026variational}, adaptive activation functions~\cite{jagtap2020adaptive}, and the SOAP optimizer~\cite{vyas2024soap,wang2025gradient}. Training proceeded for $100{,}000$ iterations through data- and boundary-led initialization, coupled physics training, and refinement.

Predicted and reference particle velocities show good agreement across six held-out times (Fig.~\ref{fig:verified_discovery_supp}C), consistent with the published reconstruction~\cite{toscano2024inferring_AIV}. These held-out particles occupy the same thin slab as the training observations. They therefore validate unseen particle motion within the measured plane, but not the complete three-dimensional field.

Fig.~\ref{fig:verified_discovery_supp}A shows the Formal problem formulation used to certify the regularized reconstruction, rather than the PIML reconstruction fingerprint shown in Fig.~\ref{fig:verified_discovery}A. The target is encoded as a positive a posteriori result for a noisy, regularized inverse problem under conditional stability. Its closest stored theorem shares the inverse-error setting, noisy observations, conditional-stability structure, and a posteriori character, while the target-specific branches encode the Hilbert-space, least-squares regularization, and reliability statement required for the perivascular certificate. The stronger agreement between the method fingerprints than between the problem fingerprints shows how an established proof pattern was transferred to a distinct theorem formulation. The theorem statement and proof sketch are given in Theorem~\ref{thm:perivascular_aposteriori}.

\subsection{Spectral PINN: an agent-designed architecture with exponential convergence}
\label{sec:spectral_pinn}

The spectral PINN originated during the Formalization analysis of the canonical viscous Burgers benchmark at $\nu=1/(100\pi)$. The proposer agent generated three simplifications, including a periodic Fourier-series mode reduction that converts the PDE into a system of modal evolution equations. The ranker subsequently introduced a multi-harmonic Fourier-feature encoding and placed it first because it required only a minimal change to a standard PINN, leaving the agent-proposed spectral reduction second. Upon inspection, the user recognized that this unselected proposal defined a potentially novel PINN architecture and asked the system to develop it in a separate, milder test at $\nu=1/100$. This change reduced both the implementation risk and the required modal resolution without altering the underlying derivation. On reconsideration, the ranker identified the advantages missed by its original rubric: viscous damping becomes closed form, the initial condition can be imposed exactly, and spatial differentiation reduces to spectral multiplication rather than automatic differentiation.

The selected reformulation expands
\[
u(t,x)=
\sum_{n=1}^{N}
\left[
a_n(t)\cos(n\pi x)+b_n(t)\sin(n\pi x)
\right],
\]
with
\[
a_n(t)=t\widetilde a_n(t),
\qquad
b_n(t)=b_n^{\rm IC}+t\widetilde b_n(t),
\qquad
b_1^{\rm IC}=-1.
\]
This representation enforces the initial condition structurally. The nonlinearity $uu_x$ is evaluated pseudospectrally on a dealiasing grid of $M$ points and projected onto the $N$ retained modes. Review of the first derivation identified a sign error that would have produced anti-diffusive modal growth and strengthened the dealiasing condition from $M\geq3N$ to $M>3N$.

The mode-indexed residual also required adapting the residual-weighting strategy. Standard adaptive weighting methods are formulated for pointwise spatial residuals~\cite{anagnostopoulos2024residual,wang2022and}; here, vRBA was indexed jointly by Fourier mode $n$ and time point $t_i$. The tensor $\lambda\in\mathbb{R}^{N\times N_t}$ is therefore allocated on a $(\mathrm{mode},\mathrm{time})$ grid rather than the usual $(\mathrm{point},\mathrm{time})$ grid, and it drives time sampling through
\[
p(t_i)\propto\sum_n\lambda_{n,i}^2.
\]
During Adam, $\lambda$ provides both weighting and sampling. During SSBroyden, it is retained only for sampling because nonstationary loss multipliers would invalidate the quasi-Newton inverse-Hessian estimate.

The well-posedness audit established the energy inequality
\[
\frac{1}{2}\frac{d}{dt}\|u\|_2^2
=
-\nu\|u_x\|_2^2
\leq0.
\]
It also observed that the initial condition is odd and that the equation preserves odd parity. Uniqueness therefore implies $u(t,-x)=-u(t,x)$ and $a_n(t)\equiv0$, eliminating the cosine modes and halving the network output. A remaining smoothness condition at $t=0^+$ was closed by using $\tanh$ activations.

The first production run reached a relative $L^2$ field error of $1.107\times10^{-3}$ while the retained-mode PDE loss saturated at approximately $8.3\times10^{-13}$. The diagnostic agent attributed this difference to spectral truncation rather than failed optimization: the Galerkin loss measures only the retained modes, whereas the field error also contains the unresolved tail. The cutoff energy, $|b_{48}|^2\approx10^{-7}$, supported this diagnosis. The advisor therefore changed only the modal and dealiasing resolutions from $(N,M)=(48,192)$ to $(96,320)$ without revising the action-tree choices. The second iteration reduced the field error to $6.598\times10^{-6}$. The complete implementation and diagnostic trace is reported in the spectral-PINN run record below.

\begin{table}[ht]
\centering
\small
\begin{tabular}{rrrr}
\toprule
$N$ & Relative $L^2$ & $L^\infty$ & $L^1$ \\
\midrule
$16$  & $4.758\times10^{-2}$ & $1.385\times10^{-1}$ & $1.905\times10^{-2}$ \\
$32$  & $6.774\times10^{-3}$ & $2.926\times10^{-2}$ & $2.337\times10^{-3}$ \\
$64$  & $1.899\times10^{-4}$ & $1.089\times10^{-3}$ & $5.317\times10^{-5}$ \\
$128$ & $6.796\times10^{-7}$ & $2.633\times10^{-6}$ & $2.721\times10^{-7}$ \\
\bottomrule
\end{tabular}
\caption{\textbf{Spectral-PINN mode-count sweep.} Errors are evaluated on the same dense reference grid. The exponential fit in Fig.~\ref{fig:verified_discovery}F uses all four runs.}
\label{tab:spectral_mode_sweep}
\end{table}

Fig.~\ref{fig:verified_discovery_supp}D shows the dense-grid prediction, pointwise field error, per-mode Galerkin-residual decomposition, and converged per-mode vRBA tensor for the best-resolved run. Together with the mode-count sweep in Supplementary Table~\ref{tab:spectral_mode_sweep}, these diagnostics show that the error decreases as the mode count grows rather than flattening at an optimization-imposed floor. The observed behavior rests on the quasi-Newton SSBroyden optimization and the adaptation of vRBA to the $(\mathrm{mode},\mathrm{time})$ tensor. The corresponding a priori result is stated, with its proof sketch, in Theorem~\ref{thm:spectral_apriori}.

\section{Machine-checked scientific theorems}
\label{sec:formal_universality_results}

The following theorems were proved and machine-checked in Lean 4. For each result, we state
the theorem and provide a proof sketch of the main argument. The complete formal statements
and proofs will be released upon publication as Lean 4 developments, together with
the checker outputs, reported axiom frontiers, and full human-readable transcriptions.

The Formal protocol retrieves five neighbors for each case; each result below names the five members of that neighbor set in rank order.

\subsection{Universality of fixed-degree Chebyshev Kolmogorov--Arnold networks}
\label{sec:ckan_universality_result}

The five closest solved problems were KKAN universality, K\r{u}rkov\'a's density theorem, the fixed-width realization obstruction, the sharp shallow-network universal approximation theorem, and the Kolmogorov superposition theorem.

\subparagraph*{User request.}
\noindent\emph{Statement to formalize.} A Kolmogorov--Arnold network places a trainable
univariate function on every edge of the network graph and adds the incoming values at the
nodes~\cite{liu2024kan,Kolmogorov1957}; a Chebyshev network~\cite{ss2024chebyshev} fixes
the shape of those univariate functions once and for all. Fix a degree $k\ge1$ and let
$T_0,\dots,T_k$ be the Chebyshev polynomials of the first kind ($T_0=1$, $T_1=t$,
$T_{n+2}=2t\,T_{n+1}-T_n$). A cKAN edge function of degree $k$ is
\[
  \psi(z)=\sum_{n=0}^{k}c_n\,T_n(\tanh z),\qquad c_0,\dots,c_k\in\mathbb{R},
\]
a finite Chebyshev expansion evaluated after the hyperbolic tangent. The $\tanh$ lands the
argument in $(-1,1)$, so the polynomials are always evaluated inside their interval of
definition, and the only freedom left on an edge is its coefficient sequence. Every edge of
every layer has this same uniform form. The request stipulates $k\ge1$: at $k=0$ every edge
is the constant $c_0T_0=c_0$ and every network realizes a constant map.

A depth-$L$ nested cKAN ($L\ge2$) with layer widths $n_0=d,\,n_1,\dots,n_{L-1},\,n_L=m$ is
the map $\mathrm{cKAN}:[-1,1]^{d}\to\mathbb{R}^{m}$ defined by
\[
  z^{(0)}=\mathbf{x},\qquad
  z^{(\ell)}_j=\sum_{p=1}^{n_{\ell-1}}\psi^{(\ell)}_{j,p}\bigl(z^{(\ell-1)}_p\bigr),
  \quad \ell=1,\dots,L,\qquad \mathbf{y}=z^{(L)},
\]
where each $\psi^{(\ell)}_{j,p}$ is a cKAN edge function with its own coefficients.
Constraining every edge to this single uniform shape is a genuine restriction, and the
request asks whether networks assembled from such edges alone can still approximate an
arbitrary continuous target.

\medskip
\noindent\emph{Prove:} the nested cKAN family is a universal approximator of
$C([-1,1]^{d};\mathbb{R}^{m})$ in the sup norm: for every continuous $F$ and every
$\varepsilon>0$ there exist a depth $L\ge2$, widths $n_1,\dots,n_{L-1}$, and coefficients
such that
\[
  \sup_{\mathbf{x}\in[-1,1]^{d}}\bigl\lVert F(\mathbf{x})-\mathrm{cKAN}(\mathbf{x})\bigr\rVert<\varepsilon .
\]
The request stipulates $L\ge2$: a single edge layer computes a sum of univariate functions
of the input coordinates, an additive model with no interaction terms.
\medskip

\noindent\emph{Source.} Shukla et al.~\cite{shukla2024comprehensive}.

\subparagraph*{The theorem obtained.}
Below the input cube $[-1,1]^{d}$ is written $Q_{d}$, and on $\mathbb{R}^{m}$ the norm
$\lVert v\rVert$ is the supremum of the coordinate magnitudes, which is the largest of
the values $\lvert v_q\rvert$ when $m\ge1$ and equals $0$ when $m=0$, the space
$\mathbb{R}^{0}$ having a single point. Three terms of the statement deserve a plain
gloss. A finite hidden index set $H$ is an abstract finite set whose elements label the
hidden nodes. A vector network $Y$ of depth two carries two layers of edges: one
first-layer edge $Y^{\mathrm{in}}_{h,p}$ for every hidden index $h\in H$ and every input
coordinate $p\in\{1,\dots,d\}$, and one second-layer edge $Y^{\mathrm{out}}_{q,h}$ for
every output coordinate $q\in\{1,\dots,m\}$ and every $h\in H$. Its realization
$\mathcal{R}_{Y}$ is the map $\mathbb{R}^{d}\to\mathbb{R}^{m}$ the network computes by
summing at the nodes,
\[
  \mathcal{R}_{Y}(x)_q=\sum_{h\in H} Y^{\mathrm{out}}_{q,h}
    \Bigl(\sum_{p=1}^{d} Y^{\mathrm{in}}_{h,p}(x_p)\Bigr),
  \qquad q=1,\dots,m .
\]

\begin{theorem}[Fixed-degree deep cKAN universality]\label{thm:ckan_universality}
Let $k\ge1$ be a degree, let $d$ and $m$ be non-negative integers, let
$F\in C(Q_{d};\mathbb{R}^{m})$, and let $\varepsilon>0$. Then there exist a depth $L\ge2$, a
finite hidden index set $H$, and a vector network $Y$ of depth two, degree $k$, input
dimension $d$, output dimension $m$ and hidden index set $H$, every edge of which is a cKAN
edge function of degree $k$, such that
\[
  \lVert F(x)-\mathcal{R}_{Y}(x)\rVert<\varepsilon\qquad\text{for every }x\in Q_{d} .
\]
\end{theorem}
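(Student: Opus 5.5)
The plan is to reduce the statement to the sharp shallow-network universal approximation theorem for non-polynomial activations~\cite{Leshno1993}, which is one of the five retrieved neighbors. The key observation is that a degree-$k$ cKAN edge with only $c_0,c_1$ nonzero is $\psi(z)=c_0+c_1T_1(\tanh z)=c_0+c_1\tanh z$, since $T_0=1$ and $T_1=t$. This is available because $k\ge1$, and I set all higher coefficients to zero. So every edge can realize an arbitrary affine function of $\tanh z$. The catch is that a first-layer edge sees the raw coordinate $x_p$, so it produces $w\tanh(x_p)+c$ rather than $wx_p+c$. The network can therefore represent a shallow $\tanh$ MLP only in the transformed variables $u_p=\tanh(x_p)$. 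I handle this by a change of variables.

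\emph{Step 1 (transfer of the target).} Set $K=[-\tanh 1,\tanh 1]^{d}$ and define $G:K\to\mathbb{R}^m$ by $G(u)=F(\operatorname{artanh}u_1,\dots,\operatorname{artanh}u_d)$. Then $G$ is continuous on the compact set $K$. Also, $\tau(x)=(\tanh x_1,\dots,\tanh x_d)$ maps $Q_d$ onto $K$ with $F=G\circ\tau$.

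\emph{Step 2 (shallow approximation).} Since $\tanh$ is continuous and not a polynomial, Leshno et al.~\cite{Leshno1993} applies coordinatewise. It gives a finite index set $H$, weights $w_{h,p}$, biases $b_h$, output coefficients $\alpha_{q,h}$ and constants $\beta_q$ such that
\[
\Bigl|G_q(u)-\beta_q-\sum_{h\in H}\alpha_{q,h}\tanh\Bigl(\sum_{p}w_{h,p}u_p+b_h\Bigr)\Bigr|<\varepsilon
\]
for all $u\in K$ and all $q$. A single common $H$ is obtained by taking the union of the index sets for the different $q$ and using zero output coefficients on the extra indices.

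\emph{Step 3 (realization as a cKAN).} I choose the edges as follows:
\begin{itemize}
\item For each first-layer edge, take $Y^{\mathrm{in}}_{h,p}(z)=w_{h,p}\tanh z+\delta_{p,1}b_h$. Then hidden node $h$ receives $\sum_p w_{h,p}\tanh(x_p)+b_h$.
\item For each second-layer edge, take $Y^{\mathrm{out}}_{q,h}(s)=\alpha_{q,h}\tanh s+\beta_q/|H|$.
\end{itemize}
With these choices $\mathcal{R}_Y(x)_q$ equals the shallow network of Step 2 evaluated at $u=\tau(x)$. Step 2 then yields the bound for every $x\in Q_d$, and I take $L=2$.

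The degenerate cases need separate bookkeeping, and I expect this to be the main obstacle in a formalization, since the analytic content is fully carried by~\cite{Leshno1993}:
\begin{itemize}
\item If $d=0$, the bias cannot be attached to coordinate $1$. But then $Q_0$ is a point, hidden sums are empty, and a single hidden index with output edges constant at $F_q$ suffices.
\item If $m=0$, the norm is identically $0$ and the claim is trivial.
\item If $H$ is empty, I adjoin one dummy hidden index so that the division by $|H|$ is legitimate.
\end{itemize}
I would also need to check that the chosen edges are of the declared degree-$k$ form, with zero coefficients for $n\ge2$. Finally, the strict inequality must survive taking the maximum over finitely many coordinates, which holds because $K$ is compact and each coordinatewise bound is strict.
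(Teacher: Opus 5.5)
Your proposal is correct and is essentially the paper's argument. It uses the same steps: edges affine in $\tanh$ obtained from $c_0,c_1$ (which needs $k\ge1$), pulling $F$ back through the inverse hyperbolic tangent to $[-\tanh 1,\tanh 1]^{d}$, the Leshno--Lin--Pinkus--Schocken theorem per output coordinate, exact realization at depth two, gluing the coordinates, and separate handling of $d=0$ and $m=0$.

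The remaining differences are cosmetic: you put $b_h$ on the edge from coordinate $1$ rather than splitting it as $b_h/d$, and you spread $\beta_q$ over all of $H$ rather than over the $q$-th block. Two small corrections apply. The union of the per-coordinate hidden index sets must be a disjoint, $q$-tagged union, as the paper does with pairs $(q,i)$. The final strict inequality needs no compactness, only that at each fixed $x$ a maximum of finitely many numbers below $\varepsilon$ is below $\varepsilon$.
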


First, the depth is quantified existentially, and the network constructed in the checked proof has
depth exactly two in every case (the proof takes $L=2$; no construction at any larger depth
occurs anywhere in the development, so nothing is proved about deep networks specifically,
and nothing needs to be). A construction at the minimum depth the request permits settles
the stated existence in its strongest available form. Second, the degree is arbitrary and
fixed in advance: universality holds for every fixed $k\ge1$, and the approximation burden
is carried entirely by the width, that is, by the size of the hidden index set $H$. What
the theorem does not provide is any control on that width: it is a density statement, with
no bound on $H$ and no rate in $\varepsilon$. The degenerate dimensions are included rather
than excluded: the main construction assumes $d\ge1$ and $m\ge1$, and the cases $m=0$ and
$d=0$ are handled separately and hold exactly.

\subparagraph*{Assumptions.}
Two statements are assumed rather than proved. They are the two implications from which the
universal approximation theorem for a continuous non-polynomial activation is obtained,
through the standard duality criterion: the linear span of a set of continuous
functions is dense exactly when the zero functional is the only continuous linear
functional vanishing on the whole set. For a compact $K\subseteq\mathbb{R}^{d'}$ and a
continuous $\rho:\mathbb{R}\to\mathbb{R}$, write $\mathcal{N}_\rho(K)$ for the set of
single neurons $u\mapsto\rho\bigl(\sum_{p=1}^{d'}w_pu_p+b\bigr)$ on $K$, one for each
weight vector $w\in\mathbb{R}^{d'}$ and bias $b\in\mathbb{R}$.

\begin{enumerate}[label=(A\arabic*),leftmargin=*,itemsep=0.5ex]
\item \emph{Annihilation forces a polynomial activation.} If some non-zero continuous
  linear functional on $C(K;\mathbb{R})$ vanishes on every element of
  $\mathcal{N}_\rho(K)$, then $\rho$ agrees everywhere with the evaluation of a real
  polynomial.
\item \emph{Polynomial activations are obstructed.} If $K$ has non-empty interior and
  $\rho$ agrees everywhere with a real polynomial, then the closure of the linear span of
  $\mathcal{N}_\rho(K)$ is a proper subspace of $C(K;\mathbb{R})$.
\end{enumerate}

Together these give the theorem of Leshno, Lin, Pinkus and Schocken~\cite{Leshno1993},
which specializes to the classical results of
\cite{cybenko1989approximation,hornik1989multilayer} when the activation is additionally
required to be sigmoidal; only the implication from non-polynomiality to density is used
below. The development has been checked in full by a proof assistant. Its logical
frontier, the complete list of what the verification takes on trust, is exactly the three
standard foundational principles of the underlying logic, namely propositional
extensionality, the soundness of quotients, and the axiom of choice, together with
assumptions (A1) and (A2), through which the quoted classical theorem enters in exactly
the strength applied, and nothing else. The two assumptions reach the argument at exactly
one point, the finite-sum approximation of stage two, and through nothing else.

\subparagraph*{Proof sketch.}
Throughout the sketch $d\ge1$ and $m\ge1$; the degenerate cases $m=0$ and $d=0$ are
dispatched at the end of stage four. The entire error budget is one tolerance
$\delta=\varepsilon/2$, spent once in stage two; stages three and four are exact.

\emph{Stage one: the edge class contains every affine function of $\tanh$.} The map
$z\mapsto a\tanh z+b$ is the cKAN edge whose coefficients are $c_0=b$ and $c_1=a$ and
vanish at every other index, since $T_0=1$ and $T_1(t)=t$; both indices lie in
$\{0,\dots,k\}$ precisely because $k\ge1$. Every edge built below is produced through this
one observation, including the zero edges of stage four, so the hypothesis $k\ge1$ enters
the proof only through it. What the observation buys is a containment: a depth-two network
of such edges contains an ordinary one-hidden-layer $\tanh$ network applied after
coordinatewise $\tanh$ preprocessing of the input, that is, the functions
\[
  \beta+\sum_{i=1}^{N}A_i\tanh\Bigl(\sum_{p=1}^{d}w_{i,p}\tanh x_p+b_i\Bigr),
\]
with the inner $\tanh x_p$ present because an input coordinate reaches a hidden node only
through a first-layer edge, whose own $\tanh$ transforms it first. The single uniform edge
shape therefore costs nothing at depth two, provided the classical approximation theorem
is applied on the preprocessed domain; arranging exactly that is the job of stage two.

\emph{Stage two: a finite neuron sum for each output coordinate.} Coordinatewise $\tanh$
carries the input cube $Q_{d}$ into the transformed cube $\Omega_{d}=[-\tau,\tau]^{d}$,
where $\tau=\tanh 1$. The strict inequality $\tau<1$ is load-bearing here: it places every
coordinate of every point of $\Omega_{d}$ inside the open interval $(-1,1)$ on which the
inverse hyperbolic tangent is defined, monotone and continuous, so the coordinatewise
inverse carries $\Omega_{d}$ back into $Q_{d}$ continuously. Composing this inverse with
$F$ pulls each output coordinate $q$ back to a continuous scalar function
$G\in C(\Omega_{d};\mathbb{R})$ with $G(\tanh x)=F(x)_q$ for every $x\in Q_{d}$, writing
$\tanh x$ for the point with coordinates $\tanh x_p$. Since $\Omega_{d}$ is compact with
non-empty interior and $\tanh$ is continuous and not a polynomial function, the universal
approximation theorem obtained from (A1) and (A2) applies on $\Omega_{d}$ and yields, for
the tolerance $\delta=\varepsilon/2$, a width $N\ge1$, a constant $\beta$, output weights
$A_1,\dots,A_N$, input weights $w_{i,p}$ for $i=1,\dots,N$ and $p=1,\dots,d$, and biases
$b_1,\dots,b_N$ such that
\[
  \Bigl\lvert G(u)-\Bigl(\beta+\sum_{i=1}^{N}A_i\tanh\Bigl(\sum_{p=1}^{d}w_{i,p}u_p+b_i\Bigr)
    \Bigr)\Bigr\rvert<\delta\qquad\text{for every }u\in\Omega_{d}.
\]
This is the only point of the development at which the assumptions are used, and the
tolerance $\delta$ is the entire error the proof ever incurs.

\emph{Stage three: the sum is realized exactly.} By stage one the first-layer edge from
input coordinate $p$ into hidden node $i$ may be taken to be
$z\mapsto w_{i,p}\tanh z+b_i/d$ and the second-layer edge out of node $i$ to be
$z\mapsto A_i\tanh z+\beta/N$. The two additive constants have nowhere else to live, since
a node adds incoming edge values and contributes nothing of its own, so each is
distributed by division over the edges entering the node that needs it: the bias $b_i$
over the $d$ first-layer edges into node $i$, legitimate because $d\ge1$, and the constant
$\beta$ over the $N$ second-layer edges, legitimate because $N\ge1$. Each modified edge is
still affine in $\tanh$, hence still a legal edge of degree $k$. Evaluating the resulting
scalar depth-two network $S$ at $x\in Q_{d}$ re-sums the $d$ copies of $b_i/d$ to $b_i$
and the $N$ copies of $\beta/N$ to $\beta$, giving the identity
\[
  \mathcal{R}_{S}(x)=\beta+\sum_{i=1}^{N}A_i\tanh\Bigl(\sum_{p=1}^{d}w_{i,p}\tanh x_p+b_i\Bigr)
  \qquad\text{for every }x\in Q_{d},
\]
with no error term. Evaluating the stage-two bound at $u=\tanh x$, whose $p$-th coordinate
is $\tanh x_p$, and using $G(\tanh x)=F(x)_q$ turns it into
$\lvert F(x)_q-\mathcal{R}_{S}(x)\rvert<\delta$ for every $x\in Q_{d}$. Stage three
contributes exactly zero to the error budget.

\emph{Stage four: assembly.} Stages two and three, applied to each output coordinate $q$
in turn, give scalar networks $S_1,\dots,S_m$ of widths $N_1,\dots,N_m$. They are glued
into one vector network $Y$ along the disjoint union of their hidden index sets, whose
elements are the pairs $(q,i)$ with $q\in\{1,\dots,m\}$ and $i\in\{1,\dots,N_q\}$: the
hidden node $(q,i)$ keeps the first-layer
edges of $S_q$ unchanged, feeds its own output coordinate $q$ through the second-layer
edge it already carried, and feeds every other output coordinate through the zero edge,
the affine function of $\tanh$ with slope zero and intercept zero, a legal edge of degree
$k$ by stage one. In each output
coordinate the foreign blocks then contribute nothing, so
$\mathcal{R}_{Y}(x)_q=\mathcal{R}_{S_q}(x)$ exactly, and the gluing, like stage three,
contributes zero to the budget. Since the norm on $\mathbb{R}^{m}$ is the supremum of the
coordinate magnitudes, a bound of $\delta$ in every coordinate is a bound of $\delta$ on
the norm; the checked assembly step phrases this comparison through an explicit constant
$C$ bounding the norm by $C$ times the largest coordinate magnitude and satisfying
$C\delta<\varepsilon$, instantiated at $C=1$. The complete budget is therefore the single
stage-two tolerance:
\[
  \lVert F(x)-\mathcal{R}_{Y}(x)\rVert<\delta=\varepsilon/2<\varepsilon
  \qquad\text{for every }x\in Q_{d},
\]
the strict inequality the theorem asserts. The two degenerate cases are exact: when $m=0$
the codomain $\mathbb{R}^{0}$ has a single point, and a single hidden node whose
first-layer edges are all zero realizes the target with error zero; when $d=0$ the cube
$Q_{0}$ has a single point $x_0$, and a single hidden node whose second-layer edges are
the constants $z\mapsto F(x_0)_q$ (slope zero, intercept $F(x_0)_q$, legal by stage one)
reproduces the target exactly. In
every case the network constructed has depth two, and taking $L=2$ satisfies the stated
$L\ge2$.

\subsection{Universality of Laplace neural operators}
\label{sec:lno_universality_result}

The five closest solved problems were the Chen--Chen operator theorem, DeepONet universality, FNO universality, DeepONet sensor approximation, and CNO universality.

\subparagraph*{User request.}
\noindent\emph{Ambient setting.} Fix a horizon $T>0$ and let $I=[0,T]$. For $d\in\mathbb{N}$ write
$\mathcal{X}_{d}=C(I,\mathbb{R}^{d})$ for the continuous $\mathbb{R}^{d}$-valued paths on $I$; the fiber
$\mathbb{R}^{d}$ carries the Euclidean norm, and $\mathcal{X}_{d}$ the supremum distance. Operators are maps
$G:\mathcal{X}_{d_{\mathrm{in}}}\to\mathcal{X}_{d_{\mathrm{out}}}$ defined on the whole path space; every
hypothesis is imposed only on a compact family $\mathcal{K}\subseteq\mathcal{X}_{d_{\mathrm{in}}}$ of admissible
inputs. The operator $G$ is causal on $\mathcal{K}$ when any two members of $\mathcal{K}$ that agree on $[0,t]$
have images agreeing at time $t$.

\medskip
\noindent\emph{The architecture.} A Laplace neural operator~\cite{cao2024laplace} processes an input path in
three moves. A shallow network, meaning one hidden layer of units sharing a scalar activation
$\sigma:\mathbb{R}\to\mathbb{R}$ followed by an affine readout, lifts
the input pointwise in time into $d_{v}$ scalar channels. A stack of $L\ge1$ Laplace layers transforms the
channel tuple $v=(v_{1},\dots,v_{d_{v}})$: a layer with kernel matrix $(\kappa_{ij})$ and skip matrix $W$ acts by
\[
  (\mathcal{T}v)_{i}(\tau)
   =\sigma\Bigl(\sum_{j}(\kappa_{ij}\star v_{j})(\tau)+\sum_{j}W_{ij}\,v_{j}(\tau)\Bigr),
  \qquad
  (\kappa\star v)(t)=\int_{0}^{t}\kappa(t-\tau)\,v(\tau)\,d\tau,
\]
the causal convolution $\star$ being the only nonlocal operation available. Every kernel entry must lie in the
finite-pole class $\mathscr{P}$, the finite real combinations of $r\mapsto e^{\lambda r}$, $r\mapsto e^{\alpha
r}\cos(\omega r)$ and $r\mapsto e^{\alpha r}\sin(\omega r)$ with $\lambda,\alpha,\omega$ arbitrary reals: the
time-domain form of a rational transfer function with finitely many poles (in the Laplace domain, a pole--residue
sum with trainable poles and residues, each conjugate pole pair written in real form). A second shallow network
projects the final channels pointwise onto $\mathbb{R}^{d_{\mathrm{out}}}$. The one activation $\sigma$ serves
the lift, every layer and the projection, and it carries exactly two hypotheses: it is continuous and it is not
a polynomial function.

\medskip
\noindent\emph{Prove:} the family is a universal approximator of continuous causal operators: at every
accuracy $\varepsilon>0$ there is an approximant uniformly accurate over $\mathcal{K}$ and $I$, in the
precise sense of the theorem below.

\medskip
\noindent\emph{Source.} The architecture and its pole--residue Laplace parametrization are introduced and
evaluated empirically in~\cite{cao2024laplace}, with no universality theorem. The paper places no half-plane
restriction on its poles, and its undamped examples use purely imaginary ones; a faithful universality statement
must not impose stability either.

\subparagraph*{The theorem obtained.}
Write $\mathrm{LNO}_{\sigma}(d_{\mathrm{in}},d_{\mathrm{out}})$ for the class of operators
$\mathcal{X}_{d_{\mathrm{in}}}\to\mathcal{X}_{d_{\mathrm{out}}}$ realized by some architecture as above: a lift,
a stack of $L\ge1$ Laplace layers with all kernels in $\mathscr{P}$, and a projection, all sharing the activation
$\sigma$. Membership requires every output path to be continuous in time; for the constructed operator this
obligation is discharged by a checked continuity proposition, not assumed.

\begin{theorem}[Universality of Laplace neural operators]\label{thm:lno_universality}
Let $T>0$ and put $I=[0,T]$. Let $d_{\mathrm{in}},d_{\mathrm{out}}\in\mathbb{N}$, let
$\sigma:\mathbb{R}\to\mathbb{R}$ be continuous and not a polynomial function, let
$\mathcal{K}\subseteq\mathcal{X}_{d_{\mathrm{in}}}$ be compact, let
$G:\mathcal{X}_{d_{\mathrm{in}}}\to\mathcal{X}_{d_{\mathrm{out}}}$ be continuous on $\mathcal{K}$ and causal on
$\mathcal{K}$, and let $\varepsilon>0$. Then there is a Laplace neural operator
$G_{\phi}\in\mathrm{LNO}_{\sigma}(d_{\mathrm{in}},d_{\mathrm{out}})$ with
\[
  \bigl\lVert (Gf)(t)-(G_{\phi} f)(t)\bigr\rVert<\varepsilon
  \qquad\text{for every }f\in\mathcal{K}\text{ and every }t\in I .
\]
\end{theorem}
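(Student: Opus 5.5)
The plan is to reduce the operator statement to a finite-dimensional approximation problem on a compact set of "causal histories", and then to realize the resulting finite-dimensional approximant exactly in the three LNO moves (lift, Laplace layer, projection). The only source of approximation power will be the shallow-network universality theorem for continuous non-polynomial $\sigma$, used in the same form as (A1)–(A2) in Theorem~\ref{thm:ckan_universality}. Each output coordinate can be handled separately and glued at the end, as in stage four of that proof, so we take $d_{\mathrm{out}}=1$.

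\emph{Step 1 (reduction to a functional of histories).} Put $\Phi(t,f)=(Gf)(t)$ on the compact set $I\times\mathcal{K}$. Continuity of $G$ on $\mathcal{K}$ with the sup distance, together with compactness, makes $\Phi$ uniformly continuous; the point to check is that $G$ is jointly continuous in $(t,f)$ on the compact image, which follows from equicontinuity of $G(\mathcal{K})$ by Arzel\`a--Ascoli. Causality says that $\Phi(t,f)=\Phi(t,g)$ whenever $f|_{[0,t]}=g|_{[0,t]}$. Hence $\Phi$ descends to a continuous function on the compact quotient $Z$ of $I\times\mathcal{K}$ by the relation "same $t$, same history on $[0,t]$".

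\emph{Step 2 (causal features that separate $Z$).} Consider the causally computable features
\[
t,\qquad f(t),\qquad y_{\lambda}(f)(t)=\int_{0}^{t}e^{-\lambda(t-\tau)}f(\tau)\,d\tau\quad(\lambda\ge0).
\]
All of them are continuous on $Z$. I claim they separate points of $Z$. If $t$ agrees and all $y_\lambda$ agree, then $s\mapsto f(t-s)-g(t-s)$ on $[0,t]$ is annihilated by every $e^{-\lambda s}$. The span of these exponentials is dense in $C[0,t]$ by Stone--Weierstrass in the variable $e^{-s}$, so $f=g$ on $[0,t]$.

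The features together with $1$ generate a point-separating unital algebra, so by Stone--Weierstrass $\Phi$ is within $\varepsilon/3$ of a continuous function $h$ of finitely many features. These are $t$, $f(t)$ and $y_{\lambda_1},\dots,y_{\lambda_r}$ for finitely many chosen $\lambda_i$. The feature map $\Psi$ sends $Z$ onto a compact set $\Psi(Z)\subset\mathbb{R}^{r'}$. By the shallow-network theorem, $h$ is within $\varepsilon/3$ on $\Psi(Z)$ of $\beta+\sum_i A_i\,\sigma(w_i\cdot\Psi+b_i)$.

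\emph{Step 3 (realization).} The lift is a shallow $\sigma$-network approximating, uniformly on the compact range of $\mathcal{K}$, the map $x\mapsto(x,1)$. This yields channels $\approx f$ and $\approx1$.

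One Laplace layer then produces the hidden units $\sigma(w_i\cdot\Psi+b_i)$ directly. Each feature needed in the argument is obtained as follows:
\begin{itemize}
\item $t$ comes from the kernel $e^{0\cdot r}$ convolved with the constant channel;
\item $f(t)$ comes through the skip matrix $W$;
\item $y_{\lambda_i}$ comes from the kernel $e^{-\lambda_i r}\in\mathscr{P}$;
\item the bias $b_i$ is the skip weight on the constant channel.
\end{itemize}
The pre-activation is linear in the channels, so channel $i$ equals $\sigma(w_i\cdot\Psi+b_i)$ up to the lift error. The projection is a shallow network approximating the affine readout $v\mapsto\beta+\sum A_iv_i$ uniformly on the compact range of these channels. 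If $L>1$ is desired, extra layers approximating the identity can be inserted the same way.

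Continuity of the output paths follows from continuity of the convolutions and of $\sigma$, so membership in $\mathrm{LNO}_\sigma$ holds.

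\emph{Main obstacle.} The main difficulty is the error bookkeeping in Step 3. Lift errors pass through bounded causal convolutions (norm $\le T\,e^{|\lambda|T}$) and then through $\sigma$ and the projection network. I would handle this by working backwards, as follows.
\begin{itemize}
\item Fix compact enlargements of all intermediate ranges.
\item Use uniform continuity of $\sigma$ and of the already-fixed projection on these enlargements to choose the admissible layer error.
\item Choose the lift tolerance from that admissible error.
\end{itemize}
A secondary delicate point is the joint continuity of $\Phi$ in $t$ needed in Step 1, which I would verify first.
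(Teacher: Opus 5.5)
Your proof is correct, and it takes a genuinely different route from the paper's.

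\emph{The paper's route.} Its first stage is metric and explicit. It convolves the input against nonnegative bumps on lag windows and extracts a causal modulus of continuity for $G$ from causality and compactness. It then proves that closeness of the windowed averages forces closeness of the pasts. Each bump is replaced by a pole kernel through a Stone--Weierstrass density lemma for polynomials in $e^{r}$. Finally, a readout $\Upsilon$ is defined on all of $\mathbb{R}^{P}$ by a McShane extension over a finite net. The single Laplace layer only transports the features: each is linearized through $N$ copies of $\sigma$ and recombined inside the projection, which carries the whole readout network.

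\emph{Your route.} You replace that first stage with a topological one. You pass to the compact quotient $Z$ of causal histories and show that the exponential moments $y_\lambda$ separate $Z$; this is injectivity of the finite Laplace transform, obtained from density of polynomials in $e^{-s}$. Stone--Weierstrass on $Z$ then selects finitely many features. In the realization, the layer's own activation serves as the hidden layer of the approximating shallow network. This works because each pre-activation $w_i\cdot\Psi+b_i$ is exactly a combination of pole-kernel convolutions and skip terms of the channels. The projection therefore only has to approximate an affine map, and the lift error is the only error that must be pushed through the layer.

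\emph{What each buys.} Your argument is shorter and uses every kernel exactly, so no bump-to-pole approximation error enters the budget. It also needs only the non-growing exponentials $e^{-\lambda r}$ with $\lambda\ge 0$, so universality survives a restriction to poles with nonpositive real part. The paper's density lemma, by contrast, explicitly relies on the growing exponentials $e^{ir}$. In exchange, the paper avoids quotient topologies and abstract compact spaces in favour of an explicit reconstruction of the past and a named error budget, which is what made its version practical to machine-check.

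One bookkeeping correction: the output norm here is Euclidean, not the sup norm of the cKAN theorem. Gluing the scalar cases therefore needs the per-coordinate tolerance $\varepsilon/\sqrt{d_{\mathrm{out}}}$ rather than $\varepsilon$; the case $d_{\mathrm{out}}=0$ is trivial.
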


The estimate is uniform over the whole family and the whole interval, one architecture serving all inputs and all
times at once. The inequality is pointwise and strict, in the Euclidean norm of the fiber; the supremum over $f$
and $t$ then obeys the same bound non-strictly.

Three features distinguish this from the classical operator-approximation theorems in the line of
\cite{chen1995universal}. First, causality is not assumed away: the target is only supposed causal on
$\mathcal{K}$, and the approximant is causal by inspection of its construction, its value at time $t$ being
assembled from convolutions over $[0,t]$ and pointwise maps; this is read off from the construction, not recorded
as a separate lemma. Second, the kernels carry no restriction on their pole data: $\lambda,\alpha,\omega$ range
over all of $\mathbb{R}$, with no half-plane, decay or boundedness condition, so growing exponentials and
undamped oscillations are admitted on the same footing as decaying ones; a theorem proved only for stable poles
would not cover the published model. Third, the hypothesis on the activation is exactly continuity together with
the failure to be a polynomial function; no sigmoidal shape, no boundedness, no monotonicity and no smoothness is
required.

Beyond the request, the depth is pinned: the request allowed any $L\ge1$, and the construction uses a single
Laplace layer, so only the channel width grows as the accuracy is tightened. What the theorem does not provide is
a rate: it is a density statement, the channel width, the number of poles and the network widths being asserted
to exist with no bound in terms of $\varepsilon$ or of any modulus of continuity of $G$.

\subparagraph*{Assumptions.}
Two statements are assumed rather than proved; they are the two implications from which the universal
approximation theorem for a continuous non-polynomial activation is obtained. Write
$\mathcal{U}_\sigma(\Omega_0)$ for the set of single neurons $x\mapsto\sigma(\sum_j W_jx_j+b)$ on a compact
$\Omega_0\subset\mathbb{R}^{p}$.

\begin{enumerate}[label=(A\arabic*),leftmargin=*,itemsep=0.5ex]
\item \emph{Annihilation forces polynomiality.} If some non-zero continuous linear functional on
$C(\Omega_0;\mathbb{R})$ annihilates every neuron, then $\sigma$ is a polynomial function.
\item \emph{Polynomial activations are not fundamental.} If $\Omega_0$ has non-empty interior and
$\sigma$ is a polynomial function, then the real linear span of $\mathcal{U}_\sigma(\Omega_0)$ is not dense in
$C(\Omega_0;\mathbb{R})$ for the uniform metric.
\end{enumerate}

Together these give the approximation theorem of Leshno, Lin, Pinkus and Schocken~\cite{Leshno1993}, which
specializes to~\cite{cybenko1989approximation,hornik1989multilayer} for sigmoidal activations.

The development has been checked in full by a proof assistant, a program that verifies every inference
mechanically. Its logical frontier is exactly the three standard foundational principles of the underlying logic,
namely propositional extensionality, the soundness of quotients and the axiom of choice, together with the
assumptions (A1) and (A2), and nothing else. Separately from that frontier, three classical theorems of analysis
are quoted in the exact strength in which they are applied, namely Bolzano--Weierstrass, Heine--Cantor and
Stone--Weierstrass; every other step is proved from the definitions.

\subparagraph*{Proof sketch.}
Assume $d_{\mathrm{out}}\ge1$ and set $\eta=\varepsilon/\sqrt{d_{\mathrm{out}}}$; the case $d_{\mathrm{out}}=0$
is separate and immediate, the codomain having a single point. A vector whose $d_{\mathrm{out}}$ coordinates are
each below $\eta$ in modulus has Euclidean norm strictly below $\sqrt{d_{\mathrm{out}}}\,\eta=\varepsilon$, so it
suffices to hold every output coordinate within a budget of three thirds, $\eta/3+\eta/3+\eta/3$. The argument
reduces the operator to a continuous function of finitely many causal readings of the input, then approximates
that function inside one Laplace layer.

\emph{Stage one: finitely many causal features determine the operator.} Convolving an input coordinate against a
continuous non-negative bump supported in a lag window $[a,a+\Delta]$ with $a\ge0$ yields, at time $t$, a
weighted average of that coordinate over $[t-a-\Delta,\,t-a]$; a mesh of such windows sweeping $[0,t]$ records
the whole past at any chosen resolution. By causality, continuity and compactness (through the
Bolzano--Weierstrass theorem), $G$ has a causal modulus: for every output tolerance, a radius such that two
members of $\mathcal{K}$ within that radius at every time up to $t$ have outputs within the tolerance at $t$. The
checked proof replaces each bump by a nearby finite-pole kernel while the feature family is assembled: the
density result of stage two supplies a kernel $\kappa_j\in\mathscr{P}$ uniformly close to each bump on $[0,T]$,
at a tolerance keyed to the causal modulus, the bump masses and the size of the attained inputs. With $M$ kernels
so chosen, attach to each input $f$ and time $t$ the feature vector
\[
  \Phi(f,t)=\bigl(t,\ f_c(t),\ (\kappa_j\star f_c)(t)\bigr)_{c\le d_{\mathrm{in}},\ j\le M}
  \in\mathbb{R}^{P},
  \qquad P=1+d_{\mathrm{in}}+Md_{\mathrm{in}},
\]
each entry of which depends only on the restriction of $f$ to $[0,t]$. Two implications make $\Phi$ a faithful
summary. In one direction, the causal modulus converts past-closeness into output-closeness. In the other, the
reconstruction machinery of the checked proof converts feature-closeness back into past-closeness: if all $P$
features of $(f,t)$ and $(g,s)$ agree within a small tolerance, the time feature forces $s$ close to $t$, and
each probe feature, a mass-weighted average of one coordinate over one mesh cell, reads the two histories off
cell by cell after division by the bump mass, so the pasts agree within the causal modulus. The bump-to-pole cost
is absorbed inside these reconstruction tolerances and never reappears in the final budget. Chaining the two
implications, feature-closeness forces output-closeness, uniformly over $\mathcal{K}$ and $I$.

The readout is then built directly on the pole-kernel features. Features and outputs vary continuously on the
compact set $\mathcal{K}\times I$, so the Heine--Cantor theorem makes both uniformly continuous, and total
boundedness gives a finite net of pairs $(g,s)$ dense at that scale. The McShane formula
\[
  \Upsilon(z)_i=\min_{(g,s)\in\mathrm{net}}
  \Bigl((Gg)(s)_i+C\,\max_{\ell\le P}\bigl|z_\ell-\Phi(g,s)_\ell\bigr|\Bigr),
\]
with $C$ a Lipschitz constant fixed by the feature-to-output tolerance and the size of the attained outputs,
defines a continuous $\Upsilon:\mathbb{R}^{P}\to\mathbb{R}^{d_{\mathrm{out}}}$ with
$\bigl|(Gf)(t)_i-\Upsilon(\Phi(f,t))_i\bigr|<\eta/3$ for every $f\in\mathcal{K}$, $t\in I$ and output coordinate
$i$: the first third of the budget. Compactness also bounds every attained feature; a compact box
$\Omega\subset\mathbb{R}^{P}$ with non-empty interior is fixed containing every attained feature with a full unit
of margin to each face, the margin that will keep the perturbed features of stage three inside $\Omega$.

\emph{Stage two: pole kernels are dense, through a single exponential.} The density used above is proved by the
Stone--Weierstrass theorem applied not to the whole pole class but to the unital algebra generated inside
$C([0,T];\mathbb{R})$ by the one function $r\mapsto e^{r}$: it contains the constants, so it vanishes nowhere,
and the exponential is injective, so it separates points. Its members are polynomials in $e^{r}$, finite sums
$\sum_{i}c_i\,e^{i r}$ with real coefficients $c_i$ and natural-number exponents, lying in the real-exponential
slice of $\mathscr{P}$; each bump of stage one therefore has a finite-pole kernel within any prescribed uniform
tolerance. The load-bearing point is that the exponents $i=0,1,2,\dots$ grow without bound: the approximants are
combinations of growing exponentials, which a stable-pole restriction to a half-plane would forbid, and with them
this density argument. The oscillatory part of the pole class is available to the architecture but not needed for
density. In the formal development this lemma is invoked in the middle of stage one; it is isolated here for
readability.

\emph{Stage three: one Laplace layer computes the features approximately.} The channel layout is
$d_v=1+d_{\mathrm{in}}+PN$: one constant channel, one channel per input coordinate, and $N$ reconstruction
channels above each of the $P$ feature positions, where $N$ counts the activation copies used to linearize one
feature; it is unrelated to the number of poles, which is set by the $M$ kernels of stage one. The lift places
independent shallow blocks side by side, one per channel. The constant and reconstruction channels are lifted by
width-zero blocks: having no hidden unit, such a block outputs its bias without touching $\sigma$, and the bias
is fixed at $1$ for the constant channel and at $0$ for every reconstruction channel, both exact. A coordinate
channel is not built exactly: in a shallow block any dependence on the input enters through the hidden $\sigma$
units, so the construction instead approximates the coordinate evaluation $x\mapsto x_c$ on the compact box of
attained input values, and the channel carries a function $\hat g_c$ with $|\hat g_c(\tau)-f_c(\tau)|\le\zeta$
on $I$, for a tolerance $\zeta$ fixed below.

The kernel and skip matrices of the layer $\mathcal{T}$ are arranged so that the $r$-th reconstruction channel
above feature position $\ell$ carries $\sigma\bigl(q_r\widehat{X}_\ell+\theta_r\bigr)$, where $\widehat{X}_\ell$ is the
$\ell$-th feature read from the channels: a coordinate feature through a skip weight on its channel, a probe
feature as the pole kernel $\kappa_j$ convolved against a coordinate channel, and the time feature as a constant
kernel convolved against the constant channel, which yields exactly $t$. The constant channel has a second job:
through a skip weight it hands each neuron its bias $\theta_r$, which the layer, having no bias term of its own,
cannot otherwise produce. The reals $q_r,\theta_r$, with downstream coefficients $\gamma_r$ and an offset $e$,
come from linearizing the identity: $\sigma$ is applied after the convolution, so a feature cannot pass through
unchanged, but by the approximation theorem of (A1) and (A2) in one variable there are $N$ neurons with
\[
  \Bigl|\,y-\Bigl(e+\sum_{r=1}^{N}\gamma_r\,\sigma(\theta_r+q_r y)\Bigr)\Bigr|<\chi
  \qquad\text{for every $y$ in a compact interval containing all coordinates of }\Omega .
\]
The $N$ channels above a feature position carry these $N$ neurons, one each; the recombination
$e+\sum_r\gamma_r(\cdot)$ is formed downstream, inside the projection.

Two feature errors arise, and their tolerances are chosen only after the readout modulus is known: by the
Heine--Cantor theorem $\Upsilon$ is uniformly continuous on $\Omega$, so let $\rho$ be a modulus at accuracy
$\eta/3$, and set
\[
  \chi=\min\Bigl(\frac{\rho}{4},\,\frac12\Bigr),\qquad
  \zeta=\frac{\chi}{1+\Theta},\qquad
  \Theta=\sum_{j=1}^{M}\int_0^T|\kappa_j(r)|\,dr .
\]
The lift error moves each feature by at most $(1+\Theta)\,\zeta=\chi$: the time feature not at all, a coordinate
feature by at most $\zeta$, and a probe feature by at most the mass of its kernel times $\zeta$, a convolution
amplifying a channel perturbation by at most its kernel mass. The linearization then moves each feature value by
less than $\chi$ more. The total perturbation per feature is below $2\chi\le\rho/2$, and $2\chi\le1$, so the
perturbed feature vector stays inside $\Omega$ by the unit margin of stage one. The readout modulus converts the
perturbation into an output error below $\eta/3$: the lift-plus-linearization term is the middle third of the
budget.

\emph{Stage four: the projection approximates the readout.} $\Upsilon$ is continuous on the compact box $\Omega$,
which has non-empty interior, so the approximation theorem obtained from (A1) and (A2) gives a shallow network
$R$ with $|\Upsilon(z)_i-R(z)_i|<\eta/3$ for every $z\in\Omega$ and every output coordinate $i$; evaluated at the
perturbed feature point, which stage three placed inside $\Omega$, this network error is the last third of the
budget. The projection of the architecture is $R$ precomposed with the recombination of stage three; the
recombination is affine, an offset $e$ plus a linear map, and merges into the first affine map of $R$, so the
composite is again a shallow network of the channel tuple, as the architecture requires.

\emph{Adding up.} For every $f\in\mathcal{K}$, every $t\in I$ and every output coordinate, the readout error on
the true features (stage one), the lift-plus-linearization error carried through the readout modulus (stage
three), and the network error at the perturbed point (stage four) are each below $\eta/3$; each coordinate
therefore errs by less than $\eta$, and the Euclidean norm by less than
$\sqrt{d_{\mathrm{out}}}\,\eta=\varepsilon$. Throughout, the depth is one; the approximant is causal by
inspection of the construction; and the development discharges the checked obligation that
$t\mapsto(G_{\phi}f)(t)$ is continuous for every input path, so the constructed operator does land in
$\mathcal{X}_{d_{\mathrm{out}}}$.

\subsection{An a posteriori error certificate for a regularized perivascular reconstruction}
\label{sec:perivascular_aposteriori_result}

The five closest solved problems were nonlinear inverse error, greedy convergence, a posteriori PINN error, a priori PINN error, and greedy risk decomposition.

\subparagraph*{User request.}
\noindent\emph{Modeling context (perivascular inverse Stokes).} The abstract statement below gives the regularized a posteriori error bound for the perivascular inverse Stokes reconstruction described above.
The perivascular spaces that sheathe the surface vasculature of the mouse brain carry a slow flow
that can be observed only indirectly, through sparse two-photon tracks of tracer particles
acquired along a vessel wall that itself moves during the recording. Reconstructing the velocity
and pressure fields of that flow from such tracks is an inverse problem posed on a moving
boundary~\cite{toscano2024inferring_AIV,boster2023artificial}, and in practice the reconstruction
is obtained by minimizing a regularized least-squares loss over a parametrized family of
candidate fields~\cite{raissi2019physics}. The theorem is stated and proved in the
finite-dimensional setting below; the bullets are interpretive,
recording what each abstract object stands for in the example, and nothing in them enters the
proof.

\begin{itemize}
\item \emph{State space $Q$}: the sampled unknowns of the reconstruction at the collocation points,
  the two Stokes stream potentials $P$ and $R_{\mathrm{pot}}$ and the pressure $p$.
\item \emph{Velocity space $W$}: the reconstructed velocity field $(u,v,w)$ at the evaluation points.
\item \emph{Reconstruction $V:Q\to W$}: the two-potential (curl) map that builds the velocity from
  the state, $u=\partial_y R_{\mathrm{pot}}$, $v=\partial_z P-\partial_x R_{\mathrm{pot}}$,
  $w=-\partial_y P$. This is the quantity whose error we bound.
\item \emph{Observation space $Y$ and operator $A_{\mathrm{obs}}:Q\to Y$}: everything the data and
  the physics constrain, stacked: the momentum residuals of the moving-wall Stokes model at the
  collocation points, the misfit against the sparse tracer-track data acquired along the moving
  wall, the no-slip wall data, and the outlet pressure anchor. $Y$ is the space those residuals
  live in.
\item \emph{Regularizer $R:W\to Z$ and space $Z$}: the smoothing penalty from the well-posedness
  closure. $R$ sends a velocity field to its sampled second cross-slab derivatives
  $(\partial_z^{2}u,\,\partial_z^{2}v,\,\partial_z^{2}w)$ at the collocation points, and $Z$ is the
  space those values live in. Like $Y$, $Z$ is a residual space, not a physical field space.
\item \emph{Weight $\lambda>0$}: the regularization strength $\lambda_{\mathrm{reg}}$.
\item \emph{Invisible subspace $N=\ker R$}: the velocity fields the regularizer cannot see, i.e.\
  those affine in the cross-slab coordinate $z$ (constant plus linear in $z$). The invisible
  component of a reconstructed velocity is its affine-in-$z$ part.
\item \emph{Data-coercivity $\beta>0$}: how strongly the observations $A_{\mathrm{obs}}$ pin down the
  affine-in-$z$ velocity modes that the regularizer leaves untouched. It is supplied together
  with the coercivity hypothesis below, not derived from the maps.
\item \emph{Noise $\eta\in Y$}: measurement noise on the observed data.
\item \emph{True and candidate states $q^{\dagger},\widehat{q}\in Q$}: the ground-truth sampled state
  and the state the trained network reaches.
\end{itemize}

Naming note: the perivascular state potential is written $R_{\mathrm{pot}}$ above to keep it
distinct from the regularizer operator $R$; in the abstract statement the potential never appears by
name (it is hidden inside $V$), so there is no collision there.

\medskip
\noindent\emph{Statement to formalize.} Let $Q$, $W$, $Y$ and $Z$ be finite-dimensional real
inner-product spaces, and let
\[
  V:Q\to W,\qquad A_{\mathrm{obs}}:Q\to Y,\qquad R:W\to Z
\]
be bounded linear maps, with $\lambda>0$. Write $N=\ker R\subseteq W$ and let $P_{N}$ be the
orthogonal projection of $W$ onto $N$, so that $P_{N}(Vq)$ is the invisible component of the
reconstruction, the part the regularizer cannot see. Assume
\[
  \beta\,\lVert P_{N}(Vq)\rVert_{W}^{2}\le\lVert A_{\mathrm{obs}}\,q\rVert_{Y}^{2}
  \qquad\text{for all }q\in Q,
\]
for some $\beta>0$. Let $q^{\dagger},\widehat{q}\in Q$ and $\eta\in Y$, write
$y^{\delta}=A_{\mathrm{obs}}\,q^{\dagger}+\eta$ for the observed data, and define the regularized
empirical loss
\[
  \mathcal{L}(\widehat{q})
    =\tfrac12\lVert A_{\mathrm{obs}}\,\widehat{q}-y^{\delta}\rVert_{Y}^{2}
     +\tfrac{\lambda}{2}\lVert R\,V\widehat{q}\rVert_{Z}^{2}.
\]

\medskip
\noindent\emph{Prove:} there exists $\alpha>0$ such that
\[
  \bigl\lVert V(\widehat{q}-q^{\dagger})\bigr\rVert_{W}^{2}
    \le\frac{4}{\alpha}\,\mathcal{L}(\widehat{q})
      +\frac{2}{\alpha}\,\lVert\eta\rVert_{Y}^{2}
      +\frac{2\lambda}{\alpha}\,\lVert R\,Vq^{\dagger}\rVert_{Z}^{2},
\]
and exhibit such an $\alpha$ explicitly.
\medskip

\subparagraph*{The theorem obtained.}
Below the observation operator $A_{\mathrm{obs}}$ is written $A$; the loss $\mathcal{L}$ and all
other symbols keep their meanings.

\begin{theorem}[A posteriori error estimate for the regularized reconstruction]\label{thm:perivascular_aposteriori}
Let $Q$, $W$, $Y$ and $Z$ be finite-dimensional real inner-product spaces, let
$V\colon Q\to W$, $A\colon Q\to Y$ and $R\colon W\to Z$ be bounded linear maps, and let
$\lambda>0$ and $\beta>0$. Let $N=\ker R$, let $P_{N}$ be the orthogonal projection of $W$ onto
$N$, and write $P_{N}^{\perp} w=w-P_{N} w$. Let $\gamma_{R}$ be the regularizer gap of $R$: a
positive constant with $\gamma_{R}\lVert P_{N}^{\perp} w\rVert^{2}\le\lVert Rw\rVert^{2}$ for every
$w\in W$; such a constant exists because $W$ is finite-dimensional, and one is selected once and
for all, depending on $R$ alone. Assume the data-coercivity hypothesis
\[
  \beta\lVert P_{N}(Vq)\rVert^{2}\le\lVert Aq\rVert^{2}\qquad\text{for every }q\in Q .
\]
Then, for every $q^{\dagger},\widehat{q}\in Q$ and every $\eta\in Y$, the constant
$\alpha=\min(\beta,\lambda\gamma_{R})$ is positive and
\[
  \lVert V(\widehat{q}-q^{\dagger})\rVert^{2}
    \le\frac{4}{\alpha}\mathcal{L}(\widehat{q})
      +\frac{2}{\alpha}\lVert\eta\rVert^{2}
      +\frac{2\lambda}{\alpha}\lVert RVq^{\dagger}\rVert^{2},
\]
where $\mathcal{L}(\widehat{q})=\tfrac12\lVert A\widehat{q}-(Aq^{\dagger}+\eta)\rVert^{2}
+\tfrac{\lambda}{2}\lVert RV\widehat{q}\rVert^{2}$ is the regularized empirical loss.
\end{theorem}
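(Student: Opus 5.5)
Set $e=\widehat{q}-q^{\dagger}$ and $w=Ve\in W$. The plan is to split $w$ orthogonally as $w=P_N w+P_N^{\perp}w$, so that $\lVert w\rVert^2=\lVert P_N w\rVert^2+\lVert P_N^{\perp}w\rVert^2$. We then bound the invisible part through the data-coercivity hypothesis and the visible part through the regularizer gap $\gamma_R$. With $\alpha=\min(\beta,\lambda\gamma_R)$, both bounds combine into
\[
\alpha\lVert w\rVert^2\le \lVert Ae\rVert^2+\lambda\lVert Rw\rVert^2 .
\]
This holds because $\beta\lVert P_N(Ve)\rVert^2\le\lVert Ae\rVert^2$ by hypothesis applied at $q=e$ (linearity of $V$ gives $P_N(Ve)=P_N w$), and $\lambda\gamma_R\lVert P_N^{\perp}w\rVert^2\le\lambda\lVert Rw\rVert^2$. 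Positivity of $\alpha$ is immediate from $\beta>0$, $\lambda>0$ and $\gamma_R>0$.

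It remains to compare the two terms on the right with the loss, noise and bias. For the data term, note that $Ae=(A\widehat{q}-y^{\delta})+\eta$. The parallelogram-type inequality $\lVert a+b\rVert^2\le 2\lVert a\rVert^2+2\lVert b\rVert^2$ then gives
\[
\lVert Ae\rVert^2\le 2\lVert A\widehat{q}-y^{\delta}\rVert^2+2\lVert\eta\rVert^2 .
\]
For the regularizer term, linearity gives $Rw=RV\widehat{q}-RVq^{\dagger}$, and the same inequality yields
\[
\lambda\lVert Rw\rVert^2\le 2\lambda\lVert RV\widehat{q}\rVert^2+2\lambda\lVert RVq^{\dagger}\rVert^2 .
\]
Adding the two estimates, the terms $2\lVert A\widehat{q}-y^{\delta}\rVert^2+2\lambda\lVert RV\widehat{q}\rVert^2$ equal exactly $4\mathcal{L}(\widehat{q})$. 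The remaining terms are $2\lVert\eta\rVert^2+2\lambda\lVert RVq^{\dagger}\rVert^2$. Dividing by $\alpha$ gives the claimed bound.

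The only nonroutine ingredient is the existence of $\gamma_R$, which the statement already provides. If it had to be proved, we would argue that $R$ restricted to $N^{\perp}$ is injective. On the unit sphere of the finite-dimensional space $N^{\perp}$, the continuous function $\lVert Rw\rVert^2$ is therefore positive, and by compactness it attains a positive minimum, which we take as $\gamma_R$ (any $\gamma_R>0$ works if $N^{\perp}=0$). The point needing care is bookkeeping rather than difficulty. We must use the orthogonality of $P_N$, so that the Pythagorean split holds exactly rather than with a factor of $2$. We must also apply the coercivity hypothesis at the error $e$, not at $\widehat{q}$. Together these give the constants $4$, $2$ and $2\lambda$ precisely.
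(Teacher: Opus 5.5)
Your proof is correct and follows the paper's argument. It uses the same orthogonal split to get the composite coercivity $\alpha\lVert Vq\rVert^{2}\le\lVert Aq\rVert^{2}+\lambda\lVert RVq\rVert^{2}$, then the same two applications of $\lVert x+y\rVert^{2}\le 2\lVert x\rVert^{2}+2\lVert y\rVert^{2}$ recombining into $4\mathcal{L}(\widehat{q})$. The only difference is that the paper reaches the error inequality at $\widehat{q}-q^{\dagger}$ by stacking $Tq=(Aq,\sqrt{\lambda}\,RVq)$ and invoking an abstract least-squares energy identity, whereas you evaluate the composite coercivity directly at $e$. That is equivalent and more direct.
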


The constant asked for is exhibited: $\alpha=\min(\beta,\lambda\gamma_{R})$, whose two competing
entries are the data-coercivity constant and the weighted regularizer gap. Two qualifications
are owed here. First, the gap $\gamma_{R}$ is constructed, not hypothesized, yet it carries no
formula: an existence lemma proves from the finite-dimensionality of $W$ that some positive
constant satisfies the defining inequality, one such constant is then selected once and for all,
the selection depends on $R$ alone, and every later step uses only the inequality. Second,
$\beta$ is not derived from the maps: it is hypothesis data, supplied together with the
coercivity assumption it quantifies. The dependence of $\alpha$ is therefore exactly this: on
$\beta$ through the hypothesis, on $\lambda$ through the loss, and on $R$ alone through
$\gamma_{R}$.

Two further points bear on how the estimate should be read. No relation whatever is assumed
between the noise $\eta$ and the remaining data, so the bound is not a statistical statement and
degrades gracefully rather than under a noise model. And of the three terms on the right, only
the first is available once training has ended; the other two are the noise level and the
regularizer evaluated at the exact state. Those are not observable, and they are the
irreducible price of not knowing the truth and of the prior being imperfect. The certificate is
therefore of the standard a posteriori shape: an achieved, measurable loss, plus two terms that
quantify what no amount of training can remove.

\subparagraph*{Assumptions.}
None. The development assumes nothing of its own: it contains no axiom and no unproved
statement. It has been checked in full by a proof assistant, that is, by software that verifies
every inference mechanically, and the logical frontier of that verification is exactly the three
standard foundational principles of the underlying logic, namely propositional extensionality,
the soundness of quotients, and the axiom of choice, together with the named assumptions (here
there are none) and the classical results quoted in the exact strength applied (here, facts of
Hilbert-space theory and elementary real algebra), and nothing else. The axiom of choice enters
at a single identifiable place: the once-and-for-all selection of the regularizer gap, where a
constant with a proved existence property is named once and fixed thereafter. This theorem is
the only one of the results presented here with an empty assumption list.

What the theorem does carry are hypotheses, and they are visible in its statement: the four
spaces are finite-dimensional real inner-product spaces, the three maps are bounded and linear,
$\lambda$ and $\beta$ are positive, and the observations satisfy the data-coercivity inequality
on the invisible subspace. That last one is the substantive hypothesis. It is what the physical
setting has to supply, and it is the abstract form of the slab bottleneck: only a thin observed
band of the cross-slab extent constrains the velocity modes that the smoothing penalty cannot
see. The unified a posteriori framework of~\cite{zeinhofer2024unified} is not quoted: the two
definitions and five results that constitute it are restated and reproved inside the full
machine-checked development, so the framework is part of what is checked, not part of what is
assumed.

\subparagraph*{Proof sketch.}
\emph{Stage one: an abstract least-squares principle.} The proof rests on the framework of
\cite{zeinhofer2024unified}, reproved inside the development. Let $T\colon\mathcal{X}\to\mathcal{Y}$
be a bounded linear operator between real inner-product spaces and let $f\in\mathcal{Y}$ be data
with an exact solution $u^{\star}$, meaning $Tu^{\star}=f$. Two objects organize the argument:
the residual form $a_{T}(u,v)=\langle Tu,Tv\rangle$ and the energy
$E(u)=\tfrac12\lVert Tu-f\rVert^{2}$. Linearity of $T$ gives the key identity
$E(u)=\tfrac12\,a_{T}(u-u^{\star},u-u^{\star})$: the energy of any candidate is half the residual
form at its error. Hence, whenever a real-valued function $s$ on $\mathcal{X}$ and a number
$\alpha$ satisfy $\alpha\,s(v)^{2}\le a_{T}(v,v)$ for every $v$, every candidate $u_{h}$ obeys
$\alpha\,s(u_{h}-u^{\star})^{2}\le 2E(u_{h})$. This stage is the engine of the certificate: it
converts a coercivity inequality into a bound on the error by the achieved energy, and it
contributes no error term of its own.

\emph{Stage two: the invisible subspace and the regularizer gap.} The invisible subspace
$N=\ker R$ splits $W$ orthogonally: every field $w$ decomposes as $P_{N} w+P_{N}^{\perp} w$, with
$\lVert w\rVert^{2}=\lVert P_{N} w\rVert^{2}+\lVert P_{N}^{\perp} w\rVert^{2}$ by Pythagoras and with
$R(P_{N}^{\perp} w)=Rw$ because $R$ annihilates the first summand. On $N^{\perp}$ the regularizer is
injective, and an injective linear map out of a finite-dimensional space is bounded below, so an
existence lemma yields a constant $c>0$ with
$c\,\lVert P_{N}^{\perp} w\rVert^{2}\le\lVert Rw\rVert^{2}$ for every $w\in W$. Degeneracy costs
nothing: if the regularizer vanishes identically ($N$ is all of $W$) the visible component is
always zero and the constant one serves. One constant with the proved property is then selected once and for all and named
$\gamma_{R}$, the regularizer gap; the selection depends on $R$ alone, and it is the single
point at which the axiom of choice is used. The gap is the exact counterpart of the
data-coercivity hypothesis, and the two cover complementary halves of the reconstruction.

\emph{Stage three: composite coercivity.} Fix $q\in Q$ and apply the two inequalities to the
field $Vq$. Data-coercivity controls the invisible component,
$\beta\lVert P_{N}(Vq)\rVert^{2}\le\lVert Aq\rVert^{2}$, and the gap, multiplied by $\lambda$,
controls the visible one, $\lambda\gamma_{R}\lVert P_{N}^{\perp}(Vq)\rVert^{2}\le\lambda\lVert
RVq\rVert^{2}$. Set $\alpha=\min(\beta,\lambda\gamma_{R})$, which is positive as the minimum of
two positive numbers; weakening each bound to the common constant $\alpha$ and adding them
through the Pythagorean identity yields
\[
  \alpha\lVert Vq\rVert^{2}\le\lVert Aq\rVert^{2}+\lambda\lVert RVq\rVert^{2}
  \qquad\text{for every }q\in Q .
\]
This is the crux of the argument. Neither term on the right controls the reconstruction on its
own: the observations are assumed coercive only on what the regularizer annihilates, and the
regularizer is blind to exactly that part. The minimum is what survives the combination, and it
is why the certificate degrades when either the data become uninformative on the invisible modes
or the penalty becomes weak on the visible ones. Every term of the error budget will be divided
by this $\alpha$.

\emph{Stage four: the product residual operator and the exact-data bound.} Stack the observation
and the weighted regularizer into the single operator $Tq=(Aq,\sqrt{\lambda}\,RVq)$, mapping $Q$
into the direct sum $Y\oplus Z$, whose squared norm adds the squared norms of the two
coordinates. Then $a_{T}(q,q)=\lVert Aq\rVert^{2}+\lambda\lVert RVq\rVert^{2}$, so stage three
says precisely that $T$ satisfies the coercivity hypothesis of stage one for the error
functional $s(q)=\lVert Vq\rVert$. Take as data $f=Tq^{\dagger}$, so that the exact state is an
exact solution, and note that twice the energy at $\widehat{q}$ is then the residual form at the
error $\widehat{q}-q^{\dagger}$. Stage one gives
\[
  \alpha\lVert V(\widehat{q}-q^{\dagger})\rVert^{2}
    \le\lVert A(\widehat{q}-q^{\dagger})\rVert^{2}
      +\lambda\lVert RV(\widehat{q}-q^{\dagger})\rVert^{2}.
\]
Neither term on the right is available, since both involve the unknown $q^{\dagger}$; the next
stage trades them for quantities that are.

\emph{Stage five: from exact-data residuals to the achieved loss.} Each exact-data residual is
split against an available quantity, using the elementary bound
$\lVert x+y\rVert^{2}\le2\lVert x\rVert^{2}+2\lVert y\rVert^{2}$ twice. Write
$r=A\widehat{q}-(Aq^{\dagger}+\eta)$ for the noisy observation residual, the quantity the loss
actually measures. Then $A(\widehat{q}-q^{\dagger})=r+\eta$, so
$\lVert A(\widehat{q}-q^{\dagger})\rVert^{2}\le2\lVert r\rVert^{2}+2\lVert\eta\rVert^{2}$; this
splitting is where the noise term of the budget is born. Likewise
$RV(\widehat{q}-q^{\dagger})=RV\widehat{q}-RVq^{\dagger}$, so
$\lambda\lVert RV(\widehat{q}-q^{\dagger})\rVert^{2}\le2\lambda\lVert RV\widehat{q}\rVert^{2}
+2\lambda\lVert RVq^{\dagger}\rVert^{2}$; the second summand is where the prior-misfit term of
the budget is born. The two available pieces recombine into the loss: since
$\mathcal{L}(\widehat{q})=\tfrac12\lVert r\rVert^{2}+\tfrac{\lambda}{2}\lVert RV\widehat{q}\rVert^{2}$,
the sum $2\lVert r\rVert^{2}+2\lambda\lVert RV\widehat{q}\rVert^{2}$ equals
$4\mathcal{L}(\widehat{q})$, the factor $4$ appearing because the loss carries the halves
$\tfrac12$ and $\tfrac{\lambda}{2}$ in front of its two terms. Altogether,
\[
  \lVert A(\widehat{q}-q^{\dagger})\rVert^{2}+\lambda\lVert RV(\widehat{q}-q^{\dagger})\rVert^{2}
    \le 4\mathcal{L}(\widehat{q})+2\lVert\eta\rVert^{2}+2\lambda\lVert RVq^{\dagger}\rVert^{2}.
\]

\emph{The conclusion.} Chaining stages four and five gives
$\alpha\lVert V(\widehat{q}-q^{\dagger})\rVert^{2}\le 4\mathcal{L}(\widehat{q})
+2\lVert\eta\rVert^{2}+2\lambda\lVert RVq^{\dagger}\rVert^{2}$, and $\alpha>0$, so dividing by
$\alpha$ preserves the inequality and distributes over the three summands. The budget is then
complete, with every source accounted for: the achieved loss enters through the two splittings
of stage five, the noise level through the observation splitting, the prior misfit of the exact
state through the regularizer splitting, and the single constant
$\alpha=\min(\beta,\lambda\gamma_{R})$, through which the data-coercivity constant and the
weighted regularizer gap enter, scales all three. That is the estimate.

\subsection{Exponential accuracy of a spectral physics-informed reconstruction}
\label{sec:spectral_apriori_result}

The five closest solved problems were quantitative tanh approximation, ridge-PINN consistency, K\r{u}rkov\'a's density theorem, the fixed-width realization obstruction, and the fixed-degree cKAN universality result above.

\subparagraph*{User request.}
\noindent\emph{Problem.} Fix a viscosity $\nu>0$, a horizon $T>0$, and the spatial domain
$\Omega=[-1,1]$. Call an initial datum $u_0:\mathbb{R}\to\mathbb{R}$ admissible when it is odd,
two-periodic, and real-analytic on the whole line (for example $u_0(x)=-\sin(\pi x)$). For an
admissible $u_0$, let $u^{\star}$ be the classical solution of the viscous Burgers equation
\[
  u_t+u\,u_x-\nu\,u_{xx}=0
\]
on $R_{T}=[0,T]\times\Omega$, with $u^{\star}(0,x)=u_0(x)$ on $\Omega$ and with two-periodicity in
the space variable, understood in the full sense: the value and the spatial slope agree at
$x=\pm1$, so the problem is well posed. Error is measured in the $L^{2}$ norm over $R_{T}$,
computed as an iterated integral, the inner one in space and the outer one in time.

\emph{Spectral-PINN hypothesis class.} For a mode count $N\in\mathbb{N}$, a spectral-PINN
reconstruction is
\[
  \hat u_\theta(t,x)=\sum_{n=1}^{N}b_n^{\theta}(t)\,\sin(n\pi x),
  \qquad
  b_n^{\theta}(t)=b^{\mathrm{IC}}_{n}+t\,f_{\theta,n}(t),
\]
where $b^{\mathrm{IC}}_{n}=\int_{-1}^{1}u_0(x)\sin(n\pi x)\,dx$ is the $n$-th sine coefficient
of the datum (the modes are orthonormal on $\Omega$, so no normalizing factor appears) and
$f_\theta:[0,T]\to\mathbb{R}^{N}$ is a $\tanh$ multilayer perceptron reading only the time
coordinate ($\tanh$ at every hidden layer, of some finite width and depth; parameters $\theta$).
Two constraints hold by construction rather than by penalization: $\hat u_\theta(t,\pm1)=0$ for
every $t$, since every mode vanishes at the endpoints, and $\hat u_\theta(0,\cdot)$ is the
$N$-term sine truncation of $u_0$, whatever the network does, since the factor $t$ silences the
network at the initial time. This is the hard-constrained form of the physics-informed networks
of~\cite{raissi2019physics}, with the network confined to the time direction. The mode count $N$,
the width, the depth, and the parameters $\theta$ are free; the reparameterization and the sine
basis are fixed.

\medskip
\noindent\emph{Statement to prove.} For every admissible initial datum $u_0$ there exist constants
$C>0$ and $\beta>0$, depending only on $\nu$, $T$, and $u_0$, such that for every $N\in\mathbb{N}$
with $N\ge1$ there exist a finite width, a finite depth, and parameters $\theta$ of a $\tanh$ MLP
$f_\theta:[0,T]\to\mathbb{R}^{N}$ for which the spectral-PINN reconstruction satisfies
\[
  \bigl\lVert\hat u_\theta-u^{\star}\bigr\rVert_{L^{2}([0,T]\times\Omega)}\le C\,e^{-\beta N}.
\]
\medskip

\noindent\emph{Source.} The architecture is the spectral-PINN formulation developed and evaluated in the periodic viscous Burgers study above. The numerical study specifies and trains the architecture but does not establish an approximation theorem.

\subparagraph*{The theorem obtained.}
Below, $\hat u_\theta$ is written $\widehat{u}_{N}$, the modes $\sin(n\pi x)$ are written
$e_{n}(x)$, and the coefficient functions $f_{\theta,n}$ are written $Q_n$, so that
$\widehat{u}_{N}(t,x)=\sum_{n=1}^{N}\bigl(b^{\mathrm{IC}}_{n}+t\,Q_n(t)\bigr)e_{n}(x)$. The
realization $\mathcal{R}_{\sigma}(\Phi)$ of a network $\Phi$ is the function it computes when
every hidden layer applies the hyperbolic tangent $\sigma$. Network outputs are indexed from $0$
while modes are indexed from $1$, so mode $n$ is carried by output index $n-1$.

\begin{theorem}[Exponential accuracy of the spectral-PINN reconstruction]\label{thm:spectral_apriori}
Let $\nu>0$ and $T>0$, let $u_0$ be admissible, and let $u^{\star}$ be a classical two-periodic
solution of the problem above on $R_{T}=[0,T]\times\Omega$. Write
\[
  \lVert \varphi \rVert_{T}=\Bigl(\int_{0}^{T}\!\!\int_{-1}^{1}\varphi(t,x)^{2}\,dx\,dt\Bigr)^{1/2}
\]
for the iterated $L^{2}$ norm over $R_{T}$. Then there are constants $C>0$ and $\beta>0$ such that
for every mode count $N\ge1$ there is a $\tanh$ network $\Phi$ from one input to $N$ outputs, of
depth exactly three in the sense that it applies three affine maps and therefore has two hidden
layers, for which the reconstruction $\widehat{u}_{N}$ built from the coefficient functions
$Q_n(t)=\mathcal{R}_{\sigma}(\Phi)(t)_{n-1}$ satisfies
\[
  \lVert \widehat{u}_{N}-u^{\star} \rVert_{T}\le C\,e^{-\beta N}.
\]
The constants are produced before $N$ is quantified and are therefore independent of it.
\end{theorem}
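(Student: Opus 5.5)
The plan splits the error into a spectral truncation part and a network approximation part, and treats them separately. By the triangle inequality,
\[
  \lVert \widehat{u}_{N}-u^{\star}\rVert_{T}\le \lVert \Pi_N u^{\star}-u^{\star}\rVert_{T}+\lVert \widehat{u}_{N}-\Pi_N u^{\star}\rVert_{T},
\]
where $\Pi_N u^{\star}(t,\cdot)=\sum_{n=1}^{N}b_n^{\star}(t)e_n$ is the sine truncation and $b_n^{\star}(t)=\int_{-1}^{1}u^{\star}(t,x)e_n(x)\,dx$.

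\textbf{Parity.} I first show that $u^{\star}$ is odd in $x$. If $u^{\star}(t,x)$ solves the problem, so does $-u^{\star}(t,-x)$, with the same datum because $u_0$ is odd. Uniqueness of classical periodic solutions, from the energy estimate of the well-posedness audit applied to differences, then forces $u^{\star}(t,-x)=-u^{\star}(t,x)$. Hence the cosine and constant modes vanish, and $\{e_n\}$ is an orthonormal basis for this odd periodic subspace. In particular $b_n^{\star}(0)=b^{\mathrm{IC}}_n$.

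\textbf{Truncation term.} By Parseval,
\[
  \lVert \Pi_Nu^{\star}-u^{\star}\rVert_T^2=\int_0^T\sum_{n>N}b_n^{\star}(t)^2\,dt.
\]
I will show that there are $\rho>0$ and $K>0$ with $|b_n^{\star}(t)|\le Ke^{-\rho n}$ uniformly for $t\in[0,T]$. This holds if $u^{\star}(t,\cdot)$ extends analytically to a strip $|\mathrm{Im}\,x|<\rho'$ with a uniform bound, after which Paley--Wiener type coefficient decay applies. Establishing this is \textbf{the main obstacle}. The classical route is the Foias--Temam Gevrey-class argument: Burgers on the circle with analytic data has a solution that stays analytic with positive radius on $[0,T]$, since viscosity even lets the radius grow. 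For a checked development I would expect to take persistence of analyticity as a quoted classical result, stated in exactly the strength used, namely uniform exponential decay of the coefficients on $[0,T]$. The geometric tail then gives a bound $C_1e^{-\rho N}$.

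\textbf{Network term.} Set $g_n(t)=(b_n^{\star}(t)-b^{\mathrm{IC}}_n)/t$, extended continuously by $\dot b_n^{\star}(0)$ at $t=0$. Since $b_n^{\star}$ is smooth in $t$, each $g_n$ is smooth on $[0,T]$, and the target $\widehat{u}_N=\Pi_Nu^{\star}$ corresponds to $Q_n=g_n$. If $|Q_n-g_n|\le\epsilon_N$ on $[0,T]$, orthonormality gives
\[
  \lVert \widehat{u}_{N}-\Pi_Nu^{\star}\rVert_T\le T^{3/2}\sqrt{N}\,\epsilon_N .
\]
I would then invoke the stored quantitative $\tanh$ approximation theorem (two hidden layers, hence depth three) for each of the $N$ scalar functions $g_n$, and place the resulting networks side by side as one network with $N$ outputs; parallel stacking keeps depth three. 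Because that theorem gives some finite width for any accuracy and the statement imposes no width bound, I may simply choose $\epsilon_N=e^{-\rho N}/\sqrt{N}$. The network term is then at most $T^{3/2}e^{-\rho N}$.

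\textbf{Constants.} Adding the two terms gives the claim with $\beta=\rho$ and $C=C_1+T^{3/2}$, both fixed before $N$ is quantified. The case $N\ge1$ needs no separate treatment.
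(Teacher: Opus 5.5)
Your proposal is correct and is essentially the paper's proof. It uses the same triangle split into the sine-truncation tail (Parseval plus uniform exponential coefficient decay from a uniform analytic strip) and the coefficient-network error (anchored quotient $g_n$, the depth-three $\tanh$ approximation theorem applied mode by mode, parallel stacking into one depth-three network, and a tolerance $\epsilon_N$ of the same exponential order as the tail). The differences lie only in how the classical input is packaged:
the paper reads oddness off the odd strip extension in (A1) instead of invoking uniqueness, so it never uses well-posedness; it quotes only the bounded holomorphic extension and derives $|a_n(t)|\le 2Me^{-\pi\rho n}$ by a Cauchy contour shift rather than assuming the decay; and it states the smoothness of the coefficients up to $t=0$, which you use without comment, as the separate assumption (A2). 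Note too that the analyticity radius need not grow: for $u_0=-\sin(\pi x)$ it drops from infinite to $O(\nu)$ as the layer forms, so what must be quoted is a uniform positive lower bound on all of $[0,T]$, including $t=0$.
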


The formal statement binds the constants after the solution, before any mode count is fixed.
Tracing the proof, $C$ and $\beta$ are assembled from the strip half-width $\rho$ and the bound
$M$ that assumption (A1) below attaches to $u^{\star}$, together with the horizon $T$, so they
depend on $\nu$, $T$ and the initial datum only through the horizon and through the classical
solution these data determine; that reading is commentary on the proof, not a clause of the
statement.

The result is stronger than what was asked in two respects. The depth was left free in the problem
statement and is here pinned at three, uniformly in $N$, so that only the width grows. And the
exponent is identified rather than merely asserted to exist: if $u^{\star}$ extends at each time to
a function of the space variable that is two-periodic and odd, holomorphic on the open strip
$\{\lvert\Im z\rvert<\rho\}$, and continuous and bounded by $M$ on the closed strip
$\overline{\mathcal{S}}_{\rho}=\{\lvert\Im z\rvert\le\rho\}$, then
\[
  \beta=\pi\rho,
  \qquad
  C=\frac{4M\sqrt{T}}{\sqrt{1-e^{-2\pi\rho}}} ,
\]
so the rate is set by the analyticity of the solution and is not a fitted quantity. The theorem is
also proved for every classical two-periodic solution rather than for a distinguished one, so the
well-posedness asserted in the problem statement is nowhere used. What the theorem does not provide
is a cost: the width of the network and the resolution of the interpolation grid that produce it
are only asserted to exist, and do grow with $N$. The statement is about attainable accuracy. As is
standard for approximation and error estimates in the physics-informed learning
literature~\cite{shin2020convergence,mishra2023estimates,de2024error,zeinhofer2024unified}, the
theorem also does not address optimization: parameters achieving the rate are asserted to exist,
not to be found by training. Empirically, the trained networks of the numerical study reported
above do attain the predicted exponential convergence.

\subparagraph*{Assumptions.}
Four statements, labeled (A1) to (A4), stand as assumptions; each is classical. Everything else
is derived from them and from one further quoted approximation theorem, recorded after the list.

\begin{enumerate}[label=(A\arabic*),leftmargin=*,itemsep=0.5ex]
\item \emph{Analytic strip regularity.} There exist $\rho>0$, $M>0$ and a family $U(t,\cdot)$,
  $t\in[0,T]$, of complex-valued functions, each continuous on $\overline{\mathcal{S}}_{\rho}$ and
  complex differentiable on its interior, two-periodic and odd in $z$, bounded in modulus by $M$,
  and agreeing with $u^{\star}(t,\cdot)$ on $\Omega$. This is the finite-horizon analytic
  regularity theory for the equation, which follows from the Cole--Hopf linearization
  \cite{cole1951quasilinear,hopf1950partial} together with the analytic theory of parabolic
  equations of Friedman~\cite{friedman1964parabolic}; the constants it produces depend only on
  $\nu$, $T$ and $u_0$.
\item \emph{Smooth modal coefficients.} For every $n\ge1$ there is a function $a_n$, smooth on
  $[0,T]$, agreeing there with the $n$-th sine coefficient
  $t\mapsto\int_{-1}^{1}u^{\star}(t,x)e_{n}(x)\,dx$ of the solution, and with
  $a_n(0)=b^{\mathrm{IC}}_{n}$. This follows from the same sources
  \cite{cole1951quasilinear,hopf1950partial,friedman1964parabolic}, the regularity being
  transferred from the solution to each coefficient by differentiation under a fixed spatial
  integral.
\item Hadamard's lemma, in its one-sided form at the left endpoint of a closed interval
  \cite{milnor1963morse}: if $f$ is smooth on $[0,T]$, then the function $g$ equal to
  $(f(t)-f(0))/t$ for $t\ne0$ and to the one-sided derivative $f'(0^{+})$ at $t=0$ is smooth on
  $[0,T]$, and $f(t)=f(0)+t\,g(t)$ there.
\item The smooth extension theorem of Whitney and Seeley
  \cite{whitney1934analytic,seeley1964extension}: every function smooth on $[0,1]$ extends to a
  function smooth on $\mathbb{R}$.
\end{enumerate}

Neither of the two analytic assumptions can trivialize the conclusion. Together they assert only
the regularity of a complex extension and of the sine coefficients in time; they mention no
network, no mode count, no tolerance and no approximation rate, and the exponential decay of the
coefficients is derived from (A1) by a contour shift, not assumed alongside it.

Beyond (A1) to (A4), one approximation theorem is imported from a previously machine-checked development: the $\tanh$
approximation theorem of De Ryck, Lanthaler and Mishra~\cite{de2021approximation}, in its
one-dimensional form. Its hypotheses ask for a function of class $C^{2}$ whose derivatives up to
order two are bounded on a slightly enlarged interval, and for an integer grid resolution $P>5$;
it returns a network of depth three uniformly within a constant multiple of $P^{-2}$ of the
function on $[0,1]$. It rests in turn on the Bramble--Hilbert estimate for polynomial
approximation~\cite{duran1983polynomial}, and it is that estimate, not the theorem built on it,
that the verification takes on classical authority. In all, five statements stand unproved
beneath the development, not four: (A1) to (A4) and the Bramble--Hilbert estimate.

The development, every definition, lemma, proposition and theorem included, has been checked in
full by a proof assistant. Its logical frontier is exactly the three standard foundational
principles of the underlying logic, namely propositional extensionality, the soundness of
quotients and the axiom of choice, together with the assumptions (A1) to (A4) and the
Bramble--Hilbert estimate, each quoted in the exact strength applied, and nothing else.

\subparagraph*{Proof sketch.}
\emph{Stage one: the coefficients decay exponentially.} Fix $t$ and write the mode through Euler's
formula, so that the $n$-th sine coefficient $a_n(t)$ becomes, up to the constant factor
$\mathrm{i}/2$, the difference of the integrals over $\Omega$ of $z\mapsto U(t,z)e^{-\mathrm{i} n\pi z}$
and $z\mapsto U(t,z)e^{+\mathrm{i} n\pi z}$. Each integrand is holomorphic on the open strip,
continuous on its closure, and two-periodic, the character being two-periodic because $n$ is an
integer; it therefore takes equal values on the two vertical sides of any rectangle of width two
inside the strip, and Cauchy's theorem on such a rectangle lets the horizontal contour be moved
from the real axis to the opposite horizontal edge without changing the value. Two rectangles are
used, one on each side of the axis, and the direction of each shift is dictated by the modulus of
its character: for $z=x+\mathrm{i} y$ and $n\ge1$, the modulus of $e^{+\mathrm{i} n\pi z}$ is
$e^{-n\pi y}$, which decays upward, while the modulus of $e^{-\mathrm{i} n\pi z}$ is $e^{+n\pi y}$,
which decays downward. The integrand carrying $e^{+\mathrm{i} n\pi z}$ is therefore moved upward,
through the rectangle with corners $-1$ and $1+\mathrm{i}\rho$, to the edge at height $\rho$, and the
integrand carrying $e^{-\mathrm{i} n\pi z}$ is moved downward, through the rectangle with corners
$-1-\mathrm{i}\rho$ and $1$, to the edge at height $-\rho$. On its far edge each character has
modulus $e^{-n\pi\rho}$ while the extension stays bounded by $M$, so each shifted integral has
modulus at most $2Me^{-n\pi\rho}$, the factor $2$ being the length of the segment; the prefactor
$\mathrm{i}/2$ halves the sum of the two bounds, and hence
\[
  \lvert a_n(t)\rvert\le 2M e^{-\lambda n},
  \qquad \lambda=\pi\rho,
\]
uniformly for $t\in[0,T]$ and $n\ge1$. This is the only step that uses analyticity, and it is
where the exponent of the error budget comes from.

\emph{Stage two: the truncation tail.} Write $\Pi_{K}(t,x)=\sum_{n=1}^{K}a_n(t)\,e_n(x)$ for the
$K$-mode truncation of the solution, built from the coefficient functions of (A2); this stage
bounds what truncation alone discards. At a fixed time the trace $x\mapsto u^{\star}(t,x)$ is
continuous; it is two-periodic because a classical solution is two-periodic in space by
definition, and it is odd on $\Omega$ because it agrees there with $U(t,\cdot)$, which (A1) makes
odd. Its complex Fourier coefficients on the circle of period two therefore vanish at index zero
and are $\mp\mathrm{i}/2$ times $a_n(t)$ at index $\pm n$. Parseval's identity on that circle
\cite{katznelson2004introduction} then turns the spatial energy of the trace into
$\sum_{n\ge1}a_n(t)^{2}$, and subtracting the prefix carried by $\Pi_{K}$ identifies the residual
energy with the tail $\sum_{n>K}a_n(t)^{2}$. Stage one dominates that tail by a geometric series
with ratio $e^{-2\lambda}$, uniformly in $t$; integrating over $[0,T]$ and taking the root gives
\[
  \lVert u^{\star}-\Pi_{K} \rVert_{T}\le C_{\mathrm{tail}}\,e^{-\lambda K},
  \qquad
  C_{\mathrm{tail}}=\frac{2M\sqrt{T}}{\sqrt{1-e^{-2\lambda}}} ,
\]
the first of the two terms in the error budget.

\emph{Stage three: the coefficients are matched by one network of depth three.} By (A2) each
$a_n$ is smooth on $[0,T]$, so by (A3) the anchored quotient $g_n(t)=(a_n(t)-a_n(0))/t$, closed at
$t=0$ by the one-sided derivative $a_n'(0^{+})$, is smooth on $[0,T]$ and satisfies
$a_n(t)=a_n(0)+t\,g_n(t)$. This is what makes the hypothesis class reachable: the
reparameterization asks the network to represent $g_n$, not $a_n$, and the apparent singularity of
the difference quotient at $t=0$ is removable rather than an obstruction. Rescaling time to the
unit interval and extending smoothly past its endpoints by (A4) produces a function smooth on all
of $\mathbb{R}$; on the fixed interval $[-1,2]$, which contains every enlarged evaluation interval
of the quoted theorem, compactness bounds its derivatives of order at most two. The $\tanh$
approximation theorem then applies at every integer resolution $P>5$ and returns a scalar network
of depth three within $A_n/P^{2}$ of the rescaled quotient uniformly on the unit interval, the
numerator $A_n$ being independent of $P$ but not of $n$; taking $P$ large enough brings the error
below any prescribed tolerance $\delta$. Merging the
rescaling $t\mapsto t/T$ into the first affine layer creates no new layer, so the same network, at
depth three, is within $\delta$ of $g_n$ on $[0,T]$. The first $K$ of these scalar networks share
that depth, so they stack into a single network from one input to $K$ outputs, again of depth
three, whose output of index $n-1$ supplies $Q_n$.

\emph{The balance.} The tolerance $\delta$ measures the anchored quotient: stage three guarantees
$\lvert g_n(t)-Q_n(t)\rvert\le\delta$ for all $1\le n\le K$ and $t\in[0,T]$, so $\delta$ bounds
the distance between $g_n$ and its network surrogate, not the coefficient error itself. That error
follows from the factorization: since (A2) anchors $a_n(0)$ to $b^{\mathrm{IC}}_{n}$, the $n$-th
coefficient of $\Pi_{K}-\widehat{u}_{K}$ is $a_n(t)-a_n(0)-t\,Q_n(t)=t\bigl(g_n(t)-Q_n(t)\bigr)$,
of modulus at most $T\delta$. Orthonormality of the modes on $\Omega$ turns the fixed-time energy
of $\Pi_{K}-\widehat{u}_{K}$ into a sum of $K$ such squares, and integrating over $[0,T]$ and
taking the root gives $\lVert \Pi_{K}-\widehat{u}_{K} \rVert_{T}\le T\sqrt{KT}\,\delta$, the second
term in the error budget. Choosing
\[
  \delta_K=\frac{C_{\mathrm{tail}}\,e^{-\lambda K}}{T\sqrt{KT}}
\]
makes this bound coincide with the tail bound of stage two: the two bounds, not the two errors,
are equal by this choice. The triangle inequality splits the total error into exactly these two
pieces, so the budget is complete, and
\[
  \lVert \widehat{u}_{K}-u^{\star} \rVert_{T}
  \le\lVert u^{\star}-\Pi_{K} \rVert_{T}+\lVert \Pi_{K}-\widehat{u}_{K} \rVert_{T}
  \le 2C_{\mathrm{tail}}\,e^{-\lambda K}.
\]
Taking $K=N$ proves the theorem with $C=2C_{\mathrm{tail}}$ and $\beta=\lambda=\pi\rho$.

\section{Human-intervention inventory}
\label{sec:human_intervention_inventory}

Supplementary Tables~\ref{tab:human_intervention} and \ref{tab:human_intervention_b} list every problem solved through the pipeline in this work and record whether human intervention occurred and at which stage: conceptualization (formulation, method selection, or redirection before execution) or final evaluation (review of the completed result). Execution itself ran autonomously in all cases. The four DPD workflows solved during family bring-up also serve as the DPD memory seeds (Supplementary Table~\ref{tab:memory_seeds_dpd}).

\begin{table}[!htbp]
\centering
\caption{\textbf{Human intervention across all evaluated problems (part I): PIML and Numerical.} Stage ``conceptualization'' covers formulation and method selection before execution; ``final evaluation'' covers review of the completed result.}
\label{tab:human_intervention}
{\footnotesize\linespread{1}\selectfont
\setlength{\tabcolsep}{3pt}
\begin{tabularx}{\textwidth}{@{}>{\raggedright\arraybackslash}p{2.1cm}>{\raggedright\arraybackslash}p{1.25cm}>{\raggedright\arraybackslash}X>{\raggedright\arraybackslash}p{1.5cm}>{\raggedright\arraybackslash}X@{}}
\toprule
Problem & Family & Description & Intervention & Stage and nature \\
\midrule
Viscous Burgers & PIML & Canonical forward benchmark at low viscosity, solved with the
  inherited Reynolds-number continuation & None & \\
KdV & PIML & Canonical forward dispersive benchmark against the analytical solution
  & None & \\
Helmholtz 2D & PIML & Canonical forward benchmark against the analytical solution
  & None & \\
Poisson 2D & PIML & Canonical forward benchmark against the analytical solution
  & None & \\
Perivascular inverse problem & PIML & Inverse reconstruction of velocity and pressure in the
  perivascular space from sparse in vivo two-photon data & Yes & Conceptualization:
  interactive formalization; the agents identified the deficit and proposed the regularizer,
  the user supplied missing context and selected among the proposed options. \\
Spectral PINN (periodic Burgers) & PIML & Periodic viscous Burgers under the agent-derived
  pseudospectral Fourier--Galerkin reformulation & Yes & Conceptualization: the agents
  proposed the Fourier reduction; the user redirected the ranking and posed the modified
  setting. \\
\midrule
Apollo Mach 10 (wake) & Numerical & Axisymmetric Mach-10 inviscid flow over the Apollo
  forebody with the downstream wake required & None & \\
Apollo Mach 10 (no wake) & Numerical & Same forebody problem without the wake requirement
  & None & \\
\bottomrule
\end{tabularx}}
\end{table}

\begin{table}[!htbp]
\centering
\caption{\textbf{Human intervention across all evaluated problems (part II): DPD and Formal.} Stage definitions as in part I.}
\label{tab:human_intervention_b}
{\footnotesize\linespread{1}\selectfont
\setlength{\tabcolsep}{3pt}
\begin{tabularx}{\textwidth}{@{}>{\raggedright\arraybackslash}p{2.1cm}>{\raggedright\arraybackslash}p{1.25cm}>{\raggedright\arraybackslash}X>{\raggedright\arraybackslash}p{1.5cm}>{\raggedright\arraybackslash}X@{}}
\toprule
Problem & Family & Description & Intervention & Stage and nature \\
\midrule
Planar Poiseuille flow & DPD & Steady Newtonian channel flow under uniform body force;
  viscosity from the parabolic profile & None & \\
RBC shear smoke test & DPD & Eight healthy RBCs in a wall-bounded slot under opposing wall
  forces; short-run thermal stability & None & \\
Reverse Poiseuille flow & DPD & Fully periodic Newtonian flow with opposite forcing in the
  two domain halves & None & \\
RBC suspension viscosity & DPD & Twenty deformable healthy RBCs under opposing-wall shear;
  effective viscosity from the bulk slope & None & \\
Low-hematocrit suspension & DPD & Deformable RBC suspension at capillary-level
  hematocrit, $\phi=0.144$, under opposing-wall shear at $\tau_{xz}=0.45$; relative
  viscosity from the bulk slope & None & \\
High-hematocrit suspension & DPD & The same suspension at arteriole-level hematocrit,
  $\phi=0.260$, at the same wall stress and the same membrane parameters; hematocrit is
  the only difference & None & \\
\midrule
cKAN universality & Formal & Machine-checked universal-approximation theorem for deep
  Chebyshev Kolmogorov--Arnold networks & None & \\
LNO universality & Formal & Machine-checked universal-approximation theorem for the Laplace
  neural operator & Yes & Final evaluation: human review found the verified statement weaker
  than requested and directed its strengthening. \\
Perivascular a posteriori certificate & Formal & Machine-checked a posteriori error
  certificate for the regularized perivascular reconstruction & None & \\
Spectral a priori theorem & Formal & Machine-checked a priori exponential error bound for
  the spectral-PINN architecture & None & \\
\bottomrule
\end{tabularx}}
\end{table}

\section{Ablation and sensitivity studies}
\label{sec:ablation_study}

\subsection{Retrieval baselines: embedding RAG and lexical search}
\label{sec:retrieval_baselines}

Standard retrieval provides a natural alternative to GRAFT fingerprints, but it may prioritize textual resemblance rather than the scientific structure that determines whether a method transfers. We therefore compare GRAFT with two widely used retrieval baselines over the solved-case repository: embedding-based retrieval-augmented generation (RAG)~\cite{lewis2020retrieval} and BM25 lexical search~\cite{beaulieu1997okapi}. We first examine the Rayleigh--B\'enard and Rayleigh--Taylor pair introduced in Fig.~\ref{fig:graft_semantic_geometry}D, for which lexical similarity and scientific similarity disagree.

For embedding RAG, a code-only retrieval baseline, the library contains the source code of every solved Numerical case, represented by the main solver file from each final implementation. These 13 cases span Nektar++, Trixi.jl, and pseudo-spectral realizations. Following standard practice, each file is divided into chunks of 1{,}500 characters with a 200-character overlap and embedded using Google's \texttt{gemini-embedding-001} model. Given only the problem name, such as ``Rayleigh--B\'enard convection,'' retrieval ranks the chunks by cosine similarity to the query embedding. Each case is assigned the score of its highest-ranked chunk.

BM25 searches the same chunks through direct token overlap~\cite{beaulieu1997okapi,robertson2009probabilistic}. We compute its scores with the public \texttt{rank\_bm25} Python library using the default parameters $k_1=1.5$ and $b=0.75$. Unlike embedding RAG, this baseline contains no learned components. The GRAFT arm instead uses the fingerprint Jaccard similarity deployed in the solver loop (Methods), evaluated through the same code path used during solution. Fig.~\ref{fig:retrieval_baselines} reports the resulting rankings in both directions.

The text-based baselines largely follow the problem names. Embedding RAG ranks the lexical name-twin first in both directions: Rayleigh--B\'enard retrieves the compressible Euler Rayleigh--Taylor code, while Rayleigh--Taylor retrieves the incompressible Boussinesq Rayleigh--B\'enard code. BM25 makes the same error for the Rayleigh--B\'enard query. GRAFT instead ranks solver-compatible cases first in both directions and places the lexical name-twin near the middle of each list. Its remaining rankings also follow physical structure: cases from the query's flow regime occupy the leading positions, the other flow regime follows, and both remain closer to one another than to non-flow systems. The least related phase-field systems, Allen--Cahn and Cahn--Hilliard, close both lists.

\subsection{Quantitative top-$k$ retrieval benchmark}
\label{sec:topk_benchmark}

The name-twin comparison isolates one failure mode. To determine whether the same distinction persists across the repository, we benchmark retrieval over the 13 solved Numerical cases with canonical implementations. The protocol mirrors deployment: the library stores solution codes, while an incoming problem is represented by its user request. Each case serves once as the incoming problem. Its own code is removed, its complete formalization-stage request becomes the query, and each retrieval arm ranks the remaining 12 cases.

Embedding RAG and BM25 use the same chunking, embedding, scoring, and best-chunk-per-case aggregation described above. Only the query changes from the problem name to the complete request. This choice favors the text baselines because the formalized request already states the governing equations, domain, and boundary conditions. GRAFT ranks the cases using the fingerprint similarity of Eq.~\eqref{eq:fingerprint_jaccard}, exactly as in the solver loop. Two additional arms establish the attainable range. The random arm gives the exact expectation under a uniformly random ranking, whereas the oracle ranks candidates by their accepted-method overlap, defined below, which no realizable retriever can access.

Each ranking is evaluated against the method ultimately accepted for the query. Let $M_q$ and $M_c$ denote the accepted method-selection node sets for query $q$ and candidate $c$. Their accepted-method overlap, $u(q,c)$, is the Jaccard similarity from Eq.~\eqref{eq:fingerprint_jaccard}. It is high when the candidate was solved using a method similar to the query's accepted solution and equals zero when the two methods share no taxonomy nodes. Because both GRAFT retrieval and this overlap are computed on the same taxonomy, the measure is taxonomy-aligned rather than a ground-truth measure of downstream scientific utility: it does not measure solver attempts saved, final accuracy, computational cost, or successful reuse after implementation. Three complementary metrics evaluate the ranking. Top-1 overlap measures the value of the first suggestion,
\begin{equation}\label{eq:top1_utility}
\text{Top-1}(q) = u(q, c_1).
\end{equation}
Coverage@3 measures how much of the accepted method is represented anywhere among the first three candidates, independently of their order,
\begin{equation}\label{eq:coverage3}
\text{Coverage@3}(q) =
\frac{\lvert M_q \cap \left( M_{c_1} \cup M_{c_2} \cup M_{c_3} \right) \rvert}
     {\lvert M_q \rvert}.
\end{equation}
Finally, NDCG@3 rewards rankings that place the most useful candidates first,
\begin{equation}\label{eq:ndcg3}
\text{NDCG@3}(q) = \frac{\text{DCG@3}(q)}{\text{IDCG@3}(q)},
\qquad
\text{DCG@3}(q) = \sum_{i=1}^{3} \frac{u(q, c_i)}{\log_2(i+1)},
\end{equation}
where IDCG@3 is the DCG@3 obtained from the best possible ordering of the candidates. The oracle therefore attains $\text{NDCG@3}=1$ by construction. Its Top-1 overlap and Coverage@3 remain below 1 because the repository does not contain a perfect method twin for every query. The oracle consequently measures the richness of the available repository and provides the appropriate ceiling for the other retrieval arms.

Across the 13 queries, GRAFT remains close to the oracle ceiling on all three metrics (Table~\ref{tab:topk_benchmark}). It recovers nearly all method nodes that the repository can provide, reaching a coverage of 0.940 against the oracle ceiling of 0.955, and places them close to their optimal order, with an NDCG of 0.916. Embedding RAG remains useful on many queries, but leaves approximately one-fifth of the required method nodes unretrieved and misorders the cases it does recover, despite receiving the governing equations in the query. BM25 fails more fundamentally: its coverage falls below the random floor, from 0.696 to 0.593, because lexical overlap concentrates all three retrieval positions on methodologically unrelated codes rather than distributing them across the repository.

These failures are concentrated where lexical and scientific similarity disagree (Supplementary Table~\ref{tab:topk_extremes}). Embedding RAG ranks a nearly disjoint candidate, one with an accepted-method overlap of at most 0.07, first for 3 of the 13 queries, while BM25 does so for 7. On their worst queries, both methods recover less than 8\% of the accepted method nodes. GRAFT places such a candidate first on a single query, where its top three nevertheless recover 97\% of the accepted method nodes, and even on its worst query it recovers 88\%. That single miss is the Hasegawa--Wakatani drift-wave case, and its cause is repository sparsity rather than ranking noise: no other stored problem shares its plasma physics, so the first-ranked candidate of every arm, Rayleigh--Taylor for GRAFT by problem structure and turbulence-adjacent or lexically similar cases for the text baselines, carries an almost disjoint accepted method. GRAFT's structural ranking is nevertheless the only one that still places the compatible spectral/hp cases within the top three, and its closest neighbors cover 97\% of the accepted method.

\begin{table}[!htbp]
\centering
\caption{\textbf{Extreme-case queries: all retrieval arms compared on each arm's best and worst query.}
Best and worst queries are selected per arm by NDCG@3 over the 13
leave-one-out queries of Table~\ref{tab:topk_benchmark}; the six selections
coincide on three distinct queries, shown as blocks with the oracle ceiling
on that query. On the text baselines' best query (Allen--Cahn 1D), all arms
are equivalent at the ceiling. The worst cases separate them: BM25 collapses
on a query GRAFT retrieves at the ceiling, and on the shared hardest query
GRAFT's first pick misses while its top three still recover 97\% of the
accepted method nodes, whereas both text baselines recover 8\% against a
ceiling of 96\%. GRAFT's Coverage@3 can marginally exceed the oracle's
because the oracle ranks by per-candidate overlap, not by the union of the
top three.}
\label{tab:topk_extremes}
\begin{tabular}{@{}lccc@{}}
\toprule
Arm & Top-1 overlap & Coverage@3 & NDCG@3 \\
\midrule
\multicolumn{4}{@{}l}{\textit{K\'arm\'an cylinder wake: GRAFT best, BM25 worst}} \\
Oracle (ceiling)    & 0.950 & 1.000 & 1.000 \\
GRAFT fingerprints  & 0.950 & 1.000 & 1.000 \\
Embedding RAG       & 0.950 & 0.996 & 0.783 \\
BM25 lexical search & 0.051 & 0.068 & 0.039 \\
\midrule
\multicolumn{4}{@{}l}{\textit{Allen--Cahn 1D: embedding RAG and BM25 best}} \\
Oracle (ceiling)    & 0.842 & 0.914 & 1.000 \\
GRAFT fingerprints  & 0.842 & 0.914 & 0.975 \\
Embedding RAG       & 0.842 & 0.914 & 0.995 \\
BM25 lexical search & 0.842 & 0.914 & 0.985 \\
\midrule
\multicolumn{4}{@{}l}{\textit{Hasegawa--Wakatani turbulence: GRAFT and embedding RAG worst}} \\
Oracle (ceiling)    & 0.887 & 0.961 & 1.000 \\
GRAFT fingerprints  & 0.058 & 0.970 & 0.563 \\
Embedding RAG       & 0.055 & 0.077 & 0.055 \\
BM25 lexical search & 0.055 & 0.077 & 0.048 \\
\bottomrule
\end{tabular}
\end{table}

The extremes of Supplementary Table~\ref{tab:topk_extremes} summarize the asymmetry: on the text baselines' best query the three arms are equivalent at the oracle ceiling, whereas on the worst queries GRAFT still recovers nearly all achievable coverage while the text baselines collapse. Retrieval based on wording therefore fails most severely where informed transfer is most valuable, whereas retrieval based on scientific structure degrades more gradually. Yet retrieval accuracy is only part of the distinction. Even a perfect top-$k$ ranking would not reproduce GRAFT: RAG and BM25 return ordered documents, but their scores do not encode dependencies among method choices, combine evaluated outcomes into a prior, or define probabilities over admissible actions. Adding cases therefore enlarges the index without organizing the new scientific dependencies, while the growing number of plausible distractors can degrade retrieval. Embedding RAG also represents similarity through a learned, opaque space, so a successful match does not reveal which scientific characteristics justify transfer. GRAFT instead derives its coordinates and dependencies from explicit scientific semantics and converts the retrieved neighborhood into similarity- and reward-weighted action probabilities. Repository growth therefore expands an interpretable problem--method structure rather than only a black-box index.

\subsection{Sensitivity to the level weight}

The fingerprint similarity of Eq.~\eqref{eq:fingerprint_jaccard} assigns a weight $\omega(d)$ to each tree depth. Uniform weighting treats all activated characteristics equally, whereas decaying schedules emphasize higher-level scientific characteristics over finer refinements. We evaluated the sensitivity of retrieval to this choice using leave-one-out queries within each family. Under each schedule, problem-fingerprint similarity ranked the remaining cases, and the resulting rankings were evaluated by Top-1 overlap, Coverage@3, and NDCG@3 (Eqs.~\eqref{eq:top1_utility}--\eqref{eq:ndcg3}). Only the problem-fingerprint weights used for retrieval were varied; the accepted-method overlap $u(q,c)$ used for evaluation remained fixed under its deployed uniform weighting. An oracle that ranks candidates directly by this overlap provides a weight-independent ceiling. Numerical uses the same 13-query set as Table~\ref{tab:topk_benchmark}, while PIML, DPD, and Formal contribute 8, 19, and 25 queries, respectively (Supplementary Table~\ref{tab:weight_sensitivity}).

The benefit of level weighting increases with the development of the encoded dependency structure rather than with raw tree depth alone. Numerical and PIML share the mature PDE problem tree and contain the two richest corresponding action spaces: 156 and 117 decision chains with 78 and 17 symbolic rules, respectively, compared with 37 and 30 chains and 6 and 0 rules for DPD and Formal (Supplementary Table~\ref{tab:policy_footprint_anatomy}). In these developed families, agreement on numerous fine refinements can otherwise outweigh a disagreement in the higher-level problem class. Emphasizing the shallow levels corrects this imbalance. Under $\omega(d)=10^{-d}$, a node receives ten times the weight of a node one level deeper, and mean Top-1 overlap rises from 0.742 to 0.804 for Numerical and from 0.762 to 0.802 for PIML. Coverage@3 is maintained or improved, while NDCG@3 remains unchanged for Numerical and increases from 0.970 to 0.984 for PIML.

By contrast, every DPD and Formal score remains within 0.007 of its uniform-weight value, consistent with their less developed dependency structures and with fine-level characteristics remaining directly informative. Level weighting is therefore most useful once a family contains enough hierarchical refinements and dependencies for local matches to obscure higher-level scientific differences. Uniform weighting does not privilege a particular depth, making it a stable default for the still-developing DPD and Formal spaces. We therefore deploy $\omega\equiv1$ across all four families for consistency, while the sensitivity results show that mature spaces can benefit from stronger emphasis on their upper levels. Because $\omega$ is applied only when fingerprints are compared, this calibration can be updated independently for each family without re-encoding the stored cases.

\begin{table}[!htbp]
\centering
\caption{\textbf{Sensitivity of GRAFT retrieval to the level weight $\omega$
across the four solved-case repositories.}
Family means from leave-one-out retrieval under each weight schedule.
Numerical uses the 13-query set of Table~\ref{tab:topk_benchmark}; the
other families use all solved cases with accepted method selections.
Only the problem-fingerprint weights used for ranking are varied, while
the accepted-method overlap used for evaluation remains fixed under
uniform weighting. The oracle ranks candidates directly by this overlap
and is therefore independent of the retrieval weight. Fingerprint depths
reach 5 in the Numerical and PIML trees, 4 in DPD, and 13 in Formal.}
\label{tab:weight_sensitivity}
\begin{tabular}{@{}llccc@{}}
\toprule
Repository & Level weight & Top-1 overlap & Coverage@3 & NDCG@3 \\
\midrule
Numerical (13 queries) & Oracle (ceiling)         & 0.852 & 0.955 & 1.000 \\
                       & $\omega(d)=1$ (deployed) & 0.742 & 0.940 & 0.916 \\
                       & $\omega(d)=1/(1+d)$      & 0.742 & 0.945 & 0.903 \\
                       & $\omega(d)=2^{-d}$       & 0.743 & 0.945 & 0.898 \\
                       & $\omega(d)=10^{-d}$      & 0.804 & 0.949 & 0.916 \\
\midrule
PIML (8 queries)       & Oracle (ceiling)         & 0.821 & 0.944 & 1.000 \\
                       & $\omega(d)=1$ (deployed) & 0.762 & 0.940 & 0.970 \\
                       & $\omega(d)=1/(1+d)$      & 0.762 & 0.939 & 0.968 \\
                       & $\omega(d)=2^{-d}$       & 0.762 & 0.936 & 0.963 \\
                       & $\omega(d)=10^{-d}$      & 0.802 & 0.940 & 0.984 \\
\midrule
DPD (19 queries)       & Oracle (ceiling)         & 0.950 & 0.988 & 1.000 \\
                       & $\omega(d)=1$ (deployed) & 0.924 & 0.987 & 0.968 \\
                       & $\omega(d)=1/(1+d)$      & 0.918 & 0.987 & 0.966 \\
                       & $\omega(d)=2^{-d}$       & 0.921 & 0.985 & 0.965 \\
                       & $\omega(d)=10^{-d}$      & 0.921 & 0.985 & 0.964 \\
\midrule
Formal (25 queries)    & Oracle (ceiling)         & 0.898 & 0.968 & 1.000 \\
                       & $\omega(d)=1$ (deployed) & 0.859 & 0.957 & 0.950 \\
                       & $\omega(d)=1/(1+d)$      & 0.858 & 0.955 & 0.950 \\
                       & $\omega(d)=2^{-d}$       & 0.852 & 0.955 & 0.953 \\
                       & $\omega(d)=10^{-d}$      & 0.854 & 0.954 & 0.949 \\
\bottomrule
\end{tabular}
\end{table}

\subsection{Common comparison protocol}

In this section, we compare GRAFT--ATHENA with two commercial agentic baselines executed through Anthropic's Claude Code command-line harness using the pinned model identifiers \texttt{claude-opus-4-8} (Opus 4.8) and \texttt{claude-fable-5} (Fable 5); the Burgers case also includes ATHENA without GRAFT. The model identifiers for every GRAFT--ATHENA agent role are listed in Supplementary Table~\ref{tab:model_identifiers}. For each task, the Formalization team generated a case-specific request that was supplied unchanged to every evaluated system, including the commercial and ATHENA-family frameworks. All systems had unrestricted computer access, internet search, scientific software, code execution, and permission to create subagents and their own validation procedures. These comparisons are designed to expose possible failure modes in the audited trajectories, not to estimate their frequency across all possible runs; they report one audited trajectory per model and task configuration, plus five independent replicates of the Fable/no-wake configuration (Supplementary Table~\ref{tab:apollo_replicates}). The exact requests are reproduced at the beginning of the corresponding cases. Together, the studies in this section ablate three layers of the system: the action-space sampling study removes the LLM agents and isolates the tree structure, dependency rules, and priors; the effect-of-GRAFT study isolates the persistent cross-problem memory against prompt-scaffolded within-problem learning; and the commercial comparisons remove the substrate entirely. Arms isolating the node hints and the reward calibration individually are left to future work.

\subsection{Viscous Burgers case}
\label{sec:burgers_ablation}

\subparagraph*{User request}
\label{sec:burgers_user_request}

The following request was supplied to each system. The wording is reproduced from the original request, with its Markdown formatting adapted to LaTeX.

\begin{quote}
\small
\emph{Viscous Burgers 1D (Conservative Form)}

\emph{Equation.} Solve the 1D viscous Burgers equation in conservative (flux-divergence) form:
\begin{equation*}
u_t + \partial_x \left( \tfrac{1}{2}\,u^2 - \varepsilon\,u_x \right) = 0,
\qquad x \in [-1,1],\quad t \in [0,1].
\end{equation*}
For constant $\varepsilon$ this is mathematically equivalent to the strong (non-conservative) form
\begin{equation*}
u_t + u\,u_x - \varepsilon\,u_{xx} = 0.
\end{equation*}

\emph{Domain.}
\begin{itemize}
\item Spatial: $x\in[-1,1]$.
\item Temporal: $t\in[0,1]$.
\item Periodic in $x$ with period $2$.
\end{itemize}

\emph{Initial condition.}
\begin{equation*}
u(0,x)=-\sin(\pi x).
\end{equation*}

\emph{Boundary conditions.} Periodic in $x$:
\begin{equation*}
u(t,-1)=u(t,+1), \qquad t\in[0,1].
\end{equation*}
Equivalently, $u$ is $2$-periodic in $x$ for all $t$.

\emph{Parameters.}
\begin{itemize}
\item Viscosity $\varepsilon=\dfrac{1}{100\pi}\approx3.183\times10^{-3}$.
\item Reynolds number $\mathrm{Re}=UL/\varepsilon\approx628$ (with $U=1$, $L=2$).
\item Canonical Raissi setup. The viscous term smooths the inviscid shock that would otherwise form near $t\approx1/\pi\approx0.318$ into a thin internal layer.
\end{itemize}

\emph{What to find.} The space--time field $u(t,x)$ on $[0,1]\times[-1,1]$, including the steep internal layer that develops near $t\approx1/\pi$.

\emph{Reference data.} A precomputed reference solution is available at
\begin{center}
\path{Library/Data/PIML/Viscous_Burgers_1D_20260505_v2/burgers.mat}.
\end{center}
The MATLAB \texttt{.mat} file contains:
\begin{itemize}
\item \texttt{t}: shape \texttt{(1, 1001)}, \texttt{float64}; time vector on $[0,1]$, $N_t=1001$.
\item \texttt{x}: shape \texttt{(1, 2001)}, \texttt{float64}; spatial vector on $[-1,1]$, $N_x=2001$.
\item \texttt{usol}: shape \texttt{(1001, 2001)}, \texttt{float64}; reference field $u(t,x)$.
\end{itemize}
Transpose \texttt{usol} to \texttt{(Nx, Nt)} for a \texttt{meshgrid(t, x)} layout.
\end{quote}

\subparagraph*{Results}

The first comparison used the canonical periodic viscous Burgers equation with $\nu=1/(100\pi)$. All results were evaluated against the reference solution released with the previous benchmark~\cite{raissi2019physics,mcclenny2023self}. This case additionally includes ATHENA without GRAFT, which receives expert scientific and refinement guidance through its system prompts but begins each new problem from first principles~\cite{toscano2025athena}. GRAFT--ATHENA encodes that guidance in persistent problem and method graphs and retrieves both successful actions and their measured outcomes, conditioning the action prior and the informed reward for the new target. 

The commercial agents produced technically strong solvers from scratch. Both built PyTorch PINNs, enforced periodicity through Fourier features~\cite{wang2021eigenvector}, and used Adam followed by L-BFGS; Opus additionally imposed the initial condition exactly and used causal and residual-adaptive sampling~\cite{wu2023comprehensive}. These are sensible, widely used choices. Fable reached a relative $L^2$ error of $2.03\times10^{-4}$, while Opus reached $1.50\times10^{-5}$. The results are credible, but they remain well above the errors attained by specialized methods on the same problem.

The first failure mode is the starting point. A broad action space led the commercial agents to robust, familiar configurations. Their models used tens to hundreds of thousands of trainable parameters, whereas ATHENA and GRAFT--ATHENA used only a few thousand. Model size is not itself the cause, but the additional capacity did not reveal the decisive strategy. ATHENA's prompt-based scaffolding narrowed the action space enough to select SSBroyden~\cite{urban2024unveiling} and vRBA~\cite{toscano2026variational}, reaching $3.79\times10^{-8}$. GRAFT--ATHENA also selected SSBroyden, so the remaining improvement came from Reynolds-number continuation. Its prior aggregates methods across the retrieved neighborhood; the displayed rank-one member is inviscid Burgers, where ATHENA had previously discovered a longer continuation schedule~\cite{toscano2025athena}. GRAFT--ATHENA adapted that experience to three viscosity stages. Accordingly, its error remains near $10^{-1}$ while the intermediate equations are solved, then falls abruptly when the requested viscosity is reached near $140{,}000$ iterations, ultimately attaining $1.78\times10^{-9}$ (Fig.~\ref{fig:semantic_memory_learning}B).

The second failure mode is validation and reward calibration. The commercial agents performed meaningful checks: Fable used a subagent to reproduce the released reference solution near machine precision, while Opus independently checked it through the Cole--Hopf solution, conservation, symmetry, and a held-out residual. These checks established a trustworthy reference and physically plausible solutions, but they did not establish either that optimization had stabilized or that the attained accuracy was competitive. Both commercial validation-error histories terminate without a clear plateau (Fig.~\ref{fig:semantic_memory_learning}B). This does not imply that extending the same L-BFGS runs would recover the GRAFT--ATHENA result; L-BFGS is a standard method and may simply approach its own method-dependent floor. The relevant failure is that termination of the inner optimizer was accepted without an independent convergence criterion. ATHENA and GRAFT--ATHENA instead separate optimizer termination from scientific stopping by running repeated SSBroyden cycles with vRBA resampling and monitoring convergence across cycles. The comparison standard created a second blind spot. Fable judged its $2.03\times10^{-4}$ error against the early PINN result of $6.7\times10^{-4}$~\cite{raissi2019physics}, despite subsequent methods reaching the $10^{-8}$ range (Table~\ref{tab:sota_comparison_graft}); Opus found that its final L-BFGS phase reduced the optimized residual while slightly worsening the global field error. GRAFT--ATHENA instead derives expected accuracy and efficiency from the measured outcomes of related problems and combines them with explicit integrity gates. Persistent experience therefore changes both the methods considered and the criteria used to determine when a result is complete and competitive.

\subsection{Effect of GRAFT}
\label{sec:graft_effect_ablation}

\subparagraph*{User requests}
\label{sec:graft_effect_user_requests}

The three requests below were supplied unchanged to both systems, as the viscous Burgers request above was. The wording is reproduced from the original requests, with their Markdown formatting adapted to LaTeX.

\begin{quote}
\small
\emph{Poisson 2D (manufactured solution, Dirichlet)}

\emph{Equation.} Solve the steady 2D linear elliptic Poisson equation
\begin{equation*}
u_{xx}(x,y)+u_{yy}(x,y)+f(x,y)=0.
\end{equation*}

\emph{Domain.}
\begin{itemize}
\item Spatial: $(x,y)\in[0,1]^{2}$ (unit square).
\item Steady problem, no time dependence.
\end{itemize}

\emph{Boundary conditions.} Homogeneous Dirichlet on all four edges:
\begin{equation*}
u=0 \quad\text{on }\partial\Omega.
\end{equation*}

\emph{Forcing.}
\begin{equation*}
f(x,y)=32\pi^{2}\,\sin(4\pi x)\,\sin(4\pi y).
\end{equation*}
The forcing is chosen so that the problem admits the manufactured analytical solution
\begin{equation*}
u_{\text{an}}(x,y)=\sin(4\pi x)\,\sin(4\pi y).
\end{equation*}

\emph{What to find.} The scalar field $u(x,y)$ on $[0,1]^{2}$.

\emph{Reference data.} No external dataset. The reference solution is the analytical expression\linebreak $u_{\text{an}}(x,y)=\sin(4\pi x)\sin(4\pi y)$, evaluated in-line on a $100\times100$ uniform grid covering $[0,1]^{2}$ for error reporting.
\end{quote}

\begin{quote}
\small
\emph{2D Helmholtz equation}

\emph{Equation.}
\begin{equation*}
u_{xx}+u_{yy}+k^{2}u=g(x,y),
\end{equation*}
where the forcing term $g(x,y)$ is manufactured so the equation admits the analytical solution
\begin{equation*}
u_{\text{an}}(x,y)=\sin(\pi a_1 x)\,\sin(\pi a_2 y).
\end{equation*}
The forcing $g(x,y)$ is computed via automatic differentiation from $u_{\text{an}}$, that is, $g=\partial_{xx}u_{\text{an}}+\partial_{yy}u_{\text{an}}+k^{2}u_{\text{an}}$, and is not specified in closed form.

\emph{Domain.} $(x,y)\in[-1,1]\times[-1,1]$.

\emph{Time interval.} Not applicable, the problem is steady.

\emph{Initial condition.} Not applicable.

\emph{Boundary conditions.} Periodic in both $x$ and $y$:
\begin{itemize}
\item $u(-1,y)=u(1,y)$, $u_x(-1,y)=u_x(1,y)$.
\item $u(x,-1)=u(x,1)$, $u_y(x,-1)=u_y(x,1)$.
\end{itemize}

\emph{Parameters.} $a_1=1$, $a_2=4$, $k=1$.

\emph{Goals.}
\begin{itemize}
\item Recover $u(x,y)$ using a PINN.
\item Report the relative $L^{2}$ error against the analytical solution $u_{\text{an}}$.
\end{itemize}
\end{quote}

\begin{quote}
\small
We are solving a forward problem for the Korteweg--de Vries (KdV) equation:
\begin{equation*}
u_t+6uu_x+u_{xxx}=0.
\end{equation*}

\emph{Domain.}
\begin{itemize}
\item Spatial: $x\in[0,20]$.
\item Temporal: $t\in[0,5]$.
\end{itemize}

\emph{Boundary conditions.}
\begin{align*}
u(0,x)&=g_0(x) &&\text{(initial condition)},\\
u(t,0)&=g_1(t) &&\text{(BC at }x=0\text{)},\\
u(t,20)&=g_2(t) &&\text{(BC at }x=20\text{)},\\
u_x(t,20)&=g_3(t) &&\text{(Neumann BC at }x=20\text{)}.
\end{align*}

\emph{Analytical solution.} The solution is given by a two-soliton interaction formula,
\begin{equation*}
u(x,t)=\frac{2(c_1-c_2)\left[c_1\cosh^{2}\!\left(\tfrac{\sqrt{c_2}\,\zeta_2}{2}\right)+c_2\sinh^{2}\!\left(\tfrac{\sqrt{c_1}\,\zeta_1}{2}\right)\right]}
{\left[\left(\sqrt{c_1}-\sqrt{c_2}\right)\cosh\!\left(\tfrac{\sqrt{c_1}\,\zeta_1+\sqrt{c_2}\,\zeta_2}{2}\right)+\left(\sqrt{c_1}+\sqrt{c_2}\right)\cosh\!\left(\tfrac{\sqrt{c_1}\,\zeta_1-\sqrt{c_2}\,\zeta_2}{2}\right)\right]^{2}},
\end{equation*}
where $\zeta_1=x-c_1t-x_1$ and $\zeta_2=x-c_2t-x_2$.

\emph{Parameters.}
\begin{itemize}
\item $c_1=6.0$ (velocity of first soliton).
\item $c_2=2.0$ (velocity of second soliton).
\item $x_1=-2.0$ (initial position of first soliton).
\item $x_2=2.0$ (initial position of second soliton).
\end{itemize}

The boundary conditions $g_0$, $g_1$, $g_2$ and $g_3$ are derived by evaluating the analytical solution at the appropriate boundaries.
\end{quote}

\subparagraph*{Results}
\label{sec:graft_effect_results}

This ablation isolates what persistent memory adds to ATHENA's existing capacity for within-problem learning. ATHENA was designed as a contextual-bandit-inspired loop in which execution feedback conditions the next scientific action; its original evaluation reported submartingale-like reward dynamics and sublinear cumulative regret~\cite{toscano2025athena}. Here, we record both the reward assigned by each loop and a common external measure, the relative $L^2$ error against the same reference solution. The former shows how each system's search progresses, whereas the latter prevents faster acceptance from being mistaken for better performance. ATHENA's short-term history can improve a solution during one trial, but it is reset for the next problem. GRAFT adds the solved-case repository, allowing previous outcomes to condition the action prior and evaluation standards of later targets.

ATHENA behaves as intended: after unscored execution--repair attempts and occasional reversals, its reward generally rises while its external error falls (Fig.~\ref{fig:loop_efficiency}). Reaching acceptance nevertheless required $15$, $16$, $10$, and $9$ total attempts for viscous Burgers, Poisson, Helmholtz, and KdV, respectively. GRAFT--ATHENA accepted Burgers, Helmholtz, and KdV on the first attempt and Poisson on the third, while reaching a lower relative $L^2$ error in every case. GRAFT therefore does not replace ATHENA's iterative learning; it moves the loop closer to a successful part of the action space before the first execution.

The Poisson--Helmholtz sequence makes this transfer visible. Poisson was attempted without a closely matched solved record and required three GRAFT--ATHENA attempts. Once stored, it became the highest-ranked neighbor for Helmholtz and contributed its successful actions and measured outcome, together with the other retrieved records, to the aggregated prior. Helmholtz was then accepted in one attempt, whereas ATHENA without GRAFT required ten. ATHENA can learn an effective strategy when given enough iterations; GRAFT makes that learning cumulative, so a related problem does not pay the same search cost again.

\subsection{Apollo case}
\label{sec:apollo_ablation}

\subparagraph*{User request}
\label{sec:apollo_user_request}

The two Apollo tasks used the same request. The underlined text in parentheses was included only in the wake-requested version; omitting the two underlined parentheses gives the no-wake request. All remaining text was shared.

\begin{quote}
\small
\emph{Apollo Command Module Forebody, Hypersonic Inviscid Flow}

Integrate until $t=5$ and evaluate the final state.

\emph{Problem.} Simulate the hypersonic inviscid flow over the forebody (heatshield) of an Apollo-type Command Module reentry capsule at a representative trajectory point (wind-tunnel-class Mach number, perfect-gas air). The goal of this first pass is to ``replicate the flow---resolve the detached bow shock, the subsonic stagnation region, and the expansion around the shoulder \underline{(, and the onset of the downstream wake along the capsule symmetry axis)}.''
Validation against experimental or reference CFD data is deferred to a later iteration.

\emph{Governing equations.} Two-dimensional \emph{axisymmetric compressible Euler} equations for a perfect gas:
\begin{equation*}
\partial_t \mathbf{U}+\partial_z\mathbf{F}(\mathbf{U})+\partial_r\mathbf{G}(\mathbf{U})+\mathbf{S}(\mathbf{U},r)=0,
\end{equation*}
with conserved variables and fluxes
\begin{equation*}
\mathbf{U}=
\begin{pmatrix}\rho\\ \rho u_z\\ \rho u_r\\ \rho E\end{pmatrix},\quad
\mathbf{F}=
\begin{pmatrix}\rho u_z\\ \rho u_z^2+p\\ \rho u_z u_r\\ (\rho E+p)u_z\end{pmatrix},\quad
\mathbf{G}=
\begin{pmatrix}\rho u_r\\ \rho u_z u_r\\ \rho u_r^2+p\\ (\rho E+p)u_r\end{pmatrix},
\end{equation*}
and the axisymmetric source term
\begin{equation*}
\mathbf{S}=\frac{1}{r}
\begin{pmatrix}\rho u_r\\ \rho u_z u_r\\ \rho u_r^2\\ (\rho E+p)u_r\end{pmatrix},
\end{equation*}
closed by the perfect-gas equation of state
\begin{equation*}
p=(\gamma-1)\left(\rho E-\tfrac{1}{2}\rho(u_z^2+u_r^2)\right),\qquad \gamma=1.4.
\end{equation*}

\emph{Geometry---Apollo Command Module forebody.} Meridional cross-section of the heatshield, parameterized in $(z,r)$ with $z$ along the symmetry axis (flow direction) and $r$ the radial coordinate. All dimensions below are taken from the Apollo CM ``Symmetrical Smooth Heat Shield'' drawing in AEDC-TR-67-238 (Griffith \& Boylan 1968), Fig.~2.
\begin{itemize}
\item \emph{Heatshield:} spherical segment of nose radius $R_n=4.694\ \mathrm{m}$ (184.8 in), joined to the afterbody at the base radius $R_b=1.956\ \mathrm{m}$ (base diameter 154 in $=3.912$ m).
\item \emph{Shoulder fillet:} toroidal fillet of radius $r_c=0.196\ \mathrm{m}$ (7.7 in) joining the heatshield to the conical afterbody, i.e. a smooth rounded corner at the shoulder.
\item \emph{Conical afterbody:} $33^\circ$ half-angle cone (measured from the symmetry axis), aft-truncated at a radius of $\approx0.233\ \mathrm{m}$ (9.155 in).
\item \emph{Overall axial length:} the AEDC drawing's 142.57 in is the axial station of the \emph{theoretical sharp cone apex}---the virtual point where the $33^\circ$ afterbody cone would close on the axis if it were not truncated. It is a drafting datum, \emph{not} the length of the truncated body. The truncated body runs from the stagnation point to $z=3.266\ \mathrm{m}$ (128.58 in); the difference is exactly $r_\mathrm{trunc}/\tan 33^\circ=0.233/0.6494=0.359\ \mathrm{m}$, the cone tip removed by the truncation. Reconstructing the virtual apex from the radii above gives 142.687 in against the quoted 142.57 in, i.e. agreement to 0.08\% (drawing round-off).
\end{itemize}
The geometry is normalized so that the nose stagnation point is at $(z,r)=(0,0)$ and the flow direction is $+z$.

\textit{Exact normalized form.} In units of the base radius $R_b$ the shape constants are exactly rational, so building from these ratios rather than from the rounded metre values removes an $O(10^{-4})$ inconsistency ($4.694/1.956=2.39980$, not $2.4$):
\begin{equation*}
R_n=2.4R_b,\qquad r_c=0.1R_b,\qquad R_b=1.
\end{equation*}
The \emph{maximum body radius is attained on the fillet}, not at the sphere/fillet join, at the point where the surface normal is purely radial. Requiring that maximum to equal $R_b$ is what pins the whole shape together and fixes the sphere/fillet tangency angle in closed form:
\begin{equation*}
\sin\theta_t=\frac{R_b-r_c}{R_n-r_c}=\frac{0.9}{2.3}=\frac{9}{23}
\quad\Longrightarrow\quad \theta_t=23.0357^\circ.
\end{equation*}
The forebody is then three tangent-continuous segments, with the sphere centred at $(R_n,0)$ and the fillet centre at $C_f=(0.283399,0.9)$ exactly (its radius is $R_b-r_c=0.9$):
\begin{center}
\small
\setlength{\tabcolsep}{3pt}
\begin{tabularx}{\linewidth}{@{}>{\raggedright\arraybackslash}p{2.5cm}>{\raggedright\arraybackslash}p{3.0cm}>{\raggedright\arraybackslash}X>{\raggedright\arraybackslash}X@{}}
\toprule
segment & parameter & from & to \\
\midrule
spherical heatshield & $\theta\in[0^\circ,23.0357^\circ]$ & $(0,0)$ & $(0.191373,0.939130)$ \\
shoulder fillet & $\psi\in[23.0357^\circ,123^\circ]$ & $(0.191373,0.939130)$ & $(0.337863,0.983867)$ \\
conical afterbody & $\mathrm{d}r/\mathrm{d}z=-\tan33^\circ$ & $(0.337863,0.983867)$ & $(1.669801,0.118896)$ \\
\bottomrule
\end{tabularx}
\end{center}
Here $\theta$ is the polar angle about the sphere centre and $\psi$ is the outward-normal angle on the fillet, with surface point $C_f+r_c(-\cos\psi,\sin\psi)$; $\psi$ runs to $90^\circ+33^\circ=123^\circ$ at the cone tangency. The cone reaches the axis (the virtual apex above) at $z=1.852885R_b$.

\emph{Computational domain.} Half-plane $\{(z,r):r\geq0\}$ exterior to the capsule forebody. Boundaries: inflow, far-field, outflow, capsule surface, symmetry axis $r=0$.

\emph{Boundary conditions.}
\begin{itemize}
\item \emph{Capsule surface:} inviscid slip wall, $\mathbf{u}\cdot\hat{\mathbf{n}}=0$.
\item \emph{Symmetry axis} ($r=0$): axisymmetric symmetry condition, $u_r=0$, zero normal gradient of $\rho$, $u_z$, $p$.
\item \emph{Inflow/far-field (upstream and outer):} supersonic Dirichlet inflow set to the freestream state $(\rho_\infty,u_{z,\infty},u_{r,\infty},p_\infty)$ defined below.
\item \emph{Outflow (downstream):} supersonic extrapolation (zero-gradient)---the flow is supersonic in the aft expansion region for the parameters of interest.
\end{itemize}

\emph{Initial condition.} Uniform freestream everywhere in the domain (including inside the forebody bow-shock region---it will be set up transiently as the flow develops):
\begin{equation*}
\mathbf{U}(z,r,t=0)=\mathbf{U}_\infty,\qquad
\mathbf{u}_\infty=(u_{z,\infty},u_{r,\infty})=(u_\infty,0).
\end{equation*}

\emph{Parameters (non-dimensional freestream).}
\begin{itemize}
\item Freestream Mach number: $M_\infty=10.0$ (representative hypersonic wind-tunnel-class condition; the AEDC Tunnel L Apollo CM campaign covered $M=9.37$ and $M=10.15$, so $M=10$ sits inside that range without claiming a specific facility run).
\item Angle of attack: $\alpha=0^\circ$ in our convention (flow direction $+z$, capsule axis aligned with flow, heatshield facing the freestream---i.e. the axisymmetric limit). \emph{Note on convention:} the AEDC reference (Griffith \& Boylan 1968) uses $\alpha=180^\circ$ for this same heatshield-forward attitude (see their Fig.~2a). Our $\alpha=0^\circ$ is equivalent to their $\alpha=180^\circ$; the physics is identical, only the labeling differs. The actual AS-202 flight trimmed at $\alpha_T\approx162.5^\circ$ in AEDC convention (i.e. $\sim17.5^\circ$ off the symmetry axis, producing $L/D\approx0.3$). The symmetric case specified here is the idealized axisymmetric limit, not a simulation of the real AS-202 trajectory.
\item Specific heat ratio: $\gamma=1.4$ (perfect gas). Real-gas/equilibrium-air effects present at true reentry conditions ($M\gtrsim20$) are explicitly \emph{out of scope} for this first pass.
\item Non-dimensionalization: $\rho_\infty=1$, $p_\infty=1/\gamma$, so $a_\infty=1$ and $u_\infty=M_\infty=10.0$.
\end{itemize}
Lengths are non-dimensionalized by $R_b$ unless stated otherwise.

\emph{Time integration/final state.} Integrate in time and stop at $t=5$.

\emph{Deliverables (first pass---replication only).}
\begin{itemize}
\item Flow field $(\rho,u_z,u_r,p,M)$ at $t=5$ around the Apollo forebody.
\item Visualization of the bow shock, subsonic region behind the shock, and shoulder expansion \underline{(, and onset of the downstream wake)}.
\item Surface pressure distribution along the heatshield (as a function of arclength or polar angle from the stagnation point).
\item Bow-shock stand-off distance $\Delta$ along the stagnation streamline.
\end{itemize}

\emph{Sources (geometry).} Apollo Command Module dimensions used above were cross-checked against publicly available references:
\begin{itemize}
\item Apollo command and service module---Wikipedia (base diameter 3.91 m; $33^\circ$ afterbody cone). \url{https://en.wikipedia.org/wiki/Apollo_command_and_service_module}
\item {\sloppy NASA TN D-7564 and related NTRS entries on the Apollo CM geometry (heatshield spherical radius 469.4 cm/4.694 m). \url{https://ntrs.nasa.gov/api/citations/19740007423/downloads/19740007423.pdf}\par}
\item AEDC-TR-67-238, Griffith \& Boylan (1968), \textit{Postflight (AS-202) Apollo Command Module Aerodynamic Simulation Tests}---Fig.~2 ``Apollo CM Forces and Dimensions'' provides the symmetrical smooth heat-shield geometry ($R_n=184.8$ in, base diameter $154$ in, $33^\circ$ afterbody cone, shoulder fillet $7.7$ in, aft truncation $9.1$ in, overall axial length $142.57$ in). The Tunnel L campaign in the same report covered $M=9.37$ and $M=10.15$, the range our $M_\infty=10$ sits inside. \url{https://apps.dtic.mil/sti/tr/pdf/AD0666927.pdf}
\end{itemize}
The specific choice of computational domain extents, mesh topology, and limiter are \emph{implementer choices}---not problem-defining parameters---and are deliberately left unfixed here. All geometric dimensions above, by contrast, are fixed by the AEDC drawing and should be honored.
\end{quote}

\subparagraph*{Coupled numerical constraints}
\label{sec:apollo_dependencies}

The Apollo task couples four numerical requirements. The shock treatment must preserve positive density and pressure through the Mach-10 stagnation region and remain compatible with the mesh strategy; the axisymmetric reduction requires a custom geometric source term; and all implementation choices must preserve the fixed capsule geometry. These dependencies bind the complete method chain: a positivity treatment suitable for a stationary body-fitted mesh may be incompatible with runtime refinement, while altering the geometry or domain can produce a converged solution to a different physical problem. The resulting configurations are reported in Supplementary Table~\ref{tab:apollo_setup}.

\subparagraph*{Aerodynamic validation metrics}
\label{sec:aerodynamic_metrics}

For a closed body boundary $\Gamma_b$, the pressure-derived axial-force coefficient is defined by
\begin{equation}
-C_A(t)=\frac{1}{q_\infty A_{\mathrm{ref}}}
\int_{\Gamma_b}p(\mathbf{x},t)\,
\mathbf{n}_\Omega(\mathbf{x})\!\cdot\!\mathbf{e}_A\,\mathrm{d}S,
\qquad
q_\infty=\frac{1}{2}\rho_\infty U_\infty^2,
\label{eq:axial_coefficient_general}
\end{equation}
where $p$ is the surface pressure, $\mathbf{n}_\Omega$ is the unit normal directed outward from the computational fluid domain, $\mathbf{e}_A$ is the axial unit vector, and $\mathrm{d}S$ is the surface-area element. The freestream density and speed are $\rho_\infty$ and $U_\infty$, while $A_{\mathrm{ref}}$ is the chosen reference area. The leading minus sign follows the axial-loading convention used by the AEDC report. Eq.~\eqref{eq:axial_coefficient_general} uses the standard aerodynamic coefficient normalization~\cite{anderson1989hypersonic}. For an axisymmetric surface generated by a meridional curve with radial coordinate $r$ and arclength $s$, $\mathrm{d}S=2\pi r\,\mathrm{d}s$. The degree-four Lobatto--Legendre implementation therefore evaluates the integral at five surface nodes per boundary face as
\begin{equation}
-C_A(t)\approx\frac{1}{q_\infty A_{\mathrm{ref}}}
\sum_{f\subset\Gamma_b}\sum_{\ell=1}^{5}
p_{f\ell}(t)\,w_\ell\,2\pi r_{f\ell}\,\sigma_f
\left(\frac{\partial r_f}{\partial\xi}\right)_{\!\ell},
\qquad
\mathbf{w}=\left(\frac{1}{10},\frac{49}{90},\frac{32}{45},\frac{49}{90},\frac{1}{10}\right),
\label{eq:axial_coefficient_discrete}
\end{equation}
where $f$ indexes body-boundary faces, $\ell$ indexes their quadrature nodes, $\xi$ is the reference-face coordinate, $p_{f\ell}$ and $r_{f\ell}$ are the nodal pressure and radius, $w_\ell$ is the quadrature weight, and $\sigma_f$ supplies the outward-fluid orientation. For the Apollo calculation, $\rho_\infty=1$, $U_\infty=10$, $q_\infty=50$, and $A_{\mathrm{ref}}=\pi R_b^2$ with maximum base radius $R_b=1.956\ \mathrm{m}$. If $N_{4:5}$ denotes the number of stored snapshots in $t\in[4,5]$, the late-time mean and its difference from the AEDC value are
\begin{equation}
\overline{-C_A}=\frac{1}{N_{4:5}}\sum_{t_n\in[4,5]}-C_A(t_n)=1.462249,
\qquad
100\left(\frac{\overline{-C_A}}{1.44}-1\right)=1.55\%,
\label{eq:apollo_axial_coefficient}
\end{equation}
where $1.44$ is read from the high-Reynolds-number plateau of the AEDC correlation~\cite{griffith1968postflight}.

For a calorically perfect gas, the Rayleigh--Pitot stagnation pressure downstream of a normal shock is
\begin{equation}
p_{0,2}^{\mathrm{RP}}=p_\infty
\left[\frac{(\gamma+1)^2M_\infty^2}
{4\gamma M_\infty^2-2(\gamma-1)}\right]^{\!\gamma/(\gamma-1)}
\left[\frac{1-\gamma+2\gamma M_\infty^2}{\gamma+1}\right],
\label{eq:rayleigh_pitot}
\end{equation}
where $p_\infty$ is the freestream static pressure, $M_\infty$ is the freestream Mach number, $\gamma$ is the ratio of specific heats, and $p_{0,2}^{\mathrm{RP}}$ is the total pressure obtained after the normal shock and subsequent isentropic deceleration~\cite{ames1953equations}. In the discrete capsule solution, the stagnation pressure is estimated from the native pressure values at the apex,
\begin{equation}
\begin{aligned}
\mathcal{I}_{\mathrm{apex}}
&=\left\{(f,\ell):\sqrt{z_{f\ell}^{2}+r_{f\ell}^{2}}<10^{-10}\right\},
&p_{\mathrm{stag}}(t)
&=\frac{1}{|\mathcal{I}_{\mathrm{apex}}|}
\sum_{(f,\ell)\in\mathcal{I}_{\mathrm{apex}}}p_{f\ell}(t),\\
\overline{p}_{\mathrm{stag}}
&=\frac{1}{N_{4:5}}\sum_{t_n\in[4,5]}p_{\mathrm{stag}}(t_n)=92.726372,
&100\left(\frac{\overline{p}_{\mathrm{stag}}}{p_{0,2}^{\mathrm{RP}}}-1\right)
&=+0.46\%.
\end{aligned}
\label{eq:apollo_stagnation_pressure}
\end{equation}
Here $z_{f\ell}$ is the axial coordinate of a surface node and $|\mathcal{I}_{\mathrm{apex}}|$ is the number of coincident apex values. For $\gamma=1.4$, $M_\infty=10$, and $p_\infty=1/\gamma$, Eq.~\eqref{eq:rayleigh_pitot} gives $p_{0,2}^{\mathrm{RP}}=92.2978$.

Entropy supplies a further check that requires no external aerodynamic reference. In the nondimensionalization above, the specific entropy in units of $c_v$ is $s=\ln(p/\rho^\gamma)$ up to an additive constant, so the entropy rise relative to the freestream is $\Delta s=\ln(p/\rho^{\gamma})-\ln(p_\infty/\rho_\infty^{\gamma})$. Across a normal shock, the Rankine--Hugoniot jump conditions fix this rise exactly,
\begin{equation}
\Delta s_{\mathrm{NS}}
=\ln\!\left[\frac{2\gamma M_\infty^{2}-(\gamma-1)}{\gamma+1}\right]
-\gamma\ln\!\left[\frac{(\gamma+1)M_\infty^{2}}{(\gamma-1)M_\infty^{2}+2}\right],
\label{eq:entropy_floor}
\end{equation}
which gives $\Delta s_{\mathrm{NS}}=2.318$ for $\gamma=1.4$ and $M_\infty=10$. Fluid adjacent to the forebody wall has crossed the near-normal portion of the bow shock, the shoulder expansion is isentropic, and dissipation can only raise entropy along a streamline, so the wall layer must satisfy $\Delta s\ge\Delta s_{\mathrm{NS}}$. The wall-band diagnostic reported in Supplementary Table~\ref{tab:apollo_diagnostics} evaluates $\Delta s$ at $400$ stations along each run's own wetted wall, at wall-normal offsets $\delta\in[0.01,0.10]\,R_b$, and records the minimum over the band at each station; a station is counted as below the floor when that minimum falls under $0.98\,\Delta s_{\mathrm{NS}}$. The $2\%$ tolerance absorbs shock-capture smearing: the compliant runs dip at most $1.8\%$ below the floor, near the shoulder.

\subparagraph*{Results}

An autonomous numerical agent can produce a converged field and a technically competent report without making the assumptions that define the computation persistent or inspectable. We used the Mach-10 Apollo Command Module problem to test whether an explicit scientific substrate changes that outcome. GRAFT--ATHENA augments the common agentic capabilities described above with explicit problem and action spaces, admissibility rules, persistent outcome memory, and stage-wise physicality gates. The comparison therefore isolates the effect of scientific structure rather than access to tools.

The ablation used two matched problem statements. Both fixed the Apollo forebody, the axisymmetric Euler equations at $M_\infty=10$, an impulsive start, and $t_{\mathrm{final}}=5$, and both declared the computational domain, mesh topology, and limiter to be implementation choices while requiring the stated geometry to be honored. The only task-level manipulation was whether the onset of the wake had to be represented. This produces two complementary comparisons: the commercial baselines and GRAFT--ATHENA can be compared within each task, while the two Fable runs show how the same autonomous baseline responds when one physical requirement is made explicit. Supplementary Table~\ref{tab:apollo_setup} records the formulations that the five runs actually constructed.

\begin{table}[!htbp]
\centering
\caption{\textbf{Formulations and computational setups produced for the Apollo ablation.} Runs are arranged as rows so that the wake requirement, numerical method, mesh, and treatment of the geometry and domain can be read together. IB denotes immersed boundary; EC, entropy-conservative; H--G, Hennemann--Gassner; and AMR, adaptive mesh refinement.}
\label{tab:apollo_setup}
\setlength{\tabcolsep}{2.5pt}
\begin{tabularx}{\textwidth}{@{}>{\raggedright\arraybackslash}p{1.8cm}>{\centering\arraybackslash}p{0.75cm}*{3}{>{\raggedright\arraybackslash}X}@{}}
\toprule
System & Wake & Spatial method and shock treatment & Mesh and size & Geometry and domain treatment \\
\midrule
Opus 4.8 & No & FV MUSCL; minmod; HLLE & Cartesian IB; 516,096 cells & Sting added to the outflow; fixed body altered \\
Fable 5 & No & FV MUSCL; minmod; HLLE & Cartesian IB; 128,000 cells & Domain ends mid-cone; shock reaches outer boundary \\
GRAFT--ATHENA & No & DGSEM $p=5$; EC volume; IDP and Zhang--Shu & Body-fitted shell; 1,386 elements; 49,896 nodes & Fixed forebody retained; bow shock contained \\
Fable 5 & Yes & FV MUSCL; minmod; HLLC with HLLE guard & Cartesian IB; 440,000 cells & Fixed geometry retained; wake resolved \\
GRAFT--ATHENA & Yes & DGSEM $p=4$; EC volume; H--G/FV subcells and Zhang--Shu & Unstructured AMR; 3,198 elements; 79,950 nodes & Fixed geometry retained; wake onset resolved and exit gated \\
\bottomrule
\end{tabularx}
\end{table}

The two commercial agentic baselines converge on the same method family across the reported Apollo configurations despite using different languages and grid sizes: a finite-volume MUSCL discretization, minmod limiting, and a uniform Cartesian immersed-boundary mesh. What varies is where that method is allowed to change the problem. Without a wake requirement, Opus adds a sting to carry the body to the outflow, while Fable truncates the domain through the cone, downstream of the shoulder (Fig.~\ref{fig:apollo_ablation}). When the wake is requested, Fable instead allocates a larger domain and retains the truncated body. The two GRAFT--ATHENA runs select a different but internally consistent family: high-order DGSEM on mesh-conforming curved elements, with a body-fitted shell when the wake is excluded and an unstructured adaptively refined mesh when its onset must be represented.

The formal accuracy of the two families differs: the baseline MUSCL--minmod scheme is at most second order, dropping to first order where the limiter activates, whereas the GRAFT--ATHENA discretizations are fifth and sixth order ($p=4$ and $p=5$) in smooth regions. Higher order alone, however, does not establish that one solution is more trustworthy than another: distinct discretizations can solve the same problem correctly, and a dense mesh is not intrinsically inferior to a compact high-order one. We therefore recomputed the available scalar diagnostics under common definitions (see above) wherever an external reference exists. Supplementary Table~\ref{tab:apollo_diagnostics} reports Rayleigh--Pitot and axial-coefficient errors against shared references, while retaining shock stand-off as a reported quantity rather than calling it an error: no exact stand-off reference is available for this geometry and condition, and the runs did not use a common shock-detection rule.

\begin{table}[!htbp]
\centering
\caption{\textbf{Validation and physical diagnostics for the five Apollo solutions.} Rayleigh--Pitot errors use the exact stagnation-pressure relation (Eq.~\eqref{eq:rayleigh_pitot}), with the GRAFT--ATHENA stagnation pressure reported as the common $t\in[4,5]$ window mean defined in Eq.~\eqref{eq:apollo_stagnation_pressure}, and axial-coefficient errors use the common AEDC reference $-C_A=1.44$ (Eq.~\eqref{eq:axial_coefficient_general}). Shock stand-off is reported without an error label because no common exact reference is available. The entropy fraction is measured in a $0.01$--$0.10R_b$ wall-normal band against the normal-shock floor $\Delta s=2.318$ (Eq.~\eqref{eq:entropy_floor}). The uncertainty column refers to the stand-off value.}
\label{tab:apollo_diagnostics}
\setlength{\tabcolsep}{2.5pt}
\begin{tabularx}{\textwidth}{@{}>{\raggedright\arraybackslash}p{1.65cm}>{\centering\arraybackslash}p{0.72cm}>{\centering\arraybackslash}p{1.4cm}>{\centering\arraybackslash}p{1.2cm}>{\centering\arraybackslash}p{1.3cm}>{\centering\arraybackslash}p{1.65cm}>{\raggedright\arraybackslash}X@{}}
\toprule
System & Wake & Rayleigh--Pitot error & $-C_A$ error & Reported $\Delta/R_b$ & Wall band below entropy floor & Uncertainty or replication \\
\midrule
Opus 4.8 & No & $-0.19\%$ & $+2.62\%$ & $0.3023$ & $0.0\%$ & Not stated \\
Fable 5 & No & $-0.55\%$ & $-4.40\%$ & $0.385$ & $35.7\%$ & Not stated \\
GRAFT--ATHENA & No & $+0.08\%$ & $+1.73\%$ & $0.3004$ & $0.0\%$ & Four-run cross-check; $\pm0.01$ \\
Fable 5 & Yes & $-0.90\%$ & $+3.06\%$ & $0.302$ & $0.0\%$ & Not stated \\
GRAFT--ATHENA & Yes & $+0.46\%$ & $+1.55\%$ & $0.3041$ & $0.0\%$ & Six-iteration history and tolerance \\
\bottomrule
\end{tabularx}
\end{table}

All five solvers complete and all five solutions converge. Every stagnation pressure lies within $0.9\%$ of the Rayleigh--Pitot value, the axial-coefficient errors span $-4.40\%$ to $+3.06\%$, and four of the five stand-off values fall between $0.3004$ and $0.3041$; the fifth, $0.385$, was reported by its own run as agreement with the correlation selected there. Fig.~\ref{fig:apollo_ablation} shows, for each run, the mesh and the density, Mach-number, and entropy fields.

Fig.~\ref{fig:apollo_ablation} exposes two distinct ways in which a converged answer can cease to represent the requested problem. In the Opus/no-wake run (first column), the agent extends the prescribed aft truncation as a sting to the outflow plane, describing it as a wind-tunnel-style measure to prevent inviscid base-flow unsteadiness from contaminating the forebody solution. The report does not present this choice as being forced by the numerical method, and both Fable runs retain the prescribed body on the same Cartesian immersed-boundary mesh, so the discretization did not require it. It is a geometry change, not merely a meshing choice. Its effect is visible in the aft flow, but it is weakly represented by the scalar checks: because the pressure-derived axial force is integrated against $dr$, the cylindrical sting contributes zero to $-C_A$. The loading can therefore remain plausible while the field belongs to a different vehicle.

The Fable/no-wake run (Fig.~\ref{fig:apollo_ablation}, second column) retains the specified geometry but sizes its domain, at the planning stage, from an expected answer: a stand-off near $0.4R_b$ taken from the Billig sphere correlation, with the outer boundary placed at $r=2R_b$ ``so the Dirichlet outer boundary never intersects the shock.'' Its checks then validate the plan against the same correlation: the measured stand-off, $\Delta/R_b=0.385$, is reported as agreeing with the Billig estimate of $0.355$, and the excess is attributed to the truncated shoulder. Both statements fail in the run's own solution: the bow shock reaches $r=1.998R_b$ against the boundary said never to intersect it, and the other four runs obtain stand-offs of $0.300$--$0.304$. The fields make the damage visible (Fig.~\ref{fig:apollo_ablation}, Mach and entropy rows): a near-freestream streak hugs the body along the truncated cone, inside the shock layer where no such speed can exist, and the entropy row shows the same wall-adjacent layer far below the normal-shock floor, with $35.7\%$ of the sampled wall band below the entropy rise that fluid crossing the near-normal bow shock must acquire (Eq.~\eqref{eq:entropy_floor}). The violation tracks the domain rather than the resolution: at half resolution on the same box it rises from $35.7\%$ to $47.5\%$, while on a larger box it disappears ($0.0\%$). The solver nevertheless decreases its residual by five orders of magnitude; convergence certifies that the discrete state has become stationary, not that it is physically reachable.

The wake-requested Fable run closes the causal chain. The no-wake statement never asked for the wake to be excluded; the requirement was simply not mentioned. Mentioning it moves the task closer to the classical form of the blunt-body problem, and on that form the same model and the same finite-volume family perform well: they choose a larger domain, resolve the base flow, and pass the entropy test. The comparison is therefore not a claim that Fable or finite volumes are intrinsically incapable; it shows that the baseline succeeds when the task sits close to the configurations practitioners usually solve, and fails when the statement deviates slightly from them, with nothing anchoring the choices it leaves open.

To test whether the Fable/no-wake failure documented above is representative rather than a selected trajectory, we reran the identical no-wake request four additional times with the same model in fresh sessions and audited all five trajectories with the same instruments; run 1 is the trajectory analyzed above. All five trajectories converge once more on essentially the same numerical method, finite-volume MUSCL with minmod limiting and an HLLE flux, three on uniform Cartesian immersed-boundary grids and two on body-fitted meshes extruded from the surface, and their Rayleigh--Pitot and axial-coefficient errors are all of the same order (Supplementary Table~\ref{tab:apollo_replicates}). Only two of the five trajectories, however, complete the task without a protocol failure: run 4 verifies shock containment directly, and run 5 catches the shock clamped at its lateral boundary in a coarse smoke test and widens the domain before production. Of the remaining three, run 1 never verifies its planned domain and ships the corrupted wall layer; run 3 alters the prescribed geometry with the same sting extension, and the same wind-tunnel justification, as Opus 4.8; and run 2 reports its outer boundary as verified to contain the shock while its stored field shows the shock crossing it, a violation confined to supersonic flow, so its deliverables remain clean.

\begin{table}[!htbp]
\centering
\caption{\textbf{The five replicated Fable no-wake trajectories.} All runs select finite-volume MUSCL with minmod limiting and an HLLE flux. Rayleigh--Pitot and axial-coefficient errors and the entropy wall band use the definitions of Supplementary Table~\ref{tab:apollo_diagnostics}; the run-1 row is the trajectory audited above. Stand-off is each run's own reported value. IB denotes immersed boundary.}
\label{tab:apollo_replicates}
\setlength{\tabcolsep}{2pt}
\begin{tabularx}{\textwidth}{@{}>{\raggedright\arraybackslash}p{0.55cm}>{\raggedright\arraybackslash}p{2.0cm}>{\raggedright\arraybackslash}X>{\centering\arraybackslash}p{1.3cm}>{\centering\arraybackslash}p{1.05cm}>{\centering\arraybackslash}p{1.25cm}>{\centering\arraybackslash}p{1.4cm}@{}}
\toprule
Run & Mesh and size & Domain and geometry outcome & Rayleigh--Pitot error & $-C_A$ error & Reported $\Delta/R_b$ & Wall band below floor \\
\midrule
1 & Cartesian IB; 128,000 cells & Domain ends mid-cone; shock clamped at the outer boundary; wall layer corrupted & $-0.55\%$ & $-4.40\%$ & $0.385$ & $35.7\%$ \\
2 & Body-fitted, extruded; 42,624 cells & Geometry retained; containment claimed but the shock crosses the outer boundary (benign) & $-0.25\%$ & $+2.20\%$ & $0.2985$ & $0.0\%$ \\
3 & Cartesian IB; 307,200 cells & Sting added, as in Opus 4.8; shock crosses the lateral boundary (benign) & $-0.25\%$ & $+2.46\%$ & $0.3004$ & $0.0\%$ \\
4 & Body-fitted, extruded; 26,544 cells & Geometry retained; containment verified & $-0.61\%$ & $+1.79\%$ & $0.299$ & $0.0\%$ \\
5 & Cartesian IB; 206,336 cells & Geometry retained; smoke test caught the clamped shock; domain widened & $-0.25\%$ & $+2.57\%$ & $0.30$ & $0.0\%$ \\
\bottomrule
\end{tabularx}
\end{table}

The two GRAFT--ATHENA runs pair the higher-order discretization with scalar accuracy that matches or exceeds the baselines'. On the axial coefficient, both runs land closer to the AEDC reference than any baseline ($+1.55\%$ and $+1.73\%$ against $+2.62\%$, $+3.06\%$, and $-4.40\%$); their stagnation pressures agree with the Rayleigh--Pitot value to within $+0.46\%$ (wake) and $+0.08\%$ (no wake), using the common $t\in[4,5]$ window mean; and they do so with $50$k--$80$k solution nodes against the baselines' $128$k--$516$k cells while being coarser at the wall. They are also the only runs that honor the stated geometry while holding every physicality gate (entropy, positivity, shock clearance, and domain adequacy persist as gates across stages). The one feature the entropy row does show in the GRAFT--ATHENA columns, thin dark traces at the captured shock, is bounded Gibbs undershoot ($\Delta s$ to $-0.24$) confined to shock-straddling elements, the expected signature of a high-order discretization at a discontinuity; monotone schemes cannot undershoot by construction, which is why the entropy floor is evaluated on the wall band (Eq.~\eqref{eq:entropy_floor}) rather than on the field minimum. They are likewise the only ones that accompany their numbers with uncertainties: a four-run cross-check with a $\pm0.01$ tolerance on stand-off in one case, a six-iteration history with an explicit tolerance in the other, rather than a single declaration of agreement.

These choices are not improvised. The action space encodes methods and dependency rules distilled from the solvers' documentation, and the mesh and domain follow published hypersonic practice: domain containment from Billig's shock-shape correlations~\cite{anderson1989hypersonic}, margin and wall-spacing conventions from NASA production codes~\cite{mazaheri2010laura}, and forebody-truncation and wake-domain extents from Mars entry flight programs~\cite{dyakonov2012hypersonic,mitcheltree1995wake}. The correlations enter as containment bounds with measurement gates, not as answers: the wake run flags that the Billig fit is validated at $M=4$--$8$ and therefore measures the stand-off on the solution rather than trusting it. GRAFT--ATHENA therefore does not eliminate numerical uncertainty; it makes the assumptions, admissibility conditions, and unresolved tolerances part of the scientific object that the agents carry from formulation through execution and validation.

\subsection{Action-space sampling sensitivity}
\label{sec:sensitivity_sampling}

To isolate the contribution of each layer of the action space, we remove the language-model proposer and compare four direct sampling procedures: (1) flattened uniform sampling, which pools all available actions without using the tree, dependencies, or GRAFT priors; (2) tree-structured uniform sampling, which traverses the active branches but does not enforce dependencies; (3) dependency-aware uniform sampling, which follows the GRAFT traversal and dependency rules with a uniform prior over admissible choices; and (4) GRAFT-prior sampling, which uses the same traversal and rules with problem-specific priors learned from nearby rewarded methods.

Each procedure produces the same fixed number of samples, and every sample is evaluated independently using the same topology audit and method validator; invalid samples are not replaced. Fig.~\ref{fig:sampling_sensitivity} reports the resulting component counts for the Apollo and spectral-PINN cases.

The Apollo panel in Fig.~\ref{fig:sampling_sensitivity} shows why an explicit action space is necessary but not sufficient. Flattening the complete action space, or restricting sampling to the Trixi.jl tree while following only its branching structure, yields essentially no admissible methods, as expected for a problem whose solver choices are strongly coupled. Enforcing the dependencies makes admissible methods attainable, but only one sampled method combines the three requirements needed for this problem: the Euler equations, shock capturing, and a custom geometric source term. This shows that admissibility alone does not identify a suitable method. The GRAFT priors then concentrate the samples on Euler and shock capturing, the two components supported by nearby solved problems, while the custom source remains rare because it is absent from those neighbors. This problem-specific requirement is precisely where language-model judgment remains necessary: the proposer must infer that the axisymmetric formulation requires an axisymmetric source term even when the local repository assigns it little support.

The spectral-PINN panel asks a different question: whether the formulation derived through mathematical simplification by the Formalization team can be recovered by chance from the PIML action space. That derivation couples three structural choices: a modal sine representation, a time-only network input, and a pseudospectral Galerkin residual. These choices do not appear as a familiar combination in the local memory. The experiment already makes a favorable assumption by placing the nodes needed to describe the spectral PINN in the action space, although many of them were absent from the original PIML tree and were introduced only after the simplification had been derived. Without that expansion, the formulation lies outside the sampling support and cannot be recovered at all; even after the nodes are made available, none of the uniform procedures recovered their intersection in 1,000 samples, and the GRAFT prior recovered it only once (Fig.~\ref{fig:sampling_sensitivity}). Yet this single structural hit is not the derived formulation: it uses four networks where the validated method requires only one and, more importantly, replaces the hard initial and boundary constraints with a soft boundary constraint and no initial-condition constraint. The proposal may be executable, but it is unnecessarily inefficient and does not preserve the mathematical structure of the spectral PINN. The ablation therefore separates GRAFT's contribution, a structured and dependency-aware bias toward actions supported by previous solutions, from the mathematical judgment supplied by formalization and simplification, which remains necessary to construct a genuinely new formulation.

\subsection{Stability of the semantic space under action-space extension}
\label{sec:jaccard_invariance}

Incorporating the spectral-PINN formulation required extending the PIML
action space by 21 codenames in three subtrees: the network input type, the
solution representation, and the Galerkin residual form
(Supplementary Table~\ref{tab:spectral_pinn_extension}). A natural concern is that such an
extension rewrites the semantic space: adding nodes redistributes the
recursive layout, so the positions of pre-existing nodes change, and the
similarities accumulated over previously solved cases could drift with every
discovery. To verify that this does not happen, we recompute the method
fingerprints of all solved PIML repository cases on both taxonomies, the
pre-extension tree (reconstructed by removing the 21 added codenames, which
form three closed subtrees containing no pre-existing node) and the current
extended tree, and compare the two pairwise Jaccard similarity matrices.
Both computations use the deployed Stage 3 pipeline, i.e., the
node-set fingerprints of Methods.

\begin{table}[!htbp]
\centering
\caption{\textbf{The 21 nodes added to the PIML action space for the
spectral-PINN formulation.} Each subtree attaches to the pre-existing branch
named in its header row. Nodes selected by the spectral-PINN method are
marked (sPINN).}
\label{tab:spectral_pinn_extension}
{\footnotesize\linespread{1}\selectfont
\setlength{\tabcolsep}{3pt}
\begin{tabularx}{\textwidth}{@{}>{\raggedright\arraybackslash}p{3.8cm}>{\raggedright\arraybackslash}X@{}}
\toprule
Added node & Role \\
\midrule
\multicolumn{2}{@{}>{\raggedright\arraybackslash}p{\textwidth}@{}}{\emph{Network Input Type}, a new decision under
\emph{Main Model}: which independent variables the network receives} \\
\quad Spatiotemporal Input & Network receives $(\mathbf{x}, t)$; the
standard PINN input. \\
\quad Spatial-only Input & Network receives spatial coordinates only. \\
\quad Time-only Input (sPINN) & Network receives $t$ alone; the spatial
dependence is carried entirely by the representation basis, and spatial
derivatives act analytically on the basis rather than by automatic
differentiation. \\
\quad Custom Input Type & Free slot for a user-defined input map. \\
\midrule
\multicolumn{2}{@{}>{\raggedright\arraybackslash}p{\textwidth}@{}}{\emph{Solution Representation}, a new decision under
\emph{Representation Model}: how the network output realizes the solution
field} \\
\quad Direct Field & The network output is the solution field itself; the
standard PINN representation. \\
\quad Operator Learning & The network parameterizes a solution operator
rather than a single field. \\
\quad Modal Expansion (sPINN) & The field is a truncated basis expansion
whose coefficients the network outputs. \\
\quad\quad Modal Mode Set & Which modes enter the expansion: \\
\quad\quad\quad Full Mode Set & All modes up to the truncation. \\
\quad\quad\quad Sine Only (sPINN) & Parity-odd (sine) subspace, admissible
when the initial condition is odd and the dynamics preserve parity. \\
\quad\quad\quad Cosine Only & Parity-even (cosine) subspace. \\
\quad\quad\quad Custom Mode Set & User-selected modes. \\
\hangindent=1.6em\hangafter=1 \quad Custom Solution Representation & Free
slot for a user-defined representation. \\
\midrule
\multicolumn{2}{@{}>{\raggedright\arraybackslash}p{\textwidth}@{}}{\emph{Galerkin form}, a new decision under
\emph{Equation Form}: a weighted-residual statement of the PDE loss} \\
\quad Galerkin Variant & Which projection defines the residual: \\
\quad\quad Pure Galerkin & Test space equals the trial basis. \\
\quad\quad Petrov--Galerkin & Test space differs from the trial basis. \\
\hangindent=2.1em\hangafter=1 \quad\quad Pseudospectral Galerkin (sPINN) &
Galerkin projection evaluated pseudospectrally: nonlinear terms on a
collocation grid, derivatives in mode space. \\
\quad\quad Custom Galerkin & Free slot for a user-defined weighted-residual
form. \\
\bottomrule
\end{tabularx}
}
\end{table}

\clearpage
\begin{figure}[H]
\centering
\includegraphics[width=1\textwidth]{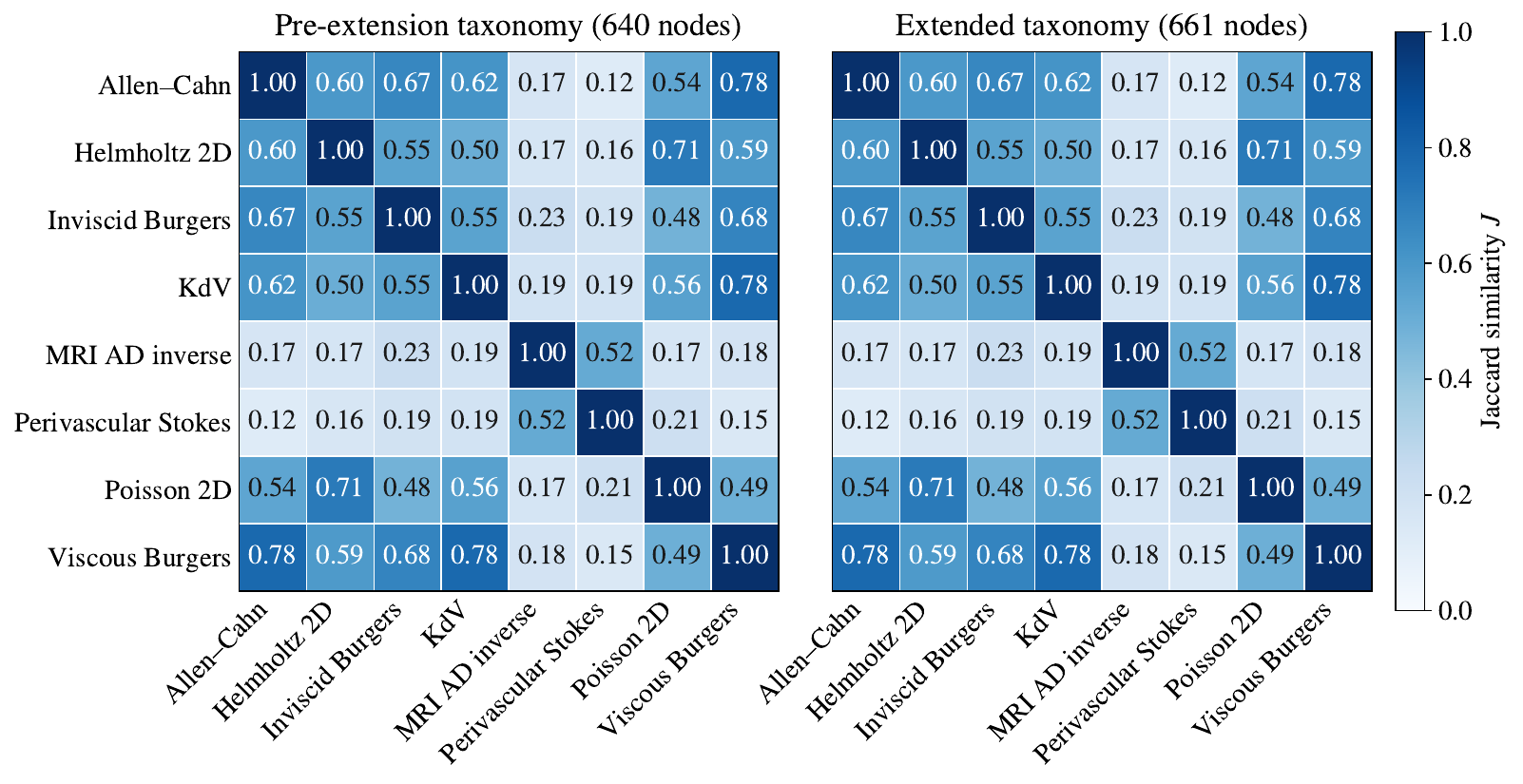}
\caption{%
\textbf{Pairwise similarities of unchanged encodings are invariant under additive action-space extension.}
Pairwise method-fingerprint Jaccard similarity between all solved PIML
repository cases, computed with the deployed Stage 3 fingerprint pipeline on
the PIML action space before (left, 640 nodes) and after (right, 661 nodes)
the 21-node extension introduced for the spectral-PINN formulation
(Supplementary Table~\ref{tab:spectral_pinn_extension}). The two matrices are identical to
machine precision: the similarity depends only on which nodes two cases
share, so extending the taxonomy changes node positions but no similarity
between existing cases.}
\label{fig:jaccard_invariance}
\end{figure}

Supplementary Fig.~\ref{fig:jaccard_invariance} shows the result: the two matrices are
identical, with every entry agreeing to machine precision. This stability is
structural rather than accidental. Because the binning is injective at the
operating resolution, the Jaccard similarity depends only on which taxonomy
nodes two cases share, not on where the layout places them, so an additive
extension can alter no similarity between existing cases
(Proposition~\ref{prop:additive_invariance}). The action space can therefore grow with
each discovery while the semantic geometry of the solved-case memory, and
every outcome attached to it, remains exactly intact. We note that an
extension also enlarges what can be said about previously solved problems,
so an existing case can optionally be re-encoded in the richer vocabulary to
let the new decision axes draw on past experience. The clearest example is
the viscous Burgers case: it is the same benchmark for which the
spectral-PINN formulation was later derived, solved with the traditional
PINN approach (Fig.~\ref{fig:semantic_memory_learning}B and Table~\ref{tab:sota_comparison_graft}), so it benefits directly
from declaring its newly expressible input type and solution representation.
Even for this case, the one whose similarities shift the most, the largest
resulting change against any other case is $0.021$.

\section{Representative GRAFT--ATHENA run records}

This section summarizes representative GRAFT--ATHENA run records. The complete records will be released in the frozen archive upon publication.

\subsection{Apollo command module at Mach 10}
\label{sec:apollo_run}

This supplementary section records the agent trajectory behind the Apollo result: hypersonic inviscid flow over the Apollo command module forebody at $M_\infty = 10$, with geometry taken from Griffith \& Boylan's 1968 postflight aerodynamics report~\cite{griffith1968postflight} and the onset of the downstream wake among the requested deliverables. The case ran as one initial solve and five advisor-driven refinement iterations, numbered $1$ to $6$ below, the first already scoring $91/100$, at ceiling on accuracy, integrity and efficiency, and every later one $92/100$ with an identical component decomposition. The complete record of all six iterations, and of the matched no-wake case, will be released in the frozen archive upon publication; excerpts below are lightly normalized for typography and attributed by stage record rather than by internal filename.

For both Apollo requests, the three closest solved problems were the supersonic cylinder, Sedov blast, and Rayleigh--Taylor cases.

\subparagraph*{The problem as posed}
\label{sec:apollo_problem}

The request fixes a 2D axisymmetric compressible Euler problem in the meridional half-plane $(z, r)$ with $r \geq 0$,
\[
\partial_t \mathbf{U} + \partial_z \mathbf{F}(\mathbf{U}) + \partial_r \mathbf{G}(\mathbf{U}) + \mathbf{S}(\mathbf{U}, r) = 0,
\quad
\mathbf{S} = \tfrac{1}{r}\bigl(\rho u_r,\ \rho u_z u_r,\ \rho u_r^2,\ (\rho E + p)u_r\bigr)^{\!\top},
\]
closed by $p = (\gamma - 1)\bigl(\rho E - \tfrac{1}{2}\rho(u_z^2 + u_r^2)\bigr)$ with $\gamma = 1.4$, on the ``Symmetrical Smooth Heat Shield'' geometry of~\cite{griffith1968postflight}: nose radius $R_n = 4.694\,\mathrm{m}$, base radius $R_b = 1.956\,\mathrm{m}$, shoulder fillet $r_c = 0.196\,\mathrm{m}$, and a $33^\circ$ cone truncated at $r \approx 0.233\,\mathrm{m}$, so the wetted contour ends at $z = 3.266\,\mathrm{m}$ rather than at the drawing's $142.57$-inch sharp-apex datum. The capsule is an inviscid slip wall, $r = 0$ a symmetry axis, the upstream and outer surfaces supersonic Dirichlet and the downstream surface supersonic extrapolation. The request is reproduced below. Bracketed text appears only in the wake variant; deleting it gives the no-wake statement in the Apollo case ablation above word for word.

\begin{quote}
``Simulate the hypersonic inviscid flow over the forebody (heatshield) of an Apollo-type Command Module reentry capsule at a representative trajectory point (wind-tunnel-class Mach number, perfect-gas air). The goal of this first pass is to replicate the flow: resolve the detached bow shock, the subsonic stagnation region, the expansion around the shoulder[, and the onset of the downstream wake along the capsule symmetry axis]. Validation against experimental or reference CFD data is deferred to a later iteration.'' The deliverables are the ``flow field $(\rho, u_z, u_r, p, M)$ at $t = 5$ around the Apollo forebody''; ``visualization of the bow shock, subsonic region behind the shock, shoulder expansion[, and onset of the downstream wake]''; the ``surface pressure distribution along the heatshield (as a function of arclength or polar angle from the stagnation point)''; and the ``bow-shock stand-off distance $\Delta$ along the stagnation streamline''. The request closes by fixing what is not open: ``the specific choice of computational domain extents, mesh topology, and limiter are implementer choices, not problem-defining parameters, and are deliberately left unfixed here. All geometric dimensions above, by contrast, are fixed by the AEDC drawing and should be honored.'' \\
{\small User request}
\end{quote}

\noindent The domain, the mesh and the limiter are therefore the system's own decisions, and the sections below record how it made them.

\subparagraph*{Method selection under a wake deliverable}
\label{sec:apollo_method}

The wake deliverable inverts the binding decision. At a Mach-10 stagnation point the conservative shock treatment is subcell invariant-domain-preserving limiting, which is provably positive; but its per-element coefficient container has no runtime resize hook, so the first refinement event breaks the low-order half of the blend, and selecting it holds the mesh static for the whole run. With wake onset in the deliverables a static mesh is the more expensive failure, so the run declines subcell limiting, takes element-level Hennemann--Gassner blending driven to pure first order in flagged cells ($\alpha_{\max} = 1$), and restores the positivity guarantee that blending does not carry by registering a Zhang--Shu limiter as a stage callback. The rest of the chain follows: DGSEM at polynomial degree $4$ on curved conformal quadrilaterals imported from an Abaqus file, a Ranocha volume flux with a local Lax--Friedrichs surface flux, a three-level refinement controller with an indicator sensor, a custom axisymmetric source term, and SSPRK(3,3) at $\mathrm{CFL} = 0.5$.

\subparagraph*{Sizing the domain from shock geometry}
\label{sec:apollo_mesh}

The request leaves the domain free, so the mesh plan sizes it from shock geometry rather than from a habit. The Billig correlation supplies the stand-off, the vertex curvature of the shock, and the near-field trace,
\[
\frac{\delta}{R_n} = 0.143\exp\!\left(\frac{3.24}{M_\infty^{2}}\right), \qquad
\frac{R_c}{R_n} = 1.143\exp\!\left[\frac{0.54}{(M_\infty - 1)^{1.2}}\right], \qquad
r_{\mathrm{shock}}(z) = \sqrt{2R_c\,(z + \delta)},
\]
which at $M_\infty = 10$ and $R_n = 2.4\,R_b$ give $\delta = 0.354502\,R_b$ and $R_c = 2.851341\,R_b$. The plan then checks which branch of the trace it is entitled to use: the Mach-angle asymptote takes over only beyond $R_c\cot^2\mu = 282\,R_b$, with $\mu = \arcsin(1/M_\infty) = 5.739^\circ$, two orders of magnitude outside the box, so ``the parabolic form applies throughout; a bare Mach cone would be anti-conservative here''. The lateral extent follows from a containment condition that makes the shock leave through the outflow rather than through the Dirichlet far field, $r_{\mathrm{out}} \geq r_{\mathrm{shock}}(z_{\mathrm{out}})/0.8$. The correlation's own reach is reported rather than absorbed: it is validated to $M = 8$, the first solve measures $\Delta = 0.3252\,R_b$ against the predicted $0.354502$, so the correlation runs $9.0\%$ high here, and the more conservative predicted vertex is nonetheless kept for containment, since ``the measurement is used to bound the error, not to shrink the envelope''.

The downstream cut is where the wake is priced. The first plan buys wake onset and nothing more, placing the exit $1.015$ base diameters aft of the truncation, and pre-registers the exposure this creates, no verified re-supersonicization correlation existing in this Mach range,

\begin{quote}
``The exit-plane-supersonic criterion is the rule and distance is secondary \dots\ The run must therefore record the minimum normal Mach number over the exit plane \dots\ A failed gate here is a budgeted outcome, not an error.'' \\
{\small Mesh-planning record, downstream extent}
\end{quote}

\noindent The gate failed as predicted, at a minimum exit Mach number of $0.807$ on an axis filament covering two of $175$ exit samples. Iteration 2 sizes the cut from that failure rather than from a rule of thumb: the axis Mach number recovers at $\mathrm{d}M/\mathrm{d}z = 0.505$ per $R_b$ over the last $24$ wake samples, which places the re-sonic point at $z \approx 4.005$, and $z_{\mathrm{max}} = 4.50$ clears it by $0.495\,R_b$. Moving the exit also moves the containment condition the lateral extent was sized against, and the mesh agent re-derives it instead of executing the prescription literally: at the new station $r_{\mathrm{shock}}(4.50) = 5.2615$, so $r_{\mathrm{out}} \geq 6.577$ and the plan takes $6.60$, a realized margin of $0.7972$,

\begin{quote}
``Holding $r_{\mathrm{max}} = 6.05$ would put the realized margin at $0.8696$, a failure of the row that iteration 1 passed at $0.795$. Keeping the letter of the advisor's line would therefore trade a closed gate for an open one.'' \\
{\small Mesh-planning record, lateral extent}
\end{quote}

\noindent The gate closed and stayed closed: the reported run holds a minimum normal exit Mach number of $1.338$, with no subsonic sample in any of the $101$ snapshots.

\subparagraph*{The build gate, and what it refuses}
\label{sec:apollo_gate}

A separate gate admits the mesh by reading the build record against the plan. The base mesh of the reported run took six build rounds, three of them returned, and none was returned for a defective mesh. One round was returned with the mesh clean, in the gate's own words compliant and its measurements settled, because the build report had not been written and a receipt certified zero amendments while the plan carried two. That failure recurs, and the mesh agent's memory records it as the lesson of the campaign: three separate rounds were lost the same way, so ``the report is the deliverable; the mesh is just its subject''. The next round was returned for a stronger reason. A binding acceptance item requires the sign of the inverse Jacobian at every solution node, not at vertices, since a folded high-order mapping fails silently at run time; the build answered it by carrying forward the probe from an earlier and coarser mesh. The gate refused the substitution with the plan's own sentence, that ``the failure is not monotone in resolution; a mesh can pass at one count and fold at double it'', noting that this build had tripled the contour discretization to $98$ wall edges with $40$ curved edges on the sphere and fillet, which is the configuration the check is armed against. Realizing the probe cost one read-only evaluation on the file already on disk and returned $30{,}300$ of $30{,}300$ nodes positive and finite.

\subparagraph*{A crash isolated rather than tuned away}
\label{sec:apollo_impl}

Raising the refinement ceiling in iteration 3 reintroduced a time-step collapse to a non-finite value early in the solve. The build stage refused to move thresholds until it could name the mechanism, on the ground that ``tuning a knob I don't understand to make a crash go away is goalpost-moving'', and ran three arms with the refinement threshold held fixed so that the coarsening cutoff was the only variable. At the original cutoff the solver died at $t = 0.008$ with the mesh collapsed from $2340$ to $936$ cells and none left at the top level; at the intermediate cutoff it died at $t = 0.481$, bit-identical at four and at eight threads, which excludes a thread-count artifact; at the lowest cutoff it ran. The record then states the mechanism instead of the symptom,

\begin{quote}
``With med level $= -1$ a cell below the derefine cutoff drops $3 \to 0$ in one adaptation and must later be re-refined $0 \to 3$; that interpolation across the shoulder expansion (about $100^\circ$ of turning, past the Prandtl--Meyer vacuum limit) is the positivity hazard that produces a NaN time step. Raising the ceiling deepened the jump.'' \\
{\small Implementation record, controlled isolation}
\end{quote}

\noindent The conclusion contradicted a standing advisor prescription to hold the coarsening band at a measured quantile, and the report surfaces the conflict as ``an advisor decision being overturned on build evidence'' rather than absorbing it silently.

\subparagraph*{Diagnostics: what the run has to show for itself}
\label{sec:apollo_diag}

Every iteration closes by rendering a fixed figure set and reading it back. Four figures carry $35$ panels: six final-state fields (density, pressure, both velocity components, Mach number and a numerical Schlieren); a three-by-three array of the blending coefficient, the refinement indicator and the refinement level at three times; four panels of the steady Euler imbalance; and a four-by-four array of surface and axis profiles, of the gate histories, of the mesh and time-step histories, and of the integral histories for conservation, entropy and the energy split. The diagnosis is not a reading of the pictures. Every quantity taken off a panel is re-derived from the native solution file and the two are reported together: the final state is confirmed free of non-finite values at all $79{,}950$ nodes of each primitive field; the one panel that plots non-finite entries is identified as a no-event sentinel, since the subsonic exit sample count is identically zero in all $101$ snapshots and there is therefore no extent to report; and the run's own steady-mesh verdict, which the eye would pass, is recomputed and its failure upheld, the last-tenth spread of $9.389 \times 10^{-4}$ exceeding the last-fifth spread by $1.7 \times 10^{-7}$.

Because bow-shock capture is a requested deliverable, its numerical resolution is quantified. The $10\%$ to $90\%$ rise of the axis pressure jump at the stand-off station spans $0.00816\,R_b$ against a local element width of $0.01138\,R_b$, about seven tenths of a cell; at mid extent the transition covers one to two local cells; at the far corner it covers about two cells that are themselves physically large. The verdict separates the two available causes, recording the far station as resolution-limited rather than numerically smeared across many cells, and takes the deduction. Two further deductions follow the same pattern. A rebound of $0.0334$ in the surface pressure coefficient between $s = 0.0851\,R_b$ and $0.1419\,R_b$ is logged as an oscillation precisely because it sits in a smooth region and not at a captured discontinuity, and the oblique striations in the outer post-shock band are logged as an anomaly with a location but no mechanism, since ``the current artifacts establish their location but not a new causal mechanism''. Those three deductions, of $3$, $2$ and $2$ points, are the seven that separate every iteration of this campaign from a perfect detail score.

The conventions that make such panels readable are themselves findings, carried in memory and reapplied every run. Logarithmic color scales filter values below the stated floor rather than clamping them, so that an empty region reads as empty; a blending fraction on $[0,1]$ is drawn on a linear scale even though its dynamic range passes the log test, because on a log scale a zero coefficient renders as absent data; history axes are bounded by the raw series and by anything drawn over it, after a $10^{-30}$ smoothing floor stretched an axis on an all-zero series and hid the curve, ``a device, not a value''; and panels that are legitimately empty carry an in-panel note, because otherwise a true null result reads as a broken render.

The same failure mode was then found one level down, in the metrics rather than the plots. An audit caught the surface-pressure jump being emitted under two different definitions, one windowed to the sphere and one over the whole contour. Scored against its own local baseline, the unwindowed form reports a larger anomaly for the two healthy iterations, $14.2$ and $15.3$ times the local median, than for the regressed one at $11.4$, because its peak is pinned to the sphere-to-fillet tangency, a physical gradient at a geometry break. It was therefore ``anti-correlated with the defect it exists to track \dots\ not merely less sensitive, it was wrong in sign'', where the windowed form separates the same runs at $2.3$, $2.2$ and $14.6$. The correction could not be validated by the usual short smoke test, which reaches $t \approx 8 \times 10^{-4}$ while the field is still essentially freestream and so exercises the code path without testing its meaning; the shipped function was instead replayed over three archived runs with independently known answers, and matched all six values.

\subparagraph*{The refinement loop and the reported run}
\label{sec:apollo_loop}

The advisor changes one variable per iteration and states in advance what would falsify its own explanation. Its first edit replaces the refinement sensor, on the measured finding that the blending coefficient feeding the controller had saturated to a binary field and left the anti-churn dead band empty, and it is issued with the disconfirming outcome written down first,

\begin{quote}
``This swap is the discriminating experiment. If the mesh goes static at steady state and the striations persist unchanged, the AMR hypothesis is falsified and should be demoted rather than re-tuned.'' \\
{\small Advisor record, first iteration}
\end{quote}

\noindent The prescriptions also add observables needed to discriminate among candidate mechanisms. Iteration 2 ordered the blending coefficient and the refinement indicator persisted as element variables, because ``nothing on disk currently records where DG/FV blending is active, which is why the surviving striation hypothesis is still untested after two iterations'', and the mesh-adaptation figure of the reported run exists because of that edit; the same iteration replaced a hard-coded steady-state window in the mesh verdict with a measured one, since ``a min/max over a fixed window cannot distinguish a decaying transient from a limit cycle, and the two imply opposite actions''. After the reported run, a degree-four detrended surface-pressure residual was ordered persisted together with its recomputed history over the whole archive, the metric then in place being a single extremum that ranked the reported run below the first one.

Hypotheses are retired in a ledger rather than left open: five mechanisms are recorded dead across the six iterations, three of them at no solver cost by recomputing over archived output. The striation hypothesis carried from iteration 1 was closed that way. Integrating $\mathrm{d}t = \mathrm{d}z/|u_z|$ along the axis gives a residence time of $0.2523$ time units for the slowest streamline crossing the striated band, against a window of $0.5$ units containing no refinement event, so anything the previous regrid deposited has been flushed twice over while the structure is still present at $t = 5$: it is continuously regenerated by a steady mechanism, not left behind by adaptation.

The most instructive iteration is the one that regressed. Raising the refinement ceiling by one level, everything else held, made the surface-pressure deliverable seventeen times rougher, because a feature-based indicator has no signal on a smooth wall: it refined the shock while the heatshield stayed at the base level,

\begin{quote}
``A feature-based indicator cannot resolve a geometry-fixed deliverable; only the base mesh can.'' \\
{\small Mesh-planning record, carried finding}
\end{quote}

\noindent The diagnosis is then tested against the one comparison that isolates it. Iterations 2 and 3 carry identical wall resolution ($h = 0.0875$, $0.0917$, $0.0998$, $0.1089$ in both) yet differ by a factor of $33$ in detrended surface-pressure roughness, so wall spacing ``was never the controlling variable. It is an amplifier, not the cause''. The prescription that follows is accordingly a cap on wall spacing over the whole wetted contour, $h \leq 0.030\,R_b$, rather than another refinement level, and executing it produced the run reported in the Apollo results: $98$ tagged wall edges against the previous $38$, the wall size ratio cut from $15.16$ to $2.115$, and a $1212$-tree base mesh refined at run time to $3198$ elements and $79{,}950$ nodes at $t = 5$. The last two iterations lowered the ceiling again, which improved roughness $3.7$ times at $2.7$ times lower cost, and tested a graded wall whose pre-registered prediction reversed in sign, the axis-source ratio falling $4.86$ times as designed while the roughness rose by $8.2$. The advisor logs that one against its own design, that introducing the grading in order to avoid reproducing a mesh whose answer was already on disk ``added an untested variable'' and that ``re-running the known arm as a control is the cheaper error'', then recommends stopping the loop, the residual deductions being ``a property of the rubric, not of the method''. Per-stage memory carries these findings across iterations, including a wrong entry struck out rather than deleted, because such a note ``is the one kind of note that prevents its own correction''. Since the reward never discriminated between iterations, the reported run was selected on stated criteria, namely refinement depth, stagnation-pressure agreement and convergence of the surface-pressure profile below $1\%$, with all six retained.

\subsection{Autonomous reformulation and solution of a perivascular flow inverse problem}
\label{sec:appendix_aiv}

This supplementary section records the Formalization team trace behind the perivascular inverse-problem result. The case is the inverse reconstruction of velocity and pressure fields in the perivascular space of a mouse from sparse two-photon microscopy tracks under the moving-boundary formulation of~\cite{toscano2024inferring_AIV}. The archived record contains the exact-solution check, simplification proposals, ranking, derivation, well-posedness audit, and user discussion. Excerpts below are lightly normalized for typography and attributed by stage record rather than by internal filename. The final verdict was ``conditionally posed'', well-posed up to the residual 2-potential gauge $\chi(x,z,t)$, after four flagged deficits were closed (two by direct agent reasoning, two by user interaction).

\subparagraph*{Stage 1a: exact-solution check}
\label{sec:aiv_exact}

The proposer first searched for closed-form analytical solutions on the full system. The result was negative,

\begin{quote}
``No exact analytical method identified.'' \\
{\small Exact-solution record}
\end{quote}

\noindent the irregular moving-wall geometry, unknown initial condition, scattered boundary data, and inverse reconstruction goal jointly precluding a closed-form solution.

\subparagraph*{Stage 1b: simplification proposals}
\label{sec:aiv_simplify}

The simplification stage returned three candidates. The first is a vector-potential reformulation of the velocity,

\begin{quote}
``Use a vector-potential representation of the velocity as a simplification-only method. The unknown velocity field is written as the curl of a vector potential, which enforces incompressibility identically.'' \\
{\small Simplification record, Case 1}
\end{quote}

\noindent The second is a pressure-Poisson and Leray-projection reduction, which separates pressure from the solenoidal velocity evolution by projecting the momentum equation onto divergence-free fields. The third is a boundary-fitted coordinate transform mapping $\Omega(t)$ onto a fixed reference domain $\widehat{\Omega}$ via $\mathbf{x}=\mathbf{X}(\boldsymbol{\xi},t)$, absorbing the geometric complexity into the deformation gradient $F$, the Jacobian $J$, and the grid velocity $\mathbf{a}$.

\subparagraph*{Stage 1c: ranking}
\label{sec:aiv_ranking}

The ranker scored each candidate against the six constraints defined in Methods, augmented by a hard-constraint coverage axis. C1 was first promoted from the soft-Coulomb-gauge 3-component form to a 2-potential specialization,

\begin{quote}
``Case 1 proposes a 3-component vector potential $\mathbf A=(A_1,A_2,A_3)$ with a soft Coulomb gauge $\nabla\cdot\mathbf A=0$ enforced as an additional residual. The skill rubric explicitly discourages that form because it gives back the residual it eliminated. The canonical PINN form is the 2-potential specialization $A_2 \equiv 0$: the network outputs two scalar potentials $(P, R)$ instead of $(u, v, w)$, and the velocity is the curl $u = R_y$, $v = P_z - R_x$, $w = -P_y$. Continuity \dots\ then holds as a vector-calculus identity, so no residual is needed for it.'' \\
{\small Ranking record, candidate summary C1}
\end{quote}

\noindent C2 was disqualified for primitive-PINN purposes,

\begin{quote}
``Mathematically clean, but in a primitive-variable PINN this is operationally inert: the scalar residuals it produces are identical to those of the original primitive Stokes formulation, and the only `new' content is an auxiliary $\Delta p = 0$ residual that adds a residual rather than removing one. The Leray projection itself is not a clean autodiff operation; applying it pointwise requires solving an auxiliary elliptic problem at every collocation point.'' \\
{\small Ranking record, candidate summary C2}
\end{quote}

\noindent and C3 was disqualified for missing geometry inputs,

\begin{quote}
``The problem ships dense geometry-respecting visualization points and scattered moving-wall samples; neither is a centerline, structured mesh, or analytic parameterization. Constructing $\mathbf X$ is a separate geometry pipeline; locating each sparse observation $(x_i,y_i,z_i)$ in reference coordinates requires numerically inverting $\mathbf X$, an auxiliary nonlinear problem at every observation point.'' \\
{\small Ranking record, reason C3 was rejected}
\end{quote}

\noindent The final scores under the rubric $S=0.25\,\text{DimRed}+0.20\,\text{NonlinRed}+0.15\,\text{Reg}+0.15\,\text{HardCnstr}-0.10\,\text{BCCost}-0.10\,\text{ImplCost}+0.05\,\text{Comp}$ are summarized in Supplementary Table~\ref{tab:aiv_ranking}.

\begin{table}[ht]
\centering
\small
\setlength{\tabcolsep}{3pt}
\begin{tabular}{@{}clccccccccc@{}}
\toprule
Rank & Method & DimRed & NonlinRed & Reg & HardCnstr & BCCost & ImplCost & Comp & Score \\
\midrule
1 & C1  & 0.0 & 0.0 & 0.0 & \textbf{0.9} & 0.0 & 0.1 & 1.0 & $+0.175$ \\
2 & C3 & 0.0 & 0.0 & 0.0 & 0.0 & 0.0 & 1.0 & 0.5 & $-0.075$ \\
3 & C2& 0.0 & 0.0 & 0.0 & 0.0 & 0.3 & 1.0 & 0.5 & $-0.105$ \\
\bottomrule
\end{tabular}
\caption{\textbf{Ranking of the three simplification candidates.} Only C1 lies above the $0.10$ meaningful-benefit threshold; the load-bearing axes are hard-constraint coverage (continuity becomes an identity) and composability (the curl reparameterization composes with output ans\"atze, Fourier features, and RBA / NTK weighting). Both non-winners pay full implementation cost without any compensating benefit.}
\label{tab:aiv_ranking}
\end{table}

The recorded justification for selecting C1 reads,

\begin{quote}
``C1 is the only candidate that changes the operational PINN loss in a strictly favorable direction. Network outputs drop from 4 $(u,v,w,p)$ to 3 $(P,R,p)$, the continuity residual is replaced by the vector-calculus identity $\nabla\cdot(\nabla\times\mathbf A)\equiv 0$, and the velocity Dirichlet conditions on the moving wall and the sparse $(u,v)$ observations remain Dirichlet; they act on linear combinations of first derivatives of $(P,R)$ instead of on the primitive components.'' \\
{\small Ranking record, reason C1 was selected}
\end{quote}

\subparagraph*{Stage 1d: derivation}
\label{sec:aiv_derivation}

The derivation stage carried out the 2-potential substitution. Setting $P:=A_1$, $R:=A_3$ under the gauge $A_2\equiv 0$ gives
$$
u=\partial_y R,\qquad v=\partial_z P-\partial_x R,\qquad w=-\partial_y P,
$$
and the continuity residual collapses identically:
$$
\partial_x u+\partial_y v+\partial_z w=\partial_{xy} R+\partial_{yz} P-\partial_{xy} R-\partial_{yz} P\equiv 0.
$$
The headline of the resulting PINN reformulation is recorded as,

\begin{quote}
``Replace the four primitive unknowns $(u, v, w, p)$ by three new outputs $(P, R, p)$, with $(u, v, w) = (R_y,\ P_z - R_x,\ -P_y)$. Continuity $\nabla\cdot\mathbf u = 0$ then holds as a vector-calculus identity, so its residual is removed from the PINN loss. The momentum equations remain three scalar PDEs in the new outputs, with one extra derivative order on $(P, R)$ in the viscous term.'' \\
{\small Derivation record, overview}
\end{quote}

\noindent The momentum residuals become
$$
\partial_t(\partial_y R)+\partial_x p-\Delta(\partial_y R)=0,
$$
$$
\partial_t(\partial_z P-\partial_x R)+\partial_y p-\Delta(\partial_z P-\partial_x R)=0,
$$
$$
\partial_t(-\partial_y P)+\partial_z p-\Delta(-\partial_y P)=0,
$$
the network output count drops from four to three, the residual count drops from four to three, and the moving-wall and sparse-observation Dirichlet conditions are re-expressed as Dirichlet residuals on linear combinations of first derivatives of $(P,R)$.

\subparagraph*{Stage 1e: well-posedness audit}
\label{sec:aiv_wellposed}

The well-posedness agent ran an eight-axis audit covering PDE classification, BC and IC counting, null-space and gauge detection, energy and coercivity, identifiability, continuous dependence on data, observation support, and PIML-specific concerns. The initial verdict was ``conditionally posed'', with four deficits flagged: two were closed by direct agent reasoning (D1, D3); two through interaction with the user (D2, D4).

\paragraph{D1: pressure additive gauge.} The reformulation inherits the gauge $p\mapsto p+c(t)$ from primitive Stokes; every residual is invariant under this shift, so $c(t)$ lives in the null space of the loss. The agent inferred channel orientation from the data (wall samples have $v_b\equiv 0$, indicating $y$ is the channel-axial direction; the visualization grid is uniformly distributed along $y$; wall samples do not cover the bounding-box faces at $y=0.011$ and $y=1.556$) and prescribed an outlet anchor,

\begin{quote}
``Add a Dirichlet pressure boundary condition $p(t, x, y_{\max}, z) = 0$ on the outlet end-cap of the channel, defined as the cross-section at $y_{\max} = 1.556$. \dots\ Imposing $p(t, x, 1.556, z) = 0$ at every $t$ removes the additive gauge $c(t)$ entirely (the boundary trace forces $c(t) \equiv 0$).'' \\
{\small Well-posedness audit, resolution of D1}
\end{quote}

\paragraph{D2: single-slab observability.} Interior $(u_{\rm obs},v_{\rm obs})$ data lie on the slab $z\in[0.40,0.42]$, covering $\approx 2.3\%$ of the $z$-extent. The user opened the exchange,

\begin{quote}
``wait we only have data in a single slice at z around the middle of the channel is that enough?'' \\
{\small User-interaction record}
\end{quote}

\noindent The agent acknowledged the identifiability gap,

\begin{quote}
``The forward Stokes problem with full Dirichlet wall data + outlet pressure + IC is uniquely solvable. With the IC unknown and only one observation slab, there is a real identifiability gap: any IC mode that has a node at the slab plane $z \in [0.40, 0.42]$ produces zero observable $(u, v)$ on the slab and is invisible to the data residual. Such modes \dots\ decay parabolically at rate $\propto e^{-\lambda^2 t}$ but persist over the early-time window.'' \\
{\small User-interaction record, agent reply}
\end{quote}

\noindent and offered three closures: (A) keep the current plan with stratified validation only, (B) add a smoothness prior on $\partial_z^2$ of the velocity that suppresses the high-frequency unobservable IC modes, or (C) trim the early-time window where the IC dominates. The user chose B,

\begin{quote}
``lets do B \dots'' \\
{\small User-interaction record, user selection}
\end{quote}

\noindent and the agent added the regularizer
$$
\mathcal{L}_{\rm reg}=\lambda_{\rm reg}\,\mathbb{E}\!\left[\,|\partial_z^2 u|^2+|\partial_z^2 v|^2+|\partial_z^2 w|^2\right],
$$
acting on the velocity (gauge-invariant for both null modes; constant and linear $z$-profiles unbiased) together with a stratified validation protocol that reports near-slab and far-from-slab errors separately.

\paragraph{D3: unknown initial condition.} The IC is recognized as a formulation choice (inverse / data-assimilation), not a deficit; wall and observation residuals at $t=0$ act as the IC closure and no additional structure is required.

\paragraph{D4: domain topology.} The 2-potential ansatz with $A_2\equiv 0$ requires the channel to be simply connected; otherwise harmonic correction terms enter the representation. The agent asked the user to confirm the topology, and the resolution was recorded as,

\begin{quote}
``User confirmed the geometry is a single open channel, with no bifurcation, no through-holes, and $b_1 = 0$. The 2-potential ansatz $\mathbf u = \nabla \times (P, 0, R)$ is therefore structurally complete on this domain; no harmonic-correction term is required.'' \\
{\small Well-posedness audit, resolution of D4}
\end{quote}

\subparagraph*{Final verdict}
\label{sec:aiv_status}

After the four closures, the audit upgrades to,

\begin{quote}
``\textit{Conditionally posed}, well-posed up to the residual 2-potential gauge $\chi(x, z, t)$. \dots\ The only remaining null mode in the formulation is the 2-potential residual gauge $\chi(x, z, t)$, which is observably zero (does not enter the velocity field) and is intentionally left free by the derivation. The formulation is therefore well-posed for every observable quantity (velocity field, pressure field) and conditionally posed in the strict sense, well-posed up to that benign residual gauge.'' \\
{\small Well-posedness audit, final verdict}
\end{quote}

\noindent The reformulated problem then enters the encode-select-solve spine described in Methods.

\subsection{Agent-designed spectral PINN for viscous Burgers}
\label{sec:appendix_spectral_pinn}

This supplementary section records the full encode-select-solve trace behind the spectral-PINN result, from the first reformulation proposal through production training and one refinement iteration. The case is the periodic viscous Burgers equation $u_t+uu_x-\nu u_{xx}=0$ on $(t,x)\in(0,1]\times[-1,1]$ with $\nu=1/100$ and IC $u(0,x)=-\sin(\pi x)$. Stage 1 is the Formalization team's: the exact-solution check, simplification proposals, ranking with user override, derivation with five review-driven repair cycles, well-posedness audit, and parity-fold analysis, closing on a locked formulation. Stages 2 to 14 implement that formulation: encoding through the action-space taxonomy, code synthesis and smoke validation, production training with its diagnostic verdict, and a second iteration driven by the advisor. Excerpts below are lightly normalized for typography and attributed by stage record rather than by internal filename. The interaction trace records how the rubric initially demoted the spectral-Galerkin candidate as a high-implementation-risk research bet, how the user override restored it, and how the agent then reasoned out why the rubric had been mis-scoring the problem.

\subparagraph*{Stage 1a: exact-solution check}
\label{sec:asp_exact}

The proposer searched for closed-form solutions on the full equation. The result was negative,

\begin{quote}
``Not allowed by the user because exact-solution permission is unset and defaults to deny. The Cole--Hopf linearization is present in the knowledge base but excluded under this setting.'' \\
{\small Exact-solution and ranking records}
\end{quote}

\noindent so the canonical Cole--Hopf route was logged for traceability and removed from the candidate pool.

\subparagraph*{Stage 1b: simplification proposals}
\label{sec:asp_simplify}

The proposer agent generated three simplification candidates: an inviscid high-Reynolds-number outer limit, a periodic Fourier-series mode reduction, and a conservative flux-divergence reformulation. The Fourier-series candidate is the one developed into the spectral PINN in Stage 1d below, and the agent had already identified its structural payoff at the proposal stage,

\begin{quote}
``The spatial domain is periodic with period $2$, so the natural basis is $e^{in\pi x}$ for $n\in\mathbb{Z}$. \dots\ The Fourier transform removes the continuous spatial coordinate and produces a countable system of time-dependent ODEs for modal coefficients.'' \\
{\small Simplification record, Case 2}
\end{quote}

\noindent The same file flagged the inviscid limit as regime-incompatible because the characteristic map loses invertibility at the IC-determined time $t_s=1/\pi\approx 0.318$ and ``cannot represent the viscous internal layer that defines the problem.'' At the ranking stage the ranker added a fourth candidate of its own, multi-harmonic Fourier-feature input encoding, as a near-zero-cost alternative to the spectral-Galerkin proposal it was about to score; this is the candidate that ended up displacing the proposer's Fourier-series suggestion in the rubric verdict.

\subparagraph*{Stage 1c: ranking and user override}
\label{sec:asp_ranking}

The ranker scored each candidate against the same seven-axis rubric used in Stage 1c of the perivascular run record above. The verdict placed C4 (multi-harmonic Fourier-feature input encoding) at rank 1 and C2 (spectral-Galerkin mode reduction) at rank 2, on the grounds that C4 buys hard periodicity at near-zero implementation cost while C2 inflates the network output to $2N$ modes, adds a dealiased pseudospectral evaluation per residual, and introduces ODE stiffness. The user overrode this verdict and selected C2,

\begin{quote}
``Selected method: C2, Pseudospectral Fourier--Galerkin Mode Reduction (Spectral PINN). The rubric below ranks C4 (multi-harmonic Fourier-feature input encoding) first on a `minimal architectural delta from a standard PINN' basis. The user has explicitly overridden this and selected C2 as the winner.'' \\
{\small Ranking record, user override}
\end{quote}

\noindent The agent then reasoned out, on its own, why the rubric had under-scored C2 against the structure of this specific problem,

\begin{quote}
``What the rubric does not score. Diagonal viscous damping: C2's $-\nu(n\pi)^2$ term is closed-form, not learned; the hardest part of training a PINN on Burgers, getting the MLP to faithfully represent $\nu u_{xx}$ near the layer, is removed in C2, while C4 still has to learn it. Trivial IC: $u(0,x)=-\sin(\pi x)$ is exactly the $b_1\sin(\pi x)$ mode in C2, and under hard parameterization $b_1(t)=-1+t\tilde b_1(t)$ the IC is structural (no residual), while C4 still carries an IC residual. Spatial autodiff removed: C2's PINN has no $\partial_x,\partial_{xx}$ to differentiate through, those are spectral multiplications, while C4 still does $\hat u_x,\hat u_{xx}$ via autodiff at every collocation point. Conservation laws structural: mean conservation in C2 is enforced by omitting $a_0$ from the network output, while C4 only gets it approximately through training.'' \\
{\small Ranking record, comparison of C4 and C2}
\end{quote}

\noindent The same passage flagged a self-correction the agent had performed against an earlier draft of its own rubric scoring,

\begin{quote}
``Correction to an earlier composability claim. A previous draft of this ranking asserted that `standard pointwise PIML weighting recipes (RBA / NTK / soft-attention) do not transfer mechanically to per-mode reweighting.' This was wrong. RBA, NTK reweighting, and soft-attention are all per-residual-evaluation schemes, they don't care whether the second tensor axis indexes spatial collocation or mode index. \dots\ The composability score of $0.30$ in the rubric table above is therefore overly pessimistic; a fairer score would be $\geq 0.70$ (which would push C2 above C4 on the rubric, but the rubric weights are kept as-is for audit-trail transparency).'' \\
{\small Ranking record, retained rubric verdict}
\end{quote}

\noindent so the user override and the agent's own re-scoring of composability point in the same direction; both were folded into the live formulation that proceeded to derivation.

\subparagraph*{Stage 1d: derivation and five review-driven repairs}
\label{sec:asp_derivation}

The derivation stage carried out the spectral-Galerkin reduction. Expanding $$u(t,x)=\sum_{n=1}^{N}[a_n(t)\cos(n\pi x)+b_n(t)\sin(n\pi x)]$$, projecting the PDE onto the basis, and parameterizing the modal coefficients by hard-IC encoding $a_n(t)=t\tilde a_n(t)$, $b_n(t)=b_n^{IC}+t\tilde b_n(t)$ with $b_1^{IC}=-1$ produces the coupled $2N$-residual ODE system on $t\in(0,1]$. The pseudospectral nonlinearity $uu_x$ is evaluated on a dealias grid of $M$ points and projected back onto the $N$ retained modes; the viscous term is diagonal in modal space. The first version of the derivation, however, did not survive review,

\begin{quote}
``Overall verdict: partially correct but needs major repair. \dots\ The viscous term has the wrong sign in the final real-mode residuals. \dots\ The derivation writes $-\varepsilon(n\pi)^2 a_n$ and $-\varepsilon(n\pi)^2 b_n$, which would produce anti-diffusive modal growth rather than viscous damping when solved as $\dot a_n=-\mathcal{N}_n^a-\varepsilon(n\pi)^2 a_n$.'' \\
{\small First derivation review}
\end{quote}

\noindent The reviewer also caught an algebra typo in the time derivative of the hard-IC sine coefficient ($\dot b_n=\tilde b_n+t\dot{\tilde b}_n$, not $\dot{\tilde b}_n+t\dot{\tilde b}_n$) and pushed back on the dealias prescription. The next pass repaired the sign and the derivative, and the second review then sharpened the dealias rule from $M\geq 3N$ to a strict inequality,

\begin{quote}
``With retained complex modes $|k|\leq N$, the product $uu_x$ contains modes up to $|k|\leq 2N$. A discrete grid of length $M=3N$ can alias the product mode $2N$ into the retained mode $-N$ and the product mode $-2N$ into $N$. To avoid aliasing into the retained band $|n|\leq N$, require $M>3N$.'' \\
{\small Second derivation review}
\end{quote}

\noindent Three further review cycles closed a stale viscosity-regime sweep ($\varepsilon=1/(100\pi)\to\nu=1/100$, propagated from the user specification into the recommended $N$ table) and a stale validation-data path that had survived in the inverse-transform section after the data swap. The derivation changelog records all six snapshots.

\subparagraph*{Stage 1e: well-posedness audit}
\label{sec:asp_wellposed}

The well-posedness agent ran the same eight-axis audit as in Stage 1e of the perivascular run record above. The PDE is semi-linear parabolic with strictly positive viscosity, BC and IC counting are exact (periodicity is structural in the Fourier basis, the IC is structural under the hard parameterization), the energy estimate $\tfrac12 \tfrac{d}{dt}\|u\|_2^2=-\nu\|u_x\|_2^2\leq 0$ transfers cleanly to the spectral truncation by Parseval, and Hadamard continuity follows from standard parabolic theory. The audit returned ``well posed'' with two architectural observations, both closed in dialogue with the user.

\paragraph{O1: parity over-parameterization.} The IC $u(0,x)=-\sin(\pi x)$ is parity-odd, the PDE preserves odd parity under $(x,u)\to(-x,-u)$, and the periodic parabolic IBVP has a unique solution; therefore $u(t,-x)\equiv -u(t,x)$ for all $t$, which forces the cosine modes $a_n(t)\equiv 0$. The agent's reading of the architectural consequence was,

\begin{quote}
``This is not a well-posedness deficit, the solution is still uniquely determined and the residuals do drive $a_n\to 0$ on convergence. But: the output dimension is $2N$ when $N$ would suffice (a 50\% architectural waste); the cosine residuals contribute to the loss budget alongside the load-bearing sine residuals; \dots\ residual minimization yields $|a_n|\sim\sqrt{\text{loss}/N_t}$, not exactly zero.'' \\
{\small Well-posedness audit, observation O1}
\end{quote}

\noindent The proposed closure was a parameterization that fixes $a_n\equiv 0$ structurally,
$$
\hat u(t,x)=\sum_{n=1}^{N}b_n(t)\sin(n\pi x),
\qquad
b_n(t)=b_n^{IC}+t\,\tilde b_n(t),
\qquad
f_\theta:[0,1]\to\mathbb{R}^N,
$$
binding the architecture to parity-odd ICs only. The user confirmed the benchmark is single-IC and approved the fold,

\begin{quote}
``Resolution (2026-05-05). User confirms this is a single-IC benchmark and approves the parity fold. \dots\ Closure check: parity-odd subspace is closed under the dynamics, $uu_x=(\text{odd})\cdot(\text{even})=\text{odd}$, so $\mathcal{N}_n^a=\langle uu_x,\cos(n\pi x)\rangle\equiv 0$, and the cosine residuals $R_n^a$ are identically zero in this subspace. \dots\ Live residual count: $N$ scalar ODEs $R_n^b$ (down from $2N$).'' \\
{\small Well-posedness audit, resolution of O1}
\end{quote}

\paragraph{O2: single-point smoothness gauge at $t=0$.} Because the time-collocation set $\{t_i\}\subset(0,1]$ excludes $t=0$, no residual directly pins the network values $\tilde a_n(0),\tilde b_n(0)$. The agent took the limit of the residuals as $t\to 0^+$ and read off the consistent values from the IC,

\begin{quote}
``$R_n^a(t)\to\tilde a_n(0)+\mathcal{N}_n^a(0)+0=0\Rightarrow\tilde a_n(0)=-\mathcal{N}_n^a(0)$, and similarly $\tilde b_n(0)=-\mathcal{N}_n^b(0)-\nu(n\pi)^2 b_n^{IC}$. The IC nonlinearity is $uu_x|_{t=0}=-\sin(\pi x)\cdot(-\pi\cos(\pi x))=(\pi/2)\sin(2\pi x)$, a pure $n=2$ sine mode, so the consistent values are $\tilde a_n(0)=0\;\forall n$, $\tilde b_1(0)=+\nu\pi^2\approx 0.0987$, $\tilde b_2(0)=-\pi/2\approx -1.5708$, $\tilde b_n(0)=0\;(n\geq 3)$. This is a single-point gauge resolved by smoothness.'' \\
{\small Well-posedness audit, axis C, item 5}
\end{quote}

\noindent The closure space offered two equivalent options, adding $t=0$ to the collocation set or committing the consistent values into a small $L^2$ regularizer; both depend on the first MLP layer being smooth in $t$. The user committed to a tanh activation,

\begin{quote}
``Resolution (2026-05-05). User confirms the network will use tanh activations (smooth, $C^\infty$). Smooth activations reach the consistent smooth limit at $t=0^+$ by continuity of $\tilde a_n,\tilde b_n$, so the single-point gauge is benign without further intervention. No additional residual at $t=0$ is needed.'' \\
{\small Well-posedness audit, resolution of O2}
\end{quote}

\subparagraph*{Stage 1e verdict: the locked formulation}
\label{sec:asp_status}

After both closures, the audit upgrades to,

\begin{quote}
``\textit{Well posed}, with the parity fold and smooth-activation requirement applied. \dots\ The live formulation passes every well-posedness axis cleanly with the closures folded in; the parameterization is now minimal (no soft-vs-structural symmetry mismatch, no architectural slack), and the loss carries $N\cdot N_t$ residual evaluations down from $2N\cdot N_t$.'' \\
{\small Well-posedness audit, final verdict}
\end{quote}

\noindent The locked formulation, sine-only Galerkin truncation with hard IC, hard periodic and Dirichlet structural conditions, diagonal viscous damping, and pseudospectral evaluation of $uu_x$ on a dealias grid $M>3N$, is what the remaining stages implement; the only active loss term is the per-mode, per-time vRBA-weighted MSE of the $N$ sine-Galerkin residuals.

\subparagraph*{Stage 2: encoding through the action-space taxonomy}
\label{sec:appendix_spectral_pinn_impl}
\label{sec:asp_encoding}

The Encoding team walked the locked formulation through the action-space taxonomy and recorded two reasoned departures from the wiki picked path. The first concerns boundary-condition enforcement,

\begin{quote}
``The selected taxonomy leaf is periodic embedding through an output transformation. The implementation must not add output-transformation code, an additive cosine wrapping, a distance function, or a periodic embedding layer. Structural enforcement comes from the Fourier-sine reconstruction $\hat u(t,x)=\sum_{n=1}^{N}b_n(t)\sin(n\pi x)$, which is $2$-periodic in $x$ and zero at $x=\pm 1$ by construction. The boundary-condition residual is empty.'' \\
{\small Implementation record, reasoned departures}
\end{quote}

\noindent The picked-path leaf is the wiki's only existing terminal for ``structurally enforced periodic BC,'' and is reused as a label rather than a code path; no output-side decoration enters the implementation.

The second departure concerns the curriculum schedule. The rubric had picked Reynolds continuation with a decaying-coefficient schedule, and hints L1h31 and L2h18 flagged it as required when an internal viscous layer is present. The agent justified skipping continuation by appealing to the spectral reformulation itself,

\begin{quote}
``Modal stiffness scale: $\nu(N\pi)^2\approx 0.01\cdot(48\pi)^2\approx 227$ at $N=48$. This is two orders of magnitude milder than the canonical $\nu=1/(100\pi)$ benchmark with $N=100$ (which gives $\nu(N\pi)^2\approx 990$). The stiff feature in physical $(t,x)$-space is not stiff in $(t,n)$-space at this $N$. \dots\ The vRBA Sampling-only mechanism already provides a time-axis curriculum implicitly, high-residual time slabs around $t\approx 0.5$ are oversampled by construction. This makes the physical effect Reynolds continuation would buy (gradual exposure to the layer) redundant.'' \\
{\small Implementation record, curriculum schedule}
\end{quote}

\noindent so training proceeds at the target $\nu=0.01$ from iteration $0$ of S2, with no per-stage Hessian reinitialization and no continuation bookkeeping.

The remainder of the encoding is a direct fill of the picked taxonomy: a time-only $3\times 40$ MLP with tanh activation, output dimension $N=48$ ($\approx 5328$ trainable parameters, well under the SSBroyden $10^5$-parameter limit), hard-IC composition $b_n(t)=b_n^{IC}+t\,\tilde b_n(t)$ with $b_1^{IC}=-1$, pseudospectral nonlinearity on the dealias grid $M=192>3N$, and a two-stage training pipeline (S1 Adam for $6\,250$ iterations, S2 SSBroyden with backtracking line-search for $20\,000$ outer iterations). The vRBA tensor $\lambda\in\mathbb{R}^{N\times N_t}$ is allocated on the $(\text{mode},\text{time})$ grid rather than the standard $(\text{point},\text{time})$ grid, and the Sampling face drives the time-collocation pdf as $p(t_i)\propto \sum_n \lambda_{n,i}^2$ between outer SSBroyden iterations. The mode axis is fully retained at every step.

A constraint flagged by the encoding agent then reshapes the S1 to S2 handoff,

\begin{quote}
``At S2 with quasi-Newton SSBroyden, only sampling is valid; direct loss weighting is forbidden because non-stationary multipliers corrupt the inner solve's $H_0^{-1}$ inverse-Hessian estimate. \dots\ In S2, $\lambda$ continues to update after each outer iteration and drives the sampling distribution over the $(n,i)$ grid for the next minibatch, but the loss expression in the SSBroyden inner solve uses unit weights.'' \\
{\small Implementation record, S2 vRBA constraint}
\end{quote}

\noindent so $\lambda$ runs as a Weighting+Sampling tensor under Adam in S1 and as a Sampling-only tensor under SSBroyden in S2, with the same $\lambda$ state carried across the boundary.

\subparagraph*{Stages 7 to 9: code synthesis, smoke loop, and production run}
\label{sec:asp_implementation}

The Implementation Agent translated the implementation record into a JAX/Flax codebase by editing the inviscid-Burgers PIML template into a spectral PINN. The smoke loop terminated on the first try, with both reasoned departures honored in code,

\begin{quote}
``The smoke run exited successfully and wrote every required array with nonzero size. SSBroyden remained stable, the Cholesky non-finite-value guard never triggered, no fallback weights were needed, the maximum per-mode loss stayed within $1.5\times$ of the mean throughout S2, and the maximum adaptive weight grew monotonically as expected.'' \\
{\small Implementation record, smoke-test history}
\end{quote}

\noindent After the smoke caps were reverted to production values (S1 $=6\,250$, S2 $=20\,000$), the production run was submitted to Brown's Oscar cluster through the canonical PIML SLURM submitter (cascading $\mathrm{H}100\to\mathrm{L}40\mathrm{S}\to 3090\to\mathrm{A}6000$). The job landed on an $\mathrm{L}40\mathrm{S}$ and completed in $70.71$ s wall-clock (S1 $=9.03$ s, S2 $=61.68$ s). The timing record and training history show a single S1$\to$S2 handoff at iter $\approx 6\,250$, no anomalous jumps, and a per-mode worst-loss curve that mirrored the total loss without lagging modes.

\subparagraph*{Stages 10 to 13: postprocessing, diagnostic verdict, and advisor prescription}
\label{sec:asp_diagnostics}

The postprocessing layer reconstructed $\hat u(t,x)$ on the reference grid by analytic sum, $\hat u_{\mathrm{dense}} = b_{\mathrm{dense}}\, S_{\mathrm{dense}}^{\top}$, and computed the strong-form residual $u_t+uu_x-\nu u_{xx}$ on the same grid from $(\hat u,\hat u_t,\hat u_x,\hat u_{xx})$ via the modal expansion (no spatial autodiff). The final-iteration metrics on the dense $(200,256)$ grid are

\begin{equation*}
\mathrm{RL2}_u = 1.107\times 10^{-3},\qquad
L^\infty_u = 4.91\times 10^{-3},\qquad
L^1_u = 3.39\times 10^{-4},
\end{equation*}

\noindent and the per-component PDE loss saturated at the single-precision floor $L_{\mathrm{PDE}}\approx 8.3\times 10^{-13}$, a Galerkin-loss reduction of $4.97\times 10^{4}\times$ between iteration $0$ and the end of S2. The RL2 panel descends from $\approx 40\%$ at S1 onset, drops sharply at the S1$\to$S2 handoff, and is sub-$1\%$ from iter $\approx 10\,000$ onwards.

The diagnostic agent reads the gap between fp32-floor Galerkin loss and $10^{-3}$ field error as the truncation tail of the retained spectrum, not a training pathology,

\begin{quote}
``Inconsistent plots were not observed. The difference between $L_{\mathrm{PDE}}=8.3\mathrm{e}{-13}$ and $\mathrm{RL2}=1.1\mathrm{e}{-3}$ is the expected spectral-truncation gap: the Galerkin loss measures only the $N=48$ retained modes, whereas the field error includes the unresolved tail. The plots and logs are mutually consistent under the truncated-Galerkin interpretation.'' \\
{\small Diagnostic record, failure-mode ranking}
\end{quote}

\noindent and locates the truncation energy in the band the layer occupies,

\begin{quote}
``Strong-form PDE residual concentrates sharply in the layer band $t\in[0.3,0.6]$, peaking $\sim 1$ at the layer's center time $t\approx 0.5$ and decaying to $\sim 10^{-1}$ either side. This is the canonical signature of spectral truncation at the chosen $N$: \dots\ the un-resolved tail (modes $n>48$) carries energy precisely where the layer's high-wavenumber content lives. Modal energy $|b_{48}|^2\approx 10^{-7}$, indicating modes $n>48$ carry non-negligible energy.'' \\
{\small Diagnostic record, spectral-tail analysis}
\end{quote}

\noindent The eight-axis failure-mode ranking returned ``stagnation at the numerical floor'' (integrity $25\to 15$, because the Galerkin residual cannot be reduced further at this $N$), a mild ``high-gradient feature'' penalty ($-4$, because the viscous layer at $t\approx 0.5$ is the dominant high-gradient feature), and a mild ``spectral-bias'' penalty ($-3$, because modal energy remains at the truncation cutoff). Explosion, oscillation, boundary mismatch, and visualization failure were not observed. In particular, the boundary check is exact rather than approximate because $\hat u(t,\pm 1)\equiv 0$ structurally by the sine basis.

The advisor closed the iteration with a single implementation prescription and no action-tree edits, root-causing both flagged failure modes to the same scalar knob,

\begin{quote}
``This iteration trained the user-locked spectral PINN cleanly to the single-precision floor on the retained $N=48$ modes ($L_{\mathrm{PDE}}=8.3\mathrm{e}{-13}$), but the field error $\mathrm{RL2}_u=1.107\mathrm{e}{-3}$ is $2.5\times$ the reference baseline because the Galerkin loss does not see the unresolved modal tail ($n>48$). The remediation is a single hyperparameter increase, $N=48\to 96$ and $M=192\to 320$, already included in the user specification as the production tier and anticipated by the implementation plan. No action-tree edits are warranted: the topology selections match the user's constraints, and the diagnostic does not implicate model capacity, optimizer choice, or weighting strategy.'' \\
{\small Advisor record, refinement instructions}
\end{quote}

\noindent The flatness signal is reclassified as ``flat at the numerical floor'', ``not a training stagnation that more iterations could break, but a truncation symptom that the $N\to 96$ prescription resolves at the source.'' The Adam$\to$SSBroyden chain, the parity-folded sine basis, the hard IC and structural BC, and the vRBA Sampling-only mechanism under SSBroyden all stay unchanged; the second iteration enters the spine with the new $(N,M)=(96,320)$ pair, the precomputed sine/cosine matrices and $b^{IC}$ resized accordingly, and the SSBroyden parameter budget ($\approx 7.4$k, still well under the $10^5$ limit) untouched.

\subparagraph*{Second iteration (Stages 7 to 14): representation bump and budget verdict}
\label{sec:asp_iter2}

The second closed-loop iteration enters with the first iteration's prescription applied. The implementation record increases the mode count to $N=96$ and the output dimension to $96$, following the progression permitted by the user formulation, and selects the corresponding pseudospectral grid $M=320$, the smallest FFT-friendly even integer above $3N=288$ that satisfies the strict $M>3N$ dealiasing constraint. No other action in the encoded chain changes. The MLP topology is still $3\times 40$ tanh; the output dimension grows from $48$ to $96$, lifting the parameter count from $\approx 5.3$k to $\approx 7.4$k, comfortably under the SSBroyden ceiling of $10^5$ MLP parameters. The hard-IC vector $b^{IC}\in\mathbb{R}^{96}$, the precomputed matrices $S_M, C'_M\in\mathbb{R}^{320\times 96}$ and $S_{M,\mathrm{back}}\in\mathbb{R}^{96\times 320}$, the diagonal viscous coefficient $\nu(n\pi)^2$ for $n=1,\dots,96$, and the vRBA tensor $\lambda\in\mathbb{R}^{96\times N_t}$ are all resized at startup; nothing else in the implementation moves. Implementation re-synthesis was a mechanical refactor at the four locations flagged by the specification ($N$, $M$, the precomputed-matrix shapes, and the $\lambda$ tensor shape), and the smoke loop ``passed'' on the first attempt on the same L40S queue.

The production run completed in $113.17$s wall-clock ($S_1=9.15$s, $S_2=104.02$s), against $70.71$s ($9.03$s, $61.68$s) at $N=48$. The S2 fraction rises from $87\%$ to $92\%$ of total wall as the per-iteration cost of the residual evaluation grows with the modal tensor size; the SSBroyden inner solve absorbs the larger output dimension without backtracking instability. The final-iteration metrics on the dense $(200,256)$ grid are
\begin{equation*}
\mathrm{RL2}_u = 6.598\times 10^{-6},\qquad
L^\infty_u = 4.262\times 10^{-5},\qquad
L^1_u = 2.175\times 10^{-6},
\end{equation*}
\noindent a $168\times$ reduction in $\mathrm{RL2}_u$ over the first iteration's $1.107\times 10^{-3}$ from a single scalar knob ($N$) and no action-tree edit. The diagnostic snapshot at the truncation cutoff, $|b_{96}|^2\sim 10^{-16}$, confirms that the $N=96$ basis spans the layer's wavenumber support: the $|b_{48}|^2\sim 10^{-7}$ tail that drove iteration-$1$'s field error has fallen to the floor of the modal heatmap, and the strong-form residual peak in the layer band $t\in[0.3,0.6]$ drops by $10\times$. The ``spectral-bias'' and ``high-gradient feature'' signals from iteration $1$ no longer fire.

The diagnostic does, however, return ``still training'': the $\mathrm{RL2}_u$ trajectory is monotonically descending at the last S2 iteration with a drop fraction of $0.93$ over the final $30\%$ window of the $20\,000$-step S2 schedule (the user-locked iter-$1$ budget). The second-iteration advisor reads this as a budget-not-asymptote condition rather than a method failure, and prescribes the smallest preset that resolves it,
\begin{quote}
``With the mode budget no longer a bottleneck, the residual gap is a SSBroyden-budget issue. Extending S2 to $50\,000$ iterations gives the late-S2 phase room to reach its asymptote without an aggressive jump to $100\,000$ iterations. If the next iteration is still training, the budget can increase to $100\,000$. All other selections, including the parity-folded sine basis, hard IC and BC, sampling-only RBA under SSBroyden, and the Adam-to-SSBroyden chain, stay unchanged: the method is well matched and the diagnostic localizes the deficit cleanly to the schedule.'' \\
{\small Advisor record, second-iteration refinement}
\end{quote}

\noindent As at iteration $1$, the advisor commits no action-tree edits and no dependency cascade. The single edit increases the $S_2$ total-iterations leaf from $20\,000$ to $50\,000$, and the implementation record carries that value into the next synthesis so the old budget is not reused. The total budget grows from $26\,250$ iterations ($S_1$ $24\%$ / $S_2$ $76\%$) to $56\,250$ ($S_1$ $11\%$ / $S_2$ $89\%$).

Across the two iterations, the diagnostics localized the first deficit to modal resolution and the second to the optimization schedule. Iteration $1$ exposed a representation deficit (a truncation tail the Galerkin loss could not see), and the advisor resolved it with one scalar knob in the basis subtree. Iteration $2$ exposed a schedule deficit (the asymptote was beyond the user-locked S2 budget), and the advisor resolved it with one scalar knob in the optimization subtree. Neither iteration touched the topology, the optimizer family, the weighting strategy, the parity-folded basis, or the hard IC/BC structure: every branch the Formalization-team trace and the action-tree encoding had locked stays locked across the loop, and the closed-loop spine does precisely what its design promises, which is to localize the deficit at each pass to the smallest set of leaves the diagnostic supports.

\subsection{Proof of the spectral-PINN a priori error bound for viscous Burgers}
\label{sec:appendix_spectral_pinn_apriori}

\subsection{Problem.}
Fix $\nu>0$, horizon $T>0$, and spatial domain $\Omega=[-1,1]$. Call an initial datum $u_0:\mathbb{R}\to\mathbb{R}$ admissible when it is odd, two-periodic, and real-analytic on the whole line (for example, $u_0(x)=-\sin(\pi x)$). For an admissible $u_0$, let $u^\star:[0,T]\times\Omega\to\mathbb{R}$ be the classical solution of the viscous Burgers initial--boundary-value problem
\begin{equation}
u_t+u\,u_x-\nu u_{xx}=0,
\qquad
u(0,x)=u_0(x),
\qquad
x\mapsto u(t,x)\ \text{is }2\text{-periodic}.
\label{eq:aspa_problem}
\end{equation}
Periodicity is understood in the full sense ($u$ and $u_x$ agree at $x=\pm1$), so the problem is well posed. Error is measured in the $L^2([0,T]\times\Omega)$ norm.

\subsection{Spectral-PINN hypothesis class.}
For a mode count $N\in\mathbb{N}$, a spectral-PINN reconstruction is
\begin{equation}
\hat u_\theta(t,x)=\sum_{n=1}^{N}b_n^\theta(t)\sin(n\pi x),
\qquad
b_n^\theta(t)=b_n^{\mathrm{IC}}+t f_{\theta,n}(t),
\label{eq:aspa_architecture}
\end{equation}
where $(b_n^{\mathrm{IC}})_{n=1}^{N}$ are the coefficients of the $N$-term sine expansion of $u_0$ and $f_\theta:[0,T]\to\mathbb{R}^N$ is a $\tanh$ multilayer perceptron reading only the time coordinate ($\tanh$ at every hidden layer, of some finite width and depth; parameters $\theta$). By construction, $\hat u_\theta(t,\pm1)\equiv0$, and $\hat u_\theta(0,\cdot)$ is the $N$-term sine truncation of $u_0$. The mode count $N$, the width, the depth, and the parameters $\theta$ are free; the reparameterization and the sine basis are fixed.

This is the problem setup and hypothesis class supplied to the Formal branch for Theorem~\ref{thm:spectral_apriori} after the empirical spectral-PINN study in the run record above. The archived trajectory is not a single solve. It contains the frozen theorem and problem encoding, five neighbor reductions, two plan generations separated by an audit that rejected the first plan outright, five planner--critic rounds in total, a validated eighteen-section manifest, a localized repair pass driven by the first of two independent audits of the compiled artifact. Excerpts below are lightly normalized for typography and attributed by stage record rather than by internal filename.

\subparagraph*{Stages 1--3: theorem specification, retrieval, and a self-issued warning}
\label{sec:aspa_intake}

The Formulator froze the target of Theorem~\ref{thm:spectral_apriori} as an existence and expressivity statement, not a claim that training finds the witnessing parameters. The encoding dialogue sharpened the PDE classification in the same way: the Problem Critic identified viscous Burgers as semilinear because the highest-order term $\nu u_{xx}$ is linear with constant coefficient, and rejected a diffusion-dominated label because arbitrary $\nu>0$ does not impose a high-viscosity regime.

Stage 3 then ranked all twenty-four formally solved problems in the repository by problem-fingerprint overlap alone, over the $2896$-node coding with $58$ nodes active for the present case, and kept the five closest as neighbors. The closest was a previously machine-checked repository result, archived as \texttt{TanhApprox}: a constructive approximation theorem for smooth scalar functions by finite tanh neural networks, at Jaccard similarity $0.435$. That is below the pipeline's $0.50$ novelty floor, and the retrieval step raised its own alert rather than proceeding silently,

\begin{samepage}
\begin{quote}
``This problem sits in a sparse region of the repository: the merged priors and the reduce/technique neighbours are weak evidence. The user is strongly encouraged to either monitor the solve closely, or add closer example(s) to the repository first so the priors have something to lean on.'' \\
{\small Stage-3 novelty alert}
\end{quote}
\end{samepage}

\noindent The proof therefore did not begin from a near-duplicate theorem, and the retrieved material could only supply components of a new argument. The ranking is also visibly a function of the coding it is computed over: restricting to the thirty-two-node visualization subset reorders the field, promoting a PINN well-posedness case to $0.491$ and demoting \texttt{TanhApprox} to fifth. The two rankings answer different questions, and the run used the full coding.

\subparagraph*{Stage 4: reduction analysis}
\label{sec:aspa_reduction}

The Reduction Analyst processed all five neighbors in one call and returned no complete reduction: three partial technique transfers, one neighbor not reduced, and one whose technique was ruled inapplicable outright. Its binding move was to introduce the exact sine coefficients and truncation
\begin{equation*}
a_n(t)=\int_{-1}^{1}u^\star(t,x)\sin(n\pi x)\,dx,
\qquad
u_N^\star(t,x)=\sum_{n=1}^{N}a_n(t)\sin(n\pi x),
\end{equation*}
and split the target into two qualitatively different errors,
\begin{equation}
\|\hat u_\theta-u^\star\|_{L^2}
\leq
\underbrace{\|\hat u_\theta-u_N^\star\|_{L^2}}_{\text{time-network coefficient error}}
+
\underbrace{\|u_N^\star-u^\star\|_{L^2}}_{\text{analytic spectral tail}}.
\label{eq:aspa_split}
\end{equation}
All useful neighbors addressed only the first term. The structural observation that makes a neighbor at $0.435$ usable at all is the analyst's own,

\begin{samepage}
\begin{quote}
``The natural structural map is not from the full Burgers solution directly to the neighbor's scalar target function. It is from each time-dependent sine coefficient of the Burgers solution to one scalar instance of \texttt{TanhApprox}.'' \\
{\small Reduction Analyst, Stage 4}
\end{quote}
\end{samepage}

\noindent Applied to the anchored coefficient functions
\begin{equation*}
g_n(t)=
\begin{cases}
\bigl(a_n(t)-a_n(0)\bigr)/t, & t>0,\\
a_n'(0), & t=0,
\end{cases}
\end{equation*}
this converts one Burgers theorem into $N$ independent scalar approximation problems, each of which the neighbor closes, and stacking their approximants produces exactly the hard-IC architecture of Eq.~\eqref{eq:aspa_architecture}. The neighbor supplies a finite scalar network with an algebraic error bound in its own construction resolution; it supplies no Burgers analytic regularity and no Fourier-tail estimate. The analyst recorded why that mismatch of rates is not fatal: ``this is harmless only because the target places no complexity bound on width or depth.'' The requested exponential rate is in the spectral mode count $N$, while network width and construction resolution remain existentially quantified and may grow much faster. The analysis closed by naming four connectors no neighbor provides: the analytic Burgers sine-tail estimate, modal anchoring at $t=0$, scalar-to-vector network stacking, and the passage from a coefficient sup-norm bound to the space--time $L^2$ norm.

Those same four gaps are legible without reading a single proof. The method coding attached to a solved case is a set of nodes in the Formal method tree (see Methods), and the run's action record stores the method of the nearest solved neighbor alongside the method of the case itself. The two sets differ by four nodes in each direction. The case drops the neighbor's spectral-methods-not-applicable marker, its Sobolev norm, its Sobolev spaces, and its Bramble--Hilbert rate; it adds Fourier truncation, the $L^2$ norm, continuous functions on a compact set, and a constructive proof character. The four additions are the four connectors the analyst listed, and the four deletions are the neighbor's own approximation machinery, which the new proof consumes inside an imported theorem rather than reasoning about directly. A structural argument obtained by reading five proofs, and a set difference over a fixed coding, independently identify the same missing content.

This is also where the two machine-readable projections of a proof separate. The Bramble--Hilbert node leaves the method fingerprint, because the new argument is no longer about polynomial approximation on a cell; the corresponding axiom stays on the proof's axiom frontier, because the imported theorem still consumes it. What a proof is about and what it rests on are recorded independently, and neither is inferred from the other.

\subparagraph*{The first plan, and the audit that rejected it}
\label{sec:aspa_replan}

The Proof Architect converted the reduction into a lemma DAG, the Plan Critic approved it after three rounds, the Section Architect produced a ten-section manifest, and the sections were implemented to a green, \texttt{sorry}-free Lean file. The Proof Auditor then rejected the plan.

The approved plan carried exactly one new case-local axiom, a finite-horizon periodic analytic Burgers modal package. Besides continuity of the solution on the rectangle and the defining integral identity for $a_n$, its statement asserted that there exist $C_{\mathrm{tail}}>0$ and $\gamma>0$ such that for every $K\geq1$,
\begin{equation}
\sqrt{\int_0^T\!\!\int_{-1}^{1}\bigl(u^\star(t,x)-u_K^\star(t,x)\bigr)^2\,dx\,dt}
\ \leq\
C_{\mathrm{tail}}e^{-\gamma K}.
\label{eq:aspa_bundled}
\end{equation}
This is the analytic spectral tail of Eq.~\eqref{eq:aspa_split}, assumed rather than proved: one of the two terms the reduction had just identified as the entire difficulty, and the only one no neighbor could supply. The Plan Critic passed it, and recorded the test it had applied, that the axiom ``mentions no network and is not the target in disguise''. Read against the target, which asks for a neural reconstruction, that is a true statement. Read against the argument, the axiom carried half of it.

The Stage-9 auditor works from the compiled artifact and the axiom frontier the compiler reports for the target, not from the plan prose, and returned the status ``modify plan'' rather than a score. The environment record stores the decision in machine-readable form: re-entry stage $7$, route \texttt{MODIFY\_PLAN}, with the rejected attempt archived under its own timestamp. The disagreement between the two reviews is exactly the distinction the certification claim rests on. A prose reviewer can only ask whether an assumption resembles the conclusion. The frontier audit asks what the compiled theorem actually depends on, and is answered with a list.

\subparagraph*{Stage 7: re-planning against five findings}
\label{sec:aspa_plan}

The re-plan kept the technique-transfer route and rebuilt the middle of the argument. It split the bundled axiom into two narrow classical facts: B1a, supplying a bounded holomorphic extension of the solution on a complex strip of uniform half-width $\rho$ over $[0,T]$, and B1b, supplying smooth modal coefficients on the closed time interval together with their anchoring at $t=0$. Both are attributed to the periodic Cole--Hopf linearization and analytic parabolic regularity~\cite{cole1951quasilinear,hopf1950partial,friedman1964parabolic}. The exponential tail is no longer assumed anywhere; it is derived from B1a. The Plan Critic verified the contour-shift arithmetic and the neighbor's hypotheses independently, then returned a revision verdict on five findings. Three were mathematics rather than mechanics.

The first was a planned lemma that is false. The plan asserted, for every natural depth $D$ and every family of depth-$D$ scalar networks, a stacked vector network of the same depth,

\begin{samepage}
\begin{quote}
``Take $D=0$ and $K=0$. The hypothesis \dots\ is vacuously true, so the lemma asserts a network with zero outputs and depth $0$. No such term exists \dots\ The lemma as written is refuted by its own smallest instance.'' \\
{\small Plan Critic, review of the first revised plan}
\end{quote}
\end{samepage}

\noindent The same finding rejected the plan's proposed remedy, which offered two unwritten alternatives for the base case, on the ground that ``two unwritten alternatives are not a construction'', and specified the concrete recursion that would serve. Only depth three is ever consumed downstream, so restricting the lemma to positive depth costs nothing.

The second was a citation used outside its own scope. The strip half-width in B1a must be uniform over the closed interval including $t=0$, and the plan supported it with a positive-time Gevrey regularity theorem~\cite{ferrari1998gevrey},

\begin{samepage}
\begin{quote}
``Ferrari--Titi proves Gevrey-class regularity for $t>0$; its Gevrey weight is $e^{\tau(t)A^{1/2}}$ with $\tau(0)=0$, so the analyticity radius it guarantees degenerates as $t\to0^+$ and the theorem says nothing at $t=0$. Leaning on it for a bound uniform on the closed interval is a use outside the cited result's scope.'' \\
{\small Plan Critic, review of the first revised plan}
\end{quote}
\end{samepage}

\noindent The reference is real and correctly identified, and it still does not support the claim at the one endpoint where the claim is load-bearing, since $\rho$ and $M$ at $t=0$ are what pin the initial coefficients into the tail estimate. A citation can be verified and still be misapplied, and the distinction is the finding. The critic also noted that the plan had already recognized this hazard one axiom later, B1b's reference block having explicitly disclaimed reliance on Ferrari--Titi at the closed endpoint, and had failed to apply the same treatment to B1a. The revised plan removed the reference from both axioms and derived the endpoint from the analytic datum through Cole--Hopf instead, recording the two silent hypotheses that route needs: the potential $\int_0^xu_0$ is $2$-periodic because $u_0$ is odd and therefore of zero mean, and the strip must be shrunk to keep the transformed variable away from zero, which is what stops the solution from acquiring poles off the real axis.

The third was a step that had been named rather than performed. The plan justified the passage from a complex Fourier expansion on the circle to the real sine expansion by listing four library results,

\begin{samepage}
\begin{quote}
``Composing cited results is itself content that has to be written.'' \\
{\small Plan Critic, review of the first revised plan}
\end{quote}
\end{samepage}

\noindent Four distinct obligations were hidden in that list, and one of them was quantitative: the target identity holds only because a factor of one half arising from pairing the $\pm n$ modes cancels a factor of one half carried by the library's normalization convention. The critic stated the consequence of not noticing, that ``a transcriber who does not notice it will land a bound off by a factor of $2$'', and the revised plan split the step into three lemmas that write the cancellation out. The remaining two findings required the plan to construct two Sobolev constants it had only named, and to restrict an axiom clause that had quietly constrained the solution off the physical interval, the revised plan stating the principle as: the periodic odd object off that interval is the complex extension, not the target function.

The revised plan closed all five by construction rather than by rewording, which the critic said in as many words when it approved,

\begin{samepage}
\begin{quote}
``All five blockers \dots\ are closed, and each was closed by doing the work rather than by rewording.'' \\
{\small Plan Critic, approval of the second revised plan}
\end{quote}
\end{samepage}

\noindent Fifteen of fifteen cited results were checked as real and on point, and no finding remained. The Section Architect converted the approved plan into eighteen sections, seventeen lemmas and one main theorem, and the deterministic dependency validator accepted the manifest in one round.

\subparagraph*{Stage 8: construction, and the Stage 9 audit that rejected the code}
\label{sec:aspa_formal}

The formal construction follows the same three-part split the Reduction Analyst had discovered.

\paragraph{Analytic spectral tail.} Sections 1--6 extract the strip data from B1a and B1b and derive the decay by a Paley--Wiener contour shift performed in full: holomorphy on the open strip, continuity on its closure, cancellation of the vertical sides by $2$-periodicity, and a shifted-edge bound, giving
\begin{equation}
|a_n(t)|\leq 2Me^{-\pi\rho n},
\qquad
\|u^\star-u_K^\star\|_{L^2([0,T]\times[-1,1])}\leq C_{\mathrm{tail}}e^{-\gamma K},
\qquad
C_{\mathrm{tail}}=\frac{2M\sqrt{T}}{\sqrt{1-e^{-2\gamma}}},
\label{eq:aspa_tail}
\end{equation}
with $\gamma=\pi\rho$. The Parseval bridge is derived rather than cited, and the residual identity is obtained by expanding the real square against a finite sine Pythagoras identity rather than by re-applying Parseval to the residual.

\paragraph{Coefficient-network construction.} Sections 7--15 identify $b_n^{\mathrm{IC}}=a_n(0)$, prove that the anchored quotient $g_n$ is smooth through the one-sided derivative at $t=0$, extend its time rescaling smoothly beyond $[0,1]$, produce the two Sobolev constants by compactness, and feed them to the imported scalar tanh-approximation theorem~\cite{de2021approximation}, which returns depth-three scalar networks at any prescribed tolerance. Because the library's vector-stacking helper is restricted to depth two, the sections build a positive-depth zero-output network and an arbitrary-common-depth stack from scratch; both are proved from Lean's foundations alone. Precomposition with $t\mapsto t/T$ then gives one vector-valued, time-only network of depth three.

\paragraph{Error assembly.} Sections 16--18 convert the uniform coefficient error into
\begin{equation}
\|\hat u_\theta-u_K^\star\|_{L^2([0,T]\times[-1,1])}\leq T\sqrt{KT}\,\delta,
\label{eq:aspa_coefficient}
\end{equation}
choose $\delta_K=C_{\mathrm{tail}}e^{-\gamma K}/(T\sqrt{KT})$ so that Eq.~\eqref{eq:aspa_coefficient} matches Eq.~\eqref{eq:aspa_tail}, and close the estimate in Theorem~\ref{thm:spectral_apriori} with $C=2C_{\mathrm{tail}}$ and $\beta=\gamma=\pi\rho$. The constructed witness has the requested architecture rather than being an abstract approximating function, and its depth is pinned at three, which is a strengthening of the requested finite depth rather than a weakening.

The independent audit of the compiled file returned $89/100$ and the status ``modify code''. It passed the two hard gates, the file being green and free of \texttt{sorry}, \texttt{admit} and \texttt{unsafe}, and the proved statement being the requested target. It failed on what had been left behind: roughly five hundred lines of superseded declarations that the live proof never reaches, ending in a byte-identical duplicate of the main theorem still backed by the bundled axiom of Eq.~\eqref{eq:aspa_bundled}, and two axiom docstrings from which transcription had dropped the plan's classical citations.

\subparagraph*{Localized repair}
\label{sec:aspa_repair}

The repair pass did not re-run the solve. A selector read the audit, built a dependency map from the Lean file itself, localized the findings to six of the eighteen sections, and walked the reverse transitive closure to check whether the repairs propagated. They did not, because every mandatory repair preserved its section's signature. The selector recorded the sections it was deliberately leaving alone and why,

\begin{samepage}
\begin{quote}
``Deliberately omitted sections are 2 and 7-17: they own no mandatory audit defect and are not pulled by an interface-moving hit.'' \\
{\small Repair Selector, Stage 8}
\end{quote}
\end{samepage}

\noindent Nine named declarations were deleted and one that the audit had explicitly said to keep was kept, still consumed by the live route. The file fell from $3345$ to $2927$ lines and the axiom declarations checked from $68$ to $60$; the case-local axioms went from three to two. Twelve sections were carried through verbatim, and the six in the modify set closed in seven rounds in total, only the first requiring a second pass.

The verification gate treated the audit's third item, an optional strengthening that would bind the constants before the solution is quantified, differently from the two mandatory ones. It agreed the item was real, identified the minimal change that would close it, and then declined to make it,

\begin{samepage}
\begin{quote}
``\dots\ but that alters the frozen encode contract statement and must be an explicit auditor call, not a gate-initiated rewrite.'' \\
{\small Verification gate, Stage 8, section 18}
\end{quote}
\end{samepage}

\noindent A repair pass authorized to fix what an audit found is not thereby authorized to improve the statement it is checking against, and the gate is the component that has to hold that line. Its verdict on the repair it did perform was that the finding had been ``genuinely addressed rather than dodged''.

\subparagraph*{Stage 9 re-audit, and what remains}
\label{sec:aspa_audit}

The re-audit was run fresh and adversarially, and returned $94/100$ with approval: faithfulness $28/30$, assumptions $19/20$, sorry-free $40/40$, efficiency $7/10$. The target's recorded frontier is eight axioms, the three Lean foundations plus five cited classical results: the two case-local Burgers facts B1a and B1b, and three shared-library axioms for the Bramble--Hilbert polynomial estimate~\cite{duran1983polynomial}, smooth extension from a closed interval~\cite{seeley1964extension,whitney1934analytic}, and the smooth removable quotient~\cite{milnor1963morse}. Nothing about neural networks is axiomatized, and neither Burgers axiom mentions a mode count, a network, an ansatz, or a rate. The quantitative core sits below them: the exponential coefficient decay is proved across roughly four hundred and forty lines whose own frontier is the three foundations alone, and the network stacking is likewise foundations-only.

The auditor recorded the discipline it applied to the previous round's repairs, which is the reason the two audits compose into evidence rather than into paperwork,

\begin{samepage}
\begin{quote}
``Verified independently of the run's paperwork, per the rule that a report claiming a fix is not evidence the fix was made.'' \\
{\small Proof Auditor, Stage 9 re-audit}
\end{quote}
\end{samepage}

\noindent It confirmed by inspection of the Lean, not of the reports, that the legacy branch, the duplicate theorem and the bundled axiom were absent, that the citations were present in full, and that the axiom count and file length had moved as claimed. It also checked that the theorem is not vacuously true, exhibiting a solution satisfying the encoded structure, a check the previous generation of this record could not report: the earlier statement quantified uniqueness over all real times while the encoded equation constrains the solution only on $[0,T]$, so no candidate could satisfy the premise. That defect is resolved. The structure now carries seven conditions and no uniqueness field at all, and the theorem quantifies over any classical two-periodic solution.

Two residuals are recorded rather than closed. The encoded structure imposes periodicity and interior smoothness for all real arguments, which constrains how the solution is extended outside the horizon the conclusion integrates over; this is always arrangeable and loses no generality, but it is an over-hypothesis the request does not impose. And the constants are still introduced after the solution is bound, so the formal statement permits them to depend on it; they coincide with constants depending only on $\nu$, $T$ and $u_0$ exactly through uniqueness of the classical solution, which the file does not state. The efficiency deduction is for roughly ninety-six lines of superseded lemmas that leave two rival stacking constructions in the file, a duplication that changes no statement and no frontier.

The neighbor did not survive the run unchanged. The scalar tanh-approximation case had been proved only in conditional form, its constructive capstone taking the polynomial family, the bump tolerances and a coefficient bound as hypotheses. Discharging them for the present proof produced the unconditional theorem, and the graduation report isolates what that cost,

\begin{samepage}
\begin{quote}
``\dots\ the unconditional version adds precisely \texttt{brambleHilbert}, the axiom consumed by \emph{constructing} the polynomial family the conditional version took as a hypothesis.'' \\
{\small Graduation report}
\end{quote}
\end{samepage}

\noindent The neighbor was promoted out of the incomplete staging area, rebuilt green across fifty-two files, and its documentation and the shared axiom registry were updated to match. Its own audit records the anti-circularity check that licenses the arrangement: the polynomial-approximation axiom concerns star-shaped cells and mentions neither tanh networks nor the theorem's widths or rate, so it cannot restate the theorem that consumes it. The run's action record closes by assigning a reward to the case.

\end{document}